\newcommand{\symitem}[3]{\item[\texorpdfstring{\hyperref[#1]{#2}}{#2}] #3\dotfill\pageref{#1}}
  \def\\{}%
  \def\${}%
  \def\({}%
  \def\){}%
\DeclareMathAlphabet{\pazocal}{OMS}{zplm}{m}{n}
\newcommand{\FranzEL}{\pazocal{EL}}
\newtheorem{definition}{Definition}
\newtheorem{example}{Example}
\newtheorem{theorem}{Theorem}
\newtheorem{property}{Property}
\newtheorem{proposition}[property]{Proposition}
\newcommand{\subsubsubsection}[1]{\paragraph{#1}}
\tikzstyle{concept} = [
\tikzstyle{smconcept} = [
\tikzstyle{nosep} = [ inner sep=0pt, outer sep=0pt ]
\footnotesize\nolinkurl{#1}}}
\footnotesize\url{#1}}
\footnotesize\nolinkurl{#1}}}
\DeclareCiteCommand{\cite}[\mkbibbrackets]
  {\usebibmacro{cite:init}\usebibmacro{prenote}}
  {\usebibmacro{citeindex}\printtext[bibhyperrefnonest]{\usebibmacro{cite}}}
  {}
  {\usebibmacro{postnote}\usebibmacro{cite:post}}
\newcommand\textlb[1]{\textsf{\small #1}}
\setlist{leftmargin=.5cm,topsep=0pt,parsep=-5pt}
\setlist[2]{itemsep=4pt}
\setlist[description]{font={\normalfont\sffamily}}
\newlist{enuminline}{enumerate*}{5}
\setlist[enuminline]{label={(\arabic*)}}
\newlist{enumii}{enumerate*}{5}
\setlist[enumii]{label={(\emph{\roman*})}}
\newlist{enumaa}{enumerate*}{5}
\setlist[enumaa]{label={(\emph{\alph*})}}
\def\XX{X}
\def\xx{x}
\def\zz{z}
\let\emptyset\varnothing
\newcommand\extent[1]{\ensuremath{#1^{\downarrow}}}
\newcommand\intent[1]{\ensuremath{#1^{\uparrow}}}
\newcommand{\forallexists}{\mathrel{\forall\exists}}
\newenvironment{starfootnotes}
  {\par\edef\savedfootnotenumber{\number\value{footnote}}
   \setcounter{footnote}{0}}
 {\par\setcounter{footnote}{\savedfootnotenumber}}
\begin{document}

\begin{starfootnotes}
\makeRR
\end{starfootnotes}

\setcounter{tocdepth}{2}

\renewcommand{\baselinestretch}{0.95}\normalsize
\tableofcontents
\renewcommand{\baselinestretch}{1.0}\normalsize

\section{Introduction}\label{sec:motiv}

Formal concept analysis (FCA) is a well-defined and widely used operation for extracting concept lattices from binary data tables \cite{ganter1999a}.
This means that, from a set of objects described by Boolean attributes (called context), it will generate the set of all descriptions relevant to these objects (called concepts) organised in a lattice following a generalisation order relation between these concepts.
For instance, from a description of people through their diploma, major and current job it is possible to generate concepts about those bachelors in literature who hold a teacher position.
This concept is a subconcept of bachelors in literature, which is a subconcept of that of people having a bachelor degree.
The subconcepts add more constraints (attributes) to the objects they cover and thus cover less objects.
These concepts, in turn, may be used to study the diploma-job matching.

Many other techniques exist in AI and elsewhere for analysing data.
Features that distinguish FCA is that it is symbolic, i.e. concept descriptions are algebraic combinations of attributes, and complete,
i.e.\ FCA provides the lattice of \emph{all} symbolically described concepts covering the input data.
From this set, it is possible to select those which are more relevant to a particular purpose. %
Some numerical techniques would instead provide the concepts which optimise a criterion
or may project the object descriptions in a space that make their separation and grouping easier.
This leads to select the concepts to be considered, for which a description remains to be found.
FCA and these types of techniques are complementary.

Formal concept analysis has been put to work in a variety of applications \cite{ganter2005a,missaoui2022a}, but its Boolean descriptions are limited with respect to real data.
It has thus received many extensions for using concrete domains to overcome this limitation.
For instance, people may be further characterised by their salary, age and hobby.
`Conceptual scaling' can group age and salary into intervals so as to be treated by FCA.
However, such extensions do not cover relations between objects themselves:
the fact that students have been taught by specific teachers or have been registered to specific schools.

Relational concept analysis (RCA) extends FCA by taking into account \emph{relations between objects} \cite{rouanehacene2013a}.
Such relations induce possible `relational attributes' that are added to the initial contexts, through `relational scaling'.
From a family of related scaled contexts, RCA generates a family of dependent concept lattices which may include concepts that would not exist in FCA.
For instance, people may be related to their household, their employer or their properties.
Household, themselves may be described by their income, size and related to their members and the neighbourhood in which they are settled.
This may lead to classify household depending on theses features and creating new attributes for people based on whether they live in a low-income household and whether they have been employed by a school.
These new attributes, in turn, will generate new people concepts such as low-income household teachers and may lead to unveil indirect relations between household and jobs.

Although RCA was initially targeting conceptual description languages such as UML \cite{huchard2007a}, it has been generalised to more varied description logic constructors \cite{rouanehacene2013a}.
Relational concept analysis has been used for instance for analysing the ecological and sanitary quality of water courses \cite{ouzerdine2019a}.
For that purpose, it connects formal contexts corresponding to water courses to data collection points which themselves are connected to measures and to organisms collected in water that can be described by further attributes.
These relations between objects help generating richer concept descriptions comprising relations between concepts.
It is then possible to connect the abundance or scarcity of some species to the presence of some pollutants, e.g. glyphosate.
RCA has also been used for other purposes such as generating link keys used to extract links from RDF data sets \cite{atencia2019z}.

\medskip

In principle, RCA contributes to the completeness of FCA by providing more concepts to classify objects in.
However, some concepts that can be considered acceptable may still not be generated by RCA.
Indeed, in presence of circular relationships between objects, RCA may not identify reciprocally supporting concepts.
For instance, consider that people are related to the schools they have attended, and schools are related to the students that they graduated.
People attended multiple schools and schools had many student.
There may be populations who have attended specific groups of schools and graduated from these.
Although there may be no attribute distinguishing these populations and groups, they can be described by having attended at least one school of the group of schools having graduated only students of the population.
These reciprocally supporting attributes lead to a refined version of the returned concept lattice which contains more concepts.
Such concepts may reveal distinct populations based on various hidden factors not directly available from the data, such as wealth or information whose collection is not possible.

These mutually supported concepts may not be returned by RCA.
However, they determine families of more complete concept lattices covering the data.
Hence, it remains to determine which unique family is returned by RCA.
In this respect, this report may be thought of as a way to restore the completeness of RCA by providing more possible concepts.
This completeness may be useful in applications in which the most relevant solution is not the minimal one.
It may also contribute to unveil more implications or dependencies between concepts.

This problem of alternative RCA results stemmed out of curiosity.
It occurred to us through experimenting with relational concept analysis for extracting link keys.
Although RCA was returning acceptable results, it was easy to identify other acceptable results that it did not return.
When RCA is used for extracting description logic terminologies, it makes sense to return minimal terminologies that may be extended.
But different sets of link keys would return totally different sets of links.
The problem also manifests itself in applications in which developers add artificial identifiers in their data in order to constrain the returned solution to include more concepts \cite{dolques2012a,braud2018a}.

\medskip

To understand why such concepts are not provided, and which concepts are returned by RCA, this report questions its semantics.
The semantics of relational concept analysis has, so far, been provided in a rather operational way \cite{rouanehacene2013b}.
It specifies that RCA returns a family of concept lattices referring to each other that describe the input data and it shows that this result is unique.
However, when there exist cycles in the dependencies between data, several families may satisfy these constraints.
Hence, relational concept analysis needs a more precise and process-independent semantics that defines what it returns.
For that purpose, this report provides a structured description of the space on which relational concept analysis applies.
It then defines acceptable solutions as those families of concept lattices which belong to the space determined by the initial family of formal contexts (\emph{well-formed}), cannot scale new attributes (\emph{saturated}), and refer only to concepts of the family (\emph{self-supported}).

Relational concept analysis is then studied in a functional framework.
It characterises the acceptable solutions as the fixed points of two functions, one expansive, which extends concept lattices as long as there are reasons to generate concepts distinguishing objects, and the other contractive, which reduces concept lattices as long as the attributes they are built on are not supported by remaining concepts.
These functions can be iterated and the acceptable solutions are those families of concept lattices which are fixed points for both (Proposition~\ref{prop:accfp}): there is no reason to either extend or reduce them.
The results provided by RCA are then proved to be the smallest acceptable solution, which is the least fixed point of the expansive function (Proposition~\ref{prop:rcalfp}).
It also offers an alternative semantics based on the greatest element of this set, which is the greatest fixed point of the contractive function (Proposition~\ref{prop:acrgfp}).
The structure of the set of fixed points is further characterised to support algorithmic developments.

\medskip

This report extends the results obtained for the RCA$^0$ restriction of RCA  \cite{euzenat2021a}, which contains a single formal context, hence a single concept lattice, and no attribute, only relations.
In spite of the simplicity of RCA$^0$, the main arguments of this work were already valid for RCA$^0$ which remains a good introduction to the problems faced.
Here, we develop the full argument starting from RCA$^0$ and extending it step-by-step to apply to RCA.
A more direct and synthetic presentation of these results can be found in \cite{euzenat2025a}.
It sometimes uses slightly different concepts.
A full table provides the correspondence between these results in Appendix~\ref{sec:propindex}.

In addition to these more general results, the current report provides
better and more examples,
a revised more general notation,
wider related work, and
insights on the structure of the set of acceptable solutions.
It features many elementary properties and propositions which help keeping their proof manageable so the proofs are given immediately.

We first present the work on which this one builds (relational concept analysis) and relevant related work (Section~\ref{sec:prelim}).
We then provide simple examples illustrating that RCA and RCA$^0$ may accept concept lattices which are not those provided by RCA (Section~\ref{sec:examples}).
RCA$^0$ is then provided with a fixed-point semantics through an expansion function corresponding to the RCA algorithm (Section~\ref{sec:rca0fixpoint}).
We then discuss the notion of self-supported lattices (Section~\ref{sec:selfsup0}) which is characterised as the fixed points of another function that can be seen as dual to that used by RCA.
So far, these characterisations have been provided in parallel on both formal contexts and concept lattices.
Section~\ref{sec:dualspace} reconciles them by unifying both approaches in dependent pairs of contexts and lattices.
This allows us to precisely characterise the space of acceptable solutions, fixed points of these complementary functions, by considering the composition of the corresponding closures.
This is finally generalised to RCA globally by considering families of related context-lattice pairs and showing that the acceptable solutions are those families which are fixed-points for both functions (Section~\ref{sec:rcafixpoint}).
We characterise exactly the results of RCA as the smallest element of this set, we provide an alternative operation returning the greatest element and we study the structure of the set of acceptable solutions (Section~\ref{sec:semantics}).

\section{Preliminaries and related work}\label{sec:prelim}

We mix preliminaries with related work for reasons of space, but also because the report directly builds on this related work.

Hereafter, we use a simple and regular notation:
$K$ denotes contexts, $L$ lattices, $T$ context-lattice pairs and $O$ families of context-lattice pairs.
In addition, $R$ is used for relations, $N$ for names and $D$ for attributes.
Greek letters are devoted to some auxiliary functions ($\eta$, $\kappa$, $\sigma$, $\pi$).
$E$, $F$, $P$ and $Q$ are used for functions names and $^*$ is used for distinguishing the parallel application of such functions.
Table~\ref{tab:notation} provides a list of used symbols.

\begin{table}[!ht]
\begin{center}
\textbf{Sets and structures}
\end{center}
\begin{itemize}
\symitem{def:FCA}{$G$}{set of objects ($g\in G$)}
\symitem{def:FCA}{$M$}{set of attribute ($M\subseteq D$, $m\in M$)}
\symitem{def:FCA}{$I$}{incidence relation ($I\subseteq M\times G$)}
\symitem{def:J}{$J$}{`ternary' element used in conceptual scaling ($J\subseteq G \times M \times W$)}

\symitem{sec:rcasem}{$R$}{set of relations ($R\subseteq G\times G'$, $r\in R$)}
\symitem{def:FCA}{$K$}{formal contexts ($K=\langle G, M, I\rangle$, $K\in \mathcal{K}$)}
\symitem{def:Omega}{$\Omega$}{set of scaling operations ($\varsigma\in\Omega$)}

\symitem{sec:rscaling}{$N$}{concept names (given after extent $c\in N\subseteq 2^G$)}
\symitem{sec:scaling}{$D$}{domain language for expressing attributes (inspired from \hyperref[sec:pattstruct]{Pattern structures})}

\symitem{def:FCA}{$L$}{concept lattice ($L=\langle C,\preceq\rangle$, $L\in \mathcal{L}$)}
\symitem{def:FCA}{$C$}{set of formal concepts ($C\subseteq 2^{G\times M}$, $c\in C$)}

\symitem{def:T}{$T$}{context-lattice pair ($T=\langle K, L\rangle$, $T\in\mathcal{T}$)}
\symitem{def:O}{$O$}{indexed context-lattice pairs ($O=\{T_x\}_{x\in X}$, $O\in\mathcal{O}$) for modelling RCA entirely}
\symitem{sec:scaling}{$\Sigma$}{semantic structure grounding scaling ($\Sigma\in\mathcal{X}$, here $\Sigma\subseteq \{R\}\times \mathcal{L}$)}
\end{itemize}

\begin{center}
\textbf{Functions}
\end{center}
\begin{itemize}
\symitem{def:FCA}{$\mathrm{FCA}$}{:~ $\mathcal{K}\rightarrow\mathcal{L}$ Formal concept analysis (extended to indexed families as \hyperref[def:FCAs]{\ensuremath{\mathrm{FCA}^*}})}
\symitem{sec:fca}{$\eta$}{:~$\mathcal{K}\cup\mathcal{L}\rightarrow 2^{G}$ concept name extractors (\hyperref[sec:rcaname]{$\eta^*$})}
\symitem{sec:kappa}{$\kappa$}{:~$\mathcal{L}\rightarrow\mathcal{K}$ Context extraction operation}
\symitem{def:sigma}{$\sigma$}{:~$\mathcal{K}\times\mathcal{L}\rightarrow\mathcal{K}$ Scaling operation (\hyperref[def:sigmas]{$\sigma^*$})}
\symitem{def:purge}{$\pi$}{:~$\mathcal{L}\rightarrow\mathcal{K}$ Purge function (\hyperref[def:purges]{$\pi^*$})}
\symitem{def:F}{$F$}{:~$\mathcal{L}\rightarrow\mathcal{L}$ Concept lattice expansion}
\symitem{def:E}{$E$}{:~$\mathcal{K}\rightarrow\mathcal{K}$ Context expansion}
\symitem{def:P}{$P$}{:~$\mathcal{K}\rightarrow\mathcal{K}$ Context contraction}
\symitem{def:Q}{$Q$}{:~$\mathcal{L}\rightarrow\mathcal{L}$ Concept lattice contraction}
\symitem{def:Tconst}{$\mathrm{T}$}{:~$\mathcal{K}\rightarrow\mathcal{T}$ Context-lattice pair constructor (\hyperref[def:purges]{\ensuremath{\mathrm{T}^*}})}
\symitem{def:EF}{$EF$}{:~$\mathcal{T}\rightarrow\mathcal{T}$ Context-lattice pair expansion (\hyperref[def:EFs]{$EF^*$}:~$\mathcal{O}\rightarrow\mathcal{O}$)}
\symitem{def:PQ}{$PQ$}{:~$\mathcal{T}\rightarrow\mathcal{T}$ Context-lattice pair contraction (\hyperref[def:PQs]{$PQ^*$}:~$\mathcal{O}\rightarrow\mathcal{O}$)}
\symitem{sec:rca}{$\underline{\mathrm{RCA}}$}{:~$\mathcal{K}^*\rightarrow\mathcal{L}^*$ Relational concept analysis}
\symitem{def:ACR}{$\overline{\mathrm{RCA}}$}{:~$\mathcal{K}^*\rightarrow\mathcal{L}^*$ Relational concept analysis with a greatest fixed-point semantics}
\end{itemize}
\caption{Some symbols used in this document (signatures are simplified omitting $R$ and $\Omega$).}\label{tab:notation}
\end{table}

\subsection{Basics of formal concept analysis}\label{sec:fca}

This report relies only on the most basic results of formal concept analysis expressed as order-preserving functions. 

Formal Concept Analysis (FCA) \cite{ganter1999a} starts with a formal context (hereafter context) $\langle G,M,I\rangle$ where $G$ denotes a set of objects, $M$ a set of attributes, and $I \subseteq G \times M$ a binary relation between $G$ and $M$, called the incidence relation.
The statement $gIm$ is interpreted as `object $g$ has attribute $m$', also noted $m(g)$.
Two operators~$\cdot^{\uparrow}$ and $\cdot^{\downarrow}$ define a Galois connection between the powersets $\langle 2^G, \subseteq\rangle$ and $\langle 2^M, \subseteq\rangle$, with $A \subseteq G$ and $B \subseteq M$: 
\begin{align*}
  A^{\uparrow} = \{m \in M \mid gIm ~ \text{for all} ~ g \in A \}, \\
  B^{\downarrow} =\{g \in G \mid gIm ~ \text{for all} ~ m \in B \}.
\end{align*} 

The operators $\cdot^{\uparrow}$ and $\cdot^{\downarrow}$ are decreasing,
i.e. if $A_1 \subseteq A_2$ then $A_2^{\uparrow} \subseteq A_1^{\uparrow}$ and if $B_1 \subseteq B_2$ then $B_2^{\downarrow} \subseteq B_1^{\downarrow}$.
Intuitively, the less objects there are, the more attributes they share, and dually, the less attributes there are, the more objects have these attributes.
It can be checked that $A \subseteq A^{\uparrow\downarrow}$ and that $B \subseteq B^{\downarrow\uparrow}$,
that $A^{\uparrow} = A^{\uparrow\downarrow\uparrow}$ and that $B^{\downarrow} = B^{\downarrow\uparrow\downarrow}$.

A pair $\langle A,B\rangle\in 2^G\times 2^M$, such that $A^{\uparrow} = B$ and \mbox{$B^{\downarrow} = A$}, is called a formal concept (hereafter concept), where $A=extent(\langle A,B\rangle)$ is the extent and $B=intent(\langle A,B\rangle)$ the intent of $\langle A,B\rangle$.
Moreover, for a formal concept $\langle A,B\rangle$, $A$ and $B$ are closed for the closure operations $\cdot^{\uparrow\downarrow}$ and $\cdot^{\downarrow\uparrow}$, respectively, i.e. $A^{\uparrow\downarrow} = A$ and $B^{\downarrow\uparrow} = B$.

Concepts are partially ordered by 
$\langle A_{1},B_{1}\rangle \leq \langle A_{2},B_{2}\rangle \Leftrightarrow A_{1} \subseteq A_{2}$ or equivalently $B_{2} \subseteq B_{1}$.
With respect to this partial order, the set of all formal concepts is a complete lattice called the concept lattice of $\langle G,M,I\rangle$.
It has for supremum the concept $\top=\langle G, G^{\uparrow}\rangle$ and for infimum the concept $\bot=\langle M^{\downarrow}, M\rangle$.

\begin{example}[Formal concept analysis]\label{ex:fca}
Starting from a context $K_1^0=\langle G_1, M_1^0, I_1^0\rangle$ with $G_1=\{a,b,c\}$, $M_1^0=\{m_1,m_2,m_3\}$ and $I_1^0$ as the incidence relation whose table is given below, the application of $\mathrm{FCA}$ results in the lattice made of the concepts $ABC$, $AB$, $C$ and $\bot$ as:

\begin{center}
\setlength{\tabcolsep}{2pt}
\begin{tikzpicture}[font=\footnotesize]

  \draw (1,3.5) node {$\mathrm{FCA} ( $};

  \draw (3.5,3.5) node {
    \begin{tabular}{r|ccc}
      $K_1^0$ & $m_1$ & $m_2$ & $m_3$\\ \hline
      $a$ & & $\times$ & \\
      $b$ & & $\times$ & \\
      $c$ & $\times$ & & $\times$
    \end{tabular}
    };

    \draw (6.5,3.5) node {$ ) = $};

  \begin{scope}[xshift=8.5cm,yshift=1.75cm]
  \begin{scope}[xscale=.2,yscale=.25]

    \begin{dot2tex}[dot,tikz,codeonly,options=-traw]
      graph {
	graph [nodesep=1.5]
	node [style=smconcept]
	ABC [label="$\empty$
\nodepart{two}
$\empty$"]
	AB [label="$m_2$
\nodepart{two}
$a, b$"]
	C [label="$m_1,m_3$
\nodepart{two}
$c$"]

        C0 [label="$\empty$
\nodepart{two}
$\empty$"]
	ABC -- AB
	ABC -- C
        AB -- C0
        C -- C0
      }
    \end{dot2tex}
    \draw (ABC.east) node[anchor=west] {$ABC$};
    \draw (AB.south west) node[anchor=east] {$AB$};
    \draw (C.south east) node[anchor=west] {$C$};
    \draw (C0.south west) node[anchor=east] {$\bot$};
  \end{scope}

    \draw (0,2.5) node {$L_1^0$:};
\end{scope}
\end{tikzpicture}
\end{center}
By convention, concept lattices are represented by their reflexive-transitive reduction (Haase diagram) in which concepts are displayed in two parts: an upper part representing their intent and a lower part representing their extent.
They only display the proper part of their intent and extent.
Their actual intent is obtained by joining it to the union of the proper intents of their more general concepts.
Conversely, their extent is obtained by joining it to the union of the proper extent of their more specific concepts.
Hence, $ABC=\langle\{a,b,c\}, \varnothing\rangle$ and $\bot=\langle\varnothing,\{m_1,m_2,m_3\}\rangle$.
\end{example}

Formal concept analysis can be considered as a function that associates to a context $\langle G, M, I\rangle$ its concept lattice $\langle C, \leq\rangle=\mathrm{FCA}(\langle G, M, I\rangle)$\label{def:FCA}
(or $\underline{\mathfrak{B}}(G,M,I)$ \cite{ganter1999a}).
This is illustrated by Example~\ref{ex:fca}.
By abuse of language, when a variable $L$ denotes a concept lattice $\langle C, \leq\rangle$, $L$ will also be used to denote $C$.

The concepts that can be created from a context can be identified by their extent.
Hence, $\eta(\langle G, M, I\rangle)=2^{G}$ is the set of all concept names that may be used in any such concept lattice\footnote{A similar remark is made in \cite[\S4.1.2]{wajnberg2020a}.}.
We will identify the concepts by such sets;
the extent of a so-named concept will be the set of objects in its name.
In any specific concept lattice $L=\mathrm{FCA}(K)$, the subset $\eta(L)$\label{def:N} of $\eta(K)$ is the set of names of concepts in this lattice according to this convention as illustrated in Example~\ref{ex:cnames}.

\begin{example}[Concept names]\label{ex:cnames}
Consider the context $K_1^0$ of Example~\ref{ex:fca}.
The set of objects of $K_1^0$ being $G_1=\{a,b,c\}$, the set of concept names that can be created for them in any concept lattice is $\eta(K_1^0)=\{ABC,AB,AC,BC,A,B,C,\bot\}$.
In the specific lattice obtained in Example~\ref{ex:fca}, the set of concept names is $\eta(L_1^0)=\{ABC, AB, C, \bot\}$.

Throughout the report, concepts are named after their extent.
They will be displayed as uppercase character strings.
\end{example}

In order to discuss algorithms performing formal concept analysis, we will restrict ourselves to finite structures, as it is often the case.
In such a case, from finite contexts are generated finite lattices whose concepts have finite extents and intents.

\subsection{Extending formal concept analysis with scaling}\label{sec:scaling}

Formal concept analysis is defined on relatively simple structures hence many extensions of it have been designed.
These may allow formal concept analysis to
\begin{enumaa}
\item deal with more complex input structures, and/or
\item generate more expressive and interpretable knowledge structures.
\end{enumaa}

\subsubsection{Scaling: a generalisation}\label{sec:gscaling}

Scaling is one kind of extension of type ($a$).
It is a way to encode a more complex structure $\Sigma$ into FCA.
For that purpose, a scaling operation $\varsigma$
determines a set $D_{\varsigma,\Sigma}$ of Boolean attributes to be added to a context $K$ from a structure $\Sigma$.
In scaled contexts, these attributes can be interpreted so that the incidence relation $I$ is immediately derived from the attribute $m\in D_{\varsigma,\Sigma}$ following:
\[ \Sigma\models gIm \text{ or } \Sigma\models m(g). \]
In FCA, $D=M$ and $I$ is provided by its matrix:
\[ I\models m(g) \text{ iff } \langle m, g\rangle\in I. \]
Hence, adding attributes $M'$ to a context under such a structure $\Sigma$ consists of adding the attributes and extending the incidence relation according to this interpretation.
It may be performed as:
\[ \mathrm{K}^\Sigma_{+M'}(\langle G, M, I\rangle)=\langle G, M\cup M', I\cup\{ \langle g, m\rangle\in G\times M' ~|~  \Sigma\models m(g) \}\rangle \]
and suppressing them as:
\[ \mathrm{K}^\Sigma_{-M'}(\langle G, M, I\rangle)=\langle G, M\setminus M', I\setminus\{ \langle g, m\rangle\in G\times M'\}\rangle. \]

Applying a scaling operation $\varsigma$ to a context $K$ following a structure $\Sigma$ can thus be decomposed into
\begin{enumii}
\item determining the set $D_{\varsigma,\Sigma}$ of attributes to add, and
\item extending the context with these attributes:
\end{enumii}
\[
  \sigma_{\varsigma}(K, \Sigma)=\mathrm{K}^\Sigma_{+D_{\varsigma,\Sigma}}(K).
\label{def:sigma}\]
This unified view of scaling may be applied to many available scaling operations.
We discuss these below.

\subsubsection{Conceptual scaling}\label{sec:cscaling}

Attributes found in data sets typically do not range on Booleans, but instead in numbers, intervals, strings, etc.
Such data can be represented as a many-valued context $\Sigma=\langle G,M,W,J\rangle$, such that $G$ is a set of objects, $M$ a set of attributes, $W$ a set of values, and $J$\label{def:J} a ternary relation defined on 
$G \times M \times W$.
$\langle g,m,w \rangle \in J$ or simply $m(g) = w$ means that object $g$ takes the value $w$ for the attribute $m$.
In addition, when $\langle g,m,w\rangle \in J$ and $\langle g,m,v\rangle \in J$ then $w = v$ \cite[\S1.3]{ganter1999a}: in FCA, `many-valued' means that the range of an attribute may include more than two values, but for any object, the attribute can only have one of these values.

Conceptual scaling transforms such a many-valued context into a one-valued context.
For instance, for nominal scaling, given a set $W$ of values and a set $N$ of properties taking these values, $D_{=,\Sigma}=\{ n=w | n\in N \text{ and } w\in W \}$ splits the ranges of the multi-valued attributes in $N$ into binary attributes.
Attributes of $D_{=,\Sigma}$ are interpreted as:
\[ \langle G,N,W,J\rangle\models gI(n=w) \text{ iff } \langle g,n,w\rangle\in J. \]

There are other types of scalings and some of them are detailed in Table~\ref{tab:cscaling}.
The same can be built for ordinal scaling, e.g.
\[ \langle G,N,W,J\rangle\models gI(n \leq w) \text{ iff } \langle g,n,v\rangle\in J\wedge v\leq w. \]

\begin{table}
  \setlength\tabcolsep{4pt}
\centering\begin{tabular}{lcccl}
  name & language ($D$) & scale & $\Sigma$ & condition ($m(g)$) \\ \hline
FCA & $m$ & - & $I$ & $\langle m,g\rangle\in I$\\
dichotomic & $n=v$ & monocolumn & $\langle G,N,W,J\rangle$ & $n(g)=v$\\
nominal & $n=w$ & diagonal & $\langle G,N,W,J\rangle$ & $n(g)=w$ $\forall w\in W$\\
ordinal & $n\leq w$ & triangular & $\langle G,N,W,J\rangle$ & $n(g)\leq w$ $\forall w\in W$\\
inter-ordinal & $n\leq w$, $n\geq w$ & - & $\langle G,N,W,J\rangle$ & $n(g)\leq w$ or \\
& & & & $n(g)\geq w$ $\forall w\in W$\\
contranominal & $n\neq w$ & antidiagonal & $\langle G,N,W,J\rangle$ & $n(g)\neq w$  $\forall w\in W$\\
\end{tabular}
\caption{Conceptual scaling operations (inspired from \cite{ganter1999a}).}\label{tab:cscaling}
\end{table}

These scaling operations only use a simple structure, i.e. $\Sigma=\langle G, N, W, J\rangle$ in which everything is stored in $J$ and the attributes are expressed as predicates, e.g. $\cdot=v$ for nominal scaling or $\cdot\leq n$ for ordinal scaling.
$\models$ is the evaluation of the predicate for the value, hence they can have also been called structural scaling.

\begin{example}[Conceptual scaling]\label{ex:cscale}
  Consider a many-valued context $\langle G, N, W, J\rangle$ with $G=\{\textlb{Alice},$ $\textlb{Bob},$ $\textlb{Carol}\}$, $N=\{\textlb{age},\textlb{shoesize}\}$, $W=\mathbb{N}$ and $J$ as below:
\begin{center}\footnotesize
\begin{tabular}{cc}
\begin{minipage}{.45\textwidth}
\begin{center}
    \begin{tabular}{r|cc}
     $J$ & \rotatebox{90}{\textlb{age}} & \rotatebox{90}{\textlb{shoesize}} \\ \hline
      \textlb{Alice} & 12 & 32 \\
      \textlb{Bob} & 14 & 36 \\
      \textlb{Carol}  & 14 & 34
    \end{tabular}
  \end{center}
\end{minipage}
&
\begin{minipage}{.45\textwidth}
  \begin{center}
    \begin{tabular}{r|ccc}
     $I$ & \rotatebox{90}{\textlb{age}$=12$} & \rotatebox{90}{\textlb{age}$=14$} & \rotatebox{90}{\textlb{shoesize}$<35$} \\ \hline
      \textlb{Alice} & $\times$ & &  $\times$ \\
      \textlb{Bob} &  & $\times$ &  \\
      \textlb{Carol} &  & $\times$ &  $\times$
    \end{tabular}
  \end{center}  
\end{minipage}
\end{tabular}
\end{center}
  It is possible to scale \textlb{age} with nominal scaling and \textlb{shoesize} with (partial) ordinal scaling generating the context with $G$, $M=\{\textlb{age}$=12$, \textlb{age}$=14$, \textlb{shoesize}$<35$\}$ and $I$ as above.
\end{example}

\subsubsection{Relational scaling}\label{sec:rscaling}

Relational scaling operations ($\varsigma$) considered in \cite{rouanehacene2013a} create relational attributes from a binary relation $r\subseteq G\times G'$ between two sets of objects $G=\mathrm{dom}(r)$ and $G'=\mathrm{cod}(r)$ and a concept lattice $L$ on $G'$.
$\varsigma(r,c)$ provides the syntactic form of the attribute.
For example, qualified existential scaling ($\varsigma=\exists$) is expressed by $\exists(r,c)=\exists r.c$.
For instance, the employer relation may relate people to companies depending on whether one works for the other.
If the company lattice contains the school concept, then the relational attribute $\exists\textlb{employer}.\textlb{School}$ may be scaled and holds for all people employed by a school.

Thus, the set of qualified existential attributes are:
\[ D_{\exists,\langle r, L\rangle} = \{ \exists r.c ~|~ c\in \eta(L) \}. \]
Such attributes are interpreted, according to a closed-world description logic interpretation, by
\[ \langle r, L\rangle\models gI\exists r.c ~\text{ iff }~ \exists g';\langle g,g'\rangle\in r\wedge
g'\in extent(c).\]

This is illustrated in Example~\ref{ex:rscale}.
\begin{example}[Relational scaling]\label{ex:rscale}
Consider that the relation $q$ is given by the table:
\begin{center}\footnotesize
\setlength{\tabcolsep}{2pt}
  \begin{tabular}{r|ccc}
                       $q$ & $a$ & $b$ & $c$\\ \hline
                       $d$ & $\times$ &   & \\
                       $e$ &   & $\times$ & \\
                       $f$ &   &   & $\times$
  \end{tabular}
\end{center}
between $G_1=\{a, b, c\}$ and $G_2=\{d, e, f\}$
and consider the concept lattice $L_1^0$ corresponding to the context $K_1^0$ of Example~\ref{ex:fca}. 
The concepts in $L_1^0$ have names in $\eta(L_1^0)=\{ABC, AB, C, \bot\}$ (Example~\ref{ex:cnames}) hence scaling by $\sigma_\exists$ will provide the attribute set $\{\exists q.ABC,$ $\exists q.AB,\exists q.C,\exists q.\bot\}$.
The description of the relation $q$ allows to uncover the incidence relation for these (the incidence relation for $\bot$ is always empty for $\exists$ so never displayed in the examples):
\begin{center}\footnotesize
\setlength{\tabcolsep}{2pt}
\begin{tabular}{r|ccc}
                        & \rotatebox{90}{$\exists q.ABC$} & \rotatebox{90}{$\exists q.AB$} & \rotatebox{90}{$\exists q.C$}\\ \hline
                       $d$ & $\times$ & $\times$ & \\
                       $e$ & $\times$ & $\times$ & \\
                       $f$ & $\times$ &   & $\times$
\end{tabular}
\end{center}
If, in addition, the context $K_2^0$ of $G_2$ had two other attributes $n_1$ and $n_2$ whose incidence relation is given as $I_2^0$ displayed below, then the scaling operation would correspond to:
\begin{center}\footnotesize
\setlength{\tabcolsep}{2pt}
\begin{tikzpicture}

  \draw (-2.25,0) node {$\sigma_{\exists} ( $};
  \draw (-.75,0) node {
  \begin{tabular}{r|cc}
    $K_2^0$ & $n_1$ & $n_2$\\ \hline
    $d$ & $\times$ & \\
    $e$ & $\times$ & $\times$\\
    $f$ &   &
  \end{tabular}
    };
  
  \draw (.75,0) node {$ , $};

  \draw (2,0) node {\begin{tabular}{r|ccc}
                       $q$ & $a$ & $b$ & $c$\\ \hline
                       $d$ & $\times$ &   & \\
                       $e$ &   & $\times$ & \\
                       $f$ &   &   & $\times$
  \end{tabular}
    };
  
  \draw (3,0) node {$ , $};

  \begin{scope}[xshift=3.5cm,yshift=-1.65cm]
    \begin{scope}[xscale=.2,yscale=.25]

    \begin{dot2tex}[dot,tikz,codeonly,options=-traw]
      graph {
	graph [nodesep=1.5]
	node [style=smconcept]
	ABC [label="$\empty$
\nodepart{two}
$\empty$"]
	AB [label="$m_2$
\nodepart{two}
$a, b$"]
	C [label="$m_1,m_3$
\nodepart{two}
$c$"]

        C0 [label="$\empty$
\nodepart{two}
$\empty$"]
	ABC -- AB
	ABC -- C
        AB -- C0
        C -- C0
      }
    \end{dot2tex}
    \draw (ABC.east) node[anchor=west] {$ABC$};
    \draw (AB.north west) node[anchor=south] {$AB$};
    \draw (C.south) node[anchor=north] {$C$};
  \end{scope}
  \end{scope}
  
  \draw (7.25,0) node {$ ) = $};

  \draw (9.5,.5) node {
    \begin{tabular}{r|ccccc}
      $K_2^1$ &  \rotatebox{90}{$n_1$} &  \rotatebox{90}{$n_2$} & \rotatebox{90}{$\exists q.ABC$} & \rotatebox{90}{$\exists q.AB$} & \rotatebox{90}{$\exists q.C$}\\ \hline
      $d$ & $\times$ &   & $\times$ & $\times$ & \\
      $e$ & $\times$ & $\times$ & $\times$ & $\times$ & \\
      $f$ &   &   & $\times$ &   & $\times$
    \end{tabular}
};
\end{tikzpicture}
\end{center}
\end{example}
When generating relational attributes, scaling only relies on the \emph{names} of the concepts in $L$.
It is thus possible to define the sets of attributes from the set of names.
The set of relational attributes that can be scaled from $\varsigma$ and $r$ against a set $N$ of concept of names is $D_{\varsigma,r,N}=\{\varsigma(r,c)~|~c\in N\}$.
This can be used with $\eta(L)$, i.e. the names of concepts actually in $L$, or with $\eta(K)$, i.e. the names of all possible concepts to be generated from a context $K$.
Example~\ref{ex:attr} illustrates this.

\begin{example}[Set of relational attributes]\label{ex:attr}
For the context $K_2^0$ of Example~\ref{ex:rscale}, if there is only one relation $q$, whose codomain is $K_1^0$ of Example~\ref{ex:fca}, and the existential scaling operation $\exists$, then the set of possibly scalable relational attributes is:
\[
  D_{\exists,q,\eta(K_1^0)}=\{\exists q.ABC,\exists q.AB,\exists q.AC,\exists q.BC,\exists q.A,\exists q.B,\exists q.C,\exists q.\bot\}.
\]
But using only those concepts from the lattice $L_1^0$ obtained in Example~\ref{ex:fca}, this set is reduced to:
\[
  D_{\exists,q,\eta(L_1^0)}=\{\exists q.ABC,\exists q.AB,\exists q.C,\exists q.\bot\}.
\]
\end{example}

Various relational scaling operations exist, such as existential, strict and wide universal, min and max cardinality, which all follow the classical role restriction semantics of description logics \cite{baader2007a} under the closed-world assumption (see Table~\ref{tab:rscaling}).
The set of relational attributes obtained from a relation $r$ by relational scaling may be large but remains finite as long as $G'=\mathrm{cod}(r)$ is finite.
Cardinality constraints, relying on integers, may entail infinite sets of concepts in theory, but in practice, when $G'$ is finite, the set of meaningful cardinality attributes is bounded by $|G'|$.

\begin{table}
  \setlength\tabcolsep{4pt}
\centering\begin{tabular}{lccl}
            name & language ($D$) & $\Sigma$ & condition ($m(g)$) \\ \hline
existential      & $\exists r$    & $R$ & $r(g)\neq\varnothing$\\
universal (wide) & $\forall r.c$  & $R, L$ & $r(g)\subseteq extent(c)$\\
strict universal & $\forall\exists r.c$ & $R, L$ & $r(g)\neq\varnothing\wedge r(g)\subseteq extent(c)$\\
contains (wide) & $\forall c.r $ & $R, L$ & $extent(c)\subseteq r(g)$\\
strict contains & $\forall\exists c.r $ & $R, L$ & $extent(c)\neq\varnothing$\\
            &  &  &  \quad$\wedge extent(c)\subseteq r(g)$\\
qualified existential & $\exists r.c$ & $R, L$ & $r(g)\cap extent(c)\neq\varnothing$\\
qualified min cardinality  & $\leq_n r.c$   & $R, L$ & $|r(g)\cap extent(c)|\leq n$\\
qualified max cardinality  & $\geq_n r.c$   & $R, L$ & $|r(g)\cap extent(c)|\geq n$\\
\hline
$\forall$-condition & $\forall\langle r,r'\rangle_k$ & $R\times R', L_{C\times C'}$ & $r(g)=_k r'(g')$\\
$\exists$-condition & $\exists\langle r,r'\rangle_k$ & $R\times R', L_{C\times C'}$ & $r(g)\cap_k r'(g')\neq\varnothing$\\
\end{tabular}
\caption{Relational scaling operations (inspired from \cite{rouanehacene2013a,braud2018a}) and additional link key condition scaling operations \cite{atencia2019z}.}\label{tab:rscaling}
\end{table}

In fact, RCA may be considered as a very general way to apply relational scaling across contexts.
Various kinds of relational scaling operations have been provided \cite{braud2018a,atencia2019z,wajnberg2020a}.

\subsubsection{Logical scaling}\label{sec:logicsc}

Logical scaling \cite{prediger1997a} has been introduced for more versatile languages such as description logics and SQL.
It introduces query results within contexts.
In this case, $\Sigma$ is a logical theory or database tables, $D$ the set of formulas of the logic or queries ($Q$) and $\models$ is entailment or query evaluation.
The scaling can be rewritten as:

\[ \Sigma\models gIQ \text{ iff } \Sigma\models Q(g). \]

\noindent Here, the nearly identical notation shows the relevance of this generalisation.
We expressed it with respect to one individual $g$, so it applies to unary queries or formulas with one variable placeholder.
However, it is possible to generalise this to contexts in which individuals in $G$ are elements of the products of sets of individuals.

\subsubsection{Relational scaling as logical scaling}\label{sec:rscaslsc}

The type of scaling used by RCA, relational scaling, can be thought of as an extension of logical scaling based on description logic.

Relational scaling is based on a set of contexts $\{\langle G_\xx, M_\xx, I_\xx\rangle\}_{\xx\in\XX}$, the corresponding lattices $\{L_\xx\}_{\xx\in\XX}=\{\mathrm{FCA}(\langle G_\xx, M_\xx, I_\xx\rangle)\}_{\xx\in\XX}$ and a set $R$ of relations.
This input can be encoded as sets of description logic axioms by:
\begin{align*}
|K_\xx| &= \{ m(g) ~|~ m\in M_\xx\wedge g\in G_\xx\wedge gI_\xx m \},\\
|r| &=\{ r(g,g') ~|~ g\in G_\xx\wedge g'\in G_\zz\wedge \langle g, g'\rangle\in r \},\\
  \|L_\xx\| &= \{ c\equiv\sqcap_{d\in intent(c)} d ~|~ c\in L_\xx \}.
\end{align*}
The elements in $|\cdot|$ are part of an ABox and those in $\|\cdot\|$ are part of a TBox.
They may be combined into a description logic knowledge base $\Sigma=\langle T_\Sigma, A_\Sigma\rangle$ such that:
\begin{align*}
  T_\Sigma &= \bigcup_{\xx\in\XX} \|L_\xx\|,\\
  A_\Sigma &= \bigcup_{\xx\in\XX} |K_\xx| \cup \bigcup_{r\in R} |r|.
\end{align*}

In principle, it should be possible to deduce the extent of each concept and the order between concepts from this:
\begin{align*}
  \Sigma\models c(g) &\text{ iff } g\in extent(c),\\
  \Sigma\models c\sqsubseteq c' &\text{ iff } c\leq c'.
\end{align*}

The attributes provided by relational scaling are description logic concept descriptions, i.e. unary predicates.
They can be interpreted with respect to the knowledge base associated to $\Sigma$:

\[
  \Sigma\models gI\forallexists p.c\text{ iff } \Sigma\models (\forall p.c\sqcap\exists p)(g).
\]

This way of interpreting relational scaling opens the door to introducing arbitrary description logic axioms within $\Sigma$ and thus to use background knowledge.

\subsection{Other extensions}\label{sec:otherext}

There are other extensions providing formal concept analysis with more expressiveness in the expression of intents without scaling (type $b$ extension).
Instead of scaling, they change the structure of the set of attributes, staying within the scope of Galois lattices.
We mention them here briefly.

\subsubsubsection{Logical concept analysis}\label{sec:lca}

Logical concept analysis is an extension of formal concept analysis in which the set of attributes is replaced by logical formulas attached to objects \cite{ferre2000a}. 

$\langle 2^M,\subseteq,\cap,\cup\rangle$ is replaced by $\langle L,\models,\vee,\wedge\rangle$ in which $L$ is a set of logic formulas.
The extension is defined semantically, hence the formula may be thought of as the class of equivalent formulas. 
It could be redefined by using closed sets of formulas instead of single formulas and closed union instead of conjunction.

The context is $\langle G, L, i\rangle$ with $i:G\rightarrow L$ a mapping.
In this case, the two operators $\intent{\cdot}$ and $\extent{\cdot}$ define a Galois connection between $\langle 2^G, \subseteq\rangle$ and $\langle L,\models\rangle$ with $O\subseteq G$ and $\phi\in L$:
\begin{align*}
  \intent{O} & = \bigvee_{o\in O} i(o),\\
  \extent{\phi} & = \{ o\in G ~|~ i(o)\models \phi \}.
\end{align*}
As for scaling, it is possible to rewrite:
\[
  \langle L,\models,\vee,\wedge\rangle\models oI\phi \text{ iff } i(o)\models \phi.
\]

A benefit of plunging the logic in FCA is the definition of contextualised entailment\footnote{Contextualized deduction in \cite{ferre2000a}.} as:
\[ \phi\models^K \psi \text{ iff } \extent{\phi}\subseteq\extent{\psi}. \]
This extension is very `structural': concepts are carrying theories which do not apply to the objects of the concepts.

\subsubsubsection{Generalised formal concept analysis}\label{sec:gfca}

Generalised formal concept analysis \cite{chaudron2000a} attaches to each object $g$ an element  $T=\zeta(g)$ of a lattice $\langle \mathcal{L},\sqsubseteq,\sqcap,\sqcup\rangle$ which replaces
$\langle 2^M,\subseteq,\cap,\cup\rangle$ in formal concept analysis.
Hence a general context is a triple $\langle G, \mathcal{L},\zeta\rangle$,
such that $\zeta:G\rightarrow\mathcal{L}$.
The incidence relation is implicitly defined by:
\[ \langle \mathcal{L},\sqsubseteq,\sqcap,\sqcup\rangle\models gI T \text{ iff } \zeta(g)\sqsupseteq T. \]
\noindent so that the two operators $\intent{\cdot}$ and $\extent{\cdot}$ define a Galois connection between $\langle 2^G, \subseteq\rangle$ and $\langle \mathcal{L},\sqsubseteq\rangle$ with $O\subseteq G$ and $T\in \mathcal{L}$:
\begin{align*}
  O^\uparrow &= \sqcap_{g\in O} \zeta(g),\\
  T^\downarrow &= \{ g\in G ~|~ T\sqsubseteq \zeta(g) \}.
\end{align*}

This theory has been instantiated to the unification of existentially quantified conjunctions of first order atoms, represented as sets of atoms. 
$\sqsubseteq$ is syntactic subsumption, i.e. the fact that there exists a variable substitution on the subsumee such that it is included in the subsumer.
In such a case, $\mathcal{L}$ is the set of such formulas reduced to a non-redundant form and $\sqcap$ is antiunification.
Because of the use of unification, this approach remains syntactic.

\subsubsubsection{Pattern Structures}\label{sec:pattstruct}

It is also possible to avoid scaling and to directly work on complex data, using the formalism of `pattern structures' \cite{ganter2001a,kaytoue2011a}.
Pattern structures generalise FCA in a similar way as logical concept analysis.
In this case, $2^M$ is replaced by elements of a meet-semilattice $\langle D, \sqcap\rangle$ \cite{ganter2001a,kuznetsov2009a}.
The context is now $\langle G, D, \delta\rangle$ with $\delta: G\rightarrow D$ a mapping.

In this case, the two operators $\intent{\cdot}$ and $\extent{\cdot}$ define a Galois connection between $\langle 2^G, \subseteq\rangle$ and $\langle D, \sqsubseteq\rangle$ with $A\subseteq G$ and $d\in D$:
\begin{align*}
  \intent{A} & = \sqcap_{g\in A} \delta(g),\\
  \extent{d} & = \{ g\in G ~|~ d\sqsubseteq\delta(g) \}.
\end{align*}
such that $c\sqsubseteq d \equiv c\sqcap d=c$.

This requires to define:
\begin{enumaa}
\item how to order its elements ($\sqsubseteq$), and
\item how to test that an object satisfies an attribute expression ($gId$ for $d\in D$).
\end{enumaa}
This can be rewritten as for scaling:
\[
  \langle D, \sqsubseteq\rangle\models gId \text{ iff } d\sqsubseteq \delta(g).
\]

Pattern structures \cite{ganter2001a,kuznetsov2009a} provide a more structured attribute language without scaling.

However, these extensions are not directly affected by the problem of context dependencies considered here as the attributes do not refer to concepts.

\subsubsubsection{Relational extensions}

On the contrary, other approaches \cite{kotters2013a,ferre2020a} aim at extracting conceptual structures from $n$-ary relations without resorting to scaling.
Their concepts have intents that can be thought of as conjunctive queries and extents as tuples of objects, i.e. answers to these queries.
Hence, instead of being classes, i.e. monadic predicates, concepts correspond to general polyadic predicates.
For that purpose, they rely on more expressive input, e.g. in Graph-FCA \cite{ferre2020a} the incidence relation is a hypergraph between objects, and produce more expressive representations.
A comparison of RCA and Graph-FCA is provided in \cite{keip2020a}.
Graph-FCA adopts a different approach from RCA but should, in principle, suffer from the same problem as the one considered here as soon as it contains circular dependencies: intents would need to refer to concepts so created, i.e. named subqueries.
This remains to be studied.

\subsubsubsection{Terminological base extraction}

Finally, description logic base mining \cite{baader2008b,guimaraes2023a} and relational concept analysis share the same purpose: inferring a TBox from an ABox (taken as an interpretation).
However, RCA does this by introducing new named concepts based on FCA, though description logic base mining does not introduce new names but uses new concept descriptions inspired from Duquenne-Guigues implication bases \cite{guigues1986a}.
Where, in Example~\ref{ex:rscale}, relational scaling would use attribute $\exists q.AB$, base mining would use the description $\exists q.\exists m_2.\top$.
As soon as cycles occur in context dependencies, this naturally leads to cyclic concept definitions.
This has been interpreted with the greatest fixed-point semantics in $\FranzEL_{gfp}$.
However, led by complexity considerations, work has focused on extracting minimal bases in $\FranzEL$ through unravelling \cite{baader2008b,guimaraes2023a}.
The problem raised in this report is different but applies as well to description logic base mining as soon as it is taken as a knowledge induction task from data:
circular dependencies may lead to different, equally well-behaving, bases that would be worth taking in consideration.

\subsection{A very short introduction to RCA}\label{sec:rca}

Relational Concept Analysis (RCA) \cite{rouanehacene2013a} extends FCA to the processing of relational datasets and allows inter-object relations to be materialised and incorporated into formal concept intents.
RCA can be considered as a way to induce a description logic TBox from a simple ABox \cite{baader2007a}, using specific scaling operations.
It  may also be thought of as a general way to deal with circular references using different scaling operations.

RCA applies
\begin{itemize}
\item a set $\Omega$\label{def:Omega} of relational scaling operations on
\item a family of contexts $K^0=\{\langle G_\xx, M^0_\xx, I^0_\xx\rangle\}_{\xx\in\XX}$ indexed by a finite set $\XX$, such that, if $\xx\neq\zz$, $G_\xx\cap G_\zz=\varnothing$, and
\item a finite set $R$ of binary relations, i.e. relations $r\subseteq G_\xx\times G_\zz$ (with $\xx, \zz\in\XX$).
\end{itemize}
$\langle K^0, R\rangle$ is called a relational context\footnote{We use the term `relational context' instead of `relational context family'.}.
Each context of the family $K^t$ may be abbreviated as $K^t_\xx=\langle G_\xx, M^t_\xx, I^t_\xx\rangle$.
The use of the ordinal superscript $t$ will become clearer in a few lines and can be ignored at that stage.
We note $R_{\xx}=\{r\in R~|~ \mathrm{dom}(r)=G_\xx\}$ and $R_{\xx,\zz}=\{r\in R_{\xx}~|~ \mathrm{cod}(r)=G_\zz\}$.

\subsubsection{Concept names and relational attributes}\label{sec:rcaname}

The notation used for expressing sets of relational attributes can be generalised.
For RCA, $\eta^*(K^0)=\{\eta(K^0_\xx)\}_{\xx\in\XX}$ is the indexed set of all concept names induced from all contexts in $K^0$.
Similarly, for an indexed set of concept lattices $L=\{L_\xx\}_{\xx\in\XX}$, $\eta^*(L)=\{\eta(L_\xx)\}_{\xx\in\XX}$.
Then, given a set $\Omega$ of relational scaling operations, a set $R$ of relations, the set of scalable relational attributes for a concept with respect to an indexed family of sets of concept names $N$ is:
\[ D_{\Omega,R_\xx,N} = \bigcup_{\varsigma\in\Omega}\bigcup_{\zz\in\XX}\bigcup_{r\in R_{\xx,\zz}} D_{\varsigma,r,N_\zz}. \]

This notation is used for identifying the relational attributes that can be scaled for a context $K_\xx$ from a set of concept lattices:
\[ D_{\Omega,R_\xx,L} = D_{\Omega,R_\xx,\eta^*(L)} \]
or the set of all the possible relational attributes that can be scaled for a context $K_\xx$ given a family of formal contexts $K^0$ as 
\[ D_{\Omega,R_\xx,K^0} = D_{\Omega,R_\xx,\eta^*(K^0)}. \]

Since $\forall \zz\in\XX$, $\eta(L_\zz)\subseteq \eta(K^0_\zz)$, then $D_{\Omega,R_\xx,\{L_\zz\}_{\zz\in\XX}}\subseteq D_{\Omega,R_\xx,K^0}$.
This is illustrated by Example~\ref{ex:attr2}.

\begin{example}[Set of relational attributes (cont'd)]\label{ex:attr2}
Following Example~\ref{ex:attr},
if $K_2^0$, whose objects are $\{d, e, f\}$, is also linked to itself ($K_2^0$) by the relation $s$ and the current set of concept names is $\eta(L_2^0)=\{DEF,DE,E\}$, then it would additionally scale the relational attributes:
$D_{\{\exists\},\{s\},\{L_2^0\}}=D_{\exists,s,\eta(L_2^0)}=\{\exists s.DEF, \exists s.DE, \exists s.E\}$.
If, in addition, the strict contains scaling operation ($\forall\exists C.r$, see Table~\ref{tab:rscaling}) is used, then new relational attributes would be:
\begin{align*}
  D_{\{\exists,\forall\exists\},\{q,s\},\{L_1^0,L_2^0\}} = \{ &\exists q.ABC,\exists q.AB,\exists q.C,\exists q.\bot, \exists s.DEF, \exists s.DE, \exists s.E, \\
  & \forall\exists ABC.q,\forall\exists AB.q,\forall\exists C.q,\forall\exists \bot.q, \forall\exists DEF.s, \forall\exists DE.s, \forall\exists E.s\}.
\end{align*}
\end{example}

The semantics of these attributes is provided in the same way as above and noted $\langle R,L\rangle\models gI\varsigma(r,c)$.

\subsubsection{Operations and algorithm}\label{sec:rcaop}

Hereafter, we will consider relational scaling with the structure $\Sigma=\langle R, L\rangle$ made of a set $R$ of binary relations between two sets of objects from $K^0$, and a family $L^t=\{L^t_\xx\}_{\xx\in\XX}$ of concept lattices obtained from $K^t$.

RCA applies relational scaling operations from a set $\Omega$ to each $K^t_\xx\in K^t$ and all relations $r\in R_{\xx,\zz}$ from the set of concepts in the corresponding $L_\zz^{t}=\mathrm{FCA}(K_\zz^t)$.
Such scaling is defined as:
\[ \sigma_\Omega(K_\xx, R, L) = \mathrm{K}^{\langle R, L\rangle}_{+D_{\Omega,R_\xx,L}}(K_\xx). \]

The classical RCA algorithm, that is called here $\underline{\mathrm{RCA}}$, thus relies on $\mathrm{FCA}$ and $\sigma_{\Omega}$.
More precisely, it applies these in parallel on all contexts.
Hence, $\mathrm{FCA}^*$\label{def:FCAs} and $\sigma_{\Omega}^*$\label{def:sigmas} are defined as:
\begin{align*}
  \mathrm{FCA}^*(\{K_\xx\}_{\xx\in\XX}) & = \{\mathrm{FCA}(K_\xx)\}_{\xx\in\XX},\\
  \sigma^*_{\Omega}(\{K_\xx\}_{\xx\in\XX},R, L) & = \left\lbrace \sigma_\Omega(K_\xx,R,L) \right\rbrace_{\xx\in\XX}.
\end{align*}
\noindent such that $L$ is a family of concept lattices.
The whole family of concepts lattices needs to be passed to $\sigma$.

$\underline{\mathrm{RCA}}$ starts from the initial family of contexts $K^0$ and iterates the application of the two operations:
\[
  K^{t+1} = \sigma^*_{\Omega}(K^t,R, \mathrm{FCA}^*(K^{t}))
\]
until reaching a fixed point, i.e. reaching $n$ such that $K^{n+1}=K^{n}$.
Then, $\underline{\mathrm{RCA}}_\Omega(K^0,R)=\mathrm{FCA}^*(K^n)$.

Thus, the $\underline{\mathrm{RCA}}$ algorithm proceeds in the following way:
\begin{enumerate}
\item Initial contexts: $t\leftarrow 0$; $\{\langle G_\xx, M^{t}_\xx, I^{t}_\xx\rangle\}_{\xx\in\XX} \leftarrow \{\langle G_\xx, M_\xx, I_\xx\rangle\}_{\xx\in\XX}$.
\item\label{it:init-rca} $\{L^{t}_\xx\}_{\xx\in\XX} \leftarrow \mathrm{FCA}^*(\{\langle G_\xx, M^{t}_\xx, I^{t}_\xx\rangle\}_{\xx\in\XX})$ (or, for each context, $\langle G_\xx, M^{t}_\xx, I^{t}_\xx\rangle$ the corresponding concept lattice $L^{t}_\xx=\mathrm{FCA}(\langle G_\xx, M^{t}_\xx, I^{t}_\xx\rangle)$ is created using FCA).
\item $\{\langle G_\xx, M^{t+1}_\xx, I^{t+1}_\xx\rangle\}_{\xx\in\XX} \leftarrow \sigma^*_\Omega(\{\langle G_\xx, M^{t}_\xx, I^{t}_\xx\rangle\}_{\xx\in\XX},R,\{L^{t}_\xx\}_{\xx\in\XX})$ (i.e. relational scaling is applied, for each relation $r$ whose codomain lattice has new concepts, generating new contexts $\langle G_\xx, M^{t+1}_\xx, I^{t+1}_\xx\rangle$ including both plain and relational attributes in $M^{t+1}_\xx$).
\item If $\exists \xx\in\XX$ such that $M^{t+1}_\xx\neq M^{t}_\xx$ (scaling has occurred), then $t\leftarrow t+1$; go to Step~\ref{it:init-rca}.
\item Return $\{L^{t}_\xx\}_{\xx\in \XX}$.
\end{enumerate}
This is illustrated by Example~\ref{ex:rca}.

\begin{example}[Relational concept analysis]\label{ex:rca}
Consider two relations $p$ and $q$ defined as:
\begin{center}\footnotesize
\setlength{\tabcolsep}{2pt}
\begin{tikzpicture}

  \draw (-4,0) node {\begin{tabular}{r|ccc}
                       $p$  & $d$ & $e$ & $f$ \\ \hline
                       $a$ & $\times$ &   & \\
                       $b$ &   & $\times$ & \\
                       $c$ &   &   & $\times$
                       \end{tabular}};

  \draw (4,0) node {\begin{tabular}{r|ccc}
                       $q$ & $a$ & $b$ & $c$\\ \hline
                       $d$ & $\times$ &   & \\
                       $e$ &   & $\times$ & \\
                       $f$ &   &   & $\times$
                       \end{tabular}};

\end{tikzpicture}
\end{center}
\noindent and applying on the contexts $K_1^0$ of Example~\ref{ex:fca} and $K_2^0$ of Example~\ref{ex:rscale} (Figure~\ref{fig:ex-3-O1}).

\begin{figure}[!ht]
\setlength{\tabcolsep}{2pt}
\centering
   \begin{tikzpicture}[font=\footnotesize]

  \draw (-6,0) node {\begin{tabular}{r|ccc}
                       $K_1^0$ & $m_1$ & $m_2$ & $m_3$\\ \hline
                       $a$ & & $\times$ & \\
                       $b$ & & $\times$ & \\
                       $c$ & $\times$ & & $\times$
                       \end{tabular}};

  \begin{scope}[xshift=-3cm,yshift=-1cm]
    \begin{scope}[xscale=.25,yscale=.25]
    \begin{dot2tex}[dot,tikz,codeonly,options=-traw]
      graph {
	graph [nodesep=1.5]
	node [style=concept]
	ABC [label="$$
\nodepart{two}
$\empty$"]
	AB [label="$m_2$
\nodepart{two}
$a, b$"]
	C [label="$m_1, m_3$
\nodepart{two}
$c$"]
	C0 [label="$\empty$
\nodepart{two}
$\empty$"]

        ABC -- AB
	ABC -- C
        AB -- C0
        C -- C0
      }
    \end{dot2tex}
    \node[anchor=east] at (ABC.west) {$ABC$};
    \node[anchor=east] at (AB.west) {$AB$};
    \node[anchor=east] at (C.west) {$C$};
    \node[anchor=east] at (C0.west) {$\bot$};
  \end{scope}
  \draw (0,0) node {$L_1^0$:};
  \end{scope}

  \begin{scope}[xshift=2cm,yshift=-1cm]
    \begin{scope}[xscale=.25,yscale=.25]
    \begin{dot2tex}[dot,tikz,codeonly,options=-traw]
      graph {
	graph [nodesep=1.5]
	node [style=concept]
	DEF [label="$\empty$
\nodepart{two}
$f$"]
	DE [label="$n_1$
\nodepart{two}
$d$"]
	E [label="$n_2$
\nodepart{two}
$e$"]

	DEF -- DE
	DE -- E
      }
    \end{dot2tex}
    \node[anchor=east] at (DEF.west) {$DEF$};
    \node[anchor=east] at (DE.west) {$DE$};
    \node[anchor=east] at (E.west) {$E$};
  \end{scope}
  \draw (-.75,0) node {$L_2^0$:};
  \end{scope}

  \draw (5.3,0) node {\begin{tabular}{r|cc}
                       $K_2^0$ & $n_1$ & $n_2$\\ \hline
                       $d$ & $\times$ & \\
                       $e$ & $\times$ & $\times$\\
                       $f$ &   &
                       \end{tabular}};

  \begin{scope}[yshift=-4cm]
  \draw (-6,0) node {\begin{tabular}{r|cccccc}
                       $K_1^1$ & \rotatebox{90}{$m_1$} & \rotatebox{90}{$m_2$} & \rotatebox{90}{$m_3$} & \rotatebox{90}{$\exists p.DEF$} & \rotatebox{90}{$\exists p.DE$} & \rotatebox{90}{$\exists p.E$}\\ \hline
                       $a$ &   & $\times$ &   & $\times$ & $\times$ &   \\
                       $b$ &   & $\times$ &   & $\times$ & $\times$ & $\times$ \\
                       $c$ & $\times$ &   & $\times$ & $\times$ &   &   
                       \end{tabular}};

  \begin{scope}[xshift=-3.5cm,yshift=-2.5cm]
    \begin{scope}[xscale=.25,yscale=.25]
    \begin{dot2tex}[dot,tikz,codeonly,options=-traw]
      graph {
	graph [nodesep=1.5]
	node [style=concept]
	ABC [label="$\exists p.DEF$
\nodepart{two}
$\empty$"]
	AB [label="$m_2$,$\exists p.DE$
\nodepart{two}
$a$"]
	B [label="$\exists p.E$
\nodepart{two}
$b$"]
	C [label="$m_1, m_3$
\nodepart{two}
$c$"]
	C0 [label="$\empty$
\nodepart{two}
$\empty$"]

        ABC -- AB
	ABC -- C
        AB -- B
        B -- C0
        C -- C0
      }
    \end{dot2tex}
    \node[anchor=east] at (ABC.west) {$ABC$};
    \node[anchor=east] at (AB.west) {$AB$};
    \node[anchor=east] at (B.west) {$B$};
    \node[anchor=east] at (C.west) {$C$};
    \node[anchor=east] at (C0.west) {$\bot$};
  \end{scope}
  \draw (0,0) node {$L_1^1$:};
  \end{scope}

  \begin{scope}[xshift=.7cm,yshift=-2.5cm]
    \begin{scope}[xscale=.25,yscale=.25]
    \begin{dot2tex}[dot,tikz,codeonly,options=-traw]
      graph {
	graph [nodesep=1.5]
	node [style=concept]
	DEF [label="$\exists q.ABC$
\nodepart{two}
$\empty$"]
	DE [label="$n_1, \exists q.AB$
\nodepart{two}
$d$"]
	E [label="$n_2$
\nodepart{two}
$e$"]
	F [label="$\exists q.C$
\nodepart{two}
$f$"]
	C0 [label="$\empty$
\nodepart{two}
$\empty$"]

	DEF -- DE
	DEF -- F
        DE -- E
        E -- C0
        F -- C0
      }
    \end{dot2tex}
    \node[anchor=east] at (DEF.west) {$DEF$};
    \node[anchor=east] at (DE.west) {$DE$};
    \node[anchor=east] at (E.west) {$E$};
    \node[anchor=east] at (F.west) {$F$};
    \node[anchor=east] at (C0.west) {$\bot$};
  \end{scope}
  \draw (0,0) node {$L_2^1$:};
  \end{scope}

  \draw (5.3,0) node {\begin{tabular}{r|ccccc}
                       $K_2^1$ &  \rotatebox{90}{$n_1$} &  \rotatebox{90}{$n_2$} & \rotatebox{90}{$\exists q.ABC$} & \rotatebox{90}{$\exists q.AB$} & \rotatebox{90}{$\exists q.C$}\\ \hline
                       $d$ & $\times$ &   & $\times$ & $\times$ & \\
                       $e$ & $\times$ & $\times$ & $\times$ & $\times$ & \\
                       $f$ &   &   & $\times$ &   & $\times$
                       \end{tabular}};
  \end{scope}

  \begin{scope}[yshift=-9.5cm]
  \draw (-6,-1) node {\begin{tabular}{r|ccccccc}
                       $K_1^2$ &  \rotatebox{90}{$m_1$} &  \rotatebox{90}{$m_2$} &  \rotatebox{90}{$m_3$} & \rotatebox{90}{$\exists p.DEF$} & \rotatebox{90}{$\exists p.DE$} & \rotatebox{90}{$\exists p.E$} & \rotatebox{90}{$\exists p.F$}\\ \hline
                       $a$ &   & $\times$ &   & $\times$ & $\times$ &   &   \\
                       $b$ &   & $\times$ &   & $\times$ & $\times$ & $\times$ &   \\
                       $c$ & $\times$ &   & $\times$ & $\times$ &   &   & $\times$
                       \end{tabular}};

  \begin{scope}[xshift=-3.75cm,yshift=-2.5cm]
    \begin{scope}[xscale=.25,yscale=.25]
    \begin{dot2tex}[dot,tikz,codeonly,options=-traw]
      graph {
	graph [nodesep=1.5]
	node [style=concept]
	ABC [label="$\exists p.DEF$
\nodepart{two}
$\empty$"]
	AB [label="$m_2, \exists p.DE$
\nodepart{two}
$a$"]
	B [label="$\exists p.E$
\nodepart{two}
$b$"]
	C [label="$m_1, m_3$,\\\\
$\exists p.F$
\nodepart{two}
$c$"]
	C0 [label="$\empty$
\nodepart{two}
$\empty$"]

        ABC -- AB
	ABC -- C
        AB -- B
        B -- C0
        C -- C0
      }
    \end{dot2tex}
    \node[anchor=east] at (ABC.west) {$ABC$};
    \node[anchor=east] at (AB.west) {$AB$};
    \node[anchor=east] at (B.west) {$B$};
    \node[anchor=east] at (C.west) {$C$};
    \node[anchor=east] at (C0.west) {$\bot$};
  \end{scope}
  \draw (0,0) node {$L_1^2$:};
  \end{scope}

  \begin{scope}[xshift=.45cm,yshift=-2.5cm]
    \begin{scope}[xscale=.25,yscale=.25]
    \begin{dot2tex}[dot,tikz,codeonly,options=-traw]
      graph {
	graph [nodesep=1.5]
	node [style=concept]
	DEF [label="$\exists q.ABC$
\nodepart{two}
$\empty$"]
	DE [label="$n_1, \exists q.AB$
\nodepart{two}
$d$"]
	E [label="$n_2, \exists q.B$
\nodepart{two}
$e$"]
	F [label="$\exists q.C$
\nodepart{two}
$f$"]
	C0 [label="$\empty$
\nodepart{two}
$\empty$"]

	DEF -- DE
	DEF -- F
        DE -- E
        E -- C0
        F -- C0
      }
    \end{dot2tex}
    \node[anchor=east] at (DEF.west) {$DEF$};
    \node[anchor=east] at (DE.west) {$DE$};
    \node[anchor=east] at (E.west) {$E$};
    \node[anchor=east] at (F.west) {$F$};
    \node[anchor=east] at (C0.west) {$\bot$};
  \end{scope}
  \draw (0,0) node {$L_2^2$:};
  \end{scope}

  \draw (5,1.5) node {\begin{tabular}{r|cccccc}
                       $K_2^2$ &  \rotatebox{90}{$n_1$} &  \rotatebox{90}{$n_2$} & \rotatebox{90}{$\exists q.ABC$} & \rotatebox{90}{$\exists q.AB$} & \rotatebox{90}{$\exists q.B$} & \rotatebox{90}{$\exists q.C$}\\ \hline
                       $d$ & $\times$ &   & $\times$ & $\times$ &   & \\
                       $e$ & $\times$ & $\times$ & $\times$ & $\times$ & $\times$ & \\
                       $f$ &   &   & $\times$ &   &   & $\times$
                       \end{tabular}};
  \end{scope}

\end{tikzpicture}
\caption{The three iterations of RCA from the initial contexts $K_1^0$ and $K_2^0$.}\label{fig:ex-3-O1}
\end{figure}
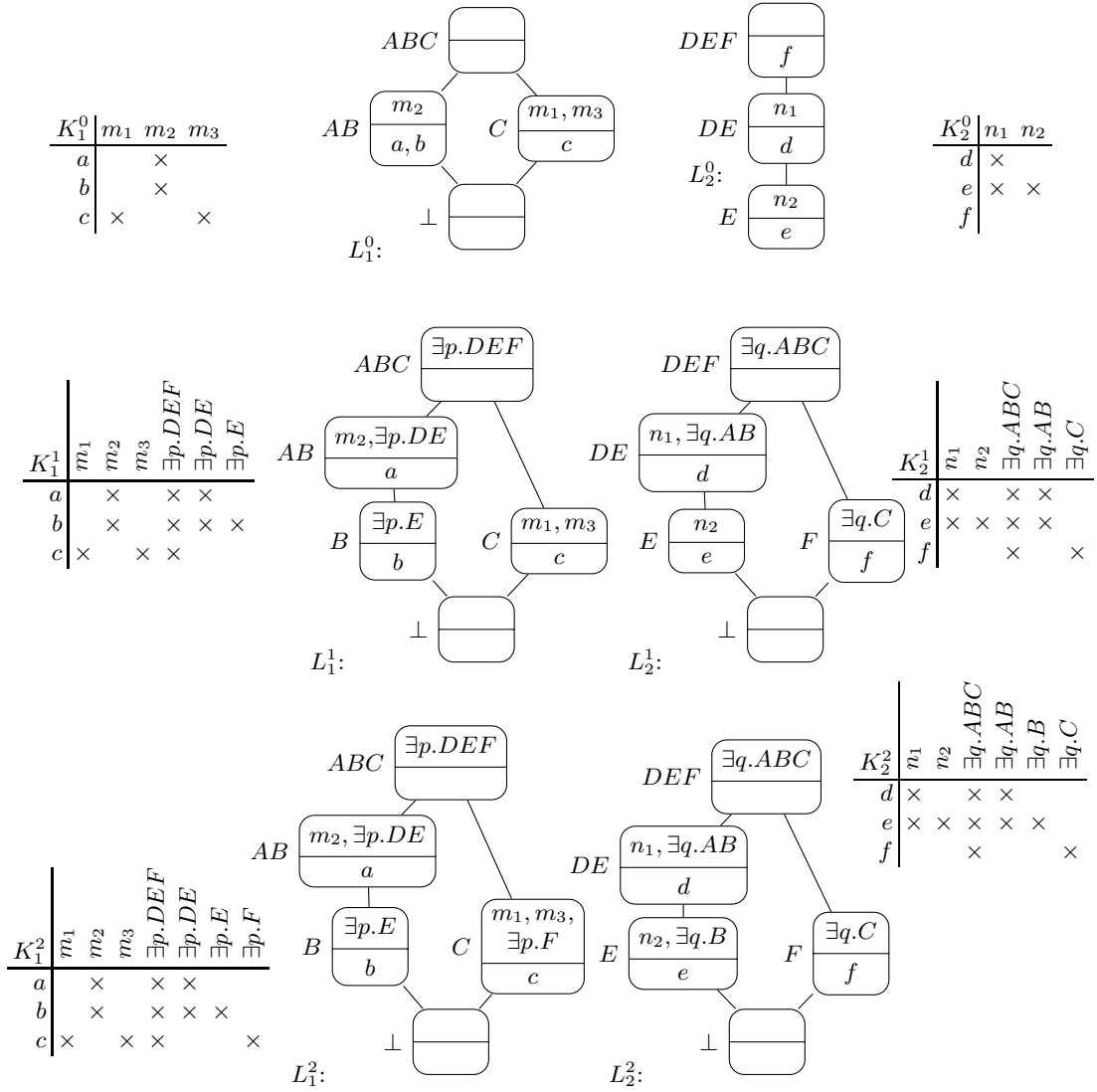

Applying $\mathrm{FCA}^*$ to the two contexts $K_1^0$ and $K_2^0$, provides the very simple lattices $L_1^0$ and $L_2^0$ of Figure~\ref{fig:ex-3-O1} with concepts $ABC$, $AB$, $C$ and $\bot$, and $DEF$, $DE$ and $E$, respectively.
Applying scaling, as seen partially in Example~\ref{ex:rscale}, provides the context $K_1^1$ with new attributes $\exists p.DEF$, $\exists p.DE$, $\exists p.E$ and $K_2^1$ with $\exists q.ABC$, $\exists q.AB$, $\exists q.C$ (Example~\ref{ex:attr}).
Applying $\mathrm{FCA}^*$ to these contexts provides the lattices $L_1^1$ and $L_2^1$ with additional concepts $B$ and $F$.
These can in turn go through scaling and unveil new attributes $\exists p.F$ and $\exists q.B$ added to $K^1_1$ and $K_2^1$ to give $K^2_1$ and $K_2^2$.
$\mathrm{FCA}^*$ introduces the new attributes in the intent of relevant concepts but does not introduce any new concept.
Hence the process stops and $\underline{\mathrm{RCA}}$ returns the family of concept lattices $\{L_1^2, L_2^2\}$.

The result is thus quite different from the $\{L_1^0, L_2^0\}$ that would have been returned by FCA alone.
\end{example}

These operations can be interpreted as generating a description logic T-box from a given A-box.
This can be seen in Example~\ref{ex:rcadl}.

\begin{example}[Relational concept analysis and description logics]\label{ex:rcadl}
As an example, consider the following ABox:
\begin{align*}
A_{12}^0 = \{ &\top_1(a), \top_1(b), \top_1(c), m_1(c), m_2(a), m_2(b), m_3(c), p(a,d), p(b,e), p(c,f),\\
& \top_2(d), \top_2(e), \top_2(f), n_1(d), n_1(e), n_2(e), q(d,a), q(e,b), q(f,c) \}
\intertext{
This can be encoded as the two context $K_1^0$ and $K_2^0$ (Figure~\ref{fig:ex-3-O1}) and the two relations $p$ and $q$ between these of Example~\ref{ex:rca}.}
\intertext{From this, RCA generates the lattices $L_1^2$ and $L_2^2$ (Figure~\ref{fig:ex-3-O1}) which can be interpreted as the description logic T-box:}
T_{12}^2 = \{ & ABC \sqsubseteq \top_1 \sqcap \exists p.DEF, AB \sqsubseteq ABC \sqcap m_2 \sqcap \exists p.DE, B \sqsubseteq AB \sqcap \exists p.E,\\
              & C \sqsubseteq ABC \sqcap m_1 \sqcap m_2 \sqcap \exists p.F, AB\sqcap C \sqsubseteq \bot, DEF \sqsubseteq \top_2 \sqcap \exists q.ABC,\\
              & DE \sqsubseteq DEF \sqcap n_1 \sqcap \exists q.AB, E \sqsubseteq DE \sqcap n_2 \sqcap \exists q.B, F \sqsubseteq DEF \sqcap \exists q.C,\\
              & DE\sqcap F \sqsubseteq \bot
\}
\intertext{with the improved A-box:}
A_{12}^2 = \{ & AB(a), B(b), C(c),  p(a,d), p(b,e), p(c,f),\\
& DE(d), E(e), F(f), q(d,a), q(e,b), q(f,c) \}
\end{align*}
\end{example}

\subsubsection{Properties and semantics}\label{sec:rcasem}

A context $K=\langle G, M, I\rangle$ is a subcontext of another $K'=\langle G', M', I'\rangle$ whenever $G\subseteq G'$, $M\subseteq M'$ and $I=I'\cap(G\times M)$ \cite[p.97]{ganter1999a}.
By abuse of notation, this is noted $K\subseteq K'$.
This is generalised to families of contexts $\{K_\xx\}_{\xx\in \XX}\subseteq\{K'_\xx\}_{\xx\in \XX}$ whenever $\forall\xx\in \XX$, $K_\xx\subseteq K'_\xx$.

$\underline{\mathrm{RCA}}$ always reaches a closed family of contexts for reason of finiteness \cite{rouanehacene2013a} and
the sequence $(K^t)_{t=0}^{n}$ is non-(intent-)contracting, i.e. $\forall t\geq 0, K^{t}\subseteq K^{t+1}$ \cite{rouanehacene2013b}.

The RCA semantics characterises the set of concepts in resulting RCA lattices as all and only those concepts grounded on the initial family of contexts ($K^0$) based on relations ($R$) \cite{rouanehacene2013b}.
This can thus be considered as a well-grounded semantics: an attribute is scaled and applied to an object at iteration $t+1$ only if its condition applies at stage $t$. Hence, everything is ultimately relying on $K^0$.

\cite{rouanehacene2013b} established that $\underline{\mathrm{RCA}}$ indeed finds \emph{the} $K^n$ satisfying these constraints through correctness (the concepts of $\mathrm{FCA}^*(K^n)$ are grounded in $K^0$ through $R$) and completeness (all so-grounded concepts are in $K^n$).

\subsection{Dependencies and cycles}\label{sec:cycles}

As can be seen, relations in RCA define a dependency graph between objects (of different or the same context).
In turn, this graph of objects induces a dependency graph between concepts through the scaled attributes that refer to other concepts.
It also induces a dependency graph between contexts: an edge exists between two contexts if an object of the former is related to an object of the latter.

This report is related to the circular dependencies, i.e. the circuits, that may exist within these graphs.
We say that a set $R$ of relations is \emph{hierarchical} if its object dependency graph is not circular.

Circular dependencies create a problem when one wants to define the family of concept lattices that should be returned by relational concept analysis.
As will be seen in Section~\ref{sec:examples}, there may exists several such families.

In order to explain some specific phenomena in a clearer way, it is possible to study them in various restrictions of RCA.
We introduced some of these that are organised in Figure~\ref{fig:rcarest}:
\begin{itemize}
\item RCA$^n$=RCA: RCA with $n$ contexts
\item RCA$^1$: RCA with one context
\item RCA$^0$: RCA with one \emph{empty} context
\item HRCA: RCA with only hierarchical relations.
\item HRCA$^1$=FCA: FCA
\item FCA$^0$=HRCA$^0$: FCA with empty contexts is clearly not interesting
\end{itemize}

\bigskip

\begin{figure}
\centering
\begin{tikzpicture}[xscale=1.5]
    \draw (6,.5) node (rca0) {RCA$^0$};

    \draw (6,-1) node (hrca0) {HRCA$^0$};
    \draw [->] (hrca0) -- (rca0);

    \draw (8,.5) node (rca1) {RCA$^1$};
    \draw [->] (rca0) -- (rca1);
    \draw (8,-1) node (fca) {HRCA$^1$=FCA};
    \draw [->] (fca) -- (rca1);

    \draw [->] (hrca0) -- (fca);
    \draw (fca.north west) edge[bend left=15,->] (rca0.south east);
    \draw (rca0.south east) edge[bend left=15,->] (fca.north west);

    \draw (10,.5) node (rca2) {RCA$^2$};
    \draw [->] (rca1) -- (rca2);
    \draw (10,-1) node (hrca2) {HRCA$^2$};
    \draw [->] (fca) -- (hrca2);

    \draw (12,.5) node (rca) {RCA};
    \draw (12,-1) node (hrca) {HRCA};
    \draw [->] (hrca) -- (rca);
    \draw [->,dotted] (hrca2) -- (hrca);
    \draw [->,dotted] (rca2) -- (rca);

\end{tikzpicture}
\caption{Relation between different restrictions of RCA (arrows mean: `can be rewritten into'). 
}\label{fig:rcarest}
\end{figure}
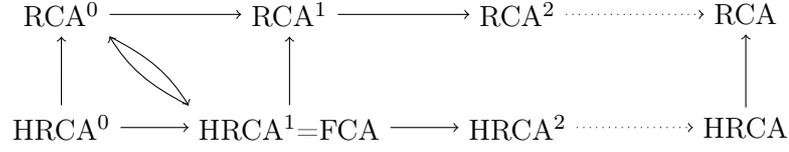

\label{sec:rca0}

We first study the semantics of RCA within RCA$^0$, a special case of RCA.
It is restricted in two ways:
\begin{itemize}
\item It contains only one context ($|X|=1$),
\item which has no attribute ($M_x^0=\emptyset$).
\end{itemize}
Additionally, we consider in the examples below only one single scaling operation: qualified existential scaling ($\Omega=\{\exists\}$).
The results of the report are independent from this choice in examples.

Because RCA$^0$ is a restriction of RCA, we will use the same notation as defined above, thought it operates on simpler structures.
Although RCA$^0$ seems very simple\footnote{An anonymous ICFCA 2021 reviewer complements the remarks of \S\ref{sec:otherext} noting that RCA$^0$ is also very related to Graph-FCA as they both have only one context and using existential scaling.}, FCA can be encoded into RCA$^0$.
Indeed, given a context $\langle G, M, I\rangle$, for each attribute $m\in M$ in the context, a relation $R_m\subseteq G\times G$ can be created such that $\forall g\in G, \langle g, g\rangle\in R_m$ if and only if $gIm$.
Starting with $K^0=\langle G, \varnothing, \varnothing\rangle$, it can be checked that $\sigma^*_{\exists}(K^0,R,\mathrm{FCA}^*(K^0))$ will simply add to $K^0$ one attribute $\exists r_m.\top$ per $m\in M$ which exactly corresponds to $m$.

It is also possible to encode RCA$^0$ into FCA using the following trick:
Given an RCA$^0$ relational context $\langle\{\langle G, \varnothing, \varnothing\rangle\}, R\rangle$, it can be encoded in a single FCA context:
\begin{itemize}
\item $G$ remains the same;
\item $M=\{ p^o | p\in R\wedge o\in G\}$;
\item $o'Ip^o$ iff $\langle o', o\rangle\in p$.
\end{itemize}
As a result, all the information from the relational context has been preserved and FCA will return a result analogous to $\underline{\mathrm{RCA}}^0_{\{\exists\}}$.

Introducing RCA$^0$ is sufficient to hint at the problems and solutions that we want to illustrate, as will now be presented.

\section{Examples}\label{sec:examples}

In order to illustrate the weakness of the RCA semantics, we first carry on the introductory Examples~\ref{ex:fca}, \ref{ex:rscale}--\ref{ex:rca} (\S\ref{sec:elabexrca}).
We then display it on more minimal examples that will be carried over the report:
the minimal example used in \cite{euzenat2021a} for RCA$^0$ (\S\ref{sec:elabexrca0})
and a somewhat equivalent example for RCA in general, i.e. involving more than one context (\S\ref{sec:exrca}).

From such a simple basis, it is possible to consider more complex settings:
\begin{itemize}
\item By using more than two contexts;
\item By using more than two relations between these contexts;
\item By using more than two objects in each context;
\item By using more than zero attributes in the contexts.
\end{itemize}

\subsection{RCA may accept different families of concept lattices}\label{sec:elabexrca}

The simple Example~\ref{ex:rca} (Figure~\ref{fig:ex-3-O1}) does not present a result in which each object is identified by a single class.
Indeed, $a$ has not more attributes than $b$.
This result could also be obtained with far more objects $a'$, $a''$, etc. sharing the attributes of $a$ and $b$, or duplicating other objects.

However, the lattices $L_1^\star$ and $L_2^\star$ displayed in Figure~\ref{fig:ex-3-alt} seem another good way to describe the data given as input to RCA.
There is in fact an objective difference between $AC$ and $BC$: $AC$ denotes all objects connected by $p$ to $DF$ and $BC$ all objects connected by $p$ to $EF$. Reciprocally, $DF$ denotes all objects connected by $q$ to $AC$ and $EF$ those connected by $q$ to $BC$.

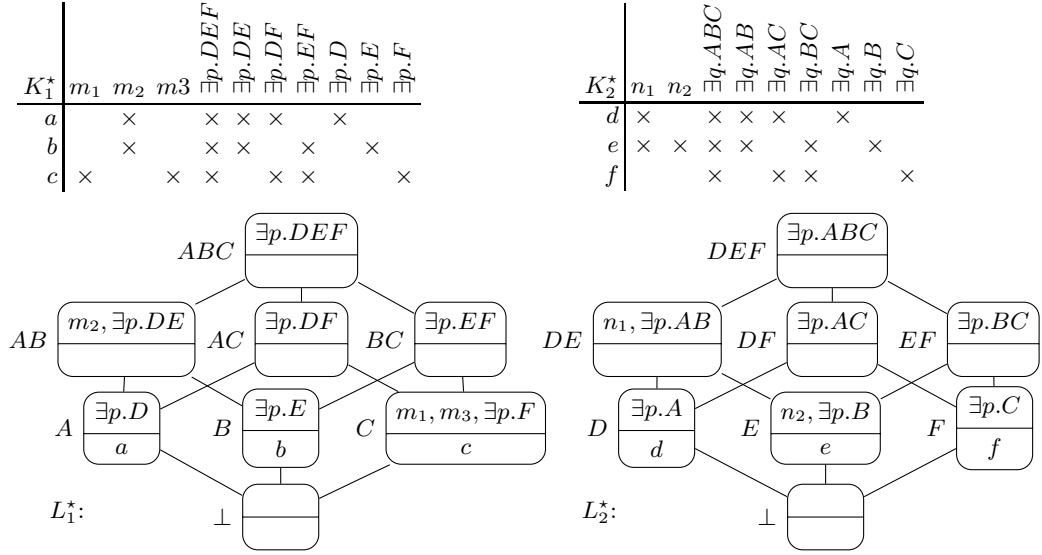
\begin{figure}[!ht]
\setlength{\tabcolsep}{2pt}
\centering
\begin{tikzpicture}[font=\footnotesize]

  \draw (-4,6) node {\begin{tabular}{r|cccccccccc}
                        $K_1^\star$ & $m_1$ & $m_2$ & $m_3$ & \rotatebox{90}{$\exists p.DEF$} & \rotatebox{90}{$\exists p.DE$} & \rotatebox{90}{$\exists p.DF$} & \rotatebox{90}{$\exists p.EF$} & \rotatebox{90}{$\exists p.D$} & \rotatebox{90}{$\exists p.E$} & \rotatebox{90}{$\exists p.F$}\\ \hline
                       $a$ &   & $\times$ &   & $\times$ & $\times$ & $\times$ &   & $\times$ &   &   \\
                       $b$ &   & $\times$ &   & $\times$ & $\times$ &   & $\times$ &   & $\times$ &   \\
                       $c$ & $\times$ &   & $\times$ & $\times$ &   & $\times$ & $\times$ &   &   & $\times$  
                       \end{tabular}};

  \draw (3,6) node {\begin{tabular}{r|ccccccccc}
                        $K_2^\star$ & $n_1$ & $n_2$ & \rotatebox{90}{$\exists q.ABC$} & \rotatebox{90}{$\exists q.AB$} & \rotatebox{90}{$\exists q.AC$} & \rotatebox{90}{$\exists q.BC$} & \rotatebox{90}{$\exists q.A$} & \rotatebox{90}{$\exists q.B$} & \rotatebox{90}{$\exists q.C$}\\ \hline
                       $d$ & $\times$ &    & $\times$ & $\times$ & $\times$ &   & $\times$ &   & \\
                       $e$ & $\times$ & $\times$ & $\times$ & $\times$ &   & $\times$ &   & $\times$ & \\
                       $f$ &   &   & $\times$ &   & $\times$ & $\times$ &   &   & $\times$
                       \end{tabular}};

  \begin{scope}[xshift=-6cm]
    \begin{scope}[xscale=.25,yscale=.25]

    \begin{dot2tex}[dot,tikz,codeonly,options=-traw]
      graph {
	graph [nodesep=1.5]
	node [style=concept]
	ABC [label="$\exists p.DEF$
\nodepart{two}
$\empty$"]
	AB [label="$m_2, \exists p.DE$
\nodepart{two}
$\empty$"]
	AC [label="$\exists p.DF$
\nodepart{two}
$\empty$"]
	BC [label="$\exists p.EF$
\nodepart{two}
$\empty$"]
	A [label="$\exists p.D$
\nodepart{two}
$a$"]
	B [label="$\exists p.E$
\nodepart{two}
$b$"]
	C [label="$m_1, m_3, \exists p.F$
\nodepart{two}
$c$"]
        C0 [label="$\empty$
\nodepart{two}
$\empty$"]
	ABC -- AB
	ABC -- AC
	ABC -- BC
	AB -- A
        AB -- B
        AC -- A
        AC -- C
        BC -- B
        BC -- C
        
        A -- C0
        B -- C0
        C -- C0
      }
    \end{dot2tex}
    \node[anchor=east] at (ABC.west) {$ABC$};
    \node[anchor=east] at (AB.west) {$AB$};
    \node[anchor=east] at (AC.west) {$AC$};
    \node[anchor=east] at (BC.west) {$BC$};
    \node[anchor=east] at (A.west) {$A$};
    \node[anchor=east] at (B.west) {$B$};
    \node[anchor=east] at (C.west) {$C$};
    \node[anchor=east] at (C0.west) {$\bot$};
  \end{scope}
  \draw (0,.5) node {$L_1^\star$:};
  \end{scope}

  \begin{scope}[xshift=1cm]
    \begin{scope}[xscale=.25,yscale=.25]

    \begin{dot2tex}[dot,tikz,codeonly,options=-traw]
      graph {
	graph [nodesep=1.5]
	node [style=concept]
	DEF [label="$\exists p.ABC$
\nodepart{two}
$\empty$"]
	DE [label="$n_1, \exists p.AB$
\nodepart{two}
$\empty$"]
	DF [label="$\exists p.AC$
\nodepart{two}
$\empty$"]
	EF [label="$\exists p.BC$
\nodepart{two}
$\empty$"]
	D [label="$\exists p.A$
\nodepart{two}
$d$"]
	E [label="$n_2, \exists p.B$
\nodepart{two}
$e$"]
	F [label="$\exists p.C$
\nodepart{two}
$f$"]
        C0 [label="$\empty$
\nodepart{two}
$\empty$"]
	DEF -- DE
	DEF -- DF
	DEF -- EF
	DE -- D
        DE -- E
        DF -- D
        DF -- F
        EF -- E
        EF -- F
        
        D -- C0
        E -- C0
        F -- C0
      }
    \end{dot2tex}
    \node[anchor=east] at (DEF.west) {$DEF$};
    \node[anchor=east] at (DE.west) {$DE$};
    \node[anchor=east] at (DF.west) {$DF$};
    \node[anchor=east] at (EF.west) {$EF$};
    \node[anchor=east] at (D.west) {$D$};
    \node[anchor=east] at (E.west) {$E$};
    \node[anchor=east] at (F.west) {$F$};
    \node[anchor=east] at (C0.west) {$\bot$};
  \end{scope}
  \draw (0,.5) node {$L_2^\star$:};

  \end{scope}

\end{tikzpicture}

\caption{Alternative concept lattices for the example of Section~\ref{sec:elabexrca} ($\star$ is simply a way to identify these objects).}\label{fig:ex-3-alt}
\end{figure}

The two corresponding T-Box and A-box would be:
\begin{align*}
T_{12}^\star = \{ & ABC \sqsubseteq \top_1 \sqcap \exists p.DEF, AB \sqsubseteq ABC \sqcap m_2 \sqcap \exists p.DE,
B \sqsubseteq AB \sqcap \exists p.E,\\
              & C \sqsubseteq ABC \sqcap m_1 \sqcap m_2 \sqcap \exists p.F, AB\sqcap C \sqsubseteq \bot, DEF \sqsubseteq \top_2 \sqcap \exists q.ABC,\\
              & DE \sqsubseteq DEF \sqcap n_1 \sqcap \exists q.AB, E \sqsubseteq DE \sqcap n_2 \sqcap \exists q.B, F \sqsubseteq DEF \sqcap \exists q.C,\\
              & DE\sqcap F \sqsubseteq \bot
\}\\
A_{12}^\star = \{ & A(a), B(b), C(c),  p(a,d), p(b,e), p(c,f),\\
& D(d), E(e), F(f), q(d,a), q(e,b), q(f,c) \}
\end{align*}

These lattices share many common points with those returned by RCA: they are also
\begin{enumerate}
\item valid concept lattices,
\item whose contexts extend $K_1^0$ and $K_2^0$ with attributes of $D_{\{\exists\},\{p\},K^0}$ and  $D_{\{\exists\},\{q\},K^0}$ (Example~\ref{ex:attr}),
\item stable for scaling, and
\item such that each attribute refers only to concepts in the lattices.
\end{enumerate}
The only difference with $L_1^2$ and $L_2^2$ is that they are not those returned by RCA.
We will temporarily informally consider lattices sharing these features as \emph{acceptable}.

But, if there exists several acceptable solutions for a given $\Omega$ and $R$, why does RCA only returns one of these, and which one?
To help answering this question, we illustrate the problem with minimal running examples below.

\subsection{Minimal \texorpdfstring{RCA$^0$}{RCA⁰} example}\label{sec:elabexrca0}

As an RCA$^0$ example, consider the following ABox:
\[A_0^0 = \{ \top_0(a), \top_0(b), \top_0(c), \top_0(d), r(a,b), r(b,a), r(c,d), r(d,c), r(a,a), r(b,b) \}.\]
It can be encoded as an empty context ($K_0^0$) from which FCA will generate the concept lattice $L_0^0$ as follows:
\begin{center}
\begin{tikzpicture}[font=\footnotesize]

  \draw (1,3.5) node {$\mathrm{FCA} ( $};

  \draw (4.25,3.5) node {
  \begin{tabular}{r|}
$K_0^0$\\
    \hline
$a$\\
$b$\\
$c$\\
$d$
 \end{tabular}
    };

    \draw (7.25,3.5) node {$ ) = $};

  \begin{scope}[xshift=9.5cm,yshift=3cm]
  \begin{scope}[xscale=.2,yscale=.25]

    \begin{dot2tex}[dot,tikz,codeonly,options=-traw]
      graph {
	graph [nodesep=1.5]
	node [style=smconcept]
	ABCD [label="$\empty$
\nodepart{two}
$a, b, c, d$"]
      }
    \end{dot2tex}
    \draw (ABCD.south east) node[anchor=north west,nosep] {$ABCD$};
  \end{scope}
  \draw (-.75,.5) node {$L_0^0$:};
\end{scope}

\end{tikzpicture}
\end{center}
Scaling with $\sigma_\exists$ and $r$ provides the attribute $\exists r.ABCD$:
\begin{center}
\begin{tikzpicture}[font=\footnotesize]

  \draw (-2.25,0) node {$\sigma_{\exists} ( $};
  \draw (-1,0) node {
  \begin{tabular}{r|}
$K_0^0$\\
    \hline
$a$\\
$b$\\
$c$\\
$d$
 \end{tabular}
    };
  
  \draw (0,0) node {$ , $};

  \draw (2,0) node {
\begin{tabular}{r|cccc}
  $r$  & $a$ & $b$ & $c$ & $d$
     \\\hline
        $a$& $\times$ & $\times$ &\\
        $b$& $\times$ & $\times$ & \\
        $c$&          &          &          & $\times$\\
        $d$&          &          & $\times$ &
 \end{tabular}
    };
  
  \draw (3.5,0) node {$ , $};

  \begin{scope}[xshift=5cm,yshift=-.5cm]
  \begin{scope}[xscale=.2,yscale=.25]

    \begin{dot2tex}[dot,tikz,codeonly,options=-traw]
      graph {
	graph [nodesep=1.5]
	node [style=smconcept]
	ABCD [label="$\empty$
\nodepart{two}
$a, b, c, d$"]
      }
    \end{dot2tex}
    \draw (ABCD.south east) node[anchor=north west,nosep] {$ABCD$};
  \end{scope}
  \draw (-.75,.5) node {$L_0^0$:};

  \end{scope}
  
  \draw (7.25,0) node {$ ) = $};

  \draw (10,0) node {
  \begin{tabular}{r|c}
    $K_0^1$ & \rotatebox{90}{$\exists r.ABCD$} \\ \hline
    $a$ & $\times$ \\
    $b$ & $\times$ \\
    $c$ & $\times$ \\
    $d$ & $\times$ 
  \end{tabular}
};
\end{tikzpicture}
\end{center}
\noindent which run through FCA returns:
\begin{center}
\begin{tikzpicture}[font=\footnotesize]

  \draw (1,3.5) node {$\mathrm{FCA} ( $};

  \draw (4.25,3.5) node {
    \begin{tabular}{r|c}
    $K_0^1$ & \rotatebox{90}{$\exists r.ABCD$} \\ \hline
    $a$ & $\times$ \\
    $b$ & $\times$ \\
    $c$ & $\times$ \\
    $d$ & $\times$ 
    \end{tabular}
    };

    \draw (7.25,3.5) node {$ ) = $};

  \begin{scope}[xshift=9cm,yshift=3cm]
  \begin{scope}[xscale=.2,yscale=.25]

    \begin{dot2tex}[dot,tikz,codeonly,options=-traw]
      graph {
	graph [nodesep=1.5]
	node [style=smconcept]
	ABCD [label="$\exists r.ABCD$
\nodepart{two}
$a, b, c, d$"]
      }
    \end{dot2tex}
    \draw (ABCD.south east) node[anchor=north west,nosep] {$ABCD$};
  \end{scope}
  \draw (-1,.5) node {$L_0^1$:};
\end{scope}

\end{tikzpicture}
\end{center}
Since no new concept has been added, scaling would return $K_0^1$, hence $L_0^1$ is the result returned by RCA$^0$ (and RCA).

However, the concept lattices $L_0'$ and $L_0^\star$ of Figure~\ref{fig:alternatives} are other valid lattices worth considering as acceptable solutions.
As in classical RCA, each concept of these lattices is closed with respect to the specific context scaled by $\exists$ and $r$ from the concepts of the lattice.
Moreover, the lattices are self-supported in the sense that their attributes refer only to their own concepts.

\begin{figure}
\centering
  \begin{tabular}{lr}
    \begin{minipage}{.35\textwidth}
\centering
  \begin{tikzpicture}[font=\footnotesize]
    \begin{scope}[xscale=.25,yscale=.25]
    \begin{dot2tex}[dot,tikz,codeonly,options=-traw]
      graph {
	graph [nodesep=1.5]
	node [style=concept]
	ABCD [label="$\exists r.ABCD$
\nodepart{two}
$\empty$"]
	AB [label="$\exists r.AB$
\nodepart{two}
$a, b$"]
	CD [label="$\exists r.CD$
\nodepart{two}
$c, d$"]

        C0 [label="$\empty$
\nodepart{two}
$\empty$"]
	ABCD -- AB
	ABCD -- CD
        AB -- C0
        CD -- C0
      }
    \end{dot2tex}
    \node[anchor=east] at (ABCD.west) {$ABCD$};
    \node[anchor=east] at (AB.west) {$AB$};
    \node[anchor=east] at (CD.west) {$CD$};
    \node[anchor=east] at (C0.west) {$\bot$};
  \end{scope}
  \draw (0,0) node {$L'_0$:};
  \end{tikzpicture}
\end{minipage}
&
    \begin{minipage}{.65\textwidth}
\centering
  \begin{tikzpicture}[font=\footnotesize]
    \begin{scope}[xscale=.28,yscale=.25]

    \begin{dot2tex}[dot,tikz,codeonly,options=-traw]
      graph {
	graph [nodesep=1.5]
	node [style=concept]
	ABCD [label="$\exists r.ABCD$
\nodepart{two}
$\empty$"]
	ABC [label="$\exists r.ABD$
\nodepart{two}
$\empty$"]
	ABD [label="$\exists r.ABC$
\nodepart{two}
$\empty$"]

	AB [label="$\exists r.AB$
\nodepart{two}
$a, b$"]
	CD [label="$\exists r.CD$
\nodepart{two}
$\empty$"]

	C [label="$\exists r.D$
\nodepart{two}
$c$"]
	D [label="$\exists r.C$
\nodepart{two}
$d$"]

        C0 [label="$\empty$
\nodepart{two}
$\empty$"]
	ABCD -- ABC
	ABCD -- ABD
	ABCD -- CD
	ABC -- AB
        ABC -- C
        ABD -- AB
        ABD -- D
        
        CD -- C
        CD -- D
        
        AB -- C0
        C -- C0
        D -- C0
      }
    \end{dot2tex}
    \node[anchor=east] at (ABCD.west) {$ABCD$};
    \node[anchor=east] at (ABC.west) {$ABC$};
    \node[anchor=east] at (ABD.west) {$ABD$};
    \node[anchor=east] at (AB.west) {$AB$};
    \node[anchor=east] at (CD.west) {$CD$};
    \node[anchor=east] at (C.west) {$C$};
    \node[anchor=east] at (D.west) {$D$};
    \node[anchor=east] at (C0.west) {$\bot$};
  \end{scope}
  \draw (0,.5) node {$L_0^\star$:};
  \end{tikzpicture}
\end{minipage}
\end{tabular}
\caption{Alternative concept lattices ($L_0'$ and $L_0^\star$).}\label{fig:alternatives}
\end{figure}
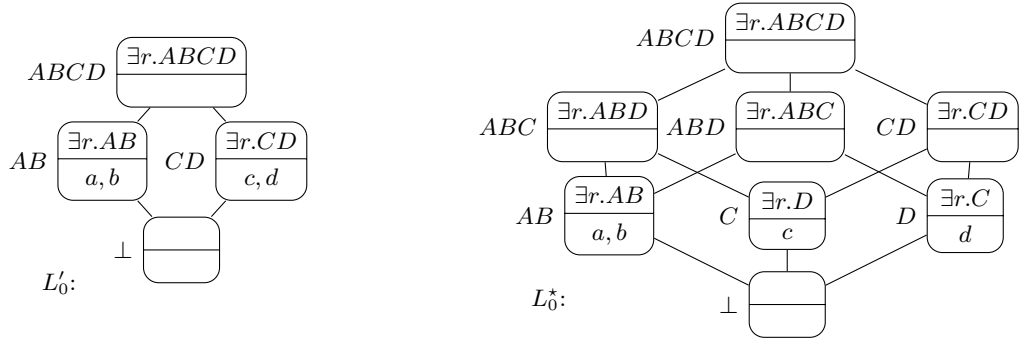

They correspond to different knowledge bases:
\begin{align*}
    T_0^1 & = \{ ABCD\sqsubseteq \exists r.ABCD\}\\
  A_0^1 & = \{ ABCD(a), ABCD(b), ABCD(c), ABCD(d),\\
  & \qquad r(a,b), r(b,a), r(c,d), r(d,c), r(a,a), r(b,b) \}
\intertext{and}
    T_0' & = \{ AB\sqsubseteq \top_0\sqcap\exists r.AB, CD\sqsubseteq \top_0\sqcap\exists r.CD, ABCD\sqsubseteq \exists r.ABCD \}\\
    A_0' & = \{ AB(a), AB(b), CD(c), CD(d), r(a,b), r(b,a), r(c,d), r(d,c), r(a,a), r(b,b) \}
\intertext{and}
    T_0^\star & = \{ AB\sqsubseteq ABC\sqcap ABD\sqcap \exists r.AB,
          C\sqsubseteq ABC\sqcap CD\sqcap \exists r.D,
          D\sqsubseteq ABD\sqcap CD\sqcap \exists r.C,\\
        & \qquad ABC\sqsubseteq ABCD\sqcap\exists r.ABD, ABD\sqsubseteq ABCD\sqcap\exists r.ABC, CD\sqsubseteq ABCD\sqcap\exists r.CD,\\
        & \qquad ABCD\sqsubseteq \exists r.ABCD\}\\
    A_0^\star & = \{ AB(a), AB(b), C(c), D(d), r(a,b), r(b,a), r(c,d), r(d,c), r(a,a), r(b,b) \}
\end{align*}
The problem that there exists several acceptable candidate lattices applies to RCA as a whole because RCA$^0$ is included in RCA.

\subsection{Minimal RCA example}\label{sec:exrca}

As another example, consider the following ABox:

$A_{34}^0 = \{ \top_3(a), \top_3(b), \top_4(c), \top_4(d), p(a,c), p(b,d), q(c,a), q(b,d) \}$

This can be encoded as the two empty contexts $K_3^0$ and $K_4^0$ of Figure~\ref{fig:ex-2-L0} and the two relations $p$ and $q$ of Figure~\ref{fig:ex-2-R}.

\begin{figure}[!ht]
\centering
\begin{tikzpicture}[font=\footnotesize]

  \draw (-4,0) node {\begin{tabular}{r|cc}
                       $p$  & $c$ & $d$  \\ \hline
                       $a$ & $\times$ &    \\
                       $b$ &   & $\times$
                       \end{tabular}};

  \draw (4,0) node {\begin{tabular}{r|cc}
                       $q$ & $a$ & $b$ \\ \hline
                       $c$ &  $\times$ &   \\
                       $d$ &    & $\times$
                       \end{tabular}};

\end{tikzpicture}
\caption{Relations $p$ and $q$ for RCA.}\label{fig:ex-2-R}
\end{figure}

Applying FCA to the two contexts $K_3^0$ and $K_4^0$ provides the very simple lattices $L_3^0$ and $L_4^0$ of Figure~\ref{fig:ex-2-L0}.
From this, RCA generates new context $K_3^1$ and $K_4^1$ through scaling which provides new lattices $L_3^1$ and $L_4^1$ (Figure~\ref{fig:ex-2-O1}).

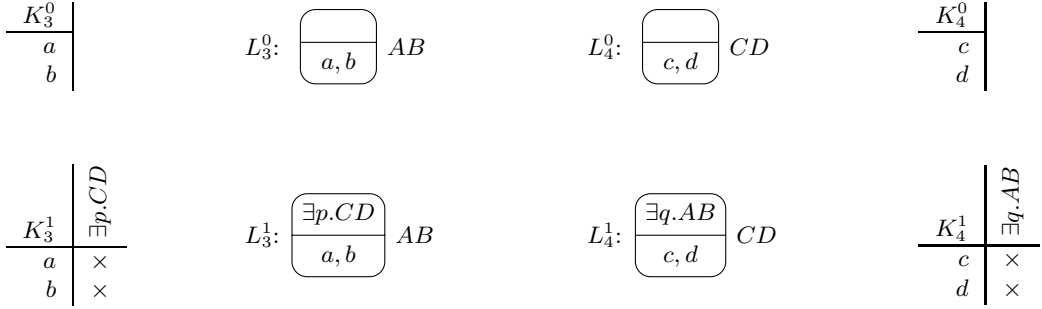
\begin{figure}[!ht]
\centering
   \begin{tikzpicture}[font=\footnotesize]
  \draw (-6,0) node[anchor=west] {\begin{tabular}{r|}
                       $K_3^0$\\ \hline
                       $a$\\
                       $b$\\
                       \end{tabular}};

  \draw (6,0) node[anchor=west] {\begin{tabular}{r|}
                       $K_4^0$\\ \hline
                       $c$\\
                       $d$\\
                       \end{tabular}};

     \begin{scope}[xshift=-1.5cm]
     \begin{scope}[xscale=.25,yscale=.2]
      \draw (0,0) node [style=concept] (C2) {
        \nodepart{two}
        $a, b$};
      \end{scope}
    \node[anchor=west] at (C2.east) {$AB$};
    \draw (-1,0) node {$L_3^0$:};
  \end{scope}

     \begin{scope}[xshift=3cm]
     \begin{scope}[xscale=.25,yscale=.2]
      \draw (0,0) node [style=concept] (C2) {
        \nodepart{two}
        $c, d$};
      \end{scope}
    \node[anchor=west] at (C2.east) {$CD$};
    \draw (-1,0) node {$L_4^0$:};
  \end{scope}

  \begin{scope}[yshift=-2.5cm]
  \draw (-6,0) node[anchor=west] {\begin{tabular}{r|c}
                       $K_3^1$ & \rotatebox{90}{$\exists p.CD$}\\ \hline
                       $a$ & $\times$\\
                       $b$ & $\times$
                       \end{tabular}};

  \draw (6,0) node[anchor=west] {\begin{tabular}{r|c}
                       $K_4^1$ & \rotatebox{90}{$\exists q.AB$}\\ \hline
                       $c$ & $\times$\\
                       $d$ & $\times$
                       \end{tabular}};

   \begin{scope}[xshift=-1.5cm]
     \begin{scope}[xscale=.25,yscale=.2]
       \draw (0,0) node [style=concept] (C2) {
         $\exists p.CD$
        \nodepart{two}
        $a, b$};
      \end{scope}
    \node[anchor=west] at (C2.east) {$AB$};
    \draw (-1,0) node {$L_3^1$:};
  \end{scope}

     \begin{scope}[xshift=3cm]
     \begin{scope}[xscale=.25,yscale=.2]
       \draw (0,0) node [style=concept] (C2) {
         $\exists q.AB$
        \nodepart{two}
        $c, d$};
      \end{scope}
    \node[anchor=west] at (C2.east) {$CD$};
    \draw (-1,0) node {$L_4^1$:};
  \end{scope}
  \end{scope}
  \end{tikzpicture}
\caption{The two iterations of RCA from the initial contexts $K_3^0$ and $K_4^0$.}\label{fig:ex-2-O1}\label{fig:ex-2-L0}
\end{figure}

The lattices $L_3^1$ and $L_4^1$ of Figure~\ref{fig:ex-2-O1} are those returned by RCA as applying scaling from them returns the same contexts $K_3^1$ and $K_4^1$.

However, there could be other acceptable solutions such as those displayed in Figure~\ref{fig:ex-2-alt}.
They are all acceptable solutions for $\{K_3^0,K_4^0\}$ (Figure~\ref{fig:ex-2-O1}) as they satisfy the four conditions of Section~\ref{sec:elabexrca}.

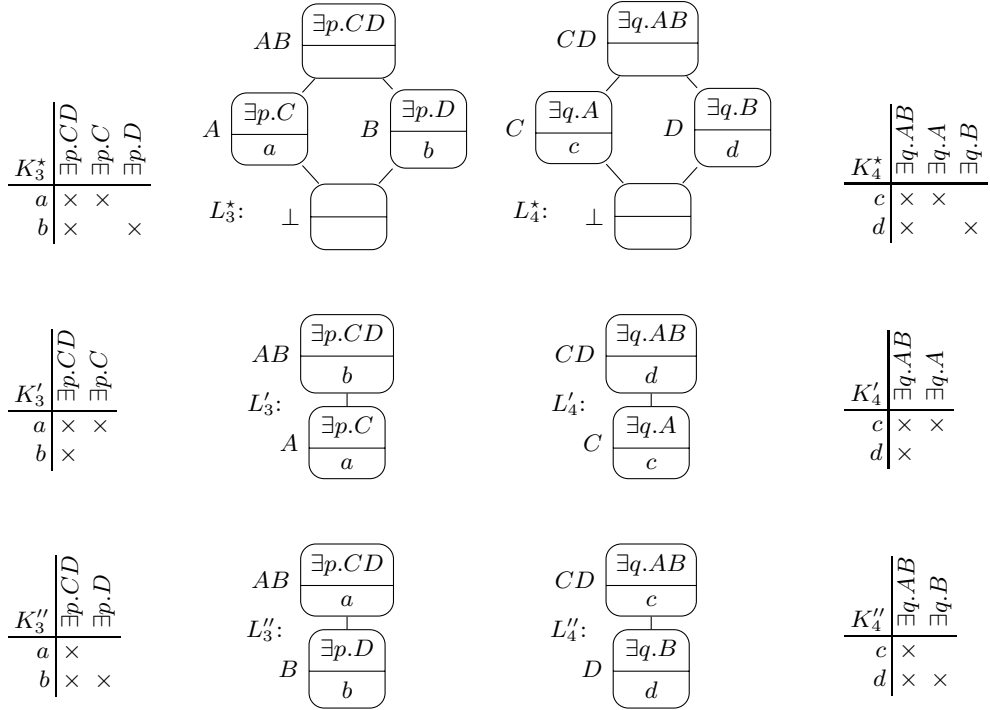
\begin{figure}[!ht]
\centering
\setlength{\tabcolsep}{2pt}
  \begin{tikzpicture}[font=\footnotesize]

  \draw (-5,0) node[anchor=west] {\begin{tabular}{r|ccc}
                       $K_3^\star$ & \rotatebox{90}{$\exists p.CD$}& \rotatebox{90}{$\exists p.C$}& \rotatebox{90}{$\exists p.D$}\\ \hline
                       $a$ & $\times$ & $\times$ &\\
                       $b$ & $\times$ &   & $\times$
                       \end{tabular}};

  \draw (6,0) node[anchor=west] {\begin{tabular}{r|ccc}
                       $K_4^\star$ & \rotatebox{90}{$\exists q.AB$}& \rotatebox{90}{$\exists q.A$}& \rotatebox{90}{$\exists q.B$}\\ \hline
                       $c$ & $\times$ & $\times$ &\\
                       $d$ & $\times$ &   & $\times$
                       \end{tabular}};

    \begin{scope}[xshift=-2cm,yshift=-1cm]
      \begin{scope}[xscale=.25,yscale=.25]
        \begin{dot2tex}[dot,tikz,codeonly,options=-traw]
          graph {
            graph [nodesep=1.5]
            node [style=concept]
            AB [label="$\exists p.CD$
            \nodepart{two}
            $\empty$"]
            A [label="$\exists p.C$
            \nodepart{two}
            $a$"]
            B [label="$\exists p.D$
            \nodepart{two}
            $b$"]
            
            C0 [label="$\empty$
            \nodepart{two}
            $\empty$"]
            AB -- A -- C0
            AB -- B -- C0
          }
        \end{dot2tex}
        \node[anchor=east] at (AB.west) {$AB$};
        \node[anchor=east] at (A.west) {$A$};
        \node[anchor=east] at (B.west) {$B$};
        \node[anchor=east] at (C0.west) {$\bot$};
      \end{scope}
      \draw (0,.5) node {$L_3^\star$:};
    \end{scope}

    \begin{scope}[xshift=2cm,yshift=-1cm]
    \begin{scope}[xscale=.25,yscale=.25]
    \begin{dot2tex}[dot,tikz,codeonly,options=-traw]
      graph {
	graph [nodesep=1.5]
	node [style=concept]
	CD [label="$\exists q.AB$
\nodepart{two}
$\empty$"]
	C [label="$\exists q.A$
\nodepart{two}
$c$"]
	D [label="$\exists q.B$
\nodepart{two}
$d$"]
        C0 [label="$\empty$
\nodepart{two}
$\empty$"]
	CD -- C -- C0
	CD -- D -- C0
      }
    \end{dot2tex}
    \node[anchor=east] at (CD.west) {$CD$};
    \node[anchor=east] at (C.west) {$C$};
    \node[anchor=east] at (D.west) {$D$};
    \node[anchor=east] at (C0.west) {$\bot$};
  \end{scope}
  \draw (0,.5) node {$L_4^\star$:};
  \end{scope}

  \begin{scope}[yshift=-3cm]
  \draw (-5,0) node[anchor=west] {\begin{tabular}{r|cc}
                       $K_3'$ & \rotatebox{90}{$\exists p.CD$}& \rotatebox{90}{$\exists p.C$}\\ \hline
                       $a$ & $\times$ & $\times$\\
                       $b$ & $\times$ &
                       \end{tabular}};

  \draw (6,0) node[anchor=west] {\begin{tabular}{r|cc}
                       $K_4'$ & \rotatebox{90}{$\exists q.AB$}& \rotatebox{90}{$\exists q.A$}\\ \hline
                       $c$ & $\times$ & $\times$\\
                       $d$ & $\times$ &
                       \end{tabular}};

  \begin{scope}[xshift=-1cm,yshift=-1cm]
    \begin{scope}[xscale=.25,yscale=.25]
    \begin{dot2tex}[dot,tikz,codeonly,options=-traw]
      graph {
	graph [nodesep=1.5]
	node [style=concept]
	AB [label="$\exists p.CD$
\nodepart{two}
$b$"]
	A [label="$\exists p.C$
\nodepart{two}
$a$"]

	AB -- A
      }
    \end{dot2tex}
    \node[anchor=east] at (AB.west) {$AB$};
    \node[anchor=east] at (A.west) {$A$};
  \end{scope}
  \draw (-.5,.95) node {$L_3'$:};
  \end{scope}

  \begin{scope}[xshift=3cm,yshift=-1cm]
    \begin{scope}[xscale=.25,yscale=.25]
    \begin{dot2tex}[dot,tikz,codeonly,options=-traw]
      graph {
	graph [nodesep=1.5]
	node [style=concept]
	CD [label="$\exists q.AB$
\nodepart{two}
$d$"]
	C [label="$\exists q.A$
\nodepart{two}
$c$"]
	CD -- C
      }
    \end{dot2tex}
    \node[anchor=east] at (CD.west) {$CD$};
    \node[anchor=east] at (C.west) {$C$};
  \end{scope}
  \draw (-.5,.95) node {$L_4'$:};
  \end{scope}
  \end{scope}

  \begin{scope}[yshift=-6cm]
  \draw (-5,0) node[anchor=west] {\begin{tabular}{r|cc}
                       $K_3''$ & \rotatebox{90}{$\exists p.CD$}& \rotatebox{90}{$\exists p.D$}\\ \hline
                       $a$ & $\times$ & \\
                       $b$ & $\times$ & $\times$
                       \end{tabular}};

  \draw (6,0) node[anchor=west] {\begin{tabular}{r|cc}
                       $K_4''$ & \rotatebox{90}{$\exists q.AB$}& \rotatebox{90}{$\exists q.B$}\\ \hline
                       $c$ & $\times$ & \\
                       $d$ & $\times$ & $\times$
                       \end{tabular}};

  \begin{scope}[xshift=-1cm,yshift=-1cm]
    \begin{scope}[xscale=.25,yscale=.25]
    \begin{dot2tex}[dot,tikz,codeonly,options=-traw]
      graph {
	graph [nodesep=1.5]
	node [style=concept]
	AB [label="$\exists p.CD$
\nodepart{two}
$a$"]
	B [label="$\exists p.D$
\nodepart{two}
$b$"]
	AB -- B
      }
    \end{dot2tex}
    \node[anchor=east] at (AB.west) {$AB$};
    \node[anchor=east] at (B.west) {$B$};
  \end{scope}
  \draw (-.5,.95) node {$L_3''$:};
  \end{scope}

  \begin{scope}[xshift=3cm,yshift=-1cm]
    \begin{scope}[xscale=.25,yscale=.25]
    \begin{dot2tex}[dot,tikz,codeonly,options=-traw]
      graph {
	graph [nodesep=1.5]
	node [style=concept]
	CD [label="$\exists q.AB$
\nodepart{two}
$c$"]
	D [label="$\exists q.B$
\nodepart{two}
$d$"]
	CD -- D
      }
    \end{dot2tex}
    \node[anchor=east] at (CD.west) {$CD$};
    \node[anchor=east] at (D.west) {$D$};
  \end{scope}
  \draw (-.5,.95) node {$L_4''$:};
  \end{scope}
  \end{scope}
\end{tikzpicture}

\caption{Alternative pairs of concept lattices covering the contexts of Figure~\ref{fig:ex-2-O1}.}\label{fig:ex-2-alt}
\end{figure}

On the contrary, Figure~\ref{fig:ex-2-noalt} displays a family of concept lattices $\{L_3^\#, L_4^\#\}$ which is not an acceptable solution.
Although they contain all concepts of $\{L_3^0, L_4^0\}$ and no concept not in $\{L_3^\star, L_4^\star\}$, they would generate more attributes through scaling and applying RCA to their contexts $\{K_3^\#, K_4^\#\}$ would lead to $\{L_3^\star, L_4^\star\}$

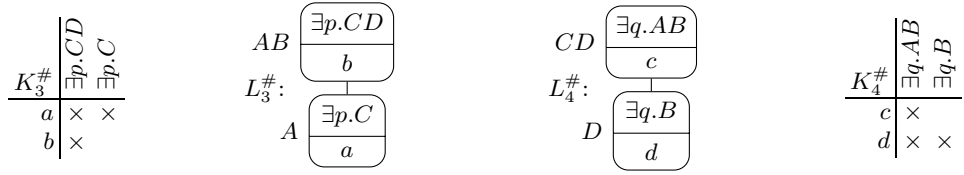
\begin{figure}[ht!]
  \centering
\setlength{\tabcolsep}{2pt}
  \begin{tikzpicture}[font=\footnotesize]
  \draw (-5,0) node[anchor=west] {\begin{tabular}{r|cc}
                       $K_3^\#$ & \rotatebox{90}{$\exists p.CD$}& \rotatebox{90}{$\exists p.C$}\\ \hline
                       $a$ & $\times$ & $\times$\\
                       $b$ & $\times$ & 
                       \end{tabular}};

  \draw (6,0) node[anchor=west] {\begin{tabular}{r|cc}
                       $K_4^\#$ & \rotatebox{90}{$\exists q.AB$}& \rotatebox{90}{$\exists q.B$}\\ \hline
                       $c$ & $\times$ & \\
                       $d$ & $\times$ & $\times$
                       \end{tabular}};

  \begin{scope}[xshift=-1cm,yshift=-1cm]
    \begin{scope}[xscale=.25,yscale=.25]
    \begin{dot2tex}[dot,tikz,codeonly,options=-traw]
      graph {
	graph [nodesep=1.5]
	node [style=concept]
	AB [label="$\exists p.CD$
\nodepart{two}
$b$"]
	A [label="$\exists p.C$
\nodepart{two}
$a$"]

	AB -- A
      }
    \end{dot2tex}
    \node[anchor=east] at (AB.west) {$AB$};
    \node[anchor=east] at (A.west) {$A$};
  \end{scope}
  \draw (-.5,1) node {$L_3^\#$:};
  \end{scope}

  \begin{scope}[xshift=3cm,yshift=-1cm]
    \begin{scope}[xscale=.25,yscale=.25]
    \begin{dot2tex}[dot,tikz,codeonly,options=-traw]
      graph {
	graph [nodesep=1.5]
	node [style=concept]
	CD [label="$\exists q.AB$
\nodepart{two}
$c$"]  
	D [label="$\exists q.B$
\nodepart{two}
$d$"]
	CD -- D
      }
    \end{dot2tex}
    \node[anchor=east] at (CD.west) {$CD$};
    \node[anchor=east] at (D.west) {$D$};
  \end{scope}
  \draw (-.5,1) node {$L_4^\#$:};
  \end{scope}
  \end{tikzpicture}
\caption{A family of concept lattices $\{ L_3^\#, L_4^\#\}$ which is not an acceptable solution.}\label{fig:ex-2-noalt}
\end{figure}

Hence the question raised in the previous section: Why does RCA return only one solution, and which one?
Answering it requires to reconsider the RCA semantics.
More precisely, it requires to define formally which families of concept lattices could be considered as acceptable solutions and which of them is returned by the RCA operation.

\bigskip
The following aims at defining acceptability formally.
For that purpose, we first consider the simpler case of RCA$^0$.
We define expansion functions (Section~\ref{sec:rca0fixpoint}) and contraction functions (Section~\ref{sec:selfsup0}), both from the independent standpoints of contexts and lattices.
Then, because these are in fact dependent, we apply them to context-lattice pairs (Section~\ref{sec:dualspace}).
These results are then extended to RCA (Section~\ref{sec:rcafixpoint}) in which acceptability is related to the set of fixed points of both extended function.
This allows us to precisely define the fixed-point semantics of RCA (Section~\ref{sec:semantics}).
\section{A parallel context-lattice fixed-point semantics for \texorpdfstring{RCA$^0$}{RCA⁰}}\label{sec:rca0fixpoint}

In order to investigate the semantics of relational concept analysis, we adopt a functional standpoint in which RCA is defined as a function in a precisely defined space.
In Sections~\ref{sec:ctxtspace} and \ref{sec:latfixpoint}, we provide two alternative, and equivalent, characterisations of that space, which provide the semantics for RCA$^0$.
In Section~\ref{sec:rca0andlfp}, we relate the well-grounded RCA semantics to these two semantics by showing that RCA$^0$ computes the least fixed point of these functions.

In order to investigate the semantics of relational concept analysis, we need to define the objects on which it applies.
They are determined by three elements given once and for all:
$K^0=\{\langle G_\xx, M^0_\xx, I^0_\xx\rangle\}_{\xx\in\XX}$,
$R$, and
$\Omega$.
Through the application of $RCA$, only $M^t_\xx$ and $I^t_\xx$ change, hence the other may remain implicit.

Although the results of this section aims at RCA$^0$, most of it applies to RCA, hence we use RCA$^0$ only when it matters.

\paragraph{Warning: a nest of fixed points}
RCA is a world of fixed points, hence it is easy to get lost among the various fixed points involved:
\begin{itemize}
\item In description logics, which RCA targets, the semantics of concepts is given by (least) fixed points when circularities occur \cite{nebel1990a};
\item FCA's goal is to compute fixed points: concepts are the result of a closure operation which is also a fixed point \cite{belohlavek2008a};
\item finally, the RCA result is the fixed point of the function that grows a family of concept lattices from the previous one through scaling.
\end{itemize}
The present work is concerned with the fixed points of the latter function taking the others into account.

\subsection{Semantics and properties: the context approach}\label{sec:ctxtspace}

We first study the semantics of RCA from the standpoint of the contexts.
We define precisely the space of contexts in which RCA$^0$ is computed and the functions underlying RCA in that space.

\subsubsection{The lattice $\mathcal{K}$ of \texorpdfstring{RCA$^0$}{RCA⁰} contexts}\label{sec:K}

The contexts considered by RCA are scaled from the initial context using the scaling operations.
In RCA$^0$, they are determined by $K^0=\langle G, \varnothing, \varnothing\rangle$, $R$, and $\Omega$.
But most of this section will ignore it and consider $K^0=\langle G, M, I\rangle$.
Through the application of $RCA$, only $M$ and $I$ change, the latter depending directly from the former (Property~\ref{prop:IdependM0}).

\begin{property}[The incidence relation depends only on the attribute]\label{prop:IdependM0}
Given $\varsigma$ a relational scaling operation, $K=\langle G, M, I\rangle$ a formal context and $r$ a  relation between $G$ and $G_\zz$ a possibly different set of objects.
Given $L$ and $L'$ two concept lattices on $G_\zz$,
for the scaled attributes $m=\varsigma(r,c)\in D_{\varsigma,r,\eta(L)}\cap D_{\varsigma,r,\eta(L')}$,
$\forall g\in G$, $\langle \{r\},L\rangle\models m(g)$ if and only if $\langle \{r\},L'\rangle\models m(g)$.
\end{property}
\begin{proof}
$m=\varsigma(r,c)$ is scaled from a scaling operation $\varsigma$, a relation $r$ and a concept $c$ (possibly a cardinal $n$).
From Table~\ref{tab:rscaling}, $\langle R,L\rangle\models gIm$ only depends on $\varsigma$, $r$ and the extent of $c$.
However, $\varsigma$ and $r$ are independent from $L$ and $L'$.
The concept $c$ is identified by a name which denotes its extent.
Hence, its extent is the same in $L$ and $L'$ to which it belongs.
So whether an object of $g\in G_\xx$ satisfies the attribute $\varsigma(r,c)$ or not depends solely on the attribute $\varsigma(r,c)$ and not on the specific lattice considered.
\end{proof}

Property~\ref{prop:IdependM0} highlights that the incidence of the scaled attributes does only depend on the relation and not on the concept lattices.
This obviously applies to RCA$^0$ with $G_\zz=G$.
This can be extended to the full RCA:

\begin{property}[The incidence relation depends only on the attributes]\label{prop:IdependM}
Given a set $\Omega$ of relational scaling operations, a family $K=\{\langle G_\xx, M_\xx, I_\xx\rangle\}_{\xx\in\XX}$ of formal contexts and a set $R$ of relations on $K$.
Given $L$ and $L'$ two concept lattices on $G_\zz$,
for all scaled attributes $m=\varsigma(r,c)\in D_{\Omega,R_{\xx,\zz},L}\cap D_{\Omega,R_{\xx,\zz},L'}$,
$\forall g\in G_\xx$, $\langle R,L\rangle\models m(g)$ if and only if $\langle R,L'\rangle\models m(g)$.
\end{property}
\begin{proof}
For all $\varsigma\in\Omega$, $r\in R$ and $c\in L$, $\varsigma(r,c)$ is defined independently from the rest of $\Omega$, $R$ and $L$ (or $L'$).
Hence Property~\ref{prop:IdependM0} applies to any such combination.
\end{proof}

This means that the interpretation of an attribute never changes: it is given by its syntactic form $\varsigma(r,c)$ and, in particular, the concept name.
Hence, when adding or suppressing attributes, through $\mathrm{K}^\Sigma_{+M}$ or $\mathrm{K}^\Sigma_{-M}$, $\Sigma$ can be $\langle R, N\rangle$ with $N$ an indexed set of names, and in particular $\eta^*(K^0)$.

In RCA, the set of objects $G_\xx$ does not change.
Property~\ref{prop:IdependM} entails that, if $M_\xx\subseteq M_\xx'$, then $I_\xx\subseteq I_\xx'$.
Thus we are justified in using, for RCA, the definition of subcontexts introduced in Section~\ref{sec:rcasem}.
For comparing two contexts, it suffices to compare their sets of attributes:  if $\forall\xx\in\XX$, $M_\xx\subseteq M_\xx'$, then $K\subseteq K'$.

The attribute language $D_{\Omega,R_\xx,N}$ that can be generated by scaling depends on the finite set of relations $R$, the scaling operations $\Omega$ and the set of possible concepts identified by their standardised names (\S\ref{sec:rscaling} and \ref{sec:rcaname}).
Given $N\subseteq \eta^*(K^0)$ an indexed family of names of concepts that can be in the codomain of relations in $R$, the set of contexts that can be obtained by scaling is
\[
\mathcal{K}^N_{K^0_\xx,R,\Omega}=\{ \mathrm{K}^{\langle R,\eta^*(K^0)\rangle}_{+M}(K^0_\xx)~|~M\subseteq D_{\Omega,R_\xx,N} \}.
\]
\noindent with $\mathrm{K}^{\langle R,\eta^*(K^0)\rangle}_{+M}(.)$ the operation defined in \S\ref{sec:scaling}.
Passing $\eta^*(K^0)$ to $\mathrm{K}$ allows to interpret the generated attributes and to determine $I$.
Below, when we write $\mathcal{K}^N$, the property applies for any $\{N_\xx\}_{\xx\in\XX}$ such that $\forall\xx\in\XX$, $N_\xx\subseteq \eta(K^0_\xx)$, and in particular for $\eta^*(K^0)$.

In RCA$^0$, the set of class names is $\eta(K^0)$.
Hence the attribute language $D_{\Omega,R,K^0}$ is fully determined by the non-changing parts: $G$, the finite set of relations $R$ and the scaling operations $\Omega$.
\[
\mathcal{K}_{K^0,R,\Omega} = \mathcal{K}^{\eta(K^0)}_{K^0,R,\Omega}.
\]

Contexts may be combined by two operations: meet ($\wedge$) and join ($\vee$) on $\mathcal{K}^N_{K^0,R,\Omega}$:

\begin{definition}[Meet and join of contexts]\label{def:andorK}
Given $K, K'\in \mathcal{K}^N_{\langle G, M^0, I^0\rangle,R,\Omega}$
such that $K=\langle G, M^0\cup M, I^0\cup I\rangle$ and  $K'=\langle G, M^0\cup M', I^0\cup I'\rangle$, $K\vee K'$ and $K\wedge K'$ are defined as:
\begin{align*}
  K\vee K' & = \langle G, M^0\cup (M\cup M'), I^0\cup (I\cup I')\rangle, \tag{join}\label{eq:kjoin}\\
  K\wedge K' & = \langle G, M^0\cup (M\cap M'), I^0\cup (I\cap I')\rangle. \tag{meet}\label{eq:kmeet}
\end{align*}
\end{definition}

The set of contexts is closed by meet and join.

\begin{property}[\cite{euzenat2021a}]\label{prop:Kclosedmj} 
$\forall K, K'\in\mathcal{K}^N_{K^0,R,\Omega}$, $K\wedge K'\in\mathcal{K}^N_{K^0,R,\Omega}$ and $K\vee K'\in\mathcal{K}^N_{K^0,R,\Omega}$.
\end{property}
\begin{proof}
Meet and join are defined from the union and intersection of subsets of $D_{\Omega,R,K^0}$ (Definition~\ref{def:andorK}).
But $\mathcal{K}^N_{K^0,R,\Omega}$ is closed by union and intersection of the sets of attributes to add to $M^0$ and the incidence relation is fully determined by the set of attributes (Property~\ref{prop:IdependM}).
Hence, meet and join of contexts in $\mathcal{K}^N_{K^0,R,\Omega}$ belong to $\mathcal{K}^N_{K^0,R,\Omega}$.
\end{proof}

\begin{property}[Commutativity, associativity and absorption of $\vee$ and $\wedge$ on $\mathcal{K}$]\label{prop:comandorK}
For all $K, K', K''$ $\in\mathcal{K}$,
\begin{align}
K\vee K' &= K'\vee K &\text{and} && K\wedge K' &= K'\wedge K, \tag{commutativity}\\
(K\vee K')\vee K'' &= K\vee (K'\vee K'') &\text{and} && (K\wedge K')\wedge K'' &= K\wedge (K'\wedge K''), \tag{associativity}\\
K\wedge(K \vee K') &= K &\text{and} && K \vee (K \wedge K') &= K. \tag{absorption}
\end{align}
\end{property}
\begin{proof} Proofs are given for $\wedge$, those for $\vee$ follow the exact same pattern.
\begin{align*}
K\wedge K' &= \langle G, M, I\rangle\wedge\langle G, M', I'\rangle \\
 &= \langle G, M^0\cup(M\cap M'), I^0\cup(I\cap I')\rangle & \text{Definition~\ref{def:andorK}}\\
 &= \langle G, M^0\cup(M'\cap M), I^0\cup(I'\cap I)\rangle &\text{Commutativity of $\cap$}\\
 &= \langle G, M', I'\rangle\wedge\langle G, M, I\rangle & \text{Definition~\ref{def:andorK}}\\
 &= K'\wedge K & \\
(K\wedge K')\wedge K'' &= (\langle G, M, I\rangle\wedge\langle G, M', I'\rangle)\wedge\langle G, M'', I''\rangle  & \\
 &= (\langle G, M^0\cup(M\cap M'), I^0\cup(I\cap I')\rangle)\\
 &~\wedge\langle G, M'', I''\rangle) & \text{Definition~\ref{def:andorK}} \\
 &= \langle G, M^0\cup((M\cap M')\cap M''), I^0\cup((I\cap I')\cap I'')\rangle & \text{Definition~\ref{def:andorK}} \\
 &= \langle G, M^0\cup(M\cap (M'\cap M'')), I^0\cup(I\cap (I'\cap I''))\rangle &\text{Associativity of $\cap$}\\
 &= \langle G, M^0\cup M, I^0\cup I\rangle\\
 &~\wedge\langle G,  M^0\cup (M'\cap M''), I^0\cup (I'\cap I'')\rangle &\text{Definition~\ref{def:andorK}} \\
 &= \langle G, M^0\cup M, I^0\cup I\rangle\\
 &~\wedge(\langle G,  M^0\cup M', I^0\cup I'\rangle\wedge\langle G,  M^0\cup M'', I^0\cup I''\rangle) &\text{Definition~\ref{def:andorK}} \\
 &= K\wedge (K'\wedge K'') & \text{Definition~\ref{def:andorK}}
\end{align*}

\begin{align*}
K\vee (K\wedge K') &= \langle G, M, I\rangle\vee(\langle G, M, I\rangle)\wedge\langle G, M', I'\rangle) & \\
 &= \langle G, M, I\rangle\vee\langle G, M^0\cup (M\cap M'), I^0\cup (I\cap I')\rangle & \text{Definition~\ref{def:andorK}} \\
 &= \langle G, M^0\cup M\cup(M\cap M'), I^0\cup I\cup (I\cap I')\rangle & \text{Definition~\ref{def:andorK}} \\
 &= \langle G, M^0\cup M, I^0\cup I\rangle & \text{Absorption of $\cup/\cap$}\\
 &= \langle G, M, I\rangle & \\
 &= K  & \qedhere
\end{align*}
\end{proof}

These operations are aligned with context inclusion (Property~\ref{prop:ordereqopK}):

\begin{property}[\cite{euzenat2021a}]\label{prop:ordereqopK}
  $\forall K, K'\in\mathcal{K}^N_{K^0,R,\Omega}, K\subseteq K'\text{  iff  } K=K\wedge K'$.
\end{property}
\begin{proof}
  This property also comes directly from its set theoretic counterpart application to $M$ and $M'$:
  $K\subseteq K' \Leftrightarrow M\subseteq M' \Leftrightarrow M=M\cap M' \Leftrightarrow K=K\wedge K'$.
\end{proof}

This provides the space of contexts with a complete lattice structure (Property~\ref{prop:Kcompl}):

\begin{property}[\cite{euzenat2021a}]\label{prop:Kcompl} 
$\langle \mathcal{K}^N_{K^0,R,\Omega}, \vee, \wedge\rangle$ is a complete lattice.
\end{property}
\begin{proof}
$\mathcal{K}^N_{K^0,R,\Omega}$ is closed by meet and join (Property~\ref{prop:Kclosedmj}).
$\vee$ and $\wedge$ satisfy commutativity, associativity and the absorption law 
(Property~\ref{prop:comandorK}), so this is a lattice.
It is complete because for any subset $S$ of $\mathcal{K}^N_{K^0,R,\Omega}$,
$\bigwedge S=\langle G, M^0\cup\bigcap_{\langle G, M^0\cup M, I^0\cup I\rangle\in S} M,  I^0\cup\bigcap_{\langle G, M^0\cup M, I^0\cup I\rangle\in S} I\rangle$ is well-defined and the greatest lower bound, because $\cap$ induces the greatest lower bounds on attributes and incidence relations.
The same reasoning is possible for the smallest upper bound.
\end{proof}
The proof extends \cite{euzenat2021a} to the non-finite case.

\subsubsection{The context expansion function $F$}\label{sec:F}

We reformulate RCA as based on a main single function,
$F_{K^0,R,\Omega}$, the context expansion function\footnote{Named `complete relational extension' in \cite{rouanehacene2013b}.} attached to a relational context $\langle K^0, R\rangle$ and a set $\Omega$ of scaling operations.

\begin{definition}[Context expansion function \cite{euzenat2021a}]\label{def:F}
  Given a relational context $\langle K^0, R\rangle$ and a set $\Omega$ of relational scaling operations, the function $F_{K^0,R,\Omega}:\mathcal{K}_{K^0,R,\Omega}\rightarrow\mathcal{K}_{K^0,R,\Omega}$ is defined by:
  \[
    F_{K^0,R,\Omega}(K) = \sigma_{\Omega}(K,R,\mathrm{FCA}(K))).
  \]
\end{definition}
The function expression is independent from $K^0$, $K^0$ is used to restrict the domain of the function so that its elements cover $K^0$.
$F_{K^0,R,\Omega}$ is only defined over $\mathcal{K}_{K^0,R,\Omega}$ because scaling is not restricted to an arbitrary $N$.
From now on, we will abbreviate $\mathcal{K}_{K^0,R,\Omega}$ as $\mathcal{K}$ and $F_{K^0,R,\Omega}$ as $F$.
This is legitimate because, for a given relational context, $K^0$, $R$ and $\Omega$ do not change.
$F$ is an extensive and monotone internal operation for $\mathcal{K}$:
\begin{property}[$F$ is internal to $\mathcal{K}$ \cite{euzenat2021a}]\label{prop:FintK}
  $\forall K\in\mathcal{K}$, $F(K)\in\mathcal{K}$.
\end{property}
\begin{proof}
Scaling only adds attributes from $D_{\Omega,R,K^0}$.
\end{proof}

\begin{property}[$F$ is extensive and monotone \cite{euzenat2021a}]\label{prop:Fmonext}
The function $F$, attached to a relational context and a set of scaling operations, satisfies:
\begin{align}
  K &\subseteq F(K),\tag{extensivity}\label{prop:Fext}\\
  K\subseteq K' &\Rightarrow F(K)\subseteq F(K').\tag{monotony}\label{prop:Fmon}
\end{align}
\end{property}
\begin{proof}
\ref{prop:Fext} holds because $F$ can only add to each context in $K$ new relational attributes scaled from $\mathrm{FCA}(K)$.
The set of attributes can thus not be smaller.
\ref{prop:Fmon} holds because $K\subseteq K'$ means that $M\subseteq M'$.
This entails that the set of concepts of $\mathrm{FCA}(K)$ is included in that of $\mathrm{FCA}(K')$, hence the set of attributes $A$ scaled from $K$ is included in the set $A'$ scaled from $K'$.
Since, they are added to $M$ and $M'$, then $M\cup A\subseteq M'\cup A'$, hence $F(K)\subseteq F(K')$.
\end{proof}

Extensivity corresponds to the non-contracting property of the well-grounded semantics \cite{rouanehacene2013b} and monotony is also called order-preservation.

\subsubsection{Fixed points of $F$}\label{sec:fpF}

Given $F$, it is possible to define its sets of fixed points, i.e. the sets of contexts closed for $F$, as:
\begin{definition}[Fixed point \cite{euzenat2021a}]\label{def:Ffixedpoint}
  A context $K\in\mathcal{K}$ is a fixed point for a context expansion function $F$, if $F(K)=K$.
  We call $\mathrm{fp}(F)$ the set of fixed points for $F$.
\end{definition}

Since $\mathcal{K}^N$ is a complete lattice and $F$ is order-preserving (or monotone) on $\mathcal{K}$, then the Knaster-Tarski theorem applies:

\begin{theorem}[Knaster-Tarski theorem \cite{tarski1955a}]\label{prop:knastertarski}
Let $\mathcal{K}$ be a complete lattice and let $F: \mathcal{K}\rightarrow\mathcal{K}$ be an order-preserving function. Then the set of fixed points of $F$ in $\mathcal{K}$ is also a complete lattice.
\end{theorem}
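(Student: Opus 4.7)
The plan is to give the standard Knaster-Tarski argument in two stages: first show that $F$ has a least (and dually a greatest) fixed point, then leverage this to get suprema and infima for arbitrary subsets of fixed points.

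For the first stage, I would consider the set of pre-fixed points $P^{-}=\{K\in\mathcal{K}\mid F(K)\subseteq K\}$. Since $\mathcal{K}$ has a greatest element, $P^{-}$ is non-empty; and since $\mathcal{K}$ is complete, $\mu=\bigwedge P^{-}$ exists. By monotony of $F$ (\Prop{prop:Fmonext}), $\mu\subseteq K$ implies $F(\mu)\subseteq F(K)\subseteq K$ for every $K\in P^{-}$, and taking the meet over all such $K$ gives $F(\mu)\subseteq\mu$, so $\mu\in P^{-}$. Applying $F$ once more and using monotony, $F(F(\mu))\subseteq F(\mu)$, so $F(\mu)\in P^{-}$ and therefore $\mu\subseteq F(\mu)$. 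Combining the two inclusions yields $F(\mu)=\mu$, the least fixed point. The greatest fixed point $\nu$ is obtained dually as the join of the post-fixed points $\{K\mid K\subseteq F(K)\}$.

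For the second stage, let $S\subseteq\mathrm{fp}(F)$ be an arbitrary subset; I will exhibit a join of $S$ inside $\mathrm{fp}(F)$. Let $s=\bigvee S$ computed in $\mathcal{K}$, which exists by completeness. For each $K\in S$, $K=F(K)$ and $K\subseteq s$, so by monotony $K=F(K)\subseteq F(s)$; taking the join over $K\in S$ yields $s\subseteq F(s)$, i.e. $s$ is a post-fixed point. Now restrict attention to the interval $[s,\top]$, which is itself a complete lattice under the inherited order. Monotony together with $s\subseteq F(s)$ ensures that $F$ maps $[s,\top]$ into itself: if $s\subseteq K$ then $F(s)\subseteq F(K)$, so $s\subseteq F(s)\subseteq F(K)$. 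Applying the first stage to $F$ restricted to $[s,\top]$ produces a least fixed point in this interval, which is clearly the least fixed point of $F$ above every element of $S$, hence the join of $S$ in $\mathrm{fp}(F)$. The meet of $S$ in $\mathrm{fp}(F)$ is constructed dually by working inside $[\bot, s']$ where $s'=\bigwedge S$ and observing $F(s')\subseteq s'$.

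The main obstacle is not any single calculation but rather being careful about the two different orderings at play: joins and meets must ultimately be taken \emph{within} $\mathrm{fp}(F)$, and a priori these need not coincide with the joins and meets computed in the ambient $\mathcal{K}$. The trick of restricting to the interval $[s,\top]$ (resp.~$[\bot,s']$) is precisely what reconciles the two, by turning the search for the correct fixed-point join into a search for the least fixed point of a restricted monotone map on a complete sub-lattice, which the first stage already handles. Once this point is understood, the verification that the constructed elements satisfy the universal properties of $\bigvee S$ and $\bigwedge S$ in $\mathrm{fp}(F)$ is routine.
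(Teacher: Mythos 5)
Your argument is the standard (and correct) Knaster--Tarski proof: the least fixed point as the meet of the pre-fixed points, then joins in $\mathrm{fp}(F)$ obtained by restricting $F$ to the interval above $\bigvee S$ and re-running the first stage, with the dual construction for meets. The paper itself offers no proof here --- it states the theorem as a classical result with a citation to Tarski --- so there is nothing to diverge from; your write-up correctly handles the one genuinely delicate point, namely that joins in $\mathrm{fp}(F)$ need not coincide with joins in $\mathcal{K}$.
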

In particular, this warrants that there exists least and greatest fixed points of $F$ in $\mathcal{K}$ (called $\mathrm{lfp}(F)$ and $\mathrm{gfp}(F)$) which can be defined as:
\[
  \mathrm{lfp}(F)=\bigwedge_{K\in \mathrm{fp}(F)} K \text{~~~~and~~~~} \mathrm{gfp}(F)=\bigvee_{K\in \mathrm{fp}(F)} K.
\]

\subsection{Semantics and properties: the lattice approach}\label{sec:latfixpoint}

In formal concept analysis, there is a one-to-one correspondence between contexts and lattices.
Hence the results of the previous section could in principle be derived through reasoning on lattices instead of contexts.
In this section, we approach RCA from the lattice standpoint and we show, unsurprisingly, the close parallel with the context approach.

\subsubsection{The lattice $\mathcal{L}$ of RCA$^0$ concept lattices}\label{sec:L}

From $\mathcal{K}^N_{K^0,R,\Omega}$, one can define $\mathcal{L}^N_{K^0,R,\Omega}$ as the set of images of $\mathcal{K}^N_{K^0,R,\Omega}$ by FCA.
These are concept lattices obtained by applying FCA on $K^0$ extended with a subset of $D_{\Omega,R,N}$:
\[
  \mathcal{L}^N_{K^0,R,\Omega}=\{ \mathrm{FCA}(K^{\langle R, \eta(K^0)\rangle}_{+M}(K^0)) ~|~ M\subseteq D_{\Omega,R,N} \}.
\]

In $\mathrm{RCA}^0$, this time again the set of concept names is limited to those of the single context, $\eta(K^0)$:
\[
  \mathcal{L}_{K^0,R,\Omega}=\mathcal{L}^{\eta(K^0)}_{K^0,R,\Omega}.
\]
For each subset of attributes, the lattice obtained by $\mathrm{FCA}$ is necessarily syntactically different as its concepts refer to different attributes in their intents (at least one of them).
\label{sec:correspKL}
\label{sec:kappa}

There is in fact a bijective correspondence between $\mathcal{L}_{K^0,R,\Omega}$ and $\mathcal{K}_{K^0,R,\Omega}$ \cite[\S1.2]{ganter1999a}.
On the one hand, for any context in $\mathcal{K}_{K^0,R,\Omega}$ corresponds only one lattice by $\mathrm{FCA}$.
On the other hand,
to any concept lattice $\langle C, \leq\rangle\in\mathcal{L}_{K^0,R,\Omega}$ corresponds a formal context: 
\[\kappa(\langle C, \leq\rangle)=\langle \bigcup_{c\in C} extent(c), \bigcup_{c\in C} intent(c), \bigcup_{c\in C}extent(c)\times intent(c)\rangle. \]
$\kappa(L)$ collects the attributes and objects present in $L$ intents to build the unique $M$ and $G$, from which the corresponding $I$ is obtained.

It is such that $\mathrm{FCA}\circ\kappa=id_{\mathcal{K}}$ and $\kappa\circ \mathrm{FCA}=id_{\mathcal{L}}$ ($id_{\mathcal{K}}$ and $id_{\mathcal{L}}$ being the respective identity functions).
This may be stated as:
\begin{property}\label{prop:kappaFCA}
  $K=\kappa(L)\text{ iff } L=\mathrm{FCA}(K)$.
\end{property}
\begin{proof}
$\Rightarrow$) $K=\langle G, M, I\rangle$ $=$ $\langle \bigcup_{c\in C} extent(c), \bigcup_{c\in C} intent(c), \bigcup_{c\in C}extent(c)\times intent(c)\rangle$ $=$ $\kappa(\langle C, \leq\rangle)=\kappa(L)$.
This means that $\langle C, \leq\rangle$ is the concept lattice of a context $\langle G', M', I'\rangle$.
But since $G=\bigcup_{c\in C} extent(c)$ and $M=\bigcup_{c\in C} intent(c)$, then $G=G'$ and $M=M'$.
We need to prove that $I=I'$.
Consider $I\neq I'$, this could be because there exists at least one $g\in G$ and $m\in M$ such that $\langle g, m\rangle\in I\setminus I'$ (or, but not exclusively, $\langle g, m\rangle\in I'\setminus I$).
In this condition, there could not exists $c\in C$ such that $g\in extent(c)$ and $m\in intent(c)$ (resp. it exists).
Then $\langle g, m\rangle\not\in\bigcup_{c\in C}extent(c)\times intent(c)$ (resp. on the contrary it is there) and thus $I\neq \bigcup_{c\in C}extent(c)\times intent(c)$ which contradicts $\langle G, M, I\rangle=\kappa(\langle C,\leq\rangle)$.
Hence, $I=I'$ and $L=\langle C,\leq\rangle$ $=$ $\mathrm{FCA}(\langle G,M,I\rangle)=\mathrm{FCA}(K)$

\medskip
\noindent $\Leftarrow$) $L=\langle C,\leq\rangle$ $=$ $\mathrm{FCA}(\langle G,M,I\rangle)=\mathrm{FCA}(K)$
entails
$\forall c\in C$, $\forall g\in extent(c)$, $\forall m\in intent(c)$, $gIm$, i.e. $extent(c)\times intent(c)\subseteq I$.
In addition, if $gIm$, then there exists $c\in \mathrm{FCA}(\langle G, M, I\rangle)=\langle C, \leq\rangle$ such that $g\in extent(c)$ and $m\in intent(c)$, thus $I\subseteq \bigcup_{c\in C} extent(c)\times intent(c)$.
Moreover, $\top=\langle G,G^\uparrow\rangle\in C$ and $\bot=\langle M^\downarrow, M\rangle\in C$,
hence $\bigcup_{c\in C}extent(c)=G$ and $\bigcup_{c\in C}intent(c)=M$.
Thus $K=\langle G, M, I\rangle$ $=$ $\langle \bigcup_{c\in C}extent(c),$ $\bigcup_{c\in C}intent(c),$ $\bigcup_{c\in C}extent(c)\times intent(c)\rangle=\kappa(L)$.
\end{proof}
We define a specific type of homomorphisms between two concept lattices when concepts are simply mapped into concepts with the same extent and a possibly increased intent\footnote{The results in the remainder of this section are specific to RCA: $\preceq$ is defined with the equality of extents and $\subseteq$ depends only on $M$ because $G$ is always the same.}.

\begin{definition}[Lattice homomorphism \cite{euzenat2021a}]\label{def:Lhomo}
  A concept lattice homomorphism $h: \langle C,\leq\rangle\rightarrow \langle C',\leq'\rangle$ is a function which maps each concept $c\in C$ into a corresponding concept $h(c)\in C'$ such that:
  \begin{itemize}
    \item $\forall c\in C$, $intent(c)\subseteq intent(h(c))$ (or $intent(c)\sqsupseteq intent(h(c))$ if these are considered as description logic concept descriptions),
    \item $\forall c\in C$, $extent(c)= extent(h(c))$, and
    \item $\forall c,d\in C$, $c\leq d\Rightarrow h(c)\leq' h(c)$.
  \end{itemize}
\end{definition}
We note $L\preceq L'$ if there exists a homomorphism from $L$ to $L'$.
In principle, $L\simeq L'$ if $L\preceq L'$ and $L'\preceq L$, but here, $\simeq$ is simply $=$.
This owes to the fact that the homomorphism maps concepts of equal extent, hence, if they hold in both ways, there should be as many concepts in each lattice and these concepts will also have the same intent.

\begin{property}[$\mathrm{FCA}$ is monotone]\label{prop:FCAmon}
  $\forall K=\langle G, M, I\rangle$ and $K'=\langle G, M', I'\rangle$,
  \[ K\subseteq K'\Rightarrow \mathrm{FCA}(K)\preceq \mathrm{FCA}(K'). \]
\end{property}
\begin{proof}
Any concept $c\in\mathrm{FCA}(K)$ characterises a set of objects $extent(c)$ by the set of attributes $intent(c)$ that these objects are the only ones to satisfy.
Since both contexts have the same set of objects $G$, there are not more objects satisfying these in $K'$ and since $M\subseteq M'$ these attributes are still in $K'$, thus $c\in\mathrm{FCA}(K')$.
Hence, it is always possible to define $h$ such as $h(c)=c$.
Then $extent(h(c))=extent(c)$ and $intent(h(c))\supseteq intent(c)$.
Moreover, if $c\leq d$, then $h(c)\leq h(d)$ because $extent(c)\subseteq extent(d)$ entails $extent(h(c))\subseteq extent(h(d))$.
Hence, $\mathrm{FCA}(K)\preceq \mathrm{FCA}(K')$ (Definition~\ref{def:Lhomo}).
\end{proof}
If $K\subseteq K'$, then each concept that can be built from $K$ can be built from $K'$.
The additional attributes in $M'$ can only be used to separate further objects of existing concepts, introducing additional concepts.
All concepts are preserved, possibly with a larger intent, which preserves the homomorphism.

Lattices may be combined by two operations defined from the corresponding operators in $\mathcal{K}^N_{K^0_\xx,R,\Omega}$: meet ($\wedge$) and join ($\vee$) on $\mathcal{L}^N_{K^0,R,\Omega}$:
\begin{definition}[Meet and join of lattices]\label{def:andorL}
Given $L, L'\in \mathcal{L}^N_{\langle G, M^0, I^0\rangle,R,\Omega}$,
\begin{align*}
  L\vee L' & = \mathrm{FCA}( \kappa(L)\vee \kappa(L') ), \tag{join}\label{eq:ljoin}\\
  L\wedge L' & = \mathrm{FCA}( \kappa(L)\wedge \kappa(L') ). \tag{meet}\label{eq:lmeet}
\end{align*}
\end{definition}

The set of lattices is also closed by meet and join:
\begin{property}\label{prop:Lclosedmj}
$\forall L, L'\in\mathcal{L}^N_{K^0,R,\Omega}$, $L\wedge L'\in\mathcal{L}^N_{K^0,R,\Omega}$ and $L\vee L'\in\mathcal{L}^N_{K^0,R,\Omega}$.
\end{property}
\begin{proof}
$\mathcal{L}^N_{K^0,R,\Omega}$ is closed by meet and join since $\mathcal{K}^N_{K^0,R,\Omega}$ is closed by meet and join (Property~\ref{prop:Kclosedmj}) and $\mathcal{L}^N_{K^0,R,\Omega}$ is the image of $\mathcal{K}^N_{K^0,R,\Omega}$ by $\mathrm{FCA}$.
\end{proof}

\begin{property}\label{prop:Lcompl} $\langle \mathcal{L}^N_{K^0,R,\Omega}, \vee, \wedge\rangle$ is a complete lattice.
\end{property}
\begin{proof}
$\mathcal{L}^N_{K^0,R,\Omega}$ is closed by meet and join (Property~\ref{prop:Lclosedmj}).
$\vee$ and $\wedge$ satisfy commutativity, associativity and the absorption laws directly from the union and intersection on contexts (Property~\ref{prop:comandorK}), so this is a lattice.
It is complete because isomorphic, through $\kappa$, to the complete lattices $\mathcal{K}^N_{K^0,R,\Omega}$ (Property~\ref{prop:Kcompl}) and the isomorphism preserves meet and join (consequence of Definition~\ref{def:andorL}).
\end{proof}

\begin{property}\label{prop:ordereqopL}
  $\forall L, L'\in\mathcal{L}_{K^0,R,\Omega}, L\preceq L'\text{  iff  } L=L\wedge L'$.
\end{property}
\begin{proof}
  First, for any $m\in M\cap M'$, the pairs of the incidence relations $I$ and $I'$ for $m$ are the same (Property~\ref{prop:IdependM}).
  \begin{itemize}
    \item[$\Rightarrow$] $L\preceq L'$ means that $\forall c\in L$, $\exists h(c)\in L'$ such that $extent(c)=extent(h(c))$ and $intent(c)\subseteq intent(h(c))$. 
Given that $M=\bigcup_{c\in L} intent(c)\setminus M^0$ and that $M'=\bigcup_{c\in L'} intent(c)\setminus M^0$, then $M\subseteq M'$ (and $I\subseteq I'$ due to Property~\ref{prop:IdependM0}).
Thus, $L=L\wedge L'$ because the contexts on which they are built ($K_{+M}^0$ and $K_{+M\cap M'}^0$) are the same.
    \item[$\Leftarrow$]
      $L=L\wedge L'$ means that $M\subseteq M'$ (and then $I\subseteq I'$ according to Property~\ref{prop:IdependM0}).
      Hence, $\forall c\in L$, the attributes satisfied by $extent(h(c))$ in $L'$ include those satisfied by $extent(c)$ in $L$ and others belonging to $M'\setminus M$.
      Thus, $extent(h(c))$ is the extent of a concept in $L'$ because it contains the only objects satisfying these attributes (it is closed).
      Consequently, $\exists h(c)\in L'$ such that $extent(c)=extent(h(c))$ and $intent(c)\subseteq intent(h(c))$ as $h(c)$ may satisfy additional attributes belonging to $M'\setminus M$, but it satisfies at least all those of $intent(c)$. So, $L\preceq L'$.\qedhere
  \end{itemize}
\end{proof}

\subsubsection{The lattice expansion function $E$}\label{sec:E}

As was done for contexts, it is possible to provide an expansion function for lattices.
We define $E_{K^0,R,\Omega}$, the lattice expansion function attached to a relational context $\langle K^0, R\rangle$ and a set $\Omega$ of scaling operations.

\begin{definition}[Lattice expansion function \cite{euzenat2021a}]\label{def:E}
  Given a relational context $\langle K^0, R\rangle$ and a set $\Omega$ of relational scaling operations the function $E_{K^0,R,\Omega}:\mathcal{L}_{K^0,R,\Omega}\rightarrow\mathcal{L}_{K^0,R,\Omega}$ is defined by:
  \[
    E_{K^0,R,\Omega}(L) = \mathrm{FCA}(\sigma_{\Omega}(\kappa(L),R,L)).
  \]
\end{definition}
Here again, $K^0$ is only used to constrain the domain of the function, not its expression.
From now on, we will abbreviate $\mathcal{L}_{K^0,R,\Omega}$ as $\mathcal{L}$ and $E_{K^0,R,\Omega}$ as $E$.

The definition of $E$ first applies scaling and then $\mathrm{FCA}$, though $F$ does the opposite.
In consequence, $E$ is the function corresponding to $F$ in the sense that
$E\circ \mathrm{FCA}=\mathrm{FCA}\circ F$ (see Figure~\ref{fig:EF}).

\begin{figure}[t]
\centering
  \begin{tikzpicture}[yscale=.8]
    \draw (0,-1) node (K) {$K$};
    \draw (6,-1) node (L) {$L$} node[anchor=west] {$=\mathrm{FCA}(K)$};

    \draw (0,1) node (Kp) {$K'$};
    \draw (6,1) node (Lp) {$L'$} node[anchor=west] {$=\mathrm{FCA}(K')$};

    \draw[->] (K) -- node[left]{$F$} (Kp);
    \draw[->] (L) -- node[right]{$E$} (Lp);

 \begin{scope}[decoration={markings,mark=at position 0.5 with {\arrow{To}}}] 
    \draw[thin,postaction=decorate] (K) .. controls +(1,.5) and +(-1,.5) ..  node[below]{$\mathrm{FCA}$} (L);
    \draw[thin,postaction=decorate] (L) .. controls +(-1,-.5) and +(1,-.5) ..  node[below]{$\kappa$} (K);
    \draw[thin,postaction=decorate] (L) .. controls +(-1,1) and +(1,-1) ..  node[above,sloped] {$\sigma$} (Kp);
    \draw[thin,postaction=decorate] (Kp) .. controls +(1,.5) and +(-1,.5) ..  node[above] {$\mathrm{FCA}$} (Lp);
    \draw[thin,postaction=decorate] (Lp) .. controls +(-1,-.5) and +(1,-.5) ..  node[above]{$\kappa$} (Kp);
  \end{scope}
  
   \draw (-1.75,0) node {$\mathcal{K}$};
   \draw[dotted] (0,-1.5) -- (-1.2,0) -- (0,1.5) -- (1.2,0) -- cycle;

   \draw (7.75,0) node {$\mathcal{L}$};
   \draw[dotted] (6,-1.5) -- (4.8,0) -- (6,1.5) -- (7.2,0) -- cycle;

\end{tikzpicture}
\caption{Relations between $F$ and $E$ through the alternation of $\mathrm{FCA}$ and $\sigma_\Omega$ (from \cite{euzenat2021a}).}\label{fig:EF}
\end{figure}
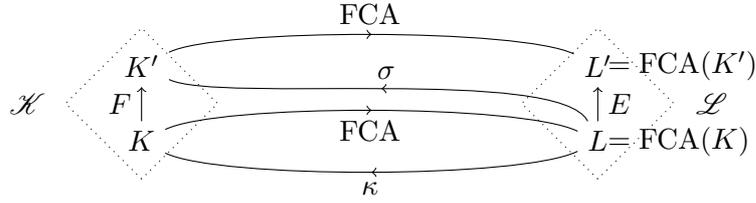

$E$ is an extensive and monotone internal operation for $\mathcal{L}$:
\begin{property}[$E$ is internal to $\mathcal{L}$]\label{prop:EintL}
  $\forall L\in\mathcal{L}$, $E(L)\in\mathcal{L}$.
\end{property}
\begin{proof}
  Given $L\in\mathcal{L}$, $\kappa(L)\in\mathcal{K}$.
  $K=\sigma_\Omega(\kappa(L),R,L)$ adds attributes from $D_{\Omega,R,K^0}$ to $\kappa(L)$, hence $K\in\mathcal{K}$.
  Consequently, $E(L)=\mathrm{FCA}(K)\in\mathcal{L}$.
\end{proof}

\begin{property}[$E$ is monotone and extensive]\label{prop:Emonext}
The function $E$, attached to a relational context $K^0, R$ and a set $\Omega$ of scaling operations, satisfies $\forall L, L'\in\mathcal{L}_{K^0,R,\Omega}$:
\begin{align}
  L\preceq L' &\Rightarrow E(L)\preceq E(L'),\tag{monotony}\label{prop:Emon}\\
  L &\preceq E(L).\tag{extensivity}\label{prop:Eext}
\end{align}
\end{property}
\begin{proof}
\begin{description}[font=\normalfont]
\item[(\ref{prop:Emon})]
  $L\preceq L'$ entails that all concepts of $L$ are found in $L'$ with a larger intent.
  Consequently, $\eta(L)\subseteq \eta(L')$ and $D_{\Omega,R,L}\subseteq D_{\Omega,R,L'}$.
  This entails that $\sigma_\Omega(K,R,L')$ extends $K$ with more attributes than $\sigma_\Omega(K,R,L)$.
  Hence $E(L)\preceq E(L')$ because
  $E(L)$ is the application of $\mathrm{FCA}$ to the same context, to which has been added attributes.
\item[(\ref{prop:Eext})]
  For all $K\in\mathcal{K}$, $L=\mathrm{FCA}(K)$, thus $K\subseteq \sigma_\Omega(K,R,L)$.
  But $E(L)$ $=$ $\mathrm{FCA}(\sigma_{\Omega}(K, R, L))$, so it will have at least all concepts generated by $K$ (identified by extents) because $\sigma$ only adds attributes, hence those allowing to generate a concept remain available and $\mathrm{FCA}$ can only generate more concepts.
    Thus, for each concept $c\in L$ there exists $h(c)\in E(L)$ (with $extent(c)=extent(h(c))$ and possibly with a larger intent, i.e. $intent(c)\subseteq intent(h(c))$), generated by the new scaled attributes.
    Hence, $L\preceq E(L)$.    
    \qedhere
  \end{description}
\end{proof}

Monotony is also called order-preservation.
It corresponds to the non-(intent-)contract\-ing concept property of \cite{rouanehacene2013b}.

\subsubsection{Fixed points of $E$}\label{sec:fpE}

Given $E$, it is possible to define its set of fixed points, i.e. the sets of concept lattices closed for $E$, as:
\begin{definition}[Fixed point]\label{def:Efixedpoint}
  A concept lattice $L\in\mathcal{L}$ is a fixed point for a lattice expansion function $E$, if $E(L)\simeq L$.
  We call $\mathrm{fp}(E)$ the set of fixed points for $E$.
\end{definition}

We can define:
\[
  \mathrm{lfp}(E)=\bigwedge_{L\in \mathrm{fp}(E)} L \text{ and } \mathrm{gfp}(E)=\bigvee_{L\in \mathrm{fp}(E)} L.
\]

Since $\mathcal{L}^N$ is a complete lattice and $E$ is order-preserving (or monotone) on $\mathcal{L}$, then we can apply the Knaster-Tarski theorem.
This warrants that there exists least and greatest fixed points of $E$ in $\mathcal{L}$.

\subsection{Well-grounded and least fixed-point semantics}\label{sec:rca0andlfp}

It is now possible to revise the description of Section~\ref{sec:rcaop} by considering that $F^{\infty}(K^0)=F^n(K^0)$ with $n$ the first integer such that $F^n(K^0)=F^{n+1}(K^0)$.
$\underline{\mathrm{RCA}}$ may thus be redefined as
\[
  \underline{\mathrm{RCA}}_{\Omega}(K^0,R) = \mathrm{FCA}( F^{\infty}(K^0) ),
\]
Alternatively, one may define $E^{\infty}(\mathrm{FCA}(K^0))=F^n(\mathrm{FCA}(K^0))$ with $n$ the first integer $F^n(\mathrm{FCA}(K^0))=F^{n+1}(\mathrm{FCA}(K^0))$.
Then, $\underline{\mathrm{RCA}}$ may be redefined as
\[
  \underline{\mathrm{RCA}}_{\Omega}(K^0,R) = E^{\infty}(\mathrm{FCA}(K^0)).
\]

It thus seems that $\underline{\mathrm{RCA}}$ returns a fixed point of $F$ or $E$.
Hence the question: which fixed point is returned by RCA's well-grounded semantics?

\subsubsection{The RCA well-grounded semantics is the least fixed-point semantics}\label{sec:rca0islfp}

Since $K^0$ belongs to $\mathcal{K}$ and $\mathrm{FCA}(K^0)$ belongs to $\mathcal{L}$, then RCA is indeed based on $E$ and $F$ fixed points.
These are the least fixed points.

\begin{proposition}[The $\underline{\mathrm{RCA}}$ algorithm on a RCA$^0$ context computes the least fixed point \cite{euzenat2021a}]\label{prop:rca0lfp}
  Given $F$ the context expansion function and $E$ the lattice expansion function associated to $K^0$, $R$ and $\Omega$,
\begin{align*}
\underline{\mathrm{RCA}}_{\Omega}(K^0,R) &= \mathrm{FCA}(\mathrm{lfp}(F_{K^0,R,\Omega})),\\
\intertext{and}
\underline{\mathrm{RCA}}_{\Omega}(K^0,R) &= \mathrm{lfp}(E_{K^0,R,\Omega}).
\end{align*}                       
\end{proposition}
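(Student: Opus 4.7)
The plan is to establish that the RCA algorithm iterates the expansion functions $F$ and $E$ from the bottom elements of their respective lattices $\mathcal{K}$ and $\mathcal{L}$, then invoke Kleene-style reasoning (valid here since both lattices are finite and the functions are monotone and extensive) to identify the limit with the least fixed point.

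First I would observe that the iterative step of $\underline{\mathrm{RCA}}$, namely $K^{t+1} = \sigma^*_\Omega(K^t, R, \mathrm{FCA}^*(K^t))$, is exactly the application of $F$ to $K^t$ (\Def{def:F}), so the sequence built by the algorithm is $K^0, F(K^0), F^2(K^0), \ldots$ Termination of the algorithm at stage $n$ means $F(K^n) = K^n$, i.e. $K^n \in \mathrm{fp}(F)$. This chain terminates because $F$ is extensive (\Prop{prop:Fmonext}) and $\mathcal{K}$ is finite (\Prop{prop:Kcompl}).

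Next I would argue that $K^n = \mathrm{lfp}(F)$. The key point is that $K^0$ is the bottom element of $\mathcal{K}$: by construction every $K \in \mathcal{K}$ is of the form $\mathrm{K}^{\langle R, N(K^0)\rangle}_{+M}(K^0)$ with $M \subseteq D_{\Omega,R,N(K^0)}$, so $K^0 \subseteq K$. Take any $K' \in \mathrm{fp}(F)$; then $K^0 \subseteq K'$, and by monotony of $F$ (\Prop{prop:Fmonext}) together with a straightforward induction, $F^t(K^0) \subseteq F^t(K') = K'$ for every $t$. Taking $t = n$ gives $K^n \subseteq K'$, so $K^n$ is the least fixed point. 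Therefore $\underline{\mathrm{RCA}}_\Omega(K^0,R) = \mathrm{FCA}^*(K^n) = \mathrm{FCA}(\mathrm{lfp}(F))$.

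For the $E$ side, I would exploit the commutation $E \circ \mathrm{FCA} = \mathrm{FCA} \circ F$ depicted in \Fig{fig:EF}, together with the bijection $\mathrm{FCA}/\kappa$ between $\mathcal{K}$ and $\mathcal{L}$ (\Prop{prop:kappaFCA}). A simple induction shows $E^t(\mathrm{FCA}(K^0)) = \mathrm{FCA}(F^t(K^0))$, so the iterate of $E$ from $\mathrm{FCA}(K^0)$ stabilises at $\mathrm{FCA}(K^n)$. Because $\mathrm{FCA}$ is monotone (\Prop{prop:FCAmon}) and $\kappa$ is its inverse, the bijection preserves the order and thus carries $\mathrm{lfp}(F)$ to $\mathrm{lfp}(E)$, so $\mathrm{FCA}(K^n) = \mathrm{lfp}(E)$, which equals $\underline{\mathrm{RCA}}_\Omega(K^0,R)$.

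The main obstacle is the cleanest handling of the two identifications: that $K^0$ is really the bottom of $\mathcal{K}$ (and hence $\mathrm{FCA}(K^0)$ the bottom of $\mathcal{L}$), and that iterating a monotone extensive map on a finite complete lattice from the bottom yields the least fixed point. Once these are settled, the rest is routine.
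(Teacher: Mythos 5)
Your proof is correct, and for the first equation it is essentially the paper's own argument: the paper likewise observes that $K^0\subseteq K$ for every $K\in\mathrm{fp}(F)$ (since every context in $\mathcal{K}$ contains $M^0$) and concludes $F^n(K^0)\subseteq F^n(K)=K$ by monotony. Where you diverge is the second equation. The paper re-runs the whole argument directly on the lattice side: it iterates $E$ from $\mathrm{FCA}(K^0)$, notes $E(K^0)\preceq L$ for every $L\in\mathrm{fp}(E)$, and pushes the inequality through by induction using the monotony of $E$ (\Prop{prop:Emonext}) and idempotence on fixed points. You instead transfer the result already obtained for $F$ through the commutation $E\circ\mathrm{FCA}=\mathrm{FCA}\circ F$ and the bijection $\mathrm{FCA}/\kappa$ (\Prop{prop:kappaFCA}). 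Your route is arguably more economical — one argument plus a transport lemma instead of two parallel arguments — and it makes explicit why the two statements are really one. The only point to tighten: you justify that the bijection carries $\mathrm{lfp}(F)$ to $\mathrm{lfp}(E)$ by saying ``$\mathrm{FCA}$ is monotone and $\kappa$ is its inverse''; a monotone bijection need not have a monotone inverse in general, so you also need $L\preceq L'\Rightarrow\kappa(L)\subseteq\kappa(L')$. That fact is available (it is established inside the proof of \Prop{prop:ordereqopL}), so the gap is one of citation rather than substance, but it should be invoked explicitly.
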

\begin{proof}
Concerning the first equation, $\underline{\mathrm{RCA}}_{\Omega}(K^0,R)=\mathrm{FCA}(F^n(K^0))$ for some $n$ at which $F(F^n(K^0))=F^n(K^0)$ \cite{rouanehacene2013a}.
Let $K^{\infty}=F^n(K^0)$, $K^{\infty}\in \mathrm{fp}(F)$ (Definition~\ref{def:Ffixedpoint}).
$\forall K\in \mathrm{fp}(F)$, $K\in\mathcal{K}$, thus $K^0\subseteq K$ because all the contexts in $\mathcal{K}$ contain $M^0$.
By monotony (Property~\ref{prop:Fmonext}), $K^{\infty}=F^n(K^0)\subseteq F^n(K)=K$, because $K$ is a fixed point.
Thus, $K^\infty$ is a fixed point more specific than all fixed points: it is the least fixed point.

Concerning the second equation, $\underline{\mathrm{RCA}}_{\Omega}(K^0,R)=E^n(K^0)$ for some $n$ at which $E(E^n(K^0))=E^n(K^0))$ \cite{rouanehacene2013a}.
  $E(K^0)\in\mathcal{L}$, hence (by Property~\ref{prop:EintL}), $E^n(K^0)\in\mathcal{L}$.
  Moreover, $E(E^n(K^0))=E^n(K^0))$ thus $E^n(K^0))\in \mathrm{fp}(E)$.
  In addition, $\forall L\in \mathrm{fp}(E)$, $E(K^0)\preceq L$ because the context from which $L$ is created contains at least all attributes of $K^0$.
  But if $E^t(K^0)\preceq L$, then $E^{t+1}(K^0)\preceq L$ because by monotony (Property~\ref{prop:Emonext}),  $E^{t+1}(K^0)=E(E^t(K^0))\preceq E(L)$ and $E$ is idempotent on fixed points (by Definition~\ref{def:Efixedpoint}).
  Thus, $\underline{\mathrm{RCA}}_{\Omega}(K^0,R)$ is a fixed point more specific than all fixed points: it is the least fixed point.
\end{proof}

\subsubsection{Greatest fixed point}\label{sec:gfpEF}

A natural question is how to obtain the greatest fixed point.
In fact, under this approach this is (theoretically) surprisingly easy.

\begin{proposition}[\cite{euzenat2021a}]\label{prop:gfpF}
  $\mathrm{gfp}(F_{K^0,R,\Omega})=\mathrm{K}^{\langle R,\eta(K^0)\rangle}_{+D_{\Omega,R,K^0}}(K^0)$.
\end{proposition}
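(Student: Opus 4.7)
My plan is to show the displayed equation by two observations: first, that $\mathrm{K}^{\langle R,N(K^0)\rangle}_{+D_{\Omega,R,N(K^0)}}(K^0)$ (call it $K^{\top}$) is the top element of the lattice $\mathcal{K}$, and second, that it is a fixed point of $F$. The greatest fixed point must then coincide with it.

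For the first observation, recall from Section~\ref{sec:K} that $\mathcal{K}=\mathcal{K}_{K^0,R,\Omega}$ is precisely the set of contexts of the form $\mathrm{K}^{\langle R,N(K^0)\rangle}_{+M}(K^0)$ with $M\subseteq D_{\Omega,R,N(K^0)}$. Hence for any $K\in\mathcal{K}$, $K\subseteq K^{\top}$ by \Prop{prop:IdependM} (the incidence on added attributes is forced by the attribute itself). Consequently $K^{\top}$ is the greatest element of $\langle\mathcal{K},\subseteq\rangle$.

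For the second observation, I would argue that $F(K^{\top})\in\mathcal{K}$ by \Prop{prop:FintK}, so $F(K^{\top})\subseteq K^{\top}$ from the first observation. Combined with extensivity $K^{\top}\subseteq F(K^{\top})$ (\Prop{prop:Fmonext}), this yields $F(K^{\top})=K^{\top}$. An alternative, more explicit argument: the scaling $\sigma_\Omega(K^{\top},R,\mathrm{FCA}(K^{\top}))$ adds attributes drawn from $D_{\Omega,R,N(\mathrm{FCA}(K^{\top}))}$, but since concept names are subsets of $G$ (\S\ref{sec:fca}), $N(\mathrm{FCA}(K^{\top}))\subseteq N(K^0)$, so the candidate attributes all lie in $D_{\Omega,R,N(K^0)}$ and are therefore already present in $K^{\top}$.

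Combining the two observations: $K^{\top}\in\mathrm{fp}(F)$ and for every $K'\in\mathrm{fp}(F)\subseteq\mathcal{K}$ we have $K'\subseteq K^{\top}$, so $K^{\top}=\bigvee_{K\in\mathrm{fp}(F)}K=\mathrm{gfp}(F)$. The only delicate point, if any, is making sure that the identification of concept names with extents (so that $N(\mathrm{FCA}(K^{\top}))\subseteq 2^G=N(K^0)$ in the RCA$^0$ setting where $N(K^0)=2^G$) is what legitimates the closure claim; everything else is immediate from the preliminary properties already established.
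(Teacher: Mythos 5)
Your proposal is correct and follows exactly the paper's own argument: the context $\mathrm{K}^{\langle R,N(K^0)\rangle}_{+D_{\Omega,R,N(K^0)}}(K^0)$ is the greatest element of $\mathcal{K}$ because it carries all scalable attributes, and it is a fixed point because $F$ is internal (so $F(K^{\top})\subseteq K^{\top}$) and extensive (so $K^{\top}\subseteq F(K^{\top})$). Your additional explicit check that scaled attributes stay within $D_{\Omega,R,N(K^0)}$ merely unpacks what \Prop{prop:FintK} already guarantees, so there is nothing essentially different here.
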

\begin{proof}
  This context is the greatest element of $\mathcal{K}$ as it contains all attributes of $M^0\cup D_{\Omega,R,K^0}$.
  It is also a fixed point because $F$ is extensive (Property~\ref{prop:Fmonext}) and internal (Property~\ref{prop:FintK}).
\end{proof}

The lattice corresponding to the greatest fixed point will be $L=\mathrm{FCA}(\mathrm{gfp}(F_{K^0,R,\Omega}))$.

This result is easy but very uncomfortable.
The obtained lattice may contain many non-supported attributes as shown in Example~\ref{ex:gfprca0}.
Indeed, $\exists r.c$ is well-defined by the incidence relation, but it is of no use to RCA if $c$ does not belong to $L$.

\begin{example}[Greatest fixed point of $F$ in RCA$^0$]\label{ex:gfprca0}
In the example of Section~\ref{sec:elabexrca0}, the attribute $\exists p.A$ belongs to $D_{\Omega,R,K^0}$ though $A$ does not belong to the maximal lattice $L_0^\star$, because it is not a closed concept for FCA.
The fact that both $a$ and $b$ satisfy this attribute makes that it will find its place in the intent of $AB$.
If one considers the lattice in isolation, this is perfectly valid because the scaled context is well-defined: $\exists p.A$ is just an attribute among others satisfied by $a$ and $b$.
However, if the lattice is transformed in a description logic TBox, this is not correct to refer to an undefined class (here $A$).
\end{example}

On the contrary, there may be cases
in which the greatest fixed point is the powerset lattice, i.e. in which all attributes are supported, and the least fixed point of $F$ is directly $\mathrm{FCA}(K^0)$.

This problem is even more embarrassing if one wants to enumerate all acceptable solutions: many of the fixed points of $E$ or $F$ will feature such non-supported attributes.

\section{Self-supported fixed points in \texorpdfstring{RCA$^0$}{RCA⁰}}\label{sec:selfsup0}

In order to define acceptable solutions for RCA, we introduce the notion of self-support.
It specifies that a concept lattice is self-supported if the intents of its concepts only refer to concepts in this lattice.
We describe a function $Q$ which suppresses non-supported attributes and whose closure yields self-supported lattices.
We then identify the acceptable solutions of RCA$^0$ as self-supported fixed points.

\subsection{Self-supported lattices}

Since both $F$ and $E$ are extensive functions, it is possible, starting from anywhere in $\mathcal{K}$ or $\mathcal{L}$, to consider attributes that do not refer to concepts and these attributes will be preserved.
As a consequence, there may exist fixed points with these unwanted attributes and they are also found in the greatest fixed point.
This is not the result that we expect: we need the results to be self-supported.

One may consider identifying such attributes from the greatest fixed point and forbidding them.
However, these meaningless attributes are contextual:
a supported attribute in the greatest fixed point, may not be supported in a smaller lattice.
This is a difficulty for enumerating these fixed points.

\begin{example}[Non self-supported lattice in RCA$^0$]\label{ex:selfsup0}
Figure~\ref{fig:ex-rca0-noalt} (p.\pageref{fig:ex-2-noalt}) shows a context $K_0^\#$ and the associated concept lattice $L_0^\#$ that could be a solution for the example of Section~\ref{sec:elabexrca0} as it belongs to $\mathcal{L}_{\{\exists\},\{r\},K_0^0}$.
However, the lattice is not self-supported because the concept $AB$ uses the attribute $\exists r.B$ in $K_0^\#$ which refers to a concept ($B$) not present in $L_0^\#$.
\end{example}

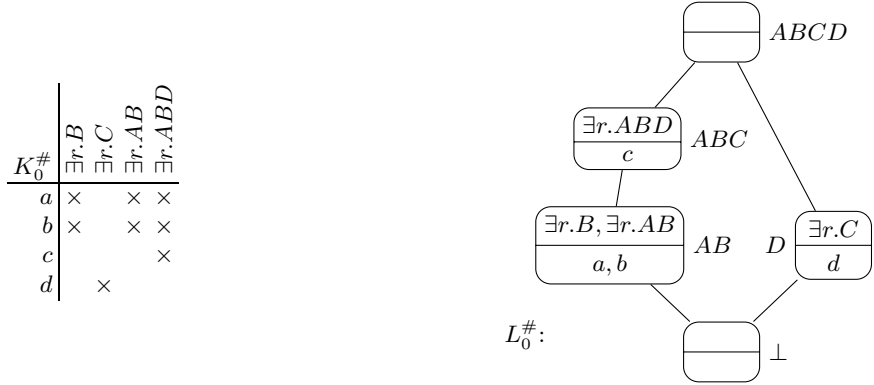
\begin{figure}
\centering
\footnotesize
\begin{tabular}{cc}
\begin{minipage}{.4\textwidth}
\setlength{\tabcolsep}{2pt}
\begin{center}
\begin{tabular}{r|cccc}
$K_0^\#$    & \rotatebox{90}{$\exists r.B$} & \rotatebox{90}{$\exists r.C$} & \rotatebox{90}{$\exists r.AB$} & \rotatebox{90}{$\exists r.ABD$}\\\hline
$a$ & $\times$ &          & $\times$ & $\times$        \\
$b$ & $\times$ &          & $\times$ & $\times$        \\
$c$ &          &          &          & $\times$        \\
$d$ &          & $\times$ &          &         \\
\end{tabular}
\end{center}
\end{minipage} &
\begin{minipage}{.6\textwidth}
\begin{center}
\begin{tikzpicture}[xscale=.3,yscale=.3]
    \begin{dot2tex}[dot,tikz,codeonly,options=-traw]
      graph {
	graph [nodesep=1.5]
	node [style=concept]
	ABCD [label="$\empty$
\nodepart{two}
$\empty$"]

	ABC [label="$\exists r.ABD$
\nodepart{two}
$c$"]

	D [label="$\exists r.C$
\nodepart{two}
$d$"]

	AB [label="$\exists r.B, \exists r.AB$
\nodepart{two}
$a, b$"]

        C0 [label="$\empty$
\nodepart{two}
$\empty$"]
        ABCD -- ABC
        ABCD -- D
        ABC -- AB
        AB -- C0
        D -- C0
      }
    \end{dot2tex}
    \node[anchor=west] at (ABCD.east) {$ABCD$};
    \node[anchor=west] at (ABC.east) {$ABC$};
    \node[anchor=west] at (AB.east) {$AB$};
    \node[anchor=east] at (D.west) {$D$};
    \node[anchor=west] at (C0.east) {$\bot$};
    \draw (0,2.5) node {$L_0^\#$:};
\end{tikzpicture}
\end{center}
\end{minipage}
\end{tabular}
\caption{Non self-supported lattice $L_0^\#$.}\label{fig:ex-rca0-noalt}
\end{figure}

Instead, we consider only self-supported lattices, i.e. lattices whose intents only refer to their own concepts.
\begin{definition}[Self-supported lattices \cite{euzenat2021a}]\label{def:ssl}
A concept lattice $L$ is \emph{self-supported} if $\forall c\in L, intent(c)\subseteq M^0\cup D_{\Omega,R,L}$.
\end{definition}

The definition of self-supported lattices does not provide a direct way to transform a non self-supported lattice into a self-supported one.
Simply suppressing non-supported attributes from intents could result in non concepts (with non-closed extents).
A possible way to solve this problem consists of extracting only the attributes currently in the lattice and to apply $\mathrm{FCA}$ to the resulting context.

For that purpose, we introduce a filtering (or purging) function which suppresses \emph{from the induced context} ($\kappa(L)$) those attributes non supported \emph{by the lattice}:

\begin{definition}[Purging function \cite{euzenat2021a}]\label{def:purge}
The function $\pi_{K^0,R,\Omega}: \mathcal{L}^N\rightarrow\mathcal{K}^N$ returns the context reduced of those attributes not present in a lattice:
\[
  \pi_{K^0,R,\Omega}(L)=\mathrm{K}^{\langle R, N\rangle}_{-(D_{\Omega,R,K^0}\setminus D_{\Omega,R,L})}(\kappa(L)).
\]
\end{definition}
When unambiguous, we refer to $\pi_{K^0,R,\Omega}$ as $\pi$.
The purging function and the following ones are defined over any $N$, as they only restrict the sets of possible contexts and do not expand them.

Like the scaling function, the purging function is only one step: it suppresses currently unsupported attributes, but this may lead to less concepts to be generated by FCA, and thus to other non-supported attributes.
$\pi$ and $\sigma$ are not inverse functions: in particular, $\sigma$ greatly depends on $\Omega$ and $R$ to decide which attributes to scale, through $\pi$ simply suppresses attributes non supported by the lattice(s), independently from $\Omega$, which however determines the attribute language.

\begin{property}\label{prop:pissup0}
$L$ is self-supported if and only if $\mathrm{FCA}(\pi(L))=L$.
\end{property}
\begin{proof}
($\Rightarrow$) $L$ is self-supported means that $\forall c\in L, intent(c)\subseteq M^0\cup D_{\Omega,R,L}$ (Definition~\ref{def:ssl})
which means that, in $\kappa(L)$, all attributes belong to $M^0\cup D_{\Omega,R,L}$.
This leads to $\pi(L)=\kappa(L)$.
However, by Property~\ref{prop:kappaFCA}, $L=\mathrm{FCA}(\kappa(L))$, thus $L=\mathrm{FCA}(\pi(L))$.
($\Leftarrow$)
$L=\mathrm{FCA}(\pi(L))$ if and only if $\pi(L)=\kappa(L)$ (Property~\ref{prop:kappaFCA}).
This means that the attributes of $\kappa(L)$, and thus of $L$, belong to $M^0\cup D_{\Omega,R,L}$.
This is equivalent to being self-supported.
\end{proof}

\subsection{Contraction functions \(Q\) and \(P\)}\label{sec:contr}

Instead of dealing with expansion functions, it is possible to consider contraction functions for contexts or lattices based on $\pi$.

\begin{definition}[Lattice contraction function]\label{def:Q}
The lattice contraction function $Q: \mathcal{L}^N\rightarrow\mathcal{L}^N$ is defined by
\[
  Q(L)=\mathrm{FCA}(\pi_{K^0,R,\Omega}(L)).
\]
\end{definition}

As previously, we will abbreviate $Q_{K^0,R,\Omega}$ as $Q$.
$Q$ is internal to the space of lattices.

\begin{property}[$Q$ is internal to $\mathcal{L}$]\label{prop:QintL}
  $\forall L\in\mathcal{L}^N$, $Q(L)\in\mathcal{L}^N$.
\end{property}
\begin{proof}
  $Q(L)=\mathrm{FCA}(\pi(L))$. $\pi(L)$ retracts attributes from $\kappa(L)$.
  By definition, $\kappa(L)\in\mathcal{K}^N$, hence $\pi(L)\in\mathcal{K}^N$.
  $\pi$ never suppresses attributes from $M^0$ which are all supported (they do not depend on the existence of specific concepts in $L$).
  Consequently, $Q(L)=\mathrm{FCA}(\pi(L))\in\mathcal{L}^N$.
\end{proof}

Contrary to $E$, $Q$ is anti-extensive and monotone:
\begin{property}[$Q$ is anti-extensive and monotone \cite{euzenat2021a}]\label{prop:Qantextmon}
  The function $Q$ satisfies:
\begin{align*}
  Q(L) &\preceq L, \tag{anti-extensivity}\label{eq:Qantiext}\\
  L\preceq L' &\Rightarrow Q(L)\preceq Q(L'). \tag{monotony}\label{eq:Qmono}
\end{align*}
\end{property}
\begin{proof}
  \begin{description}[font=\normalfont]
  \item[(\ref{eq:Qantiext})]
    $\pi(L)\subseteq \kappa(L)$ because $\pi$ simply suppresses attributes from $\kappa(L)$.
    Hence, $\mathrm{FCA}(\pi(L))\preceq \mathrm{FCA}(\kappa(L))$ because the latter contains all concepts of the former (identified by extent) possibly featuring the removed attributes (Property~\ref{prop:FCAmon}).
    Moreover, $\mathrm{FCA}(\kappa(L))=L$ by definition, thus $Q(L)=\mathrm{FCA}(\pi(L))\preceq \mathrm{FCA}(\kappa(L))=L$.
  \item[(\ref{eq:Qmono})] If $L\preceq L'$, then $\kappa(L)\subseteq \kappa(L')$, otherwise $\mathrm{FCA}$ would not generate a smaller lattice.
    In addition, $L\preceq L'$ entails $D_{\Omega,R,L}\subseteq D_{\Omega,R,L'}$ which entails $D_{\Omega,R,K^0}\setminus D_{\Omega,R,L}\supseteq D_{\Omega,R,K^0}\setminus D_{\Omega,R,L'}$, which finally together leads to $M\setminus (D_{\Omega,R,K^0}\setminus D_{\Omega,R,L})\subseteq M'\setminus (D_{\Omega,R,K^0}\setminus D_{\Omega,R,L'})$.
    Then, $\pi(L)\subseteq \pi(L')$ because a smaller context supported by a smaller lattice cannot result in a larger context.
  Hence, $Q(L)=\mathrm{FCA}(\pi(L))\preceq \mathrm{FCA}(\pi(L'))=Q(L')$ by monotony of FCA (Property~\ref{prop:FCAmon}).
  \qedhere
\end{description}
\end{proof}

Similarly, on the context side, the $P$ context contraction function is introduced:

\begin{definition}[Context contraction function]\label{def:P}
The context contraction function $P: \mathcal{K}^N\rightarrow\mathcal{K}^N$ is defined by
\[
  P(K)=\pi_{K^0,R,\Omega}(\mathrm{FCA}(K)).
\]
\end{definition}

Figure~\ref{fig:PQ} displays the relations between these two functions.

\begin{figure}[t]
\centering
  \begin{tikzpicture}[yscale=.8]
    \draw (0,-1) node (K) {$K$};
    \draw (6,-1) node (L) {$L$} node[anchor=west] {$=\mathrm{FCA}(K)$};

    \draw (0,1) node (Kp) {$K'$};
    \draw (6,1) node (Lp) {$L'$} node[anchor=west] {$=\mathrm{FCA}(K')$};

    \draw[<-] (K) -- node[left]{$P$} (Kp);
    \draw[<-] (L) -- node[right]{$Q$} (Lp);

  \begin{scope}[decoration={markings,mark=at position 0.5 with {\arrow{To}}}]
    \draw[thin,postaction=decorate] (K) .. controls +(1,.5) and +(-1,.5) ..  node[below]{$\mathrm{FCA}$} (L);
    \draw[thin,postaction=decorate] (L) .. controls +(-1,-.5) and +(1,-.5) ..  node[below]{$\kappa$} (K);
    \draw[thin,postaction=decorate] (Kp) .. controls +(1,.5) and +(-1,.5) ..  node[above] {$\mathrm{FCA}$} (Lp);
    \draw[thin,postaction=decorate] (Lp) .. controls +(-1,-.5) and +(1,-.5) ..  node[above]{$\kappa$} (Kp);
    \draw[thin,postaction=decorate] (Lp) .. controls +(-1,-1) and +(1,1) ..  node[above,sloped]{$\pi$} (K);
  \end{scope}
  
   \draw (-1.75,0) node {$\mathcal{K}$};
   \draw[dotted] (0,-1.5) -- (-1.2,0) -- (0,1.5) -- (1.2,0) -- cycle;

   \draw (7.75,0) node {$\mathcal{L}$};
   \draw[dotted] (6,-1.5) -- (4.8,0) -- (6,1.5) -- (7.2,0) -- cycle;

\end{tikzpicture}
\caption{Relations between $P$ and $Q$ through the alternation of $\mathrm{FCA}$ and $\pi$  (from \cite{euzenat2021a}).}\label{fig:PQ}
\end{figure}
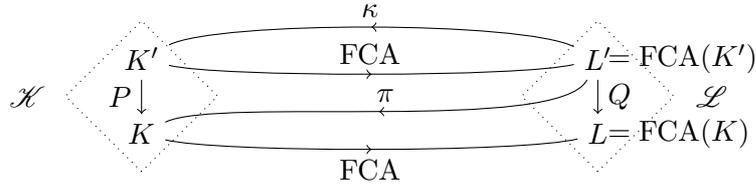

$P$ is internal to $\mathcal{K}^N$:
\begin{property}[$P$ is internal to $\mathcal{K}$]\label{prop:PintK}
  $\forall K\in\mathcal{K}^N$, $P(K)\in\mathcal{K}^N$.
\end{property}
\begin{proof}
$P(K)=\pi(\mathrm{FCA}(K))$ (Definition~\ref{def:P}).
As discussed before (proof of Property~\ref{prop:QintL}), $\pi(\mathrm{FCA}(K))$ retracts some attributes from $\kappa(\mathrm{FCA}(K))$, preserving those of $M^0$.
By definition, $\kappa(\mathrm{FCA}(K))$ $\in$ $\mathcal{K}^N$, hence $P(K)=\pi(\mathrm{FCA}(K))\in\mathcal{K}^N$.
\end{proof}

Contrary to $F$ and according to $Q$, $P$ is anti-extensive and monotone:
\begin{property}[$P$ is anti-extensive and monotone]\label{prop:Pantextmon}
  The function $P$ satisfies:
\begin{align*}
  P(K) &\subseteq K, \tag{anti-extensivity}\label{eq:Pantiext}\\
  K\subseteq K' &\Rightarrow P(K)\subseteq P(K'). \tag{monotony}\label{eq:Pmono}
\end{align*}
\end{property}
\begin{proof}
\begin{description}[font=\normalfont]
\item[(\ref{eq:Pantiext})]
  If $P(K)\not\subseteq K$, this means that there exists a non empty set of attributes $M'$ disjoint from $M$ present in $P(K)$. Such attributes cannot have been brought by $\pi$ since it only suppresses attributes. They should come from either $\mathrm{FCA}$ or $\kappa$.
  However, $\mathrm{FCA}$ does only include in intents attributes from $M$, and $\kappa$ does only extracts attributes from the intents.
  Hence, $P(K)\subseteq K$.
\item[(\ref{eq:Pmono})]
  If $K\subseteq K'$, then $\mathrm{FCA}(K)\preceq \mathrm{FCA}(K')$ by monotony of $\mathrm{FCA}$ (Property~\ref{prop:FCAmon}).
  This means that there exist a lattice homomorphism between $\mathrm{FCA}(K)$ and $\mathrm{FCA}(K')$ for which the intent of all concepts of $\mathrm{FCA}(K)$ is found in that of those of $\mathrm{FCA}(K')$; moreover, all concepts of $\mathrm{FCA}(K)$, as identified by their extent, are found in $\mathrm{FCA}(K')$ (Definition~\ref{def:Lhomo}).
  Hence, necessarily $\kappa(\mathrm{FCA}(K))\subseteq \kappa(\mathrm{FCA}(K'))$ and the supporting concepts in $\mathrm{FCA}(K)$ are still present in $\mathrm{FCA}(K')$, so $P(K)=\pi(\mathrm{FCA}(K))\subseteq \pi(\mathrm{FCA}(K'))=P(K')$.
  \qedhere
\end{description}
\end{proof}

Like $E$ and $F$, $Q$ and $P$ are not closure operations as they are not idempotent.
However, with the same arguments as \cite{rouanehacene2013a}, and in particular the finiteness of contexts (see Section~\ref{sec:rcasem}), it can be argued that the repeated application of $Q$ converges to a self-supported concept lattice.
\begin{property}[Stability of $Q$ \cite{euzenat2021a}]\label{prop:Qstable}
  $\forall L\in\mathcal{L}^N$, $\exists n$; $Q^n(L)=Q^{n+1}(L)$.
\end{property}
\begin{proof}
First, $L$ is a finite concept lattice. 
Moreover, $Q(L)\preceq L$, hence it not possible to build an infinite chain of non-converging application of $Q$ since at each iteration, either $\pi$ suppresses no attribute (and then a fixed point has been reached), or it suppresses at least one attribute and then a strictly smaller context is reached. 
Since the number of scalable attributes is finite and attributes of $M^0$ are not purged, then the process will stop after a finite number of applications of $Q$.
\end{proof}
By convention, we note $Q^\infty$ the closure operation\footnote{Which could be named interior operation as well.} associated with $Q$ and $\mathrm{fp}(Q)$, the set of fixed points of $Q$.

\subsection{Fixed points of \(Q\)}\label{sec:fpQ}

Like with $E$, it is possible to apply the Knaster-Tarski theorem to show that $\langle\mathrm{fp}(Q),\preceq\rangle$ is a complete lattice.

The fixed points of $Q$ are exactly those self-supported lattices in $\mathcal{L}$:
\begin{property}\label{prop:Qss}
  For any $L\in\mathcal{L}^N$, $L$ is self-supported iff $L\in\mathrm{fp}(Q)$.
\end{property}
\begin{proof}
$L\in\mathrm{fp}(Q)$ means that $L=\mathrm{FCA}(\pi(L))=Q(L)$.
But, by Property~\ref{prop:pissup0}, this is equivalent to being self-supported.
\end{proof}

To complete the description of $Q$, it is possible to establish that its least fixed point is $\mathrm{FCA}(K^0)$.

\begin{property}\label{prop:lfpQ}
$\mathrm{lfp}(Q)=\mathrm{FCA}(K^0)$.
\end{property}
\begin{proof}
$\kappa(\mathrm{FCA}(K^0))=K^0$ hence $\pi(\mathrm{FCA}(K^0))=K^0$ because, it is not possible to suppress attributes from $K^0$ which, being a non-scaled context, does not refer to any concept (and in RCA$^0$ this set of attributes is reduced to $\varnothing$).
Thus, $Q(\mathrm{FCA}(K^0))=\mathrm{FCA}(\pi(\mathrm{FCA}(K^0)))$ $=\mathrm{FCA}(K^0)$.
Moreover, $\forall L\in\mathcal{L}^N$, $\mathrm{FCA}(K^0)\preceq L$.
Hence, $\mathrm{FCA}(K^0)$ is a fixed point of $Q$ and all other fixed points are greater.
\end{proof}

\subsection{Relations between \(E\) and \(Q\)}\label{sec:relEQ}

We end up with two operations, $E$ and $Q$, the former extensive and the latter anti-extensive.
If we consider concept lattices from the standpoint of the extents, $Q$ decreases the set of concepts of a lattice and $E$ increases them.

An interesting property of the functions $E$ and $Q$ is that they preserve each other stability:
$E$ has the advantage of preserving self-supportivity (Property~\ref{prop:EintfpQ} replaces Proposition~3 of \cite{euzenat2021a} due to Property~\ref{prop:Qss}):
\begin{property}[$E$ is internal to $\mathrm{fp}(Q)$ {\cite[Prop.3]{euzenat2021a}}]\label{prop:EintfpQ}
$\forall L\in \mathrm{fp}(Q)$, $E(L)\in\mathrm{fp}(Q)$.
\end{property}
\begin{proof}
If $L\in\mathrm{fp}(Q)$, all attributes in intents of $L$ are supported by concepts in $L$ (Property~\ref{prop:Qss}).
$L\preceq E(L)$, so these concepts are still in $E(L)$.
Moreover, $E=\sigma_\Omega\circ\mathrm{FCA}$ and $\sigma_\Omega$ first adds to $\kappa(L)$ attributes which are supported by $L$.
Hence, the attributes in $\kappa(L)$ and those scaled by $\sigma_\Omega$ are still supported by $E(L)$.
\end{proof}

\begin{property}[$Q$ is internal to $\mathrm{fp}(E)$]\label{prop:QintfpE}
$\forall L\in \mathrm{fp}(E)$, $Q(L)\in \mathrm{fp}(E)$.
\end{property}
\begin{proof}
If $L\in\mathrm{fp}(E)$, this means that $E(L)=L$ and, in particular, that $\sigma_\Omega$ does not scale new relational attributes based on the concepts in $L$.
$Q(L)\preceq L$, so that $Q(L)$ does not contain more concepts than $L$.
$Q(L)$ having not more concepts than $L$, $\sigma_\Omega$ cannot scale new attributes ($\sigma_\Omega(Q(L))\subseteq \sigma_\Omega(L)=\varnothing$).
Hence, $Q(L)\in \mathrm{fp}(E)$.
\end{proof}

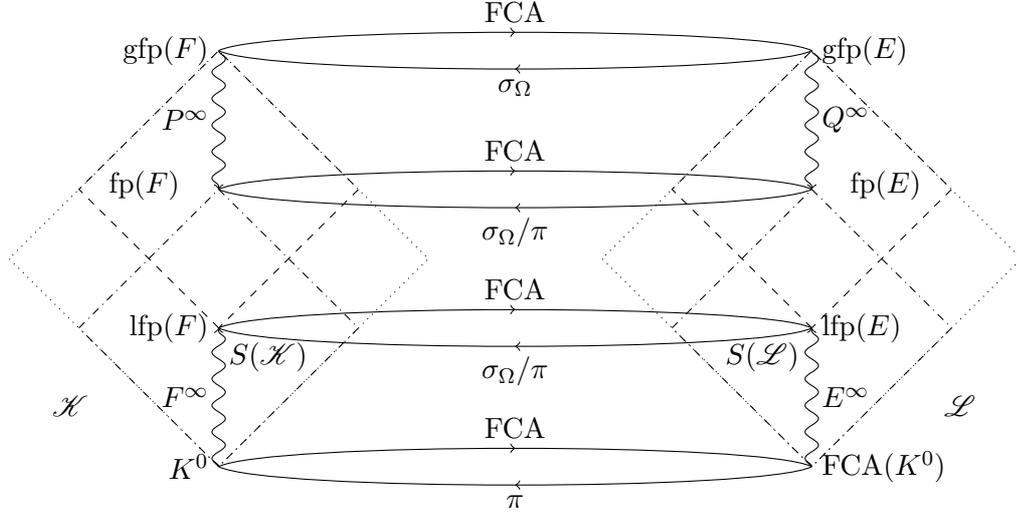
\begin{figure}[t]
\centering
\begin{tikzpicture}[thin,scale=1.3]

  \begin{scope}[xshift=-3cm]
    \begin{scope}[rotate=45]
     \draw[dotted] (-1.5,-1.5) rectangle (1.5,1.5);
     \draw[dashed] (-.5,-.5) rectangle (1.5,1.5);
     \draw[dashdotted] (-1.5,-1.5) rectangle (.5,.5);
     \coordinate (lfpF) at (-.5,-.5);
     \coordinate (k0) at (-1.5,-1.5);
     \coordinate (gfpF) at (1.5,1.5);
     \coordinate (topF) at (.5,.5);
   \end{scope}

   \draw (.5,1.1) node {$\mathrm{fp}(F)$}; 
   \draw (.5,-1.1) node {$\mathrm{fp}(P)$};
   \draw (k0) node[anchor=east] {$\mathrm{lfp}(P)=K^0$}; 
   \draw (lfpF) node[anchor=east] {$\mathrm{lfp}(F)$}; 
   \draw (topF) node[anchor=east] {$\mathrm{gfp}(P)$};
   \draw (gfpF) node[anchor=east] {$\mathrm{gfp}(F)$};

   \draw (-1.5,-1.5) node {$\mathcal{K}$};

   \draw[->,decorate, decoration=snake] (k0) -- node[left] {$F^\infty$} (lfpF);

   \draw[->,decorate, decoration=snake] (gfpF) -- node[left] {$P^\infty$} (topF);
   
 \end{scope}

  \begin{scope}[xshift=3cm]
    \begin{scope}[rotate=45]
     \draw[dotted] (-1.5,-1.5) rectangle (1.5,1.5);
     \draw[dashed] (-.5,-.5) rectangle (1.5,1.5);
     \draw[dashdotted] (-1.5,-1.5) rectangle (.5,.5);
     \coordinate (lfpE) at (-.5,-.5);
     \coordinate (fcak0) at (-1.5,-1.5);
     \coordinate (gfpE) at (1.5,1.5);
     \coordinate (topE) at (.5,.5);
   \end{scope}

   \draw (-.5,1.1) node {$\mathrm{fp}(E)$}; 
   \draw (-.5,-1.1) node {$\mathrm{fp}(Q)$};
   \draw (fcak0) node[anchor=west] {$\mathrm{FCA}(K^0)=\mathrm{lfp}(Q)$};
   \draw (lfpE) node[anchor=west] {$\mathrm{lfp}(E)$}; 
   \draw (topE) node[anchor=west] {$\mathrm{gfp}(Q)$};;
   \draw (gfpE) node[anchor=west] {$\mathrm{gfp}(E)$};

   \draw (1.5,-1.5) node {$\mathcal{L}$};

   \draw[->,decorate, decoration=snake] (fcak0) -- node[right] {$E^\infty$} (lfpE);

   \draw[->,decorate, decoration=snake] (gfpE) -- node[right] {$Q^\infty$} (topE);
   
 \end{scope}

 \begin{scope}[decoration={markings,mark=at position 0.5 with {\arrow{To}}}] 

  \draw[thin,postaction={decorate}] (gfpF) .. controls +(.5,.25) and +(-.5,.25) ..  node[above]{$\mathrm{FCA}$} (gfpE);
  \draw[thin,postaction={decorate}] (gfpE) .. controls +(-.5,-.25) and +(.5,-.25) ..  node[below]{$\sigma_\Omega$} (gfpF);
  \draw[thin,postaction={decorate}] (topF) .. controls +(.5,.25) and +(-.5,.25) ..  node[above]{$\mathrm{FCA}$} (topE);
  \draw[thin,postaction={decorate}] (topE) .. controls +(-.5,-.25) and +(.5,-.25) ..  node[below]{$\sigma_\Omega/\pi$} (topF);
  \draw[thin,postaction={decorate}] (lfpF) .. controls +(.5,.25) and +(-.5,.25) ..  node[above]{$\mathrm{FCA}$} (lfpE);
  \draw[thin,postaction={decorate}] (lfpE) .. controls +(-.5,-.25) and +(.5,-.25) ..  node[below]{$\sigma_\Omega/\pi$} (lfpF);
  \draw[thin,postaction={decorate}] (k0) .. controls +(.5,.25) and +(-.5,.25) ..  node[above]{$\mathrm{FCA}$} (fcak0);
  \draw[thin,postaction={decorate}] (fcak0) .. controls +(-.5,-.25) and +(.5,-.25) ..  node[below]{$\pi$} (k0);

   \end{scope}

\end{tikzpicture}
\caption{The $\mathcal{L}$ (resp. $\mathcal{K}$) lattice and effects of $E$ and $Q$ (resp. $F$ and $P$) for characterising $\mathrm{fp}(E)$ and $\mathrm{fp}(Q)$ (resp. $\mathrm{fp}(F)$ and $\mathrm{fp}(P)$) (from \cite{euzenat2021a}).}\label{fig:lattices}
\end{figure}

In addition, the closure operations associated with the two functions preserve the extrema of each other.

\begin{property}\label{prop:QEbounds}
$Q^\infty(\mathrm{gfp}(E))=\mathrm{gfp}(Q)$
  and
$E^\infty(\mathrm{lfp}(Q))=\mathrm{lfp}(E)$.
\end{property}
\begin{proof}
$\forall L\in\mathcal{L}$, $L\preceq \mathrm{gfp}(E)$ (from Proposition~\ref{prop:gfpF}) and $Q$ and thus $Q^\infty$ is order preserving (Property~\ref{prop:Qantextmon}), hence 
$Q^\infty(L)\preceq Q^\infty(\mathrm{gfp}(E))$.
Moreover, $Q^\infty(\mathrm{gfp}(E))\in\mathrm{fp}(Q)$, thus $Q^\infty(\mathrm{gfp}(E))=\mathrm{gfp}(Q)$.

Similarly, $\forall L\in\mathcal{L}$, $\mathrm{lfp}(Q)\preceq L$ (Property~\ref{prop:lfpQ}) and $E$ and thus $E^\infty$ is order preserving (Property~\ref{prop:Emonext}), hence 
$E^\infty(\mathrm{lfp}(Q))\preceq E^\infty(L)$.
Moreover, $E^\infty(\mathrm{lfp}(Q))\in\mathrm{fp}(E)$, thus $E^\infty(\mathrm{lfp}(Q))=\mathrm{lfp}(E)$.
\end{proof}

The acceptable solutions for RCA$^0$ are the self-supported fixed points of $E$, or said otherwise, the elements of $\mathrm{fp}(E)\cap\mathrm{fp}(Q)$.
Such lattices are both saturated and self-supported well-formed elements of $\mathcal{L}$.
Moreover, by construction of $\mathcal{K}$ and $\mathcal{L}$, they cover $K^0$, i.e. they contain all attributes in $M^0$.

Proposition~\ref{prop:infsupEQ} establishes that the acceptable solutions are circumscribed to the interval sublattice $[\mathrm{lfp}(E)~\mathrm{gfp}(Q)]$ (see also Figure~\ref{fig:lattices}).

\begin{proposition}[\cite{euzenat2021a}]\label{prop:infsupEQ}
$\mathrm{lfp}(E)$ and $\mathrm{gfp}(Q)$ are the infimun and the supremum of $\mathrm{fp}(E)\cap\mathrm{fp}(Q)$ for $\preceq$.
\end{proposition}
\begin{proof}
By definition, $\forall L\in \mathrm{fp}(E)\cap\mathrm{fp}(Q)$, $\mathrm{lfp}(E)\preceq L\preceq\mathrm{gfp}(Q)$.
Moreover,
$Q^{\infty}(\mathrm{lfp}(E))\in\mathrm{fp}(E)$ (Property~\ref{prop:QintfpE})
and
$Q^{\infty}(\mathrm{lfp}(E))\preceq\mathrm{lfp}(E)$ (Property~\ref{prop:Qantextmon}) ,
but $\mathrm{lfp}(E)$ is the least fixed point for $E$,
so $Q^{\infty}(\mathrm{lfp}(E))=\mathrm{lfp}(E)$.
Since, by definition,
$Q^{\infty}(\mathrm{lfp}(E))\in\mathrm{fp}(Q)$, then $\mathrm{lfp}(E)\in\mathrm{fp}(Q)$.
A similar reasoning applies to $E$ and $\mathrm{gfp}(Q)$.
\end{proof}

Thus, the interval sublattice $[\mathrm{lfp}(E)~\mathrm{gfp}(Q)]$ is the smallest interval covering $\mathrm{fp}(E)\cap\mathrm{fp}(Q)$.
However, $\mathrm{fp}(E)\cap\mathrm{fp}(Q)$ does not cover $[\mathrm{lfp}(E)~\mathrm{gfp}(Q)]$ as shown by Example~\ref{ex:neq0}.

\begin{example}[Non fixed points of $\lbrack\mathrm{lfp}(E)\ \mathrm{gfp}(Q)\rbrack$ in RCA$^0$]\label{ex:neq0}
The lattice $L_0^\#$ of Figure~\ref{fig:ex-rca0-noalt} (p.~\pageref{fig:ex-rca0-noalt}) can be checked to belong to the interval $[\mathrm{lfp}(E)\ \mathrm{gfp}(Q)]=[L_0^1\ L_0^\star]$, but it does not belong to $\mathrm{fp}(E)\cap\mathrm{fp}(Q)$: 
it is neither a fixed point for $Q$ (not self-supported) because, as Example~\ref{ex:selfsup0} shows, $B$ does not belong to $L_0^\#$, 
nor for $E$ (not saturated) because $\exists r.ABC$ does not belong to $K_0^\#$.
\end{example}

The definitions and results of the two past sections have been restricted to RCA$^0$ for the sake of clarity.
They will now be generalised.

\section{Unified view of the \texorpdfstring{RCA$^0$}{RCA⁰} space}\label{sec:dualspace}

Although $\mathcal{K}^N$ and $\mathcal{L}^N$ have been presented independently, it is useful to consider the two sets together as, in RCA, lattices in $\mathcal{L}^N$ are an intermediate result of the process which is used for computing the next context.
Instead of dealing with two interrelated spaces independently, we tightly connect them. 
Doing so, we will consider objects which are pairs of contexts and associated concept lattices through $\mathrm{FCA}$.
They are called context-lattice pairs.

\subsection{The lattice \texorpdfstring{$\mathcal{T}$}{𝒯} of context-lattice pairs}\label{sec:T}

From any context in $\mathcal{K}$, it is possible to generate a context-lattice pair using $\mathrm{FCA}$.
The $\mathrm{T}$ constructor does this.

\begin{definition}[$\mathrm{T}$ constructor]\label{def:Tconst}
  Given a context  $K\in\mathcal{K}^N$, $\mathrm{T}:\mathcal{K}^N\rightarrow\mathcal{K}^N\times\mathcal{L}^N$ generates a context-lattice pair, such that:
  \[ \mathrm{T}(K)=\langle K, \mathrm{FCA}(K)\rangle. \]
\end{definition}

We consider the set $\mathcal{T}^N_{K^0,R,\Omega}$ of pairs in $\mathcal{K}^N_{K^0,R,\Omega}\times\mathcal{L}^N_{K^0,R,\Omega}$ such that:\label{def:T}
\[
  \mathcal{T}^N_{K^0,R,\Omega}=\{\langle K, L\rangle\in\mathcal{K}^N_{K^0,R,\Omega}\times\mathcal{L}^N_{K^0,R,\Omega}| L=\mathrm{FCA}(K)\}.
\]
This set is well defined because $\mathcal{K}^N_{K^0,R,\Omega}$ has already been defined and $\mathcal{L}^N_{K^0,R,\Omega}$ are precisely those lattices obtained by $\mathrm{FCA}$ from an element of $\mathcal{K}^N_{K^0,R,\Omega}$.

Alternatively, using Property~\ref{prop:kappaFCA}, it can be defined from $\kappa$:
\[
  \mathcal{T}^N_{K^0,R,\Omega}=\{\langle K, L\rangle\in\mathcal{K}^N_{K^0,R,\Omega}\times\mathcal{L}^N_{K^0,R,\Omega}| K=\kappa(L)\}.
\]

As before, we use $\mathcal{T}_{K^0,R,\Omega}=\mathcal{T}^{\eta(K^0)}_{K^0,R,\Omega}$ and, for any $\langle K, L\rangle\in\mathcal{T}^N_{K^0,R,\Omega}$ we note:
\begin{align*}
  k(\langle K, L\rangle) &= K,\\
  l(\langle K, L\rangle) &= L.
\end{align*}

It is possible to define the meet and join:
\begin{definition}[Meet and join of context-lattice pairs]\label{def:andorT}
Given $T$, $T'\in \mathcal{T}^N_{K^0,R,\Omega}$ $T\vee T'$ and $T\wedge T'$ are defined as:
\begin{align}
  T\vee T' & = \mathrm{T}( k(T)\vee k(T') ), \tag{join}\label{eq:dualjoin}\\
  T\wedge T' & = \mathrm{T}( k(T)\wedge k(T') ). \tag{meet}\label{eq:dualmeet}
\end{align}
\end{definition}
As this definition makes clear, the operations of $\mathcal{T}^N$ only depend on the context part.
But the usual relations with the meet and join on contexts and lattices are preserved:
\begin{property}\label{prop:andorT}
\begin{align*}
  T\vee T' & = \langle k(T)\vee k(T'), l(T)\vee l(T')\rangle,\\
  T\wedge T' & = \langle k(T)\wedge k(T'), l(T)\wedge l(T')\rangle.
\end{align*}
\end{property}
\begin{proof}
This is a simple consequence on the definition of conjunctions and disjunction on context-lattice pairs (Definition~\ref{def:Tconst}) and lattices (Definition~\ref{def:andorL}) as:
\begin{align*}
  L\vee L' & = \mathrm{FCA}( K\vee K' ), \tag{join}\\
  L\wedge L' & = \mathrm{FCA}( K\wedge K' ). \tag{meet}
\end{align*}
\end{proof}

The set of context-lattice pairs is once again closed by meet and join:
\begin{property}\label{prop:Tclosedmj}
$\forall T, T'\in\mathcal{T}^N_{K^0,R,\Omega}$, $T\wedge T'\in\mathcal{T}^N_{K^0,R,\Omega}$ and $T\vee T'\in\mathcal{T}^N_{K^0,R,\Omega}$.
\end{property}
\begin{proof}
$\mathcal{T}^N_{K^0,R,\Omega}$ is closed by meet and join because it is based on $\mathrm{T}$ (Definition~\ref{def:andorT}), $\mathrm{T}$ builds a context-lattice pair in $\mathcal{T}^N_{K^0,R,\Omega}$ from contexts in $\mathcal{K}^N_{K^0,R,\Omega}$ (Definition~\ref{def:Tconst}), and $\mathcal{K}^N_{K^0,R,\Omega}$ itself is closed by meet and join (Property~\ref{prop:Kclosedmj}).
\end{proof}

\begin{property}[Commutativity, associativity and absorption of $\vee$ and $\wedge$ on $\mathcal{T}$]\label{prop:comandorT}
For all $T, T', T''$ $\in\mathcal{T}$,
\begin{align}
T\vee T' &= T'\vee T &\text{and} && T\wedge T' &= T'\wedge T, \tag{commutativity}\\
(T\vee T')\vee T'' &= T\vee (T'\vee T'') &\text{and} && (T\wedge T')\wedge T'' &= T\wedge (T'\wedge T''), \tag{associativity}\\
T\wedge(T \vee T') &= T &\text{and} && T \vee (T \wedge T') &= T. \tag{absorption}
\end{align}
\end{property}
\begin{proof}\belowdisplayskip=-12pt
Proofs are given for $\wedge$, those for $\vee$ follow the exact same pattern.
\begin{align*}
T\wedge T' &= \mathrm{T}(k(T)\wedge k(T')) & \text{ Definition~\ref{def:andorT}} \\
 &= \mathrm{T}(k(T')\wedge k(T)) & \text{ Property~\ref{prop:comandorK}}\\
 &= T'\wedge T & \text{ Definition~\ref{def:andorT}} \\
(T\wedge T')\wedge T'' &= \mathrm{T}(k(T)\wedge k(T'))\wedge T'' & \text{ Definition~\ref{def:andorT}} \\
 &= \mathrm{T}(k(\mathrm{T}(k(T)\wedge k(T')))\wedge k(T'')) & \text{ Definition~\ref{def:andorT}} \\
 &= \mathrm{T}((k(T)\wedge k(T'))\wedge k(T'') & \text{ Definition~\ref{def:Tconst}} \\
 &= \mathrm{T}(k(T)\wedge (k(T')\wedge k(T'')) & \text{ Property~\ref{prop:comandorK}} \\
 &= \mathrm{T}(k(T)\wedge k(\mathrm{T}((k(T')\wedge k(T'')))) & \text{ Definition~\ref{def:Tconst}} \\
 &= T\wedge \mathrm{T}((k(T')\wedge k(T''))) & \text{ Definition~\ref{def:andorT}} \\
 &= T\wedge (T'\wedge T'') & \text{ Definition~\ref{def:andorT}} \\
T\vee (T\wedge T') &= T\vee\mathrm{T}(k(T)\wedge k(T')) & \text{ Definition~\ref{def:andorT}}\\
  &= \mathrm{T}(k(T)\vee k(\mathrm{T}(k(T)\wedge k(T'))) & \text{ Definition~\ref{def:andorT}}\\
  &= \mathrm{T}(k(T)\vee (k(T)\wedge k(T')) & \text{ Definition~\ref{def:Tconst}}\\
  &= \mathrm{T}(k(T)) & \text{ Property~\ref{prop:comandorK}} \\
  &= T & \text{ Definition~\ref{def:Tconst}}. %
\end{align*}
\end{proof}
We also define the order between two context-lattice pairs by combining the orders on contexts and lattices:

\begin{definition}[Order]\label{def:dualorder}
Given $T$, $T'\in \mathcal{T}^N_{K^0,R,\Omega}$,
\[ T\preceq T'\text{ if } k(T)\subseteq k(T')\text{ and } l(T)\preceq l(T'). \]
\end{definition}
Figure~\ref{fig:Tspace} presents the relations between $\mathcal{K}^N$, $\mathcal{L}^N$  and $\mathcal{T}^N$ and their respective orders.
Since $\mathrm{FCA}$ is monotone (Property~\ref{prop:FCAmon}), $T\preceq T'\text{ iff } k(T)\preceq k(T')$.
Like before, we note $T\simeq T'$ if $T\preceq T'$ and $T'\preceq T$, and again $\simeq$ is $=$.

This can be applied to the $\mathrm{T}$ constructor.
\begin{property}\label{prop:Torder}
$\forall K, K'\in\mathcal{K}^N_{K^0,R,\Omega}, \text{if }K\subseteq K'\text{ then } \mathrm{T}(K)\preceq \mathrm{T}(K')$.
\end{property}
\begin{proof}
($\Rightarrow$) This is due to monotony of $\mathrm{FCA}$ (Property~\ref{prop:FCAmon}).
$k(\mathrm{T}(K))=K\subseteq K'=k(\mathrm{T}(K'))$ means that $l(\mathrm{T}(K))=\mathrm{FCA}(K)\preceq \mathrm{FCA}(K')=l(\mathrm{T}(K'))$.
Thus, $\mathrm{T}(K)\preceq \mathrm{T}(K')$.
($\Leftarrow$) $\mathrm{T}(K)\preceq \mathrm{T}(K')$ entails, by Definition~\ref{def:dualorder}, that $K\subseteq K'$.
\end{proof}
This has the consequence that $T=T'$ if and only if $k(T)=k(T')$.

This definition complies with that of meet and join.
\begin{property}\label{prop:ordereqopT}
$\forall T, T'\in\mathcal{T}^N_{K^0,R,\Omega}, T\preceq T'\text{  iff  } T=T\wedge T'$.
\end{property}
\begin{proof}
By Property~\ref{prop:ordereqopK}, we have that $k(T)\preceq k(T')\text{  iff  } k(T)=k(T)\wedge k(T')$ and, by Property~\ref{prop:ordereqopL}, that $l(T)\preceq l(T')\text{  iff  } l(T)=l(T)\wedge l(T')$, consequently $T\preceq T'$ iff $T=T\wedge T'$ by Property~\ref{prop:andorT}.
\end{proof}

This makes $\mathcal{T}^N_{K^0,R,\Omega}$ a complete lattice (Property~\ref{prop:Tcompl}).

\begin{property}\label{prop:Tcompl}
  $\langle \mathcal{T}^N_{K^0,R,\Omega}, \vee, \wedge\rangle$ is a complete lattice.
\end{property}
\begin{proof}
$\mathcal{T}^N_{K^0,R,\Omega}$ is closed by meet and join (Property~\ref{prop:Tclosedmj}).
$\vee$ and $\wedge$ satisfy commutativity, associativity and the absorption laws 
(Property~\ref{prop:comandorT}), so this is a lattice.
It is complete because isomorphic, through $k(\cdot)$ or $l(\cdot)$, to the complete lattices $\mathcal{K}^N_{K^0,R,\Omega}$ (Property~\ref{prop:Kcompl}) or $\mathcal{L}^N_{K^0,R,\Omega}$ (Property~\ref{prop:Lcompl}) and the isomorphisms preserve meet and join (Property~\ref{prop:andorT}).
\end{proof}

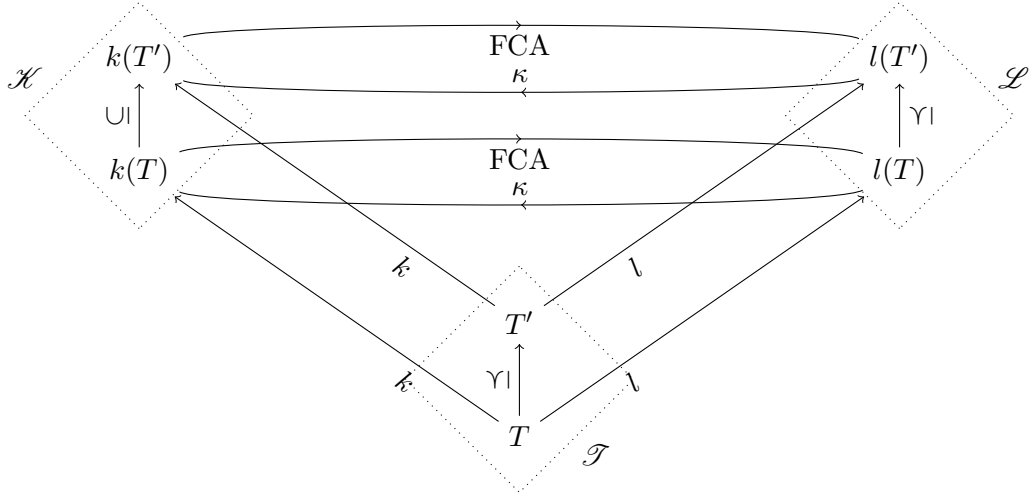
\begin{figure}[t]
\centering
\begin{tikzpicture}
    \draw (-2,.75) node (Kp) {$k(T')$};
    \draw (8,.75) node (Lp) {$l(T')$};

    \draw (-2,-.75) node (K) {$k(T)$};
    \draw (8,-.75) node (L) {$l(T)$};

    \draw[->] (K) -- node[above,sloped]{$\subseteq$} (Kp);
    \draw[->] (L) -- node[below,sloped]{$\preceq$} (Lp);

    \draw (3,-2.75) node (Tp) {$T'$};
    \draw (3,-4.25) node (T) {$T$};
    \draw[->] (T) -- node[above,sloped]{$\preceq$} (Tp);

    \draw[->] (Tp) -- node[below,sloped,near start] {$k$} (Kp);
    \draw[->] (Tp) -- node[below,sloped,near start] {$l$} (Lp);

    \draw[->] (T) -- node[below,sloped,near start] {$k$} (K);
    \draw[->] (T) -- node[below,sloped,near start] {$l$} (L);

 \begin{scope}[decoration={markings,mark=at position 0.5 with {\arrow{To}}}] 
    \draw[thin,postaction=decorate] (K) .. controls +(1,.5) and +(-1,.5) ..  node[below]{$\mathrm{FCA}$} (L);
    \draw[thin,postaction=decorate] (L) .. controls +(-1,-.5) and +(1,-.5) ..  node[above]{$\kappa$} (K);
    \draw[thin,postaction=decorate] (Kp) .. controls +(1,.5) and +(-1,.5) ..  node[below]{$\mathrm{FCA}$} (Lp);
    \draw[thin,postaction=decorate] (Lp) .. controls +(-1,-.5) and +(1,-.5) ..  node[above]{$\kappa$} (Kp);
  \end{scope}
  
   \draw (-3.5,.5) node {$\mathcal{K}$};
   \draw[dotted] (-2,-1.5) -- (-3.5,0) -- (-2,1.5) -- (-.5,0) -- cycle;

   \draw (9.5,.5) node {$\mathcal{L}$};
   \draw[dotted] (8,-1.5) -- (6.5,0) -- (8,1.5) -- (9.5,0) -- cycle;

   \draw (4,-4.5) node {$\mathcal{T}$};
   \draw[dotted] (3,-2) -- (1.5,-3.5) -- (3,-5) -- (4.5,-3.5) -- cycle;

\end{tikzpicture}

\caption{Relations between $\mathcal{T}$, $\mathcal{K}$ and $\mathcal{L}$.}\label{fig:Tspace}
\end{figure}

\subsection{The expansion function \(EF\)}\label{sec:EF}

We reformulate RCA as based on a main single function,
$EF_{K^0,R,\Omega}$, the expansion function attached to a relational context $\langle K^0, R\rangle$ and a set $\Omega$ of scaling operations.

\begin{definition}[Expansion function]\label{def:EF}
Given a relational context $\langle K^0, R\rangle$ and a set $\Omega$ of relational scaling operations, the function $EF_{K^0,R,\Omega}:\mathcal{T}_{K^0,R,\Omega}\rightarrow\mathcal{T}_{K^0,R,\Omega}$ is defined by:
\[
  EF_{K^0,R,\Omega}(T) = \mathrm{T}( \sigma_{\Omega}(k(T),R,l(T)) ).
\]
\end{definition}

This function is an extension of the previous $E$ and $F$:
\begin{property}[$EF$ extends $F$ and $E$]\label{prop:EFisEF}
$EF(T) = \langle F(k(T)), E(l(T))\rangle$.
\end{property}
\begin{proof}
This is the consequence of $F(K)=\sigma_{\Omega}(K,R,\mathrm{FCA}(K))$ (Definition~\ref{def:F}) and $l(T)=\mathrm{FCA}(k(T))$.
On the one hand, $F(k(T))=\sigma_{\Omega}(k(T),R,l(T))$.
On the other hand, $E(L)=\mathrm{FCA}(\sigma_{\Omega}(\kappa(L),R,L))$ (Definition~\ref{def:E}) and $\kappa(l(T))=k(T)$ (Property~\ref{prop:kappaFCA}).
Hence, $EF(T) = \langle \sigma_{\Omega}(k(T),R,l(T)), \mathrm{FCA}(\sigma_{\Omega}(k(T),R,l(T)))\rangle
= \langle F(k(T)), E(l(T))\rangle$.
\end{proof}

As previously, we will abbreviate $\mathcal{T}_{K^0,R,\Omega}$ as $\mathcal{T}$ and $EF_{K^0,R,\Omega}$ as $EF$.

$EF$ is an extensive and monotone internal operation for $\mathcal{T}$:
\begin{property}[$EF$ is internal to $\mathcal{T}$]\label{prop:EFintT}
  $\forall T\in\mathcal{T}$, $EF(T)\in\mathcal{T}$.
\end{property}
\begin{proof}
$EF(T)\in \mathcal{K}_{K^0,R,\Omega}\times\mathcal{L}_{K^0,R,\Omega}$ because $T\in \mathcal{K}_{K^0,R,\Omega}\times\mathcal{L}_{K^0,R,\Omega}$ and $E$ and $F$ are internal to $\mathcal{K}_{K^0,R,\Omega}$ (Property~\ref{prop:FintK}) and $\mathcal{L}_{K^0,R,\Omega}$ (Property~\ref{prop:EintL}), respectively.
Moreover, $EF(T)=T ( F(k(T)) )=\langle F(k(T)), \mathrm{FCA}(F(k(T)))\rangle$ (Definition~\ref{def:EF} and Property~\ref{prop:EFisEF}), hence $l(EF(T))=\mathrm{FCA}(k(EF(T)))$.
\end{proof}

\begin{property}[$EF$ is extensive and monotone]\label{prop:EFmonext}
The function $EF$, attached to a relational context and a set of scaling operations, satisfies:
\begin{align}
  T &\preceq EF(T),\tag{extensivity}\label{prop:EFext}\\
  T\preceq T' &\Rightarrow EF(T)\preceq EF(T').\tag{monotony}\label{prop:EFmon}
\end{align}
\end{property}
\begin{proof}
\ref{prop:EFext} holds because $T\preceq EF(T)$ if and only if $k(T)\preceq k(EF(T))$.
However, $k(EF(T))=F(k(T))$ (Property~\ref{prop:EFisEF}) and $K\preceq F(K)$ (Property~\ref{prop:Fmonext}).
\ref{prop:EFmon} relies on the monotony of $F$ (Property~\ref{prop:Fmonext}) and $E$ (Property~\ref{prop:Emonext}):
$T\preceq T'$ if and only if $k(T)\subseteq k'(T)$ and $l(T)\preceq l'(T)$, but this entail
$F(k(T))\preceq F(k'(T))$ (Property~\ref{prop:Fmonext}) and $E(l(T))\subseteq E(l'(T))$ (Property~\ref{prop:Emonext}), and so $EF(T)\preceq EF(T')$.
\end{proof}

\subsection{The contraction function \(PQ\)}\label{sec:PQ}

It is also possible to consider a single contraction function, $PQ_{K^0,R,\Omega}$, attached to a relational context $\langle K^0, R\rangle$ and a set $\Omega$ of scaling operations.

The context-lattice pairs $\langle K, L\rangle$ may contain many unsupported attributes.
Unsupported attributes are those which refer to classes non existing in the lattice.
Indeed, $\exists r.c$ may be part of the attributes of $K$ well-defined by the incidence relation, but $c$ does not belong to $L$.

$PQ$ may be defined from $\pi$ (\S\ref{sec:selfsup0}) and $T$.

\begin{definition}[Contraction function]\label{def:PQ}
Given a relational context $\langle K^0, R\rangle$ and a set $\Omega$ of relational scaling operations, the function $PQ_{K^0,R,\Omega}:\mathcal{T}^N_{K^0,R,\Omega}\rightarrow\mathcal{T}^N_{K^0,R,\Omega}$ is defined by:
\[
 PQ_{K^0,R,\Omega}(T) = \mathrm{T}( \pi_{K^0,R,\Omega}( l(T) ) ).
\]
\end{definition}

As previously, we will abbreviate $PQ_{K^0,R,\Omega}$ as $PQ$.
This function is also an extension of the previous $Q$ and $P$:
\begin{property}[$PQ$ extends $P$ and $Q$]\label{prop:PQisPQ}
$PQ(T) = \langle P(k(T)), Q(l(T))\rangle$.
\end{property}
\begin{proof}
$\forall T\in\mathcal{T}^N$, $l(T)=\mathrm{FCA}(k(T))$, hence $\pi(l(T))=\pi(\mathrm{FCA}(k(T)))$ and $P(K)=\pi(\mathrm{FCA}(K))$ (Definition~\ref{def:P}), thus $\pi(l(T))=P(k(T))$.
Moreover, $Q(L)=\mathrm{FCA}(\pi(L))$ (Definition~\ref{def:Q}), thus $\mathrm{FCA}(\pi(l(T)))=Q(l(T))$.
So, $\mathrm{T}(\pi(l(T)))=\langle \pi(l(T)), \mathrm{FCA}(\pi(l(T)))\rangle$ $=\langle P(k(T)), Q(l(T))\rangle$.
\end{proof}

$PQ$ is an anti-extensive and monotone internal operation for $\mathcal{T}^N$:
\begin{property}[$PQ$ is internal to $\mathcal{T}^N$]\label{prop:PQintT}
$\forall T\in\mathcal{T}^N$, $PQ(T)\in\mathcal{T}^N$.
\end{property}
\begin{proof}
This follows from $P$ and $Q$ being internal to $\mathcal{K}^N_{K^0,R,\Omega}$ and $\mathcal{L}^N_{K^0,R,\Omega}$, respectively (Property~\ref{prop:PintK} and \ref{prop:QintL}) and Property~\ref{prop:PQisPQ}.
\end{proof}

\begin{property}[$PQ$ is anti-extensive and monotone]\label{prop:PQantextmon}
The function $PQ$, attached to a relational context and a set of scaling operations, satisfies:
\begin{align}
  PQ(T) &\preceq T,\tag{anti-extensivity}\label{prop:PQantext}\\
  T\preceq T' &\Rightarrow PQ(T)\preceq PQ(T').\tag{monotony}\label{prop:PQmon}
\end{align}
\end{property}
\begin{proof}
\ref{prop:PQantext} holds because $PQ(T)\preceq T$ if and only if $P(k(T))=k(PQ(T))\preceq k(T)$ (Property~\ref{prop:PQisPQ}) and $P(K)\preceq K$ (Property~\ref{prop:Pantextmon}).
\ref{prop:PQmon} relies on the monotony of $P$ and $Q$: if $T\preceq T'$, then $k(T)\subseteq k(T')$ and $l(T)\preceq l(T')$, hence $P(k(T))\subseteq P(k(T'))$ (Property~\ref{prop:Pantextmon}) and $Q(l(T))\preceq Q(l(T'))$ (Property~\ref{prop:Qantextmon}), hence $PQ(T)\preceq PQ(T')$.
\end{proof}

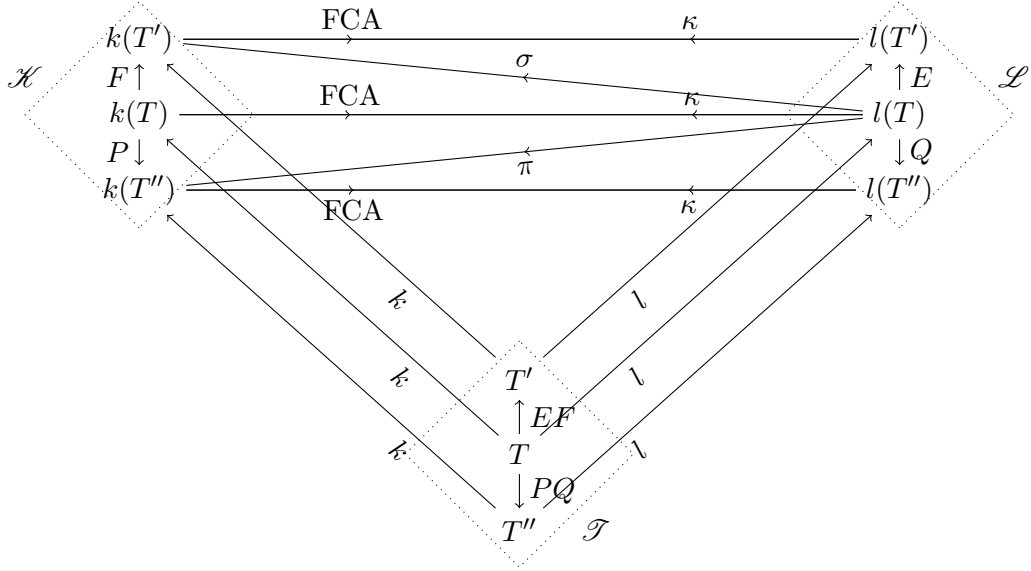
\begin{figure}[t]
\centering
\begin{tikzpicture}
    \draw (-2,1) node (Kp) {$k(T')$};
    \draw (8,1) node (Lp) {$l(T')$};

    \draw (-2,0) node (K) {$k(T)$};
    \draw (8,0) node (L) {$l(T)$};

    \draw (-2,-1) node (Kpp) {$k(T'')$};
    \draw (8,-1) node (Lpp) {$l(T'')$};

    \draw[->] (K) -- node[left]{$F$} (Kp);
    \draw[->] (L) -- node[right]{$E$} (Lp);
    \draw[->] (K) -- node[left]{$P$} (Kpp);
    \draw[->] (L) -- node[right]{$Q$} (Lpp);

    \draw (3,-3.5) node (Tp) {$T'$};
    \draw (3,-4.5) node (T) {$T$};
    \draw (3,-5.5) node (Tpp) {$T''$};
    \draw[->] (T) -- node[right]{$EF$} (Tp);
    \draw[->] (T) -- node[right]{$PQ$} (Tpp);

    \draw[->] (Tp) -- node[below,sloped,near start] {$k$} (Kp);
    \draw[->] (Tp) -- node[below,sloped,near start] {$l$} (Lp);

    \draw[->] (T) -- node[below,sloped,near start] {$k$} (K);
    \draw[->] (T) -- node[below,sloped,near start] {$l$} (L);

    \draw[->] (Tpp) -- node[below,sloped,near start] {$k$} (Kpp);
    \draw[->] (Tpp) -- node[below,sloped,near start] {$l$} (Lpp);
    
 \begin{scope}[decoration={markings,mark=at position 0.5 with {\arrow{To}}}] 
   \draw[thin,postaction=decorate] (L) -- node[above,sloped]{$\sigma$} (Kp);
   \draw[thin,postaction=decorate] (L) -- node[below,sloped]{$\pi$} (Kpp);
 \end{scope}
  
  \begin{scope}[decoration={markings,mark=at position 0.25 with {\arrow{To}}}] 
   \draw[thin,postaction=decorate] (Lp) -- node[near end,above]{$\mathrm{FCA}$} (Kp);
   \draw[thin,postaction=decorate] (L) -- node[near end,above]{$\mathrm{FCA}$} (K);
   \draw[thin,postaction=decorate] (Lpp) -- node[near end,below]{$\mathrm{FCA}$} (Kpp);
  \end{scope}
  
  \begin{scope}[decoration={markings,mark=at position 0.75 with {\arrowreversed{To}}}] 
   \draw[thin,postaction=decorate] (Lp) -- node[near start,above]{$\kappa$} (Kp);
   \draw[thin,postaction=decorate] (L) -- node[near start,above]{$\kappa$} (K);
   \draw[thin,postaction=decorate] (Lpp) -- node[near start,below]{$\kappa$} (Kpp);
  \end{scope}
  
   \draw (-3.5,.5) node {$\mathcal{K}$};
   \draw[dotted] (-2,-1.5) -- (-3.5,0) -- (-2,1.5) -- (-.5,0) -- cycle;

   \draw (9.5,.5) node {$\mathcal{L}$};
   \draw[dotted] (8,-1.5) -- (6.5,0) -- (8,1.5) -- (9.5,0) -- cycle;

   \draw (4,-5.5) node {$\mathcal{T}$};
   \draw[dotted] (3,-3) -- (1.5,-4.5) -- (3,-6) -- (4.5,-4.5) -- cycle;

\end{tikzpicture}

\caption{Relations between $EF$, $PQ$, $E$, $F$, $P$ and $Q$.}\label{fig:EF2PQ}
\end{figure}

Joining together contexts and lattices was a preliminary step to consider families of such pairs to represent the behaviour of relational concept analysis as a whole.
This is done hereafter.

\section{Generalisation to full RCA}\label{sec:rcafixpoint}

So far, we have only considered one context independently from the others.
We now consider RCA in its entirety.

RCA deals with families of contexts.
Its elements are thus simple vectors of the pairs generated by each context.
These vectors will be considered as sets indexed by $\XX$.
All provided definitions can be applied to indexed families of context-lattice pairs,
the order between them will be the product of the piece-wise orders.
All operations remain monotone and extensive (or anti-extensive) as soon as the selected scaling operations are.

The only important change is the notion of self-supported lattices that has to be replaced by self-supported families of context-lattice pairs (\S\ref{sec:selfsup}).
Indeed, if $R=\varnothing$ or only contains endorelations, it is sufficient to work on the families as the free product of pairs.
However, in RCA, this product is constrained by the relations in $R$ which may provide support for otherwise unsupported concepts, invalidating those solutions which do not consider such concepts.
Hence, here the product must be constrained by $R$.

In the following we will thus redefine the objects on which RCA operates (\S\ref{sec:O}), and the expansion (\S\ref{sec:EFs}) and contraction functions (\S\ref{sec:PQs}) based on the notion of self-support  (\S\ref{sec:selfsup}).
We will then consider the fixed points of these functions and their relations (\S\ref{sec:fpEFsPQs}).

\subsection{The lattice \texorpdfstring{$\mathcal{O}$}{𝒪} of families of context-lattice pairs}\label{sec:O}

The input of RCA is given by a family of contexts:
$K^0=\{K^0_\xx\}_{\xx\in\XX}$,
a set $R$ of relations between the objects of these contexts, and
a set $\Omega$ of relational scaling operations.

From this, it is possible to characterise the space $\mathcal{O}_{K^0,R,\Omega}$ associated with RCA by the direct product of the sets of sets of context-lattice pairs associated with each context.

\begin{definition}[$\mathcal{O}_{K^0,R,\Omega}$]\label{def:O}
Given an indexed family of contexts $K^0=\{\langle G_\xx, M_\xx^0, I_\xx^0\rangle\}_{\xx\in\XX}$, 
a set $R$ of relations between the objects of these contexts, and
a set $\Omega$ of relational scaling operations,
the space $\mathcal{O}_{K^0,R,\Omega}$ of indexed families of context-lattice pairs is:
\[ \mathcal{O}_{K^0,R,\Omega}=\prod_{\xx\in\XX} \mathcal{T}^{\eta^*(K^0)}_{K^0_\xx,R,\Omega}. \]
\end{definition}
As before, $\mathcal{O}_{K^0,R,\Omega}$ will simply be referred to as $\mathcal{O}$.

This is well defined because the set of all possible concept extents across all contexts is determined by the set of objects in the context.
This permits us to name unambiguously all the concepts in the family of concept lattices.
In turn, since $\eta^*(K^0)$, $R$ and $\Omega$ do not change and $I$ is determined by $\{M_\xx\}_{\xx\in\XX}$ (Property~\ref{prop:IdependM}), this determines all attributes that can occur in a scaled RCA context.

There is one difference with the RCA$^0$ setting: the scaled attributes depend on $R$ that makes the connection from one context to another, e.g. from $\mathcal{T}_\xx$ to $\mathcal{T}_\zz$.
But since it is possible to name concepts in the lattices generated by the scaled attributes according to their elements in $G_\zz$, then, as soon as $G_\zz$ is finite, the set of scalable attributes in $M_\xx$ is finite and can be established as $D_{\Omega,R_\xx,K^0}$ from the beginning.

The previous notations can be extended:
\begin{align*}
  \mathrm{T}^*( \{ K_\xx\}_{\xx\in\XX} ) &= \{ \mathrm{T}(K_\xx) \}_{\xx\in\XX} = \{ \langle K_\xx, \mathrm{FCA}(K_\xx)\rangle \}_{\xx\in\XX}.\\
\intertext{For any $\{ T_\xx\}_{\xx\in\XX}\in\mathcal{O}_{K^0,R,\Omega}$:
  }
  k(\{ T_\xx\}_{\xx\in\XX}) &= \{ k(T_\xx) \}_{\xx\in\XX},\\
  l(\{ T_\xx\}_{\xx\in\XX}) &= \{ l(T_\xx) \}_{\xx\in\XX},\\
  k_\zz(\{ T_\xx\}_{\xx\in\XX}) &= k( T_\zz ),\\
  l_\zz(\{ T_\xx\}_{\xx\in\XX}) &= l( T_\zz ).
\end{align*}

Finally, for any indexed family of context-lattice pairs $O\in \mathcal{O}$, the family of lattices $l(O)$ is determined directly from $k(O)$: $l(O)=\mathrm{FCA}^*(k(O))$.

We can define $\wedge$ and $\vee$ on $\mathcal{O}_{K^0,R,\Omega}$.
\begin{definition}[Meet and join of families of context-lattice pairs]\label{def:andorO}
Given $O=\{T_{\xx}\}_{\xx\in\XX}$, $O'=\{T'_{\xx}\}_{\xx\in\XX}\in \mathcal{O}_{K^0,R,\Omega}$, $O\vee O'$ and $O\wedge O'$ are defined as:
\begin{align}
  O\vee O' & = \{T_{\xx}\vee T'_{\xx}\}_{\xx\in\XX},\tag{join}\label{eq:rcajoin}\\
  O\wedge O' & = \{T_{\xx}\wedge T'_{\xx}\}_{\xx\in\XX}.\tag{meet}\label{eq:rcameet}
\end{align}
\end{definition}

The set of families of context-lattice pairs is once again closed by meet and join:
\begin{property}\label{prop:Oclosedmj}
$\forall O, O'\in\mathcal{O}^N_{K^0,R,\Omega}$, $O\wedge O'\in\mathcal{O}^N_{K^0,R,\Omega}$ and $O\vee O'\in\mathcal{O}^N_{K^0,R,\Omega}$.
\end{property}
\begin{proof}
$\mathcal{O}_{K^0,R,\Omega}$ is closed by meet and join because meet and join are the piecewise meet and join of context-lattice pairs (Definition~\ref{def:andorO}) and for each $\xx\in\XX$, $\mathcal{T}^{\eta^*(K^0)}_{K_\xx^0,R,\Omega}$ is closed by meet and join (Property~\ref{prop:Tclosedmj}).
\end{proof}

Meet and join also satisfies commutativity, associativity and the absorption law.

\begin{property}[Commutativity, associativity and absorption of $\vee$ and $\wedge$ on $\mathcal{O}$]\label{prop:comandorO} For all $O, O', O''$ $\in\mathcal{O}$,
\begin{align}
O\vee O' &= O'\vee O &\text{and} && O\wedge O' &= O'\wedge O,\tag{commutativity}\\
(O\vee O')\vee O'' &= O\vee (O'\vee O'') &\text{and} && (O\wedge O')\wedge O'' &= O\wedge (O'\wedge O''),\tag{associativity}\\
O\wedge(O \vee O') &= O &\text{and} && O \vee (O \wedge O') &= O.\tag{absorption}
\end{align}
\end{property}
\begin{proof}\belowdisplayskip=-12pt
As before, proofs are given for $\wedge$, those for $\vee$ follow the exact same pattern.
\begin{align*}
O\wedge O' &= \{ T_\xx \wedge T'_\xx \}_{\xx\in\XX} & \text{ Definition~\ref{def:andorO}} \\
 &= \{ T'_\xx \wedge T_\xx \}_{\xx\in\XX} & \text{Property~\ref{prop:comandorT}}\\
 &= O'\wedge O & \text{ Definition~\ref{def:andorO}} \\
(O\wedge O')\wedge O'' &= \{ T_\xx \wedge T'_\xx \}_{\xx\in\XX} \wedge O'' & \text{ Definition~\ref{def:andorO}} \\
 &= \{ (T_\xx \wedge T'_\xx)\wedge T''_\xx \}_{\xx\in\XX} & \text{ Definition~\ref{def:andorO}} \\
 &= \{ T_\xx \wedge (T'_\xx\wedge T''_\xx) \}_{\xx\in\XX} & \text{ Property~\ref{prop:comandorT}}\\
 &= O\wedge \{ T'_\xx \wedge T''_\xx\}_{\xx\in\XX} & \text{ Definition~\ref{def:andorO}} \\
 &= O\wedge (O'\wedge O'') & \text{ Definition~\ref{def:andorO}} \\
O\vee (O\wedge O') &= O\vee \{ T_\xx \wedge T'_\xx \}_{\xx\in\XX} & \text{ Definition~\ref{def:andorO}}\\
  &= \{ T_\xx\vee (T_\xx \wedge T'_\xx) \}_{\xx\in\XX} & \text{ Definition~\ref{def:andorO}}\\
  &= \{ T_\xx \}_{\xx\in\XX} & \text{ Property~\ref{prop:comandorT}} \\
  &= O & \text{ Definition~\ref{def:O}}
\end{align*}
\end{proof}

We also define the order between two objects by combining the previous definitions.

\begin{definition}[Order]\label{def:rcaorder}
Given $O=\{T_\xx\}_{\xx\in\XX}$, $O'=\{T'_\xx\}_{\xx\in\XX}\in \mathcal{O}_{K^0,R,\Omega}$,
\[ O\preceq O'\text{ if } \forall \xx\in\XX, T_\xx\preceq T'_\xx. \]
\end{definition}

Like before, we note $O\simeq O'$ if $O\preceq O'$ and $O'\preceq O$, and again $\simeq$ is $=$.

Property~\ref{prop:Torder} can be generalised: the order between families of context-lattice pairs may be reduced to the order between contexts (and ultimately the order between their sets of attributes).

\begin{property}\label{prop:Oorder}
$\forall O, O'\in\mathcal{O}_{K^0,R,\Omega}, \text{if }k(O)\subseteq k(O')\text{ then } O\preceq O'$.
\end{property}
\begin{proof}
$k(O)\subseteq k(O')$ means that $\forall\xx\in\XX$, $k_\xx(O)\subseteq k_\xx(O')$ which is equivalent to $\langle k_\xx(O), l_\xx(O)\rangle$ $\preceq$ $\langle k_\xx(O'), l_\xx(O')\rangle$ (Property~\ref{prop:Torder}) and hence $O\preceq O'$ (Definition~\ref{def:rcaorder}).
\end{proof}

This order is compatible with meet and join.

\begin{property}\label{prop:ordereqopO}
  $\forall O, O'\in\mathcal{O}_{K^0,R,\Omega}, O\preceq O'\text{  iff  } O=O\wedge O'$.
\end{property}
\begin{proof}
By Property~\ref{prop:ordereqopT}, we have that $T_\xx\preceq T'_\xx\text{  iff  } T_\xx=T_\xx\wedge T'_\xx$ and this $\forall\xx\in\XX$, consequently $O\preceq O'\text{  iff  } O=O\wedge O'$.
\end{proof}

Finally, the set $\mathcal{O}_{K^0,R,\Omega}$ of families of context-lattice pairs is a complete lattice.

\begin{proposition}\label{prop:Ocompl} $\langle \mathcal{O}_{K^0,R,\Omega}, \vee, \wedge\rangle$ is a complete lattice.
\end{proposition}
\begin{proof}
$\mathcal{O}_{K^0,R,\Omega}$ is closed by meet and join (Property~\ref{prop:Oclosedmj}).
$\vee$ and $\wedge$ satisfy commutativity, associativity and the absorption laws (Property~\ref{prop:comandorO}), so this is a lattice.
It is complete because it is the direct product of complete lattices (Property~\ref{prop:Tcompl}).
\end{proof}

\subsection{The expansion function \texorpdfstring{$EF^*$}{EF*}}\label{sec:EFs}

We reformulate RCA as based on a main single function,
$EF^*_{K^0,R,\Omega}$, the expansion function attached to a relational context $\langle K^0, R\rangle$ and a set $\Omega$ of scaling operations.

\begin{definition}[Expansion function]\label{def:EFs}
  Given a relational context $\langle K^0, R\rangle$ and a set $\Omega$, of relational scaling operations the expansion function $EF^*_{K^0,R,\Omega}:\mathcal{O}_{K^0,R,\Omega}\rightarrow\mathcal{O}_{K^0,R,\Omega}$ is defined by:
\[
  EF^*_{K^0,R,\Omega}(O) = \mathrm{T}^*(\sigma^*_\Omega(k(O),R,l(O))).
\]
\end{definition}
As previously, we will abbreviate $EF^*_{K^0,R,\Omega}$ as $EF^*$.
This function is the basis of RCA: it covers scaling and the application of $\mathrm{FCA}^*$ embedded in the function $\mathrm{T}^*$.
Thus, the two steps (3) and (2) of the RCA algorithm (Section~\ref{sec:rcaop}) have been merged into one.

A family of context-lattice pairs is called saturated if it is not possible to scale new relational attributes in any of its contexts.
\begin{definition}[Saturated family of context-lattice pairs]\label{def:saturated}
A family of context-lattice pairs $O\in\mathcal{O}$ is \emph{saturated} if
$\forall \xx\in\XX$, $k_\xx(O)=\sigma_{\Omega}(k_\xx(O),R,l(O))$.
\end{definition}
 
$EF^*$ is thus not anymore in direct connection with the previous $EF$ but it extends it:
\begin{property}\label{prop:EFltEFs}
$\{EF_{K^0_\xx,R,\Omega}(T_\xx)\}_{\xx\in\XX}\preceq EF^*_{K^0,R,\Omega}(\{T_\xx\}_{\xx\in\XX})$.
\end{property}
\begin{proof}
$\forall\xx\in\XX$, $l_\xx(O)\in l(O)$, thus $\sigma_\Omega(k_\xx(O),R,l_\xx(O))\subseteq \sigma_\Omega(k_\xx(O),R,l(O))$ because there are less concepts to scale with: only those in $l_\xx(O)$.
Hence, $\mathrm{T}(\sigma_\Omega(k_\xx(O),R,l_\xx(O)))$ $\preceq$ $\mathrm{T}(\sigma_\Omega(k_\xx(O),R,l(O)))$ (Property~\ref{prop:Torder}).
Therefore this leads to
$\{EF_{K^0,R,\Omega}(T_\xx)\}_{\xx\in\XX}$ $=$ $\{\mathrm{T}(\sigma_\Omega(k_\xx(O),R,l_\xx(O)))\}_{\xx\in\XX}$ $\preceq$ $\{\mathrm{T}(\sigma_\Omega(k_\xx(O),R,l(O)))\}_{\xx\in\XX}$ $=$ $EF^*_{K^0,R,\Omega}(O)$.
\end{proof}

$EF^*$ is an extensive and monotone internal operation for $\mathcal{O}$:
\begin{property}[$EF^*$ is internal to $\mathcal{O}$]\label{prop:EFsintO}
  $\forall O\in\mathcal{O}$, $EF^*(O)\in\mathcal{O}$.
\end{property}
\begin{proof}
$\forall \xx\in\XX$,
$\sigma_\Omega(k_\xx(O),R,l(O))=F(k_\xx(O))\in\mathcal{K}^{\eta^*(K^0)}_{K^0_x,R,\Omega}$
because $\sigma_\Omega$ only scales attributes in $D_{\Omega,R_\xx,K^0}$.
Thus, $EF(T)=\mathrm{T}(\sigma_\Omega(k_\xx(O),R,l(O)))\in\mathcal{T}^{\eta^*(K^0)}_{K^0_x,R,\Omega}$.
Hence, $EF^*(O)=\{\mathrm{T}(\sigma_\Omega(k_\xx(O),R,l(O)))\}_{\xx\in\XX}\in \mathcal{O}$ (Definition~\ref{def:O}).
\end{proof}

\begin{property}[$EF^*$ is extensive and monotone]\label{prop:EFsmonext}
The function $EF^*$ attached to a relational context and a set of scaling operations satisfies:
\begin{align}
  O &\preceq EF^*(O),\tag{extensivity}\label{prop:EFsext}\\
  O\preceq O' &\Rightarrow EF^*(O)\preceq EF^*(O').\tag{monotony}\label{prop:EFsmon}
\end{align}
\end{property}
\begin{proof}
\ref{prop:EFsext} holds because $EF^*$ can only add to $k(O)$ attributes scaled from $l(O)$, hence $\forall \xx\in\XX, k_\xx(O)\subseteq k_\xx(EF^*(O))$.
Thus, by Property~\ref{prop:Oorder}, $O\preceq EF^*(O)$.
\ref{prop:EFsmon} holds because $O\preceq O'$ means that $\forall\xx\in\XX$, $l_\xx(O)\preceq l_\xx(O')$ and $k_\xx(O)\subseteq k_\xx(O')$.
The former entails that $\forall\xx\in\XX$, $\eta(l_\xx(O))\subseteq \eta(l_\xx(O'))$ and consequently, that $D_{\Omega,R_\xx,l(O)}\subseteq D_{\Omega,R_\xx,l(O')}$.
A smaller context ($k_\xx(O)$) is extended by a smaller set of attributes ($D_{\Omega,R_\xx,l(O)}$), thus $k_\xx(EF^*(O))\subseteq k_\xx(EF^*(O'))$.
Hence, by Property~\ref{prop:Oorder}, $EF^*(O)\preceq EF^*(O')$.
\end{proof}

\subsection{Self-supporting families of context-lattice pairs}\label{sec:selfsup}

In a family of context-lattice pairs $O$, there may be a context $k_\xx(O)$ containing attributes which refer to concepts non existing in $l(O)$.
Such an attribute, e.g. $\exists r.c$, belonging to $k_\xx(O)$ may belong to $D_{\Omega,R_\xx,K^0}$ and be well-defined by the incidence relation, but $c$ may not be a concept of $l(O)$.
This is illustrated by Example~\ref{ex:selfsup}.

\begin{example}[Non self-supported families of context-lattice pairs]\label{ex:selfsup}
Figure~\ref{fig:ex-2-noalt} (p.\pageref{fig:ex-2-noalt}) shows a familly of contexts $\{K_3^\#,K_4^\#\}$ and the associated family of concept lattices $\{L_3^\#,L_4^\#\}$ that could be a solution for the example of Section~\ref{sec:exrca} as it belongs to $\mathcal{O}_{\{\exists\},\{p,q\},\{K_3^0,K_4^0\}}$.
However, this family is not self-supported because the context $K_3^\#$ (and thus concept $A$) uses the attribute $\exists p.C$ which refers to a concept ($C$) not present in $L_4^\#$ and similarly for $\exists q.B$ in context $K_4^\#$.
\end{example}

A family of context-lattice pairs in $\mathcal{O}$ containing such attributes will see them preserved by $EF^*$ which only extends the contexts.
This is not the expected result: concepts referred to by attributes are expected to exist in the corresponding lattice.

One may consider identifying such attributes and forbidding them.
However, support is contextual:
one supported attribute in a large family of context-lattice pairs may not be supported in a smaller one.

In order to define acceptable solutions for RCA, we introduce the notion of context supported by a family of concept lattices, i.e. those contexts whose relational attributes only refer to concepts in the lattices.

\begin{definition}[Supported context]\label{def:ksup}
A context $\langle G_\xx, M_\xx, I_\xx\rangle$ is \emph{supported} by a family of indexed lattices $\{L_\zz\}_{\zz\in X}$, with respect to a set $R$ of relations, if $\forall\varsigma(r,c)\in M_\xx$, $r\in R_{\xx,\zz}$ and $c\in L_\zz$.
\end{definition}

By extension, an indexed family of context-lattice pairs is said \emph{self-supported} if each context of the family is supported by the family lattices.
The support of a single context may use several lattices of the family ($l(O)$).

\begin{definition}[Self-supported families of context-lattice pairs]\label{def:ssup}
A family of context-lattice pairs $O\in\mathcal{O}$ is \emph{self-supported}, with respect to a set $R$ of relations, if $\forall\xx\in\XX$, $k_\xx(O)$ is supported by $l(O)$, with respect to $R$.
\end{definition}

As before, the definition of self-supported families of context-lattice pairs does not provide a direct way to transform a non self-supported family into a self-supported one.
For that purpose, we extend the purging function  $\pi$ (Section~\ref{sec:selfsup}) to take into account families of context-lattice pairs.

\begin{definition}[Purging function]\label{def:purges}
The function
  \[\pi^*_{K^0,R,\Omega}: \prod_{\xx\in\XX}\mathcal{L}_{K^0_\xx,R,\Omega}\rightarrow\prod_{\xx\in\XX}\mathcal{K}_{K^0_\xx,R,\Omega}\]
returns the family of contexts reduced of those attributes not present in a family of lattices:
\begin{align*}
\pi_{K^0,R,\Omega}^*(L) & = \{\pi_{K^0,R,\Omega}( L_\xx, L )\}_{\xx\in\XX},\\
\intertext{with}
\pi_{K^0,R,\Omega}( L_\xx, L ) & = \mathrm{K}^{\langle R, L\rangle}_{-(D_{\Omega, R_\xx, K^0)} \setminus D_{\Omega, R_\xx, L})}(\kappa(L_\xx)).
\end{align*}
\end{definition}
When unambiguous, we refer to $\pi^*_{K^0,R,\Omega}$ as $\pi^*$.
As for $EF^*$, $\pi^*$ uses the lattices of the whole family ($l(O)$).

$\pi^*$ can be used to determine if a family of context-lattice pairs is self-supported:
\begin{property}\label{prop:pissup}
$O$ is self-supported if and only if $k(O)=\pi^*(l(O))$.
\end{property}
\begin{proof}
$O$ is self-supported means that $\forall \xx\in\XX$, $k_\xx(O)$ is supported by $l(O)$ (Definition~\ref{def:ssup})
which means that, in $k_\xx(O)$, there is no attribute built from concepts out of $l(O)$, i.e. not belonging to $D_{\Omega,R_\xx,l(O)}$.
By Definition~\ref{def:purge}, this is equivalent to having $\forall\xx\in\XX$, $\pi(l_\xx(O),l(O))=\kappa(l_\xx(O))$.
However, by Property~\ref{prop:kappaFCA}, $k_\xx(O)=\kappa(l_\xx(O))$, thus $k_\xx(O)=\pi(l_\xx(O),l(O))$ and then $k(O)=\pi^*(l(O))$.
\end{proof}
Like $\pi$, $\pi^*$ does not necessary provide a self-supported family of context-lattice pairs in one step.
Hence, it has to be iterated.

We introduce a contraction function, $PQ^*_{K^0,R,\Omega}$, attached to a relational context $\langle K^0, R\rangle$ and a set $\Omega$ of scaling operations, which suppresses non-supported attributes and whose closure yields self-supported families of context-lattice pairs.

\subsection{The contraction function \texorpdfstring{$PQ^*$}{PQ*}}\label{sec:PQs}

Similarly to $EF^*_{K^0,R,\Omega}$, it is possible to define
$PQ^*_{K^0,R,\Omega}$ the contraction function attached to a relational context $\langle K^0, R\rangle$ and a set $\Omega$ of scaling operations.

\begin{figure}[t]
\centering
\begin{tikzpicture}
    \draw (-2,1) node (Kp) {$T'_\xx$};
    \draw (8,1) node (Lp) {$T'_\zz$};

    \draw (-2,0) node (K) {$T_\xx$};
    \draw (8,0) node (L) {$T_\zz$};

    \draw (-2,-1) node (Kpp) {$T''_\xx$};
    \draw (8,-1) node (Lpp) {$T''_\zz$};

    \draw[->] (K) -- node[left]{$EF$} (Kp);
    \draw[->] (L) -- node[right]{$EF$} (Lp);
    \draw[->] (K) -- node[left]{$PQ$} (Kpp);
    \draw[->] (L) -- node[right]{$PQ$} (Lpp);

    \draw (3,-3.5) node (Tp) {$O'$};
    \draw (3,-4.5) node (T) {$O$};
    \draw (3,-5.5) node (Tpp) {$O''$};
    \draw[->] (T) -- node[right]{$EF^*$} (Tp);
    \draw[->] (T) -- node[right]{$PQ^*$} (Tpp);

    \draw[->] (Tp) -- node[below,sloped,near start] {$\pi_\xx$} (Kp);
    \draw[->] (Tp) -- node[below,sloped,near start] {$\pi_\zz$} (Lp);

    \draw[->] (T) -- node[below,sloped,near start] {$\pi_\xx$} (K);
    \draw[->] (T) -- node[below,sloped,near start] {$\pi_\zz$} (L);

    \draw[->] (Tpp) -- node[below,sloped,near start] {$\pi_\xx$} (Kpp);
    \draw[->] (Tpp) -- node[below,sloped,near start] {$\pi_\zz$} (Lpp);
    
    \draw[thin,dashed] (K) -- node[above] {$R_{\xx,\zz}\cup R_{\zz,\xx}$} (L);
    \draw[thin,dashed] (Kp) -- node[above] {$R_{\xx,\zz}\cup R_{\zz,\xx}$} (Lp);
    \draw[thin,dashed] (Kpp) -- node[above] {$R_{\xx,\zz}\cup R_{\zz,\xx}$} (Lpp);
  
   \draw (-3.5,.5) node {$\mathcal{T}_\xx$};
   \draw[dotted] (-2,-1.5) -- (-3.5,0) -- (-2,1.5) -- (-.5,0) -- cycle;

   \draw (9.5,.5) node {$\mathcal{T}_\zz$};
   \draw[dotted] (8,-1.5) -- (6.5,0) -- (8,1.5) -- (9.5,0) -- cycle;

   \draw (4,-5.5) node {$\mathcal{O}$};
   \draw[dotted] (3,-3) -- (1.5,-4.5) -- (3,-6) -- (4.5,-4.5) -- cycle;

\end{tikzpicture}

\caption{Relations between $\mathcal{O}$, $\mathcal{T}_\xx$ and $\mathcal{T}_\zz$.
  In this figure, $\pi_\xx$ represents the projection of the objects to their component indexed by $\xx\in\XX$.
}\label{fig:O2T}
\end{figure}

\begin{definition}[Contraction function]\label{def:PQs}
Given a relational context $\langle K^0, R\rangle$ and a set $\Omega$, of relational scaling operations, the contraction function $PQ^*_{K^0,R,\Omega}:\mathcal{O}_{K^0,R,\Omega}\rightarrow\mathcal{O}_{K^0,R,\Omega}$ is defined by:
\[
  PQ^*_{K^0,R,\Omega}(O) = \mathrm{T}^*(\pi_{K^0,R,\Omega}^*(l(O))).
\]
\end{definition}

This function is thus not anymore in direct connection with the previous $PQ$ but it extends it:
\begin{property}\label{prop:PQltPQs}
$\{PQ_{K^0_\xx,R,\Omega}(T_\xx)\}_{\xx\in\XX}\preceq PQ^*_{K^0,R,\Omega}(\{T_\xx\}_{\xx\in\XX})$.
\end{property}
\begin{proof}
For any $\xx\in\XX$, $l_\xx(O)\in l(O)$, hence $\pi(k_\xx(O),l_\xx(O))\subseteq \pi(k_\xx(O),l(O))$ because there are more attributes to preserve from concepts in $l(O)$.
Thus, $\mathrm{T}(\pi(k_\xx(O),l_\xx(O)))$ $\preceq$ $\mathrm{T}(\pi(k_\xx(O),l(O)))$ (Property~\ref{prop:Torder}).
This has for consequence that $\{PQ_{K^0,R,\Omega}(T_\xx)\}_{\xx\in\XX}$ $=$ $\{\mathrm{T}(\pi(k_\xx(O),l_\xx(O)))\}_{\xx\in\XX}$ $\preceq$ $\{\mathrm{T}(\pi(k_\xx(O),l(O)))\}_{\xx\in\XX}$ $=$ $PQ^*_{K^0,R,\Omega}(O)$.
\end{proof}

As previously, we will abbreviate $PQ^*_{K^0,R,\Omega}$ as $PQ^*$.

$PQ^*$ is an anti-extensive and monotone internal operation for $\mathcal{O}$:
\begin{property}[$PQ^*$ is internal to $\mathcal{O}$]\label{prop:PQsintO}
  $\forall O\in\mathcal{O}$, $PQ^*(O)\in\mathcal{O}$.
\end{property}
\begin{proof}
  $PQ^*(O)$
$=\mathrm{T}^*(\pi^*(l(O)))$ 
$=\mathrm{T}^*(\{\pi(l_\xx(O),l(O))\}_{\xx\in\XX})$ 
$=\{\mathrm{T}(\pi(l_\xx(O),l(O)))\}_{\xx\in\XX}$.
Hence, $PQ^*(O)\in\mathcal{O}$ if $\pi(l_\xx(O),l(O))\in\mathcal{K}^{\eta^*(K^0)}_{K^0_\xx,R,\Omega}$ (Definition~\ref{def:O}).
This is the case because
\begin{enumii}
\item $\mathcal{K}^{\eta^*(K^0)}_{K^0_\xx,R,\Omega}$ contains all contexts extending $K^0_\xx$ with attributes from $D_{\Omega,R_\xx,K^0}$,
\item $k_\xx(O)\in\mathcal{K}^{\eta^*(K^0)}_{K^0_\xx,R,\Omega}$, and
\item $\pi$ only suppresses attributes from $k_\xx(O)$ preserving those of $K^0_\xx$.
\end{enumii}
\end{proof}

\begin{property}[$PQ^*$ is anti-extensive and monotone]\label{prop:PQsantextmon}
The function $PQ^*$ attached to a relational context and a set of scaling operations satisfies:
\begin{align}
  PQ^*(O) &\preceq O,\tag{anti-extensivity}\label{prop:PQsantext}\\
  O\preceq O' &\Rightarrow PQ^*(O)\preceq PQ^*(O').\tag{monotony}\label{prop:PQsmon}
\end{align}
\end{property}
\begin{proof}
\ref{prop:PQsantext} holds because $PQ^*$ can only suppress from $k(O)$ attributes not supported by $l(O)$, hence $\forall \xx\in\XX, k_\xx(PQ^*(O))\subseteq k_\xx(O)$. 
Therefore, by Property~\ref{prop:Oorder}, $PQ^*(O)\preceq O$.
\ref{prop:PQsmon} holds because $O\preceq O'$ means that $\forall\xx\in\XX, k_\xx(O)\subseteq k_\xx(O')$ and $l_\xx(O)\preceq l_\xx(O')$.
This entails that $\eta^*(l(O))\subseteq \eta^*(l(O'))$ and thus, $\forall\xx\in\XX$, $D_{\Omega,R_\xx,l(O)}\subseteq D_{\Omega,R_\xx,l(O')}$.
Because $PQ^*(O)$ suppresses from $k_\xx(O)$ attributes not in $M_\xx^0\cup D_{\Omega,R_\xx,l(O)}$,
this entails that $k_\xx(PQ^*(O))\subseteq k_\xx(PQ^*(O'))$.
Hence, by Property~\ref{prop:Oorder}, $PQ^*(O)\preceq PQ^*(O')$.
\end{proof}

\subsection{The fixed points of \texorpdfstring{$EF^*$}{EF*} and \texorpdfstring{$PQ^*$}{PQ*}}\label{sec:fpEFsPQs}

Given $EF^*$ and $PQ^*$, it is possible to define their sets of fixed points, i.e. the sets of families of context-lattice pairs closed for $EF^*$ and $PQ^*$, as:
\begin{definition}[Fixed points]\label{def:Ofp}
A family of context-lattice pairs $O\in\mathcal{O}$ is a fixed point for a function $\phi$, if $\phi(O)\simeq O$.
  We call $\mathrm{fp}(\phi)$ the set of fixed points for $\phi$.
\end{definition}
This characterises $\mathrm{fp}(EF^*)$ and $\mathrm{fp}(PQ^*)$.

This may be directly expressed
\begin{property}\label{prop:EFsasscaling}
  $O\in\mathrm{fp}(EF^*)$ iff $\sigma^*_{\Omega}(k(O),R,l(O))=k(O)$.
\end{property}
\begin{proof}
$O=\mathrm{T}^*(k(O))$ and $EF^*(O)=\mathrm{T}^*(\sigma^*_{\Omega}(k(O),R,l(O)))$.
($\Leftarrow$) If $\sigma^*_{\Omega}(k(O),R,l(O))=k(O)$, then $EF^*(O)=O$ and thus is a fixed point of $EF^*$.
($\Rightarrow$) If $\sigma^*_{\Omega}(k(O),R,l(O))\neq k(O)$, then $EF^*(O)\neq O$, so it is not a fixed point.
\end{proof}

\begin{property}\label{prop:PQsaspurging}
  $O\in\mathrm{fp}(PQ^*)$ iff $\pi^*(l(O))=k(O)$.
\end{property}
\begin{proof}
$O=\mathrm{T}^*(k(O))$ and $PQ^*(O)=\mathrm{T}^*(\pi^*(l(O)))$.
($\Leftarrow$) If $\pi^*(l(O))=k(O)$, then $PQ^*(O)$ $=$ $O$ and thus is a fixed point of $PQ^*$.
($\Rightarrow$) If $\pi^*(l(O))\neq k(O)$, then $PQ^*(O)\neq O$, so it is not a fixed point.
\end{proof}

Since $\mathcal{O}$ is a complete lattice (Proposition~\ref{prop:Ocompl}) and $EF^*$ and $PQ^*$ are order-preserving (or monotone) on $\mathcal{O}$ (Properties~\ref{prop:EFsmonext} and \ref{prop:PQsantextmon}), then we can apply the Knaster-Tarski theorem (Theorem~\ref{prop:knastertarski}).

Thus, $\langle\mathrm{fp}(EF^*),\preceq\rangle$ and $\langle\mathrm{fp}(PQ^*),\preceq\rangle$ are complete lattices.
This warrants that there exists least and greatest fixed points of $EF^*$ and $PQ^*$ in $\mathcal{O}$.
For such a function $\phi$, operating on the set $\mathcal{O}$, their least and greatest fixed points are:
\[
  \mathrm{lfp}(\phi)=\bigwedge_{O\in \mathrm{fp}(\phi)} O \text{ and } \mathrm{gfp}(\phi)=\bigvee_{O\in \mathrm{fp}(\phi)} O.
\]
The fixed points of these two functions may be further characterised.
The smallest fixed point of $PQ^*$ is the smallest element of $\mathcal{O}$ which cannot be further reduced.
Property~\ref{prop:lfpQ} (apparently not in \cite{euzenat2021a}) can be generalised as:
\begin{property}[Least fixed point of $PQ^*$]\label{prop:lfpPQs}
  \[
    \mathrm{lfp}(PQ^*)=\mathrm{T}^*(K^0).
  \]
\end{property}
\begin{proof}
$\mathrm{T}^*(\pi^*(\mathrm{FCA}^*(K^0)))=\mathrm{T}^*(K^0)$ because
\begin{enumaa}
\item $\forall\xx\in\XX$, $\kappa(\mathrm{FCA}(K_\xx^0))=K_\xx^0$ (Property~\ref{prop:kappaFCA}), and
\item $\pi(\mathrm{FCA}(K^0_\xx),\mathrm{FCA}^*(K^0))=K_\xx^0$ as
it is not possible to suppress attributes from $K^0_\xx$ which, being an initial (unscaled) context, does not comprise any attribute referring to concepts.
\end{enumaa}
Thus, $PQ^*(\mathrm{T}^*(K^0))=\mathrm{T}^*(\pi^*(\mathrm{FCA}^*(K^0)))=\mathrm{T}^*(K^0)$.
Moreover, $\forall O\in\mathcal{O}$, $\mathrm{T}^*(K^0)\preceq O$.
Hence, $\mathrm{T}^*(K^0)$ is a fixed point of $PQ^*$ and all other fixed points are greater.
\end{proof}

The greatest fixed point of $EF^*$ is the family that cannot be further extended (generalising Proposition~\ref{prop:gfpF} \cite{euzenat2021a}):
\begin{property}[Greatest fixed point of $EF^*$]\label{prop:gfpEFs}
\[
  \mathrm{gfp}(EF^*_{K^0,R,\Omega})=\mathrm{T}^*( \{ \mathrm{K}^{\langle R,\eta^*(K^0)\rangle}_{+D_{\Omega,R_\xx,K^0}}(K^0_\xx) \}_{\xx\in\XX} ).
\]
\end{property}
\begin{proof}
This family of context-lattice pairs is the greatest element of $\mathcal{O}$ as $\forall\xx\in\XX$, the context $k_\xx(O)$ contains all attributes of $M_\xx^0\cup D_{\Omega,R_\xx,K^0}$ and due to Property~\ref{prop:Oorder}.
It is also a fixed point because $EF^*$ is extensive (Property~\ref{prop:EFsmonext}) and internal (Property~\ref{prop:EFsintO}).
\end{proof}

The function $EF^*$ and $PQ^*$ converge after a finite number of applications.

\begin{property}[Stability of $EF^*$]\label{prop:EFsstable} $\forall O\in\mathcal{O}$, 
$\exists n; EF^{*n}(O)=EF^{*n+1}(O)$.
\end{property}
\begin{proof}
$EF^*$ can only increase the contexts when there are new concepts in lattices and increase the lattices when contexts grow.
However, the set of attributes that can increase contexts, and the set of concepts that can be in lattices, is finite. 
Hence, at each step either an attribute is added or $n$ has been reached such that the family of context-lattice pairs is the same. 
This is the same argument as that of \cite{rouanehacene2013a}.
\end{proof}

This below is an extension of Proposition~5 of \cite{euzenat2021a}:
\begin{property}[Stability of $PQ^*$]\label{prop:PQsstable} $\forall O\in\mathcal{O}$, 
$\exists n; PQ^{*n}(O)=PQ^{*n+1}(O)$.
\end{property}
\begin{proof}
$PQ^*$ can only decrease the contexts and reduce lattices. 
Since these are finite (and the decrease does not affect the attributes of $K^0$), there exists a $n$ at which the decrease stops.
\end{proof}

The finite application of $EF^*$ and $PQ^*$ as many times as necessary, i.e. to the first $n$ satisfying Properties~\ref{prop:EFsstable} and \ref{prop:PQsstable}, are closure operations denoted by $EF^{*\infty}$ and $PQ^{*\infty}$, respectively.

\begin{property}\label{prop:EFsPQsclos}
  $EF^{*\infty}$ and $PQ^{*\infty}$ are closures.
\end{property}
\begin{proof}
Since $EF^*$ is extensive and monotone (Property~\ref{prop:EFsmonext}), $EF^{*\infty}$ is also extensive and monotone by transitivity of $\preceq$. 
In order to be a closure operation it has to be idempotent.
This is the case, because $\forall O\in\mathcal{O}$, $EF^{*\infty}(O)=EF^{*n}(O)=EF^{*n+1}(O)=EF^*(EF^{*n}(O))$.
Since $EF^{*n}(O)=EF^*(EF^{*n}(O))$, $EF^*$ can be applied $n$ times, leading to $EF^{*\infty}(O)=EF^{*n}(O)=EF^{*n}(EF^{*n}(O))=EF^{*\infty}(EF^{*\infty}(O))$.

The same can be obtained from $PQ^*$, albeit anti-extensive (Property~\ref{prop:PQsantextmon}).
\end{proof}

In addition, they are extrema of the sets of fixed points of their respective functions.
\begin{property}[$EF^*$ and $PQ^*$ return the smallest subsuming and greatest subsumed fixed points]\label{prop:EFsPQslsgsfp}
$\forall O\in\mathcal{O}$,
\begin{align*}
EF^{*\infty}(O) &= \min_\preceq(\mathrm{fp}(EF^*)\cap \{O'|O\preceq O'\}),\\
PQ^{*\infty}(O) &= \max_\preceq(\mathrm{fp}(PQ^*)\cap \{O'|O'\preceq O\}).
\end{align*}
\end{property}
\begin{proof}
$EF^{*\infty}(O)\in\mathrm{fp}(EF^*)$ and $PQ^{*\infty}(O)\in\mathrm{fp}(PQ^*)$ as they satisfy Definition~\ref{def:Ofp}.
Moreover, $EF^{*\infty}(O)\in\{O'|O\preceq O'\}$ and $PQ^{*\infty}(O)\in\{O'|O'\preceq O\}$ as $EF^*$ and $PQ^*$ are respectively extensive and anti-extensive and monotone (Property~\ref{prop:EFsmonext} and \ref{prop:PQsantextmon}).
There cannot be $O'\in\mathrm{fp}(EF^*)\cap \{O'|O\preceq O'\}$ such that $O'\prec EF^{*\infty}(O)$ because otherwise $k(O')\subset k(EF^{*\infty}(O))$ and $k(O)\subseteq k(O')$. In other terms, $O'$ contains all attributes of $O$ but not all attributes of $EF^{*\infty}(O)$. But, $EF^{*\infty}$ only adds scalable attributes and $k(EF^{*\infty}(O))$ contains only attributes scalable from $O$. Hence, $O'$ is not closed for $EF^*$ ($O'\not\in\mathrm{fp}(EF^*)$).

The same holds for $PQ^{*\infty}(O)$, there cannot be $O'\in\mathrm{fp}(PQ^*)\cap \{O'|O'\preceq O\}$ such that $PQ^{*\infty}(O)\prec O'$ because otherwise $k(O')\subseteq k(O)$.
In other terms, all attributes of $O'$ are in $O$ but $O'$ contains all attributes of $PQ^{*\infty}(O)$.
However, $PQ^{*\infty}$ only suppress attributes not supported by those of $O$.
Hence, $O'$ is not closed for $PQ^*$ ($O'\not\in\mathrm{fp}(PQ^*)$), as it would contain non-supported attributes.
\end{proof}

The respective relations of these various objects can be summarised by the following property:
\begin{property}\label{prop:order}
$\forall O\in\mathcal{O}$,
\[
  \mathrm{lfp}(PQ^*)\preceq PQ^{*\infty}(O) \preceq PQ^*(O)\preceq O\preceq EF^*(O)\preceq EF^{*\infty}(O)\preceq\mathrm{gfp}(EF^*).
\]
\end{property}
\begin{proof}
All the inner equations are consequences of the extensivity of $EF^*$ (Property~\ref{prop:EFsmonext}) and anti-extensivity of $PQ^*$ (Property~\ref{prop:PQsantextmon}).
The outer ones owe to the fact that the two closure operations are fixed points (Property~\ref{prop:EFsPQslsgsfp}), thus they are subsumed by, resp. subsuming, their greatest, resp. least, fixed point.
\end{proof}

\subsection{Acceptable solutions}\label{sec:acceptable}

What is called acceptable solutions in Section~\ref{sec:examples} is now rephrased in Definition~\ref{def:acceptable}.

\begin{definition}[Acceptable family of context-lattice pairs]\label{def:acceptable}
Given a family $K^0$ of contexts, a set $\Omega$ of scaling operations and a set $R$ of relations, a family of context-lattice pairs $O$ is acceptable if
\begin{itemize}
\item $O\in\mathcal{O}_{K^0,R,\Omega}$, \hfill (well-formedness)
\item $O$ is saturated, \hfill(saturation)\\
\item $O$ is self-supported. \hfill(self-support)\\
\end{itemize}
\end{definition}

This can be characterised as those families of context-lattice pairs which are fixed points of both $EF^*$ and $PQ^*$.

The fixed points of $EF^*$ are exactly those saturated elements of $\mathcal{O}$:
\begin{property}[Fixed points of $EF^*$ are saturated]\label{prop:satisfpEFs}
$\forall O\in\mathcal{O}$, $O$ is saturated iff $O\in\mathrm{fp}(EF^*)$.
\end{property}
\begin{proof}
$O\in\mathrm{fp}(EF^*)$ means that $k(O)=\sigma^*_\Omega(k(O),R,l(O))$ (Property~\ref{prop:EFsasscaling}) which is equivalent to $\forall\xx\in\XX, k_\xx(O)=\sigma_{\Omega}(k_\xx(O),R,l(O))$ which, by Definition~\ref{def:saturated}, means that $O$ is saturated.
\end{proof}

The fixed points of $PQ^*$ are exactly those self-supported objects in $\mathcal{O}$:
\begin{property}[Fixed points of $PQ^*$ are self-supported]\label{prop:PQsss}\label{prop:ssisfpPQs}
  $\forall O\in\mathcal{O}$, $O$ is self-supported iff $O\in\mathrm{fp}(PQ^*)$.
\end{property}
\begin{proof}
$O$ is self-supported iff $k(O)=\pi^*(l(O))$ (Property~\ref{prop:pissup}) which is equivalent to $O=PQ^*(O)$ (Property~\ref{prop:PQsaspurging}), i.e. $O\in\mathrm{fp}(PQ^*)$.
\end{proof}

Hence, the set of acceptable solutions is $\mathrm{fp}(EF^*)\cap\mathrm{fp}(PQ^*)$.

\begin{proposition}[Acceptable solutions are fixed points of both $EF^*$ and $PQ^*$]\label{prop:accfp}
Given a family $K^0$ of contexts, a set $\Omega$ of scaling operations and a set $R$ of relations,
a family of context-lattice pairs $O$ is acceptable iff $O\in\mathcal{O}_{K^0,R,\Omega}$ and $O\in\mathrm{fp}(EF^*)\cap\mathrm{fp}(PQ^*)$.
\end{proposition}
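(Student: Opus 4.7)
My plan is to decompose \Def{def:acceptable} into its three clauses and match them one-to-one with the components of the characterisation on the right-hand side. Well-formedness appears identically on both sides, so nothing needs to be shown there. What remains is to identify saturation with membership in $\mathrm{fp}(EF^*)$ and self-support with membership in $\mathrm{fp}(PQ^*)$, both within $\mathcal{O}_{K^0,R,\Omega}$.

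For the $EF^*$ direction, I would first observe that any $O\in\mathcal{O}_{K^0,R,\Omega}$ satisfies $O=\mathrm{T}(k(O))$, since by \Def{def:O} one has $l_\xx(O)=\mathrm{FCA}(k_\xx(O))$ for every $\xx\in\XX$, so $O$ is fully determined by $k(O)$ via the constructor $\mathrm{T}$. From \Def{def:EFs}, $EF^*(O)=\mathrm{T}(\sigma^*_\Omega(k(O),R,l(O)))$, and because $\mathrm{T}$ is injective on $\mathcal{K}$ (both components of a pair in $\mathcal{O}$ are fixed by the context), $EF^*(O)=O$ is equivalent to $\sigma^*_\Omega(k(O),R,l(O))=k(O)$. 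Unfolding $\sigma^*_\Omega$ componentwise, this is precisely the saturation condition $k_\xx(O)=\sigma_\Omega(k_\xx(O),R,l(O))$ for every $\xx\in\XX$.

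For the $PQ^*$ direction, the same reasoning applies using \Def{def:PQs}: $PQ^*(O)=\mathrm{T}(\pi^*(l(O)))=O$ iff $\pi^*(l(O))=k(O)$. Here I would invoke the definition of $\pi^*$ from \S\ref{sec:selfsup}: $\pi$ removes exactly those attributes of $k_\xx(O)$ not supported by $l(O)$, namely attributes in $D^\xx_{\Omega,R,N(K^0)\setminus N(l(O))}$. Hence $\pi^*(l(O))=k(O)$ holds iff no such unsupported attribute occurs in any $k_\xx(O)$, which is exactly the self-support clause $k_\xx(O)\subseteq M^0_\xx\cup D^\xx_{\Omega,R,N(l(O))}$.

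Neither direction should present a real obstacle; everything is a matter of carefully unfolding the definitions and invoking the bijection $\mathrm{FCA}\leftrightarrow\kappa$ (\Prop{prop:kappaFCA}) to justify that elements of $\mathcal{O}$ are uniquely determined by their context component. The only subtle point I anticipate is making sure the indexed-family version of $\pi$ uses the full $N(l(O))=\bigcup_{\xx\in\XX}N(l_\xx(O))$ rather than only $N(l_\xx(O))$ when judging support of the $\xx$-th context; this matches the generalised definition introduced at the end of \S\ref{sec:selfsup} and aligns with the notion of ``supported by $l(O)$'' in \Def{def:acceptable}, so the equivalence goes through cleanly.
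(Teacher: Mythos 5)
Your proposal is correct and follows essentially the same route as the paper's own proof: both reduce the fixed-point conditions $EF^*(O)=O$ and $PQ^*(O)=O$ to the context-level equalities $\sigma^*_\Omega(k(O),R,l(O))=k(O)$ and $\pi^*(l(O))=k(O)$ (using that elements of $\mathcal{O}$ are determined by their context component via $\mathrm{T}$), and then identify these componentwise with the saturation and self-support clauses of \Def{def:acceptable}, well-formedness being immediate. Your explicit remark that the family-level $\pi^*$ must judge support against the whole $N(l(O))$ is exactly the detail the paper relies on as well.
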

\begin{proof} $O$ is well-formed as it belongs to $\mathcal{O}_{K^0,R,\Omega}$.
It is saturated if and only if it belongs to $\mathrm{fp}(EF^*)$ (Property~\ref{prop:satisfpEFs}) and it is self-supported if and only if it belongs to $\mathrm{fp}(PQ^*)$ (Property~\ref{prop:ssisfpPQs}).
Hence, $O$ is acceptable (Definition~\ref{def:acceptable}).
\end{proof}

Example~\ref{ex:fxp} illustrates this:

\begin{example}[Acceptable solutions]\label{ex:fxp}
In the example of Section~\ref{sec:exrca}, it can be checked that the given solutions belong to the expected fixed points:
\begin{align*}
EF^*(\{\langle K_3^1, L_3^1\rangle, \langle K_4^1, L_4^1\rangle\})=\{\langle K_3^1, L_3^1\rangle, \langle K_4^1, L_4^1\rangle\}=PQ^*(\{\langle K_3^1, L_3^1\rangle, \langle K_4^1, L_4^1\rangle\}),\\
EF^*(\{\langle K_3^\star, L_3^\star\rangle, \langle K_4^\star, L_4^\star\rangle\})=\{\langle K_3^\star, L_3^\star\rangle, \langle K_4^\star, L_4^\star\rangle\}=PQ^*(\{\langle K_3^\star, L_3^\star\rangle, \langle K_4^\star, L_4^\star\rangle\}),\\
EF^*(\{\langle K_3', L_3'\rangle, \langle K_4', L_4'\rangle\})=\{\langle K_3', L_3'\rangle, \langle K_4', L_4'\rangle\}=PQ^*(\{\langle K_3', L_3'\rangle, \langle K_4', L_4'\rangle\}),\\
\intertext{and}
EF^*(\{\langle K_3'', L_3''\rangle, \langle K_4'', L_4''\rangle\})=\{\langle K_3'', L_3''\rangle, \langle K_4'', L_4''\rangle\}=PQ^*(\{\langle K_3'', L_3''\rangle, \langle K_4'', L_4''\rangle\}).
\end{align*}
and none of the other elements of $\mathcal{O}$ as displayed in Figure~\ref{fig:exrcalattice} (p.\pageref{fig:exrcalattice}).
\end{example}

In lattice theory, saturation and self-support would have been easily called closedness.
The terms saturation and self-support have been chosen in order to differentiate them.

\section{The fixed-point semantics of RCA}\label{sec:semantics}

Now that the acceptable solutions have been characterised structurally and functionally, we can answer our initial question and define the semantics of RCA.
$\underline{\mathrm{RCA}}$ returns the smallest acceptable solution.
It is also the least fixed point of the $EF^*$ function (\S\ref{sec:fullRCAsem}).
Another interesting operation is the one that generates the greatest acceptable solution, which is also the greatest fixed point of $PQ^*$ (\S\ref{sec:gfpsem}).

It is also worth considering obtaining the whole set $\mathrm{fp}(EF^*)\cap\mathrm{fp}(PQ^*)$.
Section~\ref{sec:allfpsem} investigates the structure of $[\mathrm{lfp}(EF^*),~\mathrm{gfp}(PQ^*)]$ and its relation with $\mathrm{fp}(EF^*)\cap\mathrm{fp}(PQ^*)$ towards that goal.
It provides various results that may be exploited to develop efficient algorithms.

\subsection{Classical RCA computes \texorpdfstring{$EF^*$}{EF*}'s least fixed point}\label{sec:fullRCAsem}

$\underline{\mathrm{RCA}}$ as it has been defined in Section~\ref{sec:rcaop} (p.\pageref{sec:rcaop}) may be redefined as
\[
  \underline{\mathrm{RCA}}_{\Omega}(K^0,R) = l( EF_{K^0,R,\Omega}^{*\infty}( \mathrm{T}^*( K^0 ) ))
\]
i.e. $\underline{\mathrm{RCA}}$ iterates $EF^*$ from $\mathrm{T}^*(K^0)$ until reaching a fixed point, and ultimately the corresponding lattices are returned.

It thus seems that $\underline{\mathrm{RCA}}$ returns a fixed point of $EF^*$.
Hence the question: which fixed point is returned by RCA's well-grounded semantics?
This is the least fixed point.

\begin{proposition}[The RCA algorithm computes the least fixed point of $EF^*$]\label{prop:rcalfp}
Given $EF^*$ the expansion function associated to $K^0$, $R$ and $\Omega$,
\[
  \underline{\mathrm{RCA}}_{\Omega}(K^0,R) = l(\mathrm{lfp}(EF^*_{K^0,R,\Omega})).
\]
\end{proposition}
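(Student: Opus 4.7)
The plan is to reduce the claim to showing $EF^{*\infty}_{K^0,R,\Omega}(\mathrm{T}(K^0)) = \mathrm{lfp}(EF^*_{K^0,R,\Omega})$. Applying $l$ to both sides and unfolding the definition $\underline{\mathrm{RCA}}_\Omega(K^0,R) = l(EF^{*\infty}_{K^0,R,\Omega}(\mathrm{T}(K^0)))$ then gives the result. This proof closely mirrors the RCA$^0$ argument of Proposition~\ref{prop:rca0lfp}, lifted to the setting of $\mathcal{O}$.

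First I would check that $EF^{*\infty}(\mathrm{T}(K^0))$ belongs to $\mathrm{fp}(EF^*)$. Since $\mathrm{T}(K^0)\in\mathcal{O}$ (the empty set of scaled attributes being a valid choice in the construction of $\mathcal{O}$), stability of $EF^*$ (Property~\ref{prop:EFsPQsstable}) ensures that the iteration terminates, and Property~\ref{prop:EFsPQsclos} gives that $EF^{*\infty}$ lands in $\mathrm{fp}(EF^*)$.

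Next I would establish the crucial fact that $\mathrm{T}(K^0)$ is the bottom element of $\mathcal{O}$. For any $O=\{T_\xx\}_{\xx\in\XX}\in\mathcal{O}$, by construction each $k_\xx(O)$ is obtained by adding attributes drawn from $D^\xx_{\Omega,R,N(K^0)}$ to $K^0_\xx$, hence $K^0_\xx\subseteq k_\xx(O)$ for all $\xx\in\XX$; Property~\ref{prop:Oorder} then yields $\mathrm{T}(K^0)\preceq O$. Together with Property~\ref{prop:EFsPQslsgsfp}, which characterises $EF^{*\infty}(O)$ as $\min_\preceq(\mathrm{fp}(EF^*)\cap\{O'\mid O\preceq O'\})$, this gives
\[
EF^{*\infty}(\mathrm{T}(K^0)) \;=\; \min_\preceq\bigl(\mathrm{fp}(EF^*)\cap\{O'\mid \mathrm{T}(K^0)\preceq O'\}\bigr) \;=\; \min_\preceq(\mathrm{fp}(EF^*)) \;=\; \mathrm{lfp}(EF^*),
\]
since the constraint $\mathrm{T}(K^0)\preceq O'$ is vacuous in $\mathcal{O}$.

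As a cross-check, one can also argue directly by monotone iteration: from $\mathrm{T}(K^0)\preceq O$ for any $O\in\mathrm{fp}(EF^*)$, apply monotony of $EF^*$ (Property~\ref{prop:EFsmonext}) repeatedly to obtain $EF^{*n}(\mathrm{T}(K^0))\preceq EF^{*n}(O)=O$ for every $n$, then invoke Property~\ref{prop:EFsPQsstable} to pass to the closure. I do not foresee any real obstacle: all the heavy lifting (extensivity and monotony of $EF^*$, closure properties, and the lattice structure of $\mathcal{O}$) has already been established, and the only step requiring a moment's attention is recognising that $\mathrm{T}(K^0)$ is the bottom of $\mathcal{O}$, which is a direct consequence of the definition of $\mathcal{O}_{K^0,R,\Omega}$.
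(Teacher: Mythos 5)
Your proposal is correct and follows essentially the same route as the paper's own proof: both reduce the claim to $EF^{*\infty}(\mathrm{T}(K^0))=\mathrm{lfp}(EF^*)$ via \Prop{prop:EFsPQslsgsfp} together with the observation that $\mathrm{T}(K^0)$ is the bottom of $\mathcal{O}$, so the constraint $\mathrm{T}(K^0)\preceq O'$ is vacuous. Your additional justification of why $\mathrm{T}(K^0)$ is the least element (via the construction of $\mathcal{O}$ and \Prop{prop:Oorder}) and the monotone-iteration cross-check are welcome but not a different argument.
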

\begin{proof}
$\mathrm{T}^*(K^0)\in\mathcal{O}$, thus $EF^{*\infty}(\mathrm{T}^*(K^0))\in\mathcal{O}$ (Property~\ref{prop:EFsintO}).
Moreover, $EF^{*\infty}( \mathrm{T}^*( K^0 ) )$ $=$ $\min_\preceq(\mathrm{fp}(EF^*)\cap\{O' ~|~ \mathrm{T}^*( K^0 )\preceq O'\})$ (Property~\ref{prop:EFsPQslsgsfp}).
But $\forall O'\in \mathcal{O}$, $\mathrm{T}^*( K^0 )\preceq O'$, hence $EF^{*\infty}( \mathrm{T}^*( K^0 ) )=\min_\preceq(\mathrm{fp}(EF^*) )$.
Thus, $EF^{*\infty}_{K^0,R,\Omega}(\mathrm{T}^*(K^0))$ is a fixed point more specific than all fixed points: it is the least fixed point.
$l(EF^{*\infty}_{K^0,R,\Omega}(\mathrm{T}^*(K^0)))$ is the family of lattices associated with the least fixed point of $EF^*_{K^0,R,\Omega}$.
\end{proof}

\subsection{Greatest fixed-point (of \texorpdfstring{$PQ^*$}{PQ*}) semantics}\label{sec:gfpsem}

It is possible to define $\overline{\mathrm{RCA}}$ as returning the greatest acceptable solution.
The greatest fixed point of $EF^*$ (Property~\ref{prop:gfpEFs}) is not necessarily an acceptable solution because it may not be self-supported.
Said otherwise, it does not belong to $\mathrm{fp}(EF^*)\cap\mathrm{fp}(PQ^*)$ because it is not a fixed point for $PQ^*$.

Alternatively, a dual procedure $\overline{\mathrm{RCA}}$ may be defined as:\label{def:ACR}
\[
  \overline{\mathrm{RCA}}_{\Omega}( K^0, R ) = l( PQ_{K^0,R,\Omega}^{*\infty}( \mathrm{T}^*( \{ \mathrm{K}^{\langle R,\eta^*(K^0)\rangle}_{+D_{\Omega,R_\xx,K^0}}(K^0_\xx) \}_{\xx\in\XX} ) ) ).
\]
and it can be characterised analogously as the greatest fixed point of $PQ_{K^0,R,\Omega}^{*}$.

\begin{proposition}[$\overline{\mathrm{RCA}}$ determines the greatest fixed point of $PQ^*$]\label{prop:acrgfp}
Given $PQ^*$ the contraction function associated to $K^0$, $R$ and $\Omega$,
\[
\overline{\mathrm{RCA}}_{\Omega}(K^0,R) = l(\mathrm{gfp}(PQ^*_{K^0,R,\Omega})).
\]
\end{proposition}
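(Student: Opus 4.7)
The plan is to proceed dually to the proof of \PProp{prop:rcalfp}, exploiting the symmetry between the extensive closure $EF^{*\infty}$ and the anti-extensive closure $PQ^{*\infty}$ together with their characterisations as bounded fixed points given in \Prop{prop:EFsPQslsgsfp}.

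First I would name the starting point: let $O^\top = \mathrm{T}( \{ \mathrm{K}^{\langle R,N(K^0_\xx)\rangle}_{+D^\xx_{\Omega,R,N(K^0)}}(K^0_\xx) \}_{\xx\in\XX} )$ so that $\overline{\mathrm{RCA}}_{\Omega}(K^0,R) = l(PQ^{*\infty}_{K^0,R,\Omega}(O^\top))$. By \PProp{prop:gfpEFs}, $O^\top = \mathrm{gfp}(EF^*)$; in particular $O^\top\in\mathcal{O}$ and, since its context component contains all attributes of $D_{\Omega,R,N(K^0)}$, it is the maximum of $\mathcal{O}$ with respect to $\preceq$ (a fact we already used implicitly in the proof of \PProp{prop:gfpEFs} via \Prop{prop:Oorder}).

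Next I would apply the internal-closure machinery. Because $PQ^*$ is internal to $\mathcal{O}$ (\Prop{prop:PQsintO}) and $PQ^{*\infty}$ is a well-defined closure reaching a fixed point in finitely many steps (Properties~\ref{prop:EFsPQsstable} and \ref{prop:EFsPQsclos}), $PQ^{*\infty}(O^\top)\in\mathrm{fp}(PQ^*)$. By \Prop{prop:EFsPQslsgsfp},
\[
  PQ^{*\infty}(O^\top) \;=\; \max\nolimits_\preceq\bigl(\mathrm{fp}(PQ^*)\cap\{O' \mid O'\preceq O^\top\}\bigr).
\]
Since $O^\top$ is the maximum of $\mathcal{O}$, every $O'\in\mathcal{O}$ satisfies $O'\preceq O^\top$, so the constraint is vacuous and the set reduces to $\mathrm{fp}(PQ^*)$ itself. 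Thus $PQ^{*\infty}(O^\top) = \max_\preceq(\mathrm{fp}(PQ^*)) = \mathrm{gfp}(PQ^*)$, and applying $l$ to both sides yields the claim.

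There is no real obstacle here: the argument is entirely parallel to \PProp{prop:rcalfp}, with extensivity replaced by anti-extensivity and the minimum $\mathrm{T}(K^0)$ of $\mathcal{O}$ replaced by the maximum $O^\top$. The only point worth double-checking is that $O^\top$ as constructed by $\overline{\mathrm{RCA}}$ really is the top of $\mathcal{O}$, i.e.\ that for every $\xx$ the context $\mathrm{K}^{\langle R,N(K^0_\xx)\rangle}_{+D^\xx_{\Omega,R,N(K^0)}}(K^0_\xx)$ exhausts the attribute language, which is precisely what \PProp{prop:gfpEFs} asserts; once that is invoked, the Knaster--Tarski-style bound from \Prop{prop:EFsPQslsgsfp} closes the argument.
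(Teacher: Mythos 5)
Your proposal is correct and follows essentially the same route as the paper's own proof: identify the starting family as the top element of $\mathcal{O}$, invoke \Prop{prop:EFsPQslsgsfp} to characterise $PQ^{*\infty}$ of it as the maximum of $\mathrm{fp}(PQ^*)$ below it, and observe that the bound is vacuous so this is $\mathrm{gfp}(PQ^*)$. The only cosmetic difference is that you explicitly route the "top of $\mathcal{O}$" fact through \PProp{prop:gfpEFs}, whereas the paper states it directly; the substance is identical.
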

\begin{proof}
  $O^\infty=\mathrm{T}^*( \{ \mathrm{K}^{\langle R,\eta^*(K^0)\rangle}_{+D_{\Omega,R_\xx,K^0}}(K^0_\xx) \}_{\xx\in\XX} )\in\mathcal{O}$, hence $PQ_{K^0,R,\Omega}^{*\infty}( O^\infty )\in\mathcal{O}$ (by Property~\ref{prop:PQsintO}).
  Moreover, $PQ^{*\infty}(O^\infty)$ $=$ $\max_\preceq(\mathrm{fp}(PQ^*)\cap \{O'|O'\preceq O^\infty\})$ (Property~\ref{prop:EFsPQslsgsfp}).
  But $\forall O'\in\mathcal{O}$, $O'\preceq O^\infty$, hence $PQ^{*\infty}(O^\infty) = \max_\preceq(\mathrm{fp}(PQ^*))$.
  Thus, $PQ^{*\infty}_{K^0,R,\Omega}(O^\infty)$ is a fixed point more general than all fixed points: it is the greatest fixed point.

$\overline{\mathrm{RCA}}_{\Omega}(K^0,R)=l( PQ_{K^0,R,\Omega}^{*\infty}( O^\infty ) )$ returns the family of lattices associated with the greatest fixed point of $PQ^*_{K^0,R,\Omega}$.
\end{proof}

In order to find $\mathrm{gfp}(PQ^*)$, the process starts with $\mathrm{T}^*( \{ \mathrm{K}^{\langle R,\eta(K^0_\xx)\rangle}_{+D_{\Omega,R_\xx,K^0}}(K^0_\xx) \}_{\xx\in\XX} )$, the largest family of context-lattice pairs, and iterates the application of $PQ^*$, i.e. the two operations $\pi^*$ and $\mathrm{FCA}^*$, until reaching a fixed point, i.e. reaching $n$ such that $O^{n+1}=O^{n}$.

Thus, the $\overline{\mathrm{RCA}}$ algorithm proceeds in the following way:
\begin{enumerate}
\item Initial contexts: $t\leftarrow 0$; $\{\langle G_\xx, M^{t}_\xx, I^{t}_\xx\rangle\}_{\xx\in\XX} \leftarrow \{\mathrm{K}^{\langle R,\eta^*(K^0)\rangle}_{+D_{\Omega,R_\xx,K^0}}(K^0_\xx)\}_{\xx\in\XX}$
\item\label{it:init-acr} $\{L^{t}_\xx\}_{\xx\in\XX} \leftarrow \mathrm{FCA}^*(\{\langle G_\xx, M^{t}_\xx, I^{t}_\xx\rangle\}_{\xx\in\XX})$ (or, for each context, $\langle G_\xx, M^{t}_\xx, I^{t}_\xx\rangle$ the corresponding concept lattice $L^{t}_\xx=\mathrm{FCA}(\langle G_\xx, M^{t}_\xx, I^{t}_\xx\rangle)$ is created using $\mathrm{FCA}$).
\item $\{\langle G_\xx, M^{t+1}_\xx, I^{t+1}_\xx\rangle\}_{\xx\in\XX} \leftarrow \pi^*(\{L^{t}_\xx\}_{\xx\in\XX})$ (i.e. suppressing from $K^t_\xx$ each attribute in $L^t_\xx$ referring through a relation $r\in R_{\xx,\zz}$ to a concept $c_\zz$ not appearing in $L^t_\zz$).
\item If $\exists \xx\in\XX; M^{t+1}_\xx\neq M^{t}_\xx$ (purging has occurred), then $t\leftarrow t+1$; go to Step~\ref{it:init-acr}.
\item Return: $\{L^{t}_\xx\}_{\xx\in \XX}$.
\end{enumerate}
This algorithm is the dual of the $\underline{\mathrm{RCA}}$ procedure.

Example~\ref{ex:gfp0} shows how this is processed in RCA$^0$ and Example~\ref{ex:gfp} for full RCA.
\begin{example}[Greatest fixed-point semantics for RCA$^0$]\label{ex:gfp0}
Consider the example of Section~\ref{sec:elabexrca0}.
$\mathrm{gfp}(EF^*)=\mathrm{T}^*( \{ \mathrm{K}^{\langle R,\eta(K_0^0)\rangle}_{+D_{\Omega,R_\xx,K_0^0}}(K_0^0) \} )$ (Property~\ref{prop:gfpEFs}) is presented below:
\begin{center}\footnotesize
\setlength{\tabcolsep}{2pt}
\begin{tabular}{r|ccccccccccccccc}
$k(\mathrm{gfp}(EF^*))$    & \rotatebox{90}{$\exists r.A$} & \rotatebox{90}{$\exists r.B$} & \rotatebox{90}{$\exists r.C$} & \rotatebox{90}{$\exists r.D$} & \rotatebox{90}{$\exists r.AB$} & \rotatebox{90}{$\exists r.AC$} & \rotatebox{90}{$\exists r.AD$} & \rotatebox{90}{$\exists r.BC$} & \rotatebox{90}{$\exists r.BD$} & \rotatebox{90}{$\exists r.CD$} & \rotatebox{90}{$\exists r.ABC$} & \rotatebox{90}{$\exists r.ABD$} & \rotatebox{90}{$\exists r.ACD$} & \rotatebox{90}{$\exists r.BCD$} & \rotatebox{90}{$\exists r.ABCD$}\\\hline
$a$ & $\times$ & $\times$ &          &          & $\times$ & $\times$ & $\times$ & $\times$ & $\times$ &          & $\times$ & $\times$ & $\times$ & $\times$ & $\times$        \\
$b$ & $\times$ & $\times$ &          &          & $\times$ & $\times$ & $\times$ & $\times$ & $\times$ &          & $\times$ & $\times$ & $\times$ & $\times$ & $\times$        \\
$c$ &          &          &          & $\times$ &          &          & $\times$ &          & $\times$ & $\times$ &          & $\times$ & $\times$ & $\times$ & $\times$        \\
$d$ &          &          & $\times$ &          &          & $\times$ &          & $\times$ &          & $\times$ & $\times$ &          & $\times$ & $\times$ & $\times$ \\
\end{tabular}
\end{center}
It leads to the following lattice:
\begin{center}
  \begin{tikzpicture}[font=\footnotesize]
    \begin{scope}[xscale=.32,yscale=.25]

    \begin{dot2tex}[dot,tikz,codeonly,options=-traw]
      graph {
	graph [nodesep=1.5]
	node [style=concept]
	ABCD [label="$\exists r.ACD$,$\exists r.BCD$,$\exists r.ABCD$
\nodepart{two}
$\empty$"]
	ABC [label="$\exists r.AD$,$\exists r.BD$,$\exists r.ABD$
\nodepart{two}
$\empty$"]
	ABD [label="$\exists r.AC$,$\exists r.BC$,$\exists r.ABC$
\nodepart{two}
$\empty$"]

	AB [label="$\exists r.A$,$\exists r.B$,$\exists r.AB$
\nodepart{two}
$a, b$"]
	CD [label="$\exists r.CD$
\nodepart{two}
$\empty$"]

	C [label="$\exists r.D$
\nodepart{two}
$c$"]
	D [label="$\exists r.C$
\nodepart{two}
$d$"]

        C0 [label="$\empty$
\nodepart{two}
$\empty$"]
	ABCD -- ABC
	ABCD -- ABD
	ABCD -- CD
	ABC -- AB
        ABC -- C
        ABD -- AB
        ABD -- D
        
        CD -- C
        CD -- D
        
        AB -- C0
        C -- C0
        D -- C0
      }
    \end{dot2tex}
    \node[anchor=east] at (ABCD.west) {$ABCD$};
    \node[anchor=east] at (ABC.west) {$ABC$};
    \node[anchor=east] at (ABD.west) {$ABD$};
    \node[anchor=east] at (AB.west) {$AB$};
    \node[anchor=east] at (CD.west) {$CD$};
    \node[anchor=east] at (C.west) {$C$};
    \node[anchor=east] at (D.west) {$D$};
    \node[anchor=east] at (C0.west) {$\bot$};
  \end{scope}
  \draw (-.25,1.5) node {$l(\mathrm{gfp}(EF^*))$:};
  \end{tikzpicture}
\end{center}
It can be checked that it is a fixed point for $EF^*$: 
no additional attribute can be scaled.

On the contrary, $PQ^*$ can be applied to $\mathrm{gfp}(EF^*)$ leading to the following result:
\begin{center}\footnotesize
\begin{tabular}{cc}
\begin{minipage}{.4\textwidth}
\setlength{\tabcolsep}{2pt}
\begin{center}
\begin{tabular}{r|ccccccc}
$k(PQ^{*\infty}(\mathrm{gfp}(EF^*)))$    & \rotatebox{90}{$\exists r.C$} & \rotatebox{90}{$\exists r.D$} & \rotatebox{90}{$\exists r.AB$} & \rotatebox{90}{$\exists r.CD$} & \rotatebox{90}{$\exists r.ABC$} & \rotatebox{90}{$\exists r.ABD$} & \rotatebox{90}{$\exists r.ABCD$}\\\hline
$a$ &          &          & $\times$ &          & $\times$ & $\times$ & $\times$        \\
$b$ &          &          & $\times$ &          & $\times$ & $\times$ & $\times$        \\
$c$ &          & $\times$ &          & $\times$ &          & $\times$ & $\times$        \\
$d$ & $\times$ &          &          & $\times$ & $\times$ &          & $\times$ \\
\end{tabular}
\end{center}
\end{minipage} &
\begin{minipage}{.6\textwidth}
\begin{center}
  \begin{tikzpicture}[font=\footnotesize]
    \begin{scope}[xscale=.28,yscale=.25]

    \begin{dot2tex}[dot,tikz,codeonly,options=-traw]
      graph {
	graph [nodesep=1.5]
	node [style=concept]
	ABCD [label="$\exists r.ABCD$
\nodepart{two}
$\empty$"]
	ABC [label="$\exists r.ABD$
\nodepart{two}
$\empty$"]
	ABD [label="$\exists r.ABC$
\nodepart{two}
$\empty$"]

	AB [label="$\exists r.AB$
\nodepart{two}
$a, b$"]
	CD [label="$\exists r.CD$
\nodepart{two}
$\empty$"]

	C [label="$\exists r.D$
\nodepart{two}
$c$"]
	D [label="$\exists r.C$
\nodepart{two}
$d$"]

        C0 [label="$\empty$
\nodepart{two}
$\empty$"]
	ABCD -- ABC
	ABCD -- ABD
	ABCD -- CD
	ABC -- AB
        ABC -- C
        ABD -- AB
        ABD -- D
        
        CD -- C
        CD -- D
        
        AB -- C0
        C -- C0
        D -- C0
      }
    \end{dot2tex}
    \node[anchor=east] at (ABCD.west) {$ABCD$};
    \node[anchor=east] at (ABC.west) {$ABC$};
    \node[anchor=east] at (ABD.west) {$ABD$};
    \node[anchor=east] at (AB.west) {$AB$};
    \node[anchor=east] at (CD.west) {$CD$};
    \node[anchor=east] at (C.west) {$C$};
    \node[anchor=east] at (D.west) {$D$};
    \node[anchor=east] at (C0.west) {$\bot$};
  \end{scope}
  \draw (.25,.5) node {$l(PQ^{*\infty}(\mathrm{gfp}(EF^*))$:};
  \end{tikzpicture}
\end{center}
\end{minipage}
\end{tabular}
\end{center}
In this case, $PQ^*(\mathrm{gfp}(EF^*))=PQ^{*\infty}(\mathrm{gfp}(EF^*))$, this is not necessarily true as some concepts may be supported by attributes which may be retracted from the lattice due to lack of support.
When full RCA is considered, this may span from context to context.
\end{example}

\begin{example}[Greatest fixed-point semantics for RCA]\label{ex:gfp}
In the cases presented in Section~\ref{sec:elabexrca} and \ref{sec:exrca}, the greatest (or maximum) elements $\{\langle K_1^\star, L_1^\star\rangle, \langle K_2^\star, L_2^\star\rangle\}$ and  $\{\langle K_3^\star, L_3^\star\rangle,$ $\langle K_4^\star, L_4^\star\rangle\}$ of $\mathcal{O}$ are fixed points for both $EF^*$ and $PQ^*$.
Instead, consider the example of Section~\ref{sec:elabexrca} in which the relation $p$ is changed to:

\begin{center}\footnotesize
\setlength{\tabcolsep}{2pt}
\begin{tikzpicture}

  \draw (-4,0) node {\begin{tabular}{r|ccc}
                       $p$ & $d$      & $e$      & $f$ \\ \hline
                       $a$ & $\times$ & $\times$ & \\
                       $b$ & $\times$ & $\times$ & \\
                       $c$ &          &          & $\times$
                       \end{tabular}};

  \draw (4,0) node {\begin{tabular}{r|ccc}
                       $q$ & $a$ & $b$ & $c$\\ \hline
                       $d$ & $\times$ &   & \\
                       $e$ &   & $\times$ & \\
                       $f$ &   &   & $\times$
                       \end{tabular}};

\end{tikzpicture}
\end{center}
\noindent $q$ remaining the same.
The effect of changing $p$ is to make $a$ and $b$ non distinguishable and reduce the sets of supported concepts.

By Property~\ref{prop:gfpEFs}, $\mathrm{gfp}(EF^*)=\mathrm{T}^*( \{ \mathrm{K}^{\langle R,\eta^*(K^0)\rangle}_{+D_{\Omega,R_\xx,K^0}}(K_\xx^0) \}_{\xx\in\{1,2\}}  )=\{\langle K_1^\top, L_1^\top\rangle,$ $\langle K_2^\top, L_2^\top\rangle\}$ which is presented below:

\begin{center}\footnotesize
\setlength{\tabcolsep}{2pt}
\centering
\begin{tikzpicture}[font=\footnotesize]

  \draw (-4,6) node {\begin{tabular}{r|cccccccccc}
                        $K_1^\top$ & $m_1$ & $m_2$ & $m_3$ & \rotatebox{90}{$\exists p.DEF$} & \rotatebox{90}{$\exists p.DE$} & \rotatebox{90}{$\exists p.DF$} & \rotatebox{90}{$\exists p.EF$} & \rotatebox{90}{$\exists p.D$} & \rotatebox{90}{$\exists p.E$} & \rotatebox{90}{$\exists p.F$}\\ \hline
                       $a$ &   & $\times$ &   & $\times$ & $\times$ & $\times$ & $\times$ & $\times$ & $\times$  &   \\
                       $b$ &   & $\times$ &   & $\times$ & $\times$ & $\times$ & $\times$ & $\times$  & $\times$ &   \\
                       $c$ & $\times$ &   & $\times$ & $\times$ &   & $\times$ & $\times$ &   &   & $\times$  
                       \end{tabular}};

  \draw (3,6) node {\begin{tabular}{r|ccccccccc}
                        $K_2^\top$ & $n_1$ & $n_2$ & \rotatebox{90}{$\exists q.ABC$} & \rotatebox{90}{$\exists q.AB$} & \rotatebox{90}{$\exists q.AC$} & \rotatebox{90}{$\exists q.BC$} & \rotatebox{90}{$\exists q.A$} & \rotatebox{90}{$\exists q.B$} & \rotatebox{90}{$\exists q.C$}\\ \hline
                       $d$ & $\times$ &    & $\times$ & $\times$ & $\times$ &   & $\times$ &   & \\
                       $e$ & $\times$ & $\times$ & $\times$ & $\times$ &   & $\times$ &   & $\times$ & \\
                       $f$ &   &   & $\times$ &   & $\times$ & $\times$ &   &   & $\times$
                       \end{tabular}};

  \begin{scope}[xshift=-6cm]
    \begin{scope}[xscale=.25,yscale=.25]

    \begin{dot2tex}[dot,tikz,codeonly,options=-traw]
      graph {
	graph [nodesep=1.5]
	node [style=concept]
	ABC [label="$\exists p.DEF, \exists p.DF, \exists p.EF$
\nodepart{two}
$\empty$"]
	AB [label="$m_2, \exists p.D, \exists p.E, \exists p.DE$
\nodepart{two}
$a$, $b$"]
	C [label="$m_1, m_3, \exists p.F$
\nodepart{two}
$c$"]
        C0 [label="$\empty$
\nodepart{two}
$\empty$"]
	ABC -- AB
	ABC -- C
	AB -- C0
        C -- C0
      }
    \end{dot2tex}
    \node[anchor=east] at (ABC.west) {$ABC$};
    \node[anchor=east] at (AB.west) {$AB$};
    \node[anchor=east] at (C.west) {$C$};
    \node[anchor=east] at (C0.west) {$\bot$};
  \end{scope}
  \draw (0,.5) node {$L_1^\top$:};
  \end{scope}

  \begin{scope}[xshift=1cm]
    \begin{scope}[xscale=.25,yscale=.25]

    \begin{dot2tex}[dot,tikz,codeonly,options=-traw]
      graph {
	graph [nodesep=1.5]
	node [style=concept]
	DEF [label="$\exists p.ABC$
\nodepart{two}
$\empty$"]
	DE [label="$n_1, \exists p.AB$
\nodepart{two}
$\empty$"]
	DF [label="$\exists p.AC$
\nodepart{two}
$\empty$"]
	EF [label="$\exists p.BC$
\nodepart{two}
$\empty$"]
	D [label="$\exists p.A$
\nodepart{two}
$d$"]
	E [label="$n_2, \exists p.B$
\nodepart{two}
$e$"]
	F [label="$\exists p.C$
\nodepart{two}
$f$"]
        C0 [label="$\empty$
\nodepart{two}
$\empty$"]
	DEF -- DE
	DEF -- DF
	DEF -- EF
	DE -- D
        DE -- E
        DF -- D
        DF -- F
        EF -- E
        EF -- F
        
        D -- C0
        E -- C0
        F -- C0
      }
    \end{dot2tex}
    \node[anchor=east] at (DEF.west) {$DEF$};
    \node[anchor=east] at (DE.west) {$DE$};
    \node[anchor=east] at (DF.west) {$DF$};
    \node[anchor=east] at (EF.west) {$EF$};
    \node[anchor=east] at (D.west) {$D$};
    \node[anchor=east] at (E.west) {$E$};
    \node[anchor=east] at (F.west) {$F$};
    \node[anchor=east] at (C0.west) {$\bot$};
  \end{scope}
  \draw (0,.5) node {$L_2^\top$:};

  \end{scope}

\end{tikzpicture}
\end{center}

It can be checked that $\{\langle K_1^\top, L_1^\top\rangle, \langle K_2^\top, L_2^\top\rangle\}$ is a fixed point for $EF^*$: no additional attribute can be scaled.
On the contrary, it is not a fixed point for $PQ^*$ which can be applied to it.

In a first application, it will purge $\langle K_2^\top, L_2^\top\rangle$ to:

\begin{center}\footnotesize
\setlength{\tabcolsep}{2pt}
\begin{tikzpicture}[font=\footnotesize]

  \draw (-3,2) node {\begin{tabular}{r|ccccc}
                        $K_2^\dag$ & $n_1$ & $n_2$ & \rotatebox{90}{$\exists q.ABC$} & \rotatebox{90}{$\exists q.AB$} & \rotatebox{90}{$\exists q.C$}\\ \hline
                       $d$ & $\times$ &    & $\times$ & $\times$ & \\
                       $e$ & $\times$ & $\times$ & $\times$ & $\times$ & \\
                       $f$ &   &   & $\times$ &   & $\times$
                       \end{tabular}};

  \begin{scope}[xshift=3cm]
    \begin{scope}[xscale=.25,yscale=.25]

    \begin{dot2tex}[dot,tikz,codeonly,options=-traw]
      graph {
	graph [nodesep=1.5]
	node [style=concept]
	DEF [label="$\exists p.ABC$
\nodepart{two}
$\empty$"]
	DE [label="$n_1, \exists p.AB$
\nodepart{two}
$d$"]
	E [label="$n_2$
\nodepart{two}
$e$"]
	F [label="$\exists q.C$
\nodepart{two}
$f$"]
        C0 [label="$\empty$
\nodepart{two}
$\empty$"]
	DEF -- DE
	DEF -- F
        DE -- E
        E -- C0
        F -- C0
      }
    \end{dot2tex}
    \node[anchor=east] at (DEF.west) {$DEF$};
    \node[anchor=east] at (DE.west) {$DE$};
    \node[anchor=east] at (E.west) {$E$};
    \node[anchor=east] at (F.west) {$F$};
    \node[anchor=east] at (C0.west) {$\bot$};
  \end{scope}
  \draw (0,.5) node {$L_2^\dag$:};

  \end{scope}

\end{tikzpicture}
\end{center}

A second application will purge $\langle K_1^\top, L_1^\top\rangle$ with respect to $\langle K_2^\dag, L_2^\dag\rangle$:

\begin{center}\footnotesize
\setlength{\tabcolsep}{2pt}
\setlength{\tabcolsep}{2pt}
\centering\begin{tikzpicture}[font=\footnotesize]

  \draw (-3,2) node {\begin{tabular}{r|ccccccc}
                        $K_1^\dag$ & $m_1$ & $m_2$ & $m_3$ & \rotatebox{90}{$\exists p.DEF$} & \rotatebox{90}{$\exists p.DE$} & \rotatebox{90}{$\exists p.E$} & \rotatebox{90}{$\exists p.F$}\\ \hline
                       $a$ &   & $\times$ &   & $\times$ & $\times$ & $\times$  &   \\
                       $b$ &   & $\times$ &   & $\times$ & $\times$ & $\times$ &   \\
                       $c$ & $\times$ &   & $\times$ & $\times$ &   &   & $\times$  
                       \end{tabular}};

  \begin{scope}[xshift=2cm]
    \begin{scope}[xscale=.25,yscale=.25]

    \begin{dot2tex}[dot,tikz,codeonly,options=-traw]
      graph {
	graph [nodesep=1.5]
	node [style=concept]
	ABC [label="$\exists p.DEF$
\nodepart{two}
$\empty$"]
	AB [label="$m_2, \exists p.E, \exists p.DE$
\nodepart{two}
$a$, $b$"]
	C [label="$m_1, m_3, \exists p.F$
\nodepart{two}
$c$"]
        C0 [label="$\empty$
\nodepart{two}
$\empty$"]
	ABC -- AB
	ABC -- C
	AB -- C0
        C -- C0
      }
    \end{dot2tex}
    \node[anchor=east] at (ABC.west) {$ABC$};
    \node[anchor=east] at (AB.west) {$AB$};
    \node[anchor=east] at (C.west) {$C$};
    \node[anchor=east] at (C0.west) {$\bot$};
  \end{scope}
  \draw (0,.5) node {$L_1^\dag$:};
  \end{scope}

\end{tikzpicture}
\end{center}

It can be checked that $\{\langle K_1^\dag, L_1^\dag\rangle, \langle K_2^\dag, L_2^\dag\rangle\}$ is a fixed point for $PQ^*$, but also for $EF^*$.
\end{example}
\bigskip

It may be interesting, for some applications to check if there is only one acceptable solution.
This can easily be characterised by:
\begin{proposition}\label{prop:lfgeqgfp}
  $\mathrm{lfp}(EF_{K^0,R,\Omega}^*)=\mathrm{gfp}(PQ_{K^0,R,\Omega}^*)$
 iff $|\mathrm{fp}(EF_{K^0,R,\Omega}^*)\cap\mathrm{fp}(PQ_{K^0,R,\Omega}^*)|=1$.
\end{proposition}
The proof of this proposition is given in the next section (\ref{sec:allfpsem}) as it relies on further results.

This can be tested using $\underline{\mathrm{RCA}}$ and $\overline{\mathrm{RCA}}$.

FCA can be described as RCA with $R=\varnothing$. 
In this case, $\forall\xx\in\XX$ $D_{\Omega,R_\xx,K^0}=\varnothing$.
Thus, $\mathcal{O}=\{\mathrm{T}^*( K^0 )\}=\{\langle K^0, \mathrm{FCA}(K^0)\rangle\}$ and $\mathrm{fp}(EF^*)=\mathrm{fp}(PQ^*)=\{\mathrm{T}^*( K^0 )\}$.
Hence,
\[
  \underline{\mathrm{RCA}}_{\Omega}(K^0,\varnothing)=\overline{\mathrm{RCA}}_{\Omega}(K^0,\varnothing)=\mathrm{FCA}(K^0).
\]

\subsection{The structure of fixed points}\label{sec:allfpsem}\label{sec:qeeq}

Besides obtaining the least fixed point of $EF^*$ ($\underline{\mathrm{RCA}}_{\Omega}$) or the greatest fixed point of $PQ^*$ ($\overline{\mathrm{RCA}}_{\Omega}$), 
an interesting problem is to obtain all acceptable solutions, i.e. those families of context-lattice pairs belonging to the fixed points of both functions ($\mathrm{fp}(EF^*)\cap\mathrm{fp}(PQ^*)$).

A naive algorithm for this consists in enumerating all elements of the interval and testing if they are fixed points.
This would not be very efficient.
Figure~\ref{fig:exrcalattice} shows that, in our simple Example~\ref{sec:exrca}, among the 16 elements in the interval only 4 belong to $\mathrm{fp}(EF^*)\cap\mathrm{fp}(PQ^*)$.

One way to try to improve on this situation is to understand the structure of the set of fixed points and its relation with the two functions and their closures.
Figure~\ref{fig:fpstruct} illustrates the structure of $\mathcal{O}$ and how $EF^{*\infty}$ and $PQ^{*\infty}$ and their composition traverse this structure.

An interesting property of the functions $EF^*$ and $PQ^*$ is that they preserve each other stability:
\begin{property}[$EF^*$ is internal to $\mathrm{fp}(PQ^*)$]\label{prop:EFsintfpPQs}
$\forall O\in \mathrm{fp}(PQ^*)$, $EF^*(O)\in\mathrm{fp}(PQ^*)$.
\end{property}
\begin{proof}
If $O\in\mathrm{fp}(PQ^*)$, all attributes in intents of $l(O)$ are supported by concepts in $l(O)$ (Property~\ref{prop:ssisfpPQs} and Definition~\ref{def:ssup}).
By Property~\ref{prop:EFsmonext}, $O\preceq EF^*(O)$, so these concepts are still in $l(EF^*(O))$.
Moreover, $EF^*$ only adds to $k(O)$ attributes which are supported by $l(O)$ (they only refer to concepts in $l(O)$).
Hence, the attributes in $k(EF^*(O))$ and those scaled by $\sigma_\Omega$ are still supported by $l(EF^*(O))$.
\end{proof}

\begin{property}[$PQ^*$ is internal to $\mathrm{fp}(EF^*)$]\label{prop:PQsintfpEFs}
$\forall O\in \mathrm{fp}(EF^*)$, $PQ^*(O)\in \mathrm{fp}(EF^*)$.
\end{property}
\begin{proof}
If $O\in\mathrm{fp}(EF^*)$, this means that $EF^*(O)=O$ and, in particular, that $\sigma^*_\Omega$ does not scale new relational attributes based on the concepts in $l(O)$.
By Property~\ref{prop:PQsantextmon}, $PQ^*(O)\preceq O$, so that $l(PQ^*(O))$ does not contain more concepts than $l(O)$, then $\sigma^*_\Omega$ cannot scale new attributes ($\sigma^*_\Omega(k(PQ^*(O)),R,l(PQ^*(O)))\subseteq \sigma^*_\Omega(k(O),R,l(O))=\varnothing$).
Hence, $PQ^*(O)\in\mathrm{fp}(EF^*)$.
\end{proof}

This shows that $\mathrm{fp}(EF^*)\cap\mathrm{fp}(PQ^*)\neq\varnothing$: acceptable solutions always exist.
In addition, the closure operations associated with the two functions preserve their extrema.

\begin{property}\label{prop:EFsPQsbounds}
$PQ^{*\infty}(\mathrm{gfp}(EF^*))=\mathrm{gfp}(PQ^*)$
  and
$EF^{*\infty}(\mathrm{lfp}(PQ^*))=\mathrm{lfp}(EF^*)$.
\end{property}
\begin{proof}
$\forall O\in\mathcal{O}$, $O\preceq \mathrm{gfp}(EF^*)$ (from Property~\ref{prop:order}), and $PQ^{*\infty}$ is order preserving (Property~\ref{prop:EFsPQsclos}), thus
$PQ^{*\infty}(O)\preceq PQ^{*\infty}(\mathrm{gfp}(EF^*))$.
Hence, $\forall O\in\mathrm{fp}(PQ^*)$, $O\preceq PQ^{*\infty}(\mathrm{gfp}(EF^*))$.
Moreover, $PQ^{*\infty}(\mathrm{gfp}(EF^*))\in\mathrm{fp}(PQ^*)$, thus $PQ^{*\infty}(\mathrm{gfp}(EF^*))=\mathrm{gfp}(PQ^*)$.

Similarly, $\forall O\in\mathcal{O}$, $\mathrm{lfp}(PQ^*)\preceq O$ (Property~\ref{prop:order}), and $EF^{*\infty}$ is order preserving (Property~\ref{prop:EFsPQsclos}), thus
$EF^{*\infty}(\mathrm{lfp}(PQ^*))$ $\preceq$ $EF^{*\infty}(O)$.
Hence, $\forall O\in\mathrm{fp}(EF^*)$, $EF^{*\infty}(\mathrm{lfp}(PQ^*))\preceq O$.
Moreover, $EF^{*\infty}(\mathrm{lfp}(PQ^*))\in\mathrm{fp}(EF^*)$, therefore $EF^{*\infty}(\mathrm{lfp}(PQ^*))$ $=$ $\mathrm{lfp}(EF^*)$.
\end{proof}

Proposition~\ref{prop:infsupEFsPQs} complements Property~\ref{prop:order} for elements of $\mathrm{fp}(EF^*)\cap\mathrm{fp}(PQ^*)$.
The elements of $\mathrm{fp}(EF^*)\cap\mathrm{fp}(PQ^*)$ thus belong to the interval 
$[\mathrm{lfp}(EF^*)\ \mathrm{gfp}(PQ^*)]$ (which is more restricted than $[\mathrm{lfp}(PQ^*)\ \mathrm{gfp}(EF^*)]$, see Figure~\ref{fig:fpstruct}).

\begin{proposition}\label{prop:infsupEFsPQs}
$\forall O\in\mathrm{fp}(EF^*)\cap\mathrm{fp}(PQ^*)$,
  \[\mathrm{lfp}(PQ^*)\preceq\mathrm{lfp}(EF^*)\preceq O\preceq\mathrm{gfp}(PQ^*)\preceq\mathrm{gfp}(EF^*). \]
\end{proposition}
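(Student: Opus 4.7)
The plan is to mirror, step by step, the RCA$^0$ argument of \PProp{prop:infsupEQ}, lifting it from the single context-lattice setting to the indexed families setting. The two key tools will be \Props{prop:EFsintfpPQs}{prop:PQsintfpEFs} (mutual stability of the two functions on each other's fixed points) together with the monotonicity/extensivity properties \Props{prop:EFsmonext}{prop:PQsantextmon} and the fact that $EF^{*\infty}$ and $PQ^{*\infty}$ are closures reaching fixed points (\Prop{prop:EFsPQsclos}).

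First I would establish that $\mathrm{lfp}(EF^*)$ actually belongs to $\mathrm{fp}(EF^*)\cap\mathrm{fp}(PQ^*)$, by contradiction. Suppose $\mathrm{lfp}(EF^*)\notin\mathrm{fp}(PQ^*)$. Then $PQ^{*\infty}(\mathrm{lfp}(EF^*))\in\mathrm{fp}(PQ^*)$ by \Prop{prop:EFsPQsstable}. Iterating \Prop{prop:PQsintfpEFs}, we get $PQ^{*\infty}(\mathrm{lfp}(EF^*))\in\mathrm{fp}(EF^*)$ as well. But by anti-extensivity of $PQ^*$ (\Prop{prop:PQsantextmon}), $PQ^{*\infty}(\mathrm{lfp}(EF^*))\preceq\mathrm{lfp}(EF^*)$, which contradicts $\mathrm{lfp}(EF^*)$ being the least element of $\mathrm{fp}(EF^*)$ (unless equality holds, in which case $\mathrm{lfp}(EF^*)\in\mathrm{fp}(PQ^*)$ directly). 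Hence $\mathrm{lfp}(EF^*)\in\mathrm{fp}(EF^*)\cap\mathrm{fp}(PQ^*)$, and it is then automatically the infimum of this intersection.

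Symmetrically, I would show $\mathrm{gfp}(PQ^*)\in\mathrm{fp}(EF^*)\cap\mathrm{fp}(PQ^*)$. Suppose $\mathrm{gfp}(PQ^*)\notin\mathrm{fp}(EF^*)$; then $EF^{*\infty}(\mathrm{gfp}(PQ^*))\in\mathrm{fp}(EF^*)$ by \Prop{prop:EFsPQsstable}, and by iterated application of \Prop{prop:EFsintfpPQs} it also lies in $\mathrm{fp}(PQ^*)$. Extensivity of $EF^*$ (\Prop{prop:EFsmonext}) gives $\mathrm{gfp}(PQ^*)\preceq EF^{*\infty}(\mathrm{gfp}(PQ^*))$, contradicting $\mathrm{gfp}(PQ^*)$ being the greatest element of $\mathrm{fp}(PQ^*)$. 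So $\mathrm{gfp}(PQ^*)$ is the supremum of $\mathrm{fp}(EF^*)\cap\mathrm{fp}(PQ^*)$.

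Finally, for any $O\in\mathrm{fp}(EF^*)\cap\mathrm{fp}(PQ^*)$, since $O\in\mathrm{fp}(EF^*)$ we have $\mathrm{lfp}(EF^*)\preceq O$ by definition of the least fixed point, and since $O\in\mathrm{fp}(PQ^*)$ we have $O\preceq\mathrm{gfp}(PQ^*)$ by definition of the greatest fixed point. I expect the argument itself to be largely routine given the machinery already proved; the only subtle point is to be careful that the indexed-family generalisations of the mutual-stability properties are really being applied (these are stated as \Props{prop:EFsintfpPQs}{prop:PQsintfpEFs} and are the direct lifts of \Prop{prop:EintfpQ} and \Prop{prop:QintfpE} used in the RCA$^0$ case), so no new structural work is needed.
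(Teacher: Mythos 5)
Your proposal is correct and follows essentially the same route as the paper's own proof: both argue by contradiction that $\mathrm{lfp}(EF^*)$ and $\mathrm{gfp}(PQ^*)$ belong to $\mathrm{fp}(EF^*)\cap\mathrm{fp}(PQ^*)$ using the stability of the closures (\Prop{prop:EFsPQsstable}), the mutual internality results (\Props{prop:EFsintfpPQs}{prop:PQsintfpEFs}), and (anti-)extensivity, then read off the bounds from the definitions of least and greatest fixed point. Your explicit handling of the equality case in the contradiction is a minor tidiness improvement over the paper's wording, but the argument is the same.
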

\begin{proof}
By Property~\ref{prop:EFsPQsbounds}, $PQ^{*\infty}(\mathrm{gfp}(EF^*))=\mathrm{gfp}(PQ^*)$, but $PQ^*(O)\preceq O$ (Property~\ref{prop:PQsantextmon}) and $PQ^{*\infty}(O)\preceq PQ^*(O)$ (Property~\ref{prop:order}), therefore $\mathrm{gfp}(PQ^*)$ $=$ $PQ^{*\infty}(\mathrm{gfp}(EF^*))$ $\preceq$ $PQ^*(\mathrm{gfp}(EF^*))$ $\preceq$ $\mathrm{gfp}(EF^*)$.
Moreover, by Property~\ref{prop:EFsPQsbounds}, $EF^{*\infty}(\mathrm{lfp}(PQ^*))=\mathrm{lfp}(EF^*)$, but $O\preceq EF^*(O)$ (Property~\ref{prop:EFsmonext}) and $EF^*(O)\preceq EF^{*\infty}(O)$ (Property~\ref{prop:order}), thus $\mathrm{lfp}(PQ^*)\preceq  EF^{*}(\mathrm{lfp}(PQ^*))\preceq EF^{*\infty}(\mathrm{lfp}(PQ^*))=\mathrm{lfp}(EF^*)$.
Finally, $O\in \mathrm{fp}(EF^*)\cap\mathrm{fp}(PQ^*)$ entails that $\mathrm{lfp}(EF^*)\preceq O$ and $O\preceq\mathrm{gfp}(PQ^*)$.
\end{proof}

It is now possible to prove Proposition~\ref{prop:lfgeqgfp}:
\begin{proof}[Proof of Proposition~\ref{prop:lfgeqgfp}]
First, both $\mathrm{lfp}(EF_{K^0,R,\Omega}^*)$ and $\mathrm{gfp}(PQ_{K^0,R,\Omega}^*)$ are among the acceptable solutions.
Indeed, $\mathrm{lfp}(EF^*)=EF^{*\infty}(\mathrm{lfp}(PQ^*))$ (Property~\ref{prop:EFsPQsbounds}), but $\mathrm{lfp}(PQ^*)\in\mathrm{fp}(PQ^*)$ thus $EF^{*\infty}(\mathrm{lfp}(PQ^*))\in\mathrm{fp}(PQ^*)$  (Property~\ref{prop:EFsintfpPQs}), hence $\mathrm{lfp}(EF^*)\in\mathrm{fp}(PQ^*)$.
The same reasoning can be applied to $\mathrm{gfp}(PQ_{K^0,R,\Omega}^*)$ with Property~\ref{prop:PQsintfpEFs}.

\noindent $\Rightarrow$) Since all acceptable solutions are within the interval between both fixed points (Proposition~\ref{prop:infsupEFsPQs}), if these are equal then the interval contains only one object which is the only acceptable solution.

\noindent $\Leftarrow$) If there is only one acceptable solution, then $\mathrm{lfp}(EF_{K^0,R,\Omega}^*)=\mathrm{gfp}(PQ_{K^0,R,\Omega}^*)$.
\end{proof}

Proposition~\ref{prop:infsupEFsPQs} together with the preamble of the proof of Proposition~\ref{prop:lfgeqgfp} determine that $[\mathrm{lfp}(EF^*)\ \mathrm{gfp}(PQ^*)]$ is the smallest interval in which $\mathrm{fp}(EF^*)\cap\mathrm{fp}(PQ^*)$ is included since its bounds are acceptable solutions.
However, acceptable solutions do not cover the whole interval: the converse of Proposition~\ref{prop:infsupEFsPQs} does not hold in general as shown by the counter-example~\ref{ex:neq}.

\begin{example}[Non coverage in RCA]\label{ex:neq}
In the example of Section~\ref{sec:exrca}, 
$\mathrm{lfp}(EF^*)=\{\langle K_1^1, L_1^1\rangle, \langle K_2^1, L_2^1\rangle\}$ and
$\mathrm{gfp}(PQ^*)=\{\langle K_1^\star, L_1^\star\rangle, \langle K_2^\star, L_2^\star\rangle\}$.
The family $\{\langle K_1^\#, L_1^\#\rangle, \langle K_2^\#,$ $L_2^\#\rangle\}$ of Figure~\ref{fig:ex-2-noalt} belongs to $[\mathrm{lfp}(EF^*)\ \mathrm{gfp}(PQ^*)]$, but not to $\mathrm{fp}(EF^*)\cap\mathrm{fp}(PQ^*)$ as mentioned in Example~\ref{ex:selfsup}.
Figure~\ref{fig:exrcalattice} shows that 12 out of 16 elements of the interval are in this situation.
\end{example}

The layout of Figures~\ref{fig:fpstruct} and \ref{fig:conj} do not help understanding the situation, but Figure~\ref{fig:exrcalattice} illustrates the presence of non-acceptable objects within the interval.

This interval may be thought of as an approximation of the situation described by the initial context $K^0$.
For some purposes, this may be sufficient.
However, it may also be interesting to navigate within the set $\mathrm{fp}(EF^*)\cap\mathrm{fp}(PQ^*)$ of fixed points or to compute it.

In order to find the elements of $\mathrm{fp}(EF^*)\cap\mathrm{fp}(PQ^*)$, 
the closure of $EF^*$ and $PQ^*$, $EF^{*\infty}$ and $PQ^{*\infty}$, can be used as functions which maps elements of $\mathcal{O}$ 
into families of context-lattice pairs in $\mathrm{fp}(EF^*)$ and $\mathrm{fp}(PQ^*)$, respectively.
Moreover, Properties~\ref{prop:PQsintfpEFs} and \ref{prop:EFsintfpPQs} entail that $PQ^{*\infty}\circ EF^{*\infty}$ and $EF^{*\infty}\circ PQ^{*\infty}$ map any element of $\mathcal{O}$ into an acceptable family of context-lattice pairs in $\mathrm{fp}(EF^*)\cap\mathrm{fp}(PQ^*)$.
Hence, the set of acceptable solutions are those elements in the image of $\mathcal{O}$ by the composition of these two closure operations, in any order.

\begin{property}\label{prop:imPQsEFsfp}
$\mathrm{Im}(PQ^{*\infty}\circ EF^{*\infty}) = \mathrm{fp}(EF^*)\cap\mathrm{fp}(PQ^*) = \mathrm{Im}(EF^{*\infty}\circ PQ^{*\infty})$.
\end{property}
\begin{proof}
We show it for $PQ^{*\infty}\circ EF^{*\infty}$, the other part is dual:
\begin{description}
\item[$\subseteq$]
  By definition, $\mathrm{Im}(PQ^{*\infty}\circ EF^{*\infty})\subseteq\mathrm{Im}(PQ^{*\infty})=\mathrm{fp}(PQ^*)$.
  Moreover, $\mathrm{Im}(EF^{*\infty})=\mathrm{fp}(EF^*)$, but by Property~\ref{prop:PQsintfpEFs}, if $O\in\mathrm{fp}(EF^*)$, then $PQ^{*\infty}(O)\in\mathrm{fp}(EF^*)$.
  Hence, $\mathrm{Im}(PQ^{*\infty}\circ EF^{*\infty}) \subseteq \mathrm{fp}(EF^*)\cap\mathrm{fp}(PQ^*)$.
\item[$\supseteq$] $\forall O\in\mathrm{fp}(PQ^*)\cap\mathrm{fp}(EF^*)$,
  $O\in\mathrm{fp}(EF^*)$, thus $EF^{*\infty}(O)=O$ and $O\in\mathrm{fp}(PQ^*)$, thus $PQ^{*\infty}(O)=O$.
  Hence, $O=PQ^{*\infty}(EF^{*\infty}(O))=PQ^{*\infty}\circ EF^{*\infty}(O)\in \mathrm{Im}(PQ^{*\infty}\circ EF^{*\infty})$
  and consequently $\mathrm{fp}(EF^*)\cap\mathrm{fp}(PQ^*)\subseteq \mathrm{Im}(PQ^{*\infty}\circ EF^{*\infty})$.\qedhere
\end{description}
\end{proof}

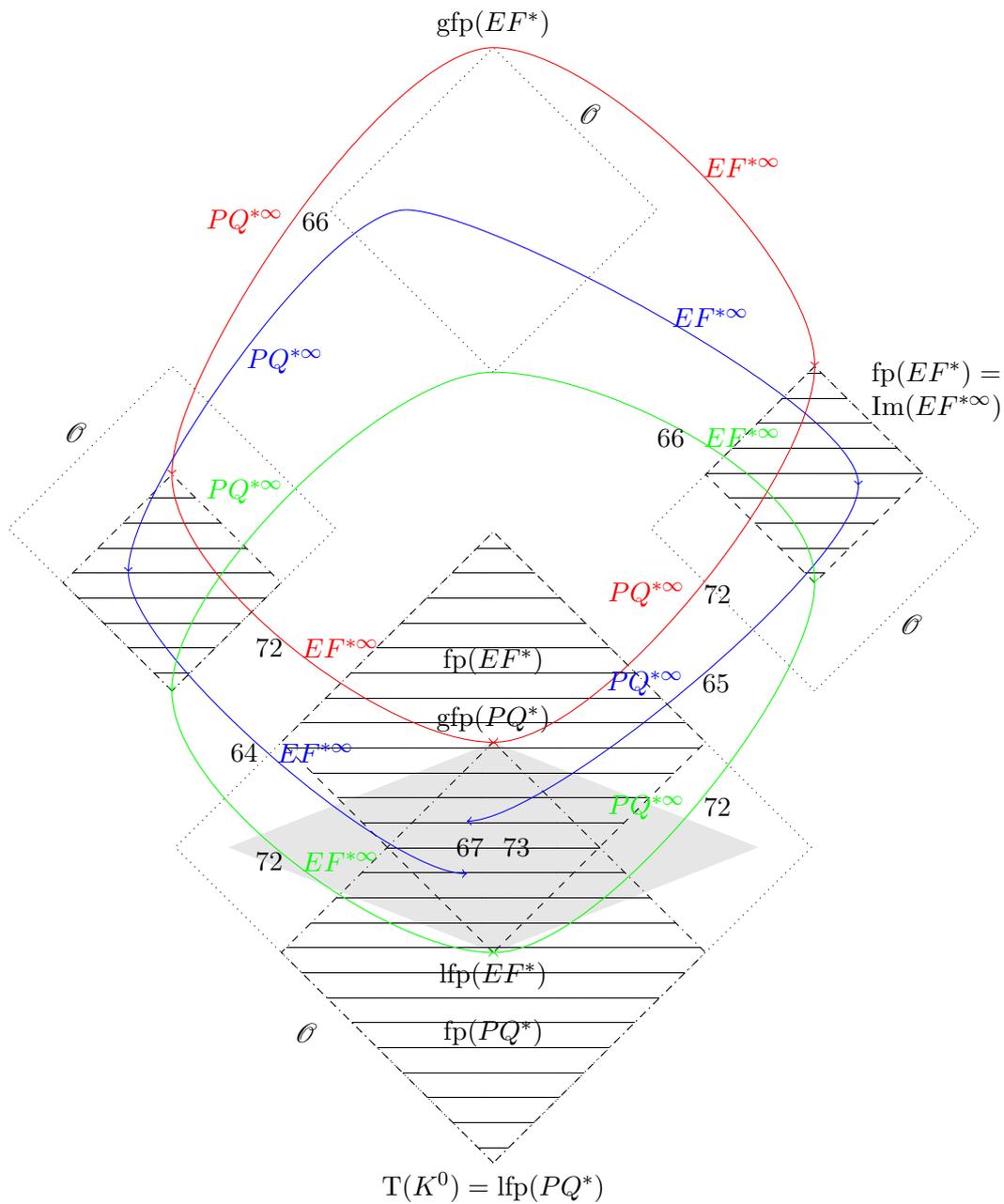
\begin{figure}
\centering
\begin{tikzpicture}[scale=1.5]

  \begin{scope}[scale=.72]
    \begin{scope}[rotate=45]
     \coordinate (lfpEF) at (-.75,-.75);
     \coordinate (fcak0) at (-1.5,-1.5);
     \coordinate (gfpEF) at (1.5,1.5);
     \coordinate (topEF) at (.75,.75);
     \coordinate (point) at (-0.8,0.8);
     \draw[dotted] (fcak0) rectangle (gfpEF);
   \end{scope}

   \draw (gfpEF) {} node[anchor=south] {$\mathrm{gfp}(EF^*)$};

   \draw (1.25,1.25) node {$\mathcal{O}$};

 \end{scope}

  \begin{scope}[xshift=3cm,yshift=-3cm,scale=.72]
    \begin{scope}[rotate=45]
     \coordinate (lfpEFr) at (-.5,-.5);
     \coordinate (fcak0r) at (-1.5,-1.5);
     \coordinate (gfpEFr) at (1.5,1.5);
     \coordinate (topEFr) at (.5,.5);
     \coordinate (EFx) at (0.8,0);
     \draw[dotted] (fcak0r) rectangle (gfpEFr);
     \fill[pattern={Lines[angle=90,distance=10pt]}] (lfpEFr) rectangle (gfpEFr);
     \draw[dashed] (lfpEFr) rectangle (gfpEFr);
   \end{scope}

   \draw (1.2,1.8) node[text width=1cm] {$\mathrm{fp}(EF^*)=$ $\mathrm{Im}(EF^{*\infty})$}; 

   \draw (1.25,-1.25) node {$\mathcal{O}$};

 \end{scope}

  \begin{scope}[xshift=-3cm,yshift=-3cm,scale=.72]
    \begin{scope}[rotate=45]
     \coordinate (lfpEFl) at (-.5,-.5);
     \coordinate (fcak0l) at (-1.5,-1.5);
     \coordinate (gfpEFl) at (1.5,1.5);
     \coordinate (topEFl) at (.5,.5);
     \coordinate (PQx) at (-0.8,0);
     \draw[dotted] (fcak0l) rectangle (gfpEFl);
     \fill[pattern={Lines[distance=10pt]}] (fcak0l) rectangle (topEFl);
     \draw[dashdotted] (fcak0l) rectangle (topEFl);
   \end{scope}

   \draw (-1.25,1.25) node {$\mathcal{O}$};

 \end{scope}

  \begin{scope}[yshift=-6cm,scale=1.4]
    \begin{scope}[rotate=45]
     \coordinate (lfpEFb) at (-.5,-.5);
     \coordinate (fcak0b) at (-1.5,-1.5);
     \coordinate (gfpEFb) at (1.5,1.5);
     \coordinate (topEFb) at (.5,.5);
     \coordinate (EFPQx) at (-.25,0);
     \coordinate (PQEFx) at (0,.25);
     \coordinate (EFPQ) at (0,0);
     \draw[dotted] (fcak0b) rectangle (gfpEFb);
     \fill[gray!20] (topEFb) -- (1.25,-1.25) -- (lfpEFb) -- (-1.25,1.25) -- cycle;
     \draw[dashed] (lfpEFb) rectangle (gfpEFb);
     \draw[dashdotted] (fcak0b) rectangle (topEFb);
     \fill[pattern={Lines[distance=10pt]}] (fcak0b) rectangle (topEFb);
     \fill[pattern={Lines[angle=90,distance=10pt]}] (lfpEFb) rectangle (gfpEFb);
   \end{scope}

   \draw (fcak0b) node[anchor=north] {$\mathrm{T}^*(K^0)=\mathrm{lfp}(PQ^*)$};
   \draw (lfpEFb) {} node[anchor=north] {$\mathrm{lfp}(EF^*)$}; 
   \draw (0,1.25) node {$\mathrm{fp}(EF^*)$}; 
   \draw (0,-1.25) node {$\mathrm{fp}(PQ^*)$};
   \draw (topEFb) node[anchor=south] {$\mathrm{gfp}(PQ^*)$}; 

   \draw (-1.25,-1.25) node {$\mathcal{O}$};

 \end{scope}

 \draw[->,red] (gfpEF) .. controls +(1,0) and +(0,1) .. node[right] {$EF^{*\infty}$} (gfpEFr);
 \draw[->,red] (gfpEF) .. controls +(-1,0) and +(0,1) .. node[left] {$PQ^{*\infty}$} node[right] {\ref{prop:EFsPQsbounds}} (topEFl);
 \draw[->,red] (topEFl) .. controls +(0,-1) and +(-1,0) .. node[right] {$EF^{*\infty}$} node[left] {\ref{prop:PQsEFseqEFsPQsbt}} (topEFb);
 \draw[->,red] (gfpEFr) .. controls +(0,-1) and +(1,0) .. node[left] {$PQ^{*\infty}$} node[right] {\ref{prop:PQsEFseqEFsPQsbt}} (topEFb);
 
   \draw[->,blue] (point) .. controls +(.75,0) and +(0,.75) .. node[right] {$EF^{*\infty}$} (EFx);
   \draw[->,blue] (point) .. controls +(-.75,0) and +(0,.75) .. node[right] {$PQ^{*\infty}$} (PQx);
   \draw[->,blue] (PQx) .. controls +(0,-.75) and +(-.75,0) .. node[right] {$EF^{*\infty}$} node[left] {\ref{prop:EFsintfpPQs}} (EFPQx);
   \draw[->,blue] (EFx) .. controls +(0,-.75) and +(.75,0) .. node[left] {$PQ^{*\infty}$} node[right] {\ref{prop:PQsintfpEFs}} (PQEFx);

 \draw[->,green] (fcak0) .. controls +(1,0) and +(0,1) .. node[right] {$EF^{*\infty}$} node[left] {\ref{prop:EFsPQsbounds}} (lfpEFr);
 \draw[->,green] (fcak0) .. controls +(-1,0) and +(0,1) .. node[left] {$PQ^{*\infty}$} (fcak0l);
 \draw[->,green] (fcak0l) .. controls +(0,-1) and +(-1,0) .. node[right] {$EF^{*\infty}$} node[left] {\ref{prop:PQsEFseqEFsPQsbt}} (lfpEFb);
 \draw[->,green] (lfpEFr) .. controls +(0,-1) and +(1,0) .. node[left] {$PQ^{*\infty}$} node[right] {\ref{prop:PQsEFseqEFsPQsbt}} (lfpEFb);

 \draw (EFPQ) node[right] {\ref{prop:EFsPQsinfPQsEFs}};
 \draw (EFPQ) node[left] {\ref{prop:infsupEFsPQs}};
   
\end{tikzpicture}
\caption{Illustration of Properties~\ref{prop:EFsintfpPQs}, \ref{prop:PQsintfpEFs}, \ref{prop:EFsPQsbounds}, \ref{prop:infsupEFsPQs}, \ref{prop:imPQsEFsfp}, \ref{prop:PQsEFseqEFsPQsbt} and \ref{prop:EFsPQsinfPQsEFs}. The figure displays four times $\mathcal{O}$ and the images of $\mathrm{gfp}(EF^*)$ (red), a random family of context-lattice pairs (blue) and $\mathrm{lfp}(PQ^*)=\mathrm{T}^*(K^0)$ (green) through $PQ^{*\infty}$ (left) and $EF^{*\infty}$ (right). $\mathrm{fp}(EF^*)$ is drawn in vertical lines; $\mathrm{fp}(PQ^*)$ in horizontal lines and the grey area depicts the interval $[\mathrm{lfp}(EF^*)\ \mathrm{gfp}(PQ^*)]$.}\label{fig:fpstruct}
\end{figure}

In addition, these functions are monotone and idempotent.

\begin{property}\label{prop:PQsEFsmonid}
  $PQ^{*\infty}\circ EF^{*\infty}$ (resp. $EF^{*\infty}\circ PQ^{*\infty}$) is order-preserving and idempotent:
  \begin{align*}
    \forall O, O'\in\mathcal{O}, O\preceq O' &\Rightarrow (PQ^{*\infty}\circ EF^{*\infty})(O)\preceq (PQ^{*\infty}\circ EF^{*\infty})(O'),\tag{monotony}\label{eq:PQsEFsmon}\\
    (PQ^{*\infty}\circ EF^{*\infty})\circ (PQ^{*\infty}\circ EF^{*\infty})(O) &= PQ^{*\infty}\circ EF^{*\infty}(O).\tag{idempotence}\label{eq:PQsEFsid}
  \end{align*}
\end{property}
\begin{proof}
  We prove it for $PQ^{*\infty}\circ EF^{*\infty}$, the $EF^{*\infty}\circ PQ^{*\infty}$ case is strictly dual.
\begin{description}
\item[\ref{eq:PQsEFsmon}] is obtained as the combination of order-preservation of the two functions: $O\preceq O'$, hence $EF^{*\infty}(O)\preceq EF^{*\infty}(O')$, and thus $PQ^{*\infty}\circ EF^{*\infty}(O)\preceq PQ^{*\infty}\circ EF^{*\infty}(O')$ (applying Property~\ref{prop:EFsPQsclos} twice).
\item[\ref{eq:PQsEFsid}] is obtained from Property~\ref{prop:imPQsEFsfp}: $\forall O\in\mathcal{O}$, $PQ^{*\infty}\circ EF^{*\infty}(O)\in\mathrm{fp}(EF^*)\cap\mathrm{fp}(PQ^*)$, hence $PQ^{*\infty}\circ EF^{*\infty}(O)=O$ and $PQ^{*\infty}\circ EF^{*\infty}\circ PQ^{*\infty}\circ EF^{*\infty}(O)=O$, thus $PQ^{*\infty}\circ EF^{*\infty}\circ PQ^{*\infty}\circ EF^{*\infty}(O) = PQ^{*\infty}\circ EF^{*\infty}(O)$.\qedhere
\end{description}
\end{proof}

The monotony of these functions entails that $\mathrm{fp}(EF^*)\cap\mathrm{fp}(PQ^*)$ is a complete lattice:
\begin{proposition}\label{prop:imPQsEFscompl}
$\langle\mathrm{fp}(EF^*)\cap\mathrm{fp}(PQ^*), \preceq\rangle$ is a complete sublattice of $\langle\mathcal{O}, \preceq\rangle$.
\end{proposition}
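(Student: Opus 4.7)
Let $J := \mathrm{fp}(EF^*) \cap \mathrm{fp}(PQ^*)$. The plan is to show (a) $J$ is nonempty, and (b) for every $S \subseteq J$, both $\bigwedge_\mathcal{O} S$ and $\bigvee_\mathcal{O} S$ computed in the ambient lattice $\mathcal{O}$ itself again belong to $J$; this is precisely what it means for $J$ to be a complete \emph{sublattice} of $\mathcal{O}$, rather than merely a complete lattice under $\preceq$. Point (a) is immediate from \PProp{prop:infsupEFsPQs}, which places $\mathrm{lfp}(EF^*)$ and $\mathrm{gfp}(PQ^*)$ in $J$.

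The plan for (b) leans on the rigid characterisation from \PProp{prop:accfp}: saturation and self-support together impose, piecewise,
\[
k_\xx(O) = M^0_\xx \cup \bigcup_{\varsigma \in \Omega,\, \zz \in \XX,\, r \in R_{\xx,\zz}} \{\varsigma r.C : C \in N(l_\zz(O))\},
\]
so each $O \in J$ is entirely determined by its family of concept-name sets $N(l(O))$. For the meet $O^\wedge := \bigwedge_\mathcal{O} S$, membership in $\mathrm{fp}(EF^*)$ is the standard closure-operator argument (use monotony to get $EF^*(O^\wedge) \preceq O$ for each $O \in S$, then extensivity from \Prop{prop:EFsmonext} for equality). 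Membership in $\mathrm{fp}(PQ^*)$ is the delicate step: by the rigid equality, $k(O^\wedge) = \bigcap_{O \in S} k(O)$ consists of attributes scaled from concepts belonging to every $N(l(O))$, and the aim is to show those shared concepts remain concepts of $l(O^\wedge) = \mathrm{FCA}^*(k(O^\wedge))$. Writing $N^\wedge := \bigcap_{O \in S} N(l(O))$, it suffices to prove $N(l(O^\wedge)) = N^\wedge$: a concept $C \in N^\wedge$ has a single well-defined extent across all $O$ (by \Prop{prop:IdependM}), closed under each $k(O)$; saturation on the codomain contexts guarantees that enough separating attributes referencing shared concepts already lie in every $k(O)$, hence in $k(O^\wedge)$, keeping the extent closed there. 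The dual argument handles the join $O^\vee := \bigvee_\mathcal{O} S$: $O^\vee \in \mathrm{fp}(PQ^*)$ by the interior-operator closure (using anti-extensivity and monotony, \Prop{prop:PQsantextmon}), and $O^\vee \in \mathrm{fp}(EF^*)$ by showing that extents appearing in $l(O^\vee)$ are either already extents of some $l(O)$ (whose scaled attributes are inherited from $\bigcup_{O\in S} k(O)$) or trivial bottoms.

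The main obstacle is exactly the identity $N(l(O^\wedge)) = \bigcap_{O \in S} N(l(O))$ and its dual $N(l(O^\vee)) = \bigcup_{O \in S} N(l(O))$ (up to trivial bottom concepts). In general $\mathrm{FCA}^*$ does not commute with $\bigcap$ or $\bigcup$ of contexts this cleanly; what makes the identities succeed is precisely the combined saturation/self-support rigidity enjoyed by elements of $J$, which forbids both the collapse of a shared extent in the meet (which would orphan its scaled attributes and break self-support) and the emergence of an unsupported new extent in the join (which would break saturation). It is crucial that the hypothesis is $O \in J$ rather than $O \in \mathrm{fp}(EF^*)$ or $O \in \mathrm{fp}(PQ^*)$ alone: the proof relies on the full acceptability equality $k(O) = M^0 \cup D_{\Omega,R,\{N(l(O))\}}$ to tie the attribute arithmetic to the concept arithmetic in a way that is preserved under the $\mathcal{O}$-meet and $\mathcal{O}$-join.
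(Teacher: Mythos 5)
Your two ``easy'' closure facts are fine: for $S\subseteq J:=\mathrm{fp}(EF^*)\cap\mathrm{fp}(PQ^*)$, the ambient meet $O^\wedge$ lies in $\mathrm{fp}(EF^*)$ by monotony plus extensivity, and the ambient join $O^\vee$ lies in $\mathrm{fp}(PQ^*)$ by monotony plus anti-extensivity. But the two cross facts --- $O^\wedge\in\mathrm{fp}(PQ^*)$ and $O^\vee\in\mathrm{fp}(EF^*)$ --- are exactly where the content is, and there you assert rather than prove the identities $N(l(O^\wedge))=\bigcap_{O\in S}N(l(O))$ and $N(l(O^\vee))=\bigcup_{O\in S}N(l(O))$. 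Neither follows from anything established in the paper, and the one-sentence justification (``saturation on the codomain contexts guarantees that enough separating attributes \dots already lie in every $k(O)$'') is not an argument. For the meet, a shared extent $A\in\bigcap_O N(l_\zz(O))$ may be carved out in each $k_\zz(O)$ by attributes $\varsigma r.C$ whose supporting concepts $C$ are \emph{not} shared; none of those attributes survives in $\bigcap_O k_\zz(O)$, and you give no reason why $A$ remains closed there. For the join the situation is worse: since each $N(l_\zz(O))$ is the closure system generated by the extents of its attributes, $N(l_\zz(O^\vee))$ consists of all cross-intersections $\bigcap_{O\in S}P_O$ with $P_O\in N(l_\zz(O))$, and such an intersection is in general a genuinely new extent lying in no single $N(l_\zz(O))$ --- in which case $\sigma_\Omega$ scales a fresh attribute from it and $O^\vee\notin\mathrm{fp}(EF^*)$. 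Your dichotomy ``either already extents of some $l(O)$ or trivial bottoms'' is precisely the claim needing proof, and nothing in the paper supports it.

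The gap matters because you are proving a strictly stronger reading of ``complete sublattice'' than the paper's own proof delivers. The paper's argument is two lines: $J=\mathrm{Im}(PQ^{*\infty}\circ EF^{*\infty})=\mathrm{fp}(PQ^{*\infty}\circ EF^{*\infty})$ by \Prop{prop:imPQsEFsfp} and idempotence (\Prop{prop:PQsEFsmonid}), and Knaster--Tarski applied to this monotone map on the complete lattice $\mathcal{O}$ shows $J$ is a complete lattice under the induced order. That route yields \emph{internal} meets and joins (the join of $S$ in $J$ is $EF^{*\infty}(\bigvee_{\mathcal{O}}S)$, not $\bigvee_{\mathcal{O}}S$ itself) and deliberately sidesteps the commutation-of-$\mathrm{FCA}^*$-with-$\bigcap$/$\bigcup$ problem you ran into. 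Either retreat to that weaker, Knaster--Tarski-style statement, or supply genuine proofs of the two set identities; as written, the central step of your argument is missing.
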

\begin{proof}
$\mathrm{fp}(EF^*)\cap\mathrm{fp}(PQ^*)=\mathrm{Im}(PQ^{*\infty}\circ EF^{*\infty})$ (Property~\ref{prop:imPQsEFsfp}) and $\mathrm{Im}(PQ^{*\infty}\circ EF^{*\infty})=\mathrm{fp}(PQ^{*\infty}\circ EF^{*\infty})$ due to idempotence (Property~\ref{prop:PQsEFsmonid}), hence the Knaster-Tarski theorem can be applied based on Property~\ref{prop:PQsEFsmonid} (monotony), concluding that it is a complete lattice.
It is included in $\mathcal{O}$, thus this is a sublattice of $\langle\mathcal{O}, \preceq\rangle$.
\end{proof}
This is illustrated by Example~\ref{ex:interval}.

\begin{example}[Interval lattice]\label{ex:interval}
Figure~\ref{fig:exrcalattice} shows all elements of $[\mathrm{lfp}(EF^*)\ \mathrm{gfp}(PQ^*)]$ for the example of Section~\ref{sec:exrca}.
It can be observed that $\langle\mathrm{fp}(EF^*)\cap\mathrm{fp}(PQ^*), \preceq\rangle$ is a proper sublattice of $\mathcal{O}$.
Actually only 4 out of 16 possible objects in the interval are acceptable.

In the figure, direct edges corresponding to $EF^*$ or $PQ^*$, from lattice pairs of level 2 and 4, are drawn in solid or dashed, respectively.
All the objects, of level 3, that are not comparable with the two intermediate fixed points map to the extrema of the interval and thus $EF^*$ are $PQ^*$ are not displayed.
\end{example}

However, the functions $PQ^{*\infty}\circ EF^{*\infty}$ and $PQ^{*\infty}\circ EF^{*\infty}$ are not necessarily extensive, nor anti-extensive (see Figure~\ref{fig:conj} and Example~\ref{ex:notEFPQeqPQEF}, p.~\pageref{ex:notEFPQeqPQEF}).
Hence, they would not be closure operations.

\begin{figure}
\tikzstyle{object} = [rounded corners,dotted,draw]
\tikzstyle{admobject} = [object,solid]
  \centering
  \begin{tikzpicture}[xscale=.6,font=\footnotesize]

    \begin{scope}[very thin]
        \draw (0,2) node (AB) {$AB$};
        \draw (-1,1) node (A) {$A$};
        \draw (1,1) node (B) {$B$};
        \draw (0,0) node (bot) {$\bot$};

        \draw (AB) -- (A);
        \draw (AB) -- (B);
        \draw (A) -- (bot);
        \draw (B) -- (bot);

        \draw (1.5,0) node {\footnotesize $L_3^\star, L_4^\star$};

      \begin{scope}[xshift=3cm]
        \draw (0,2) node (CD) {$CD$};
        \draw (-1,1) node (C) {$C$};
        \draw (1,1) node (D) {$D$};
        \draw (0,0) node (bot) {$\bot$};

        \draw (CD) -- (C);
        \draw (CD) -- (D);
        \draw (C) -- (bot);
        \draw (D) -- (bot);
      \end{scope}
      \node [admobject,fit=(AB) (CD) (bot) (A) (D)] (gfp) {};
    \end{scope}

    \begin{scope}[yshift=-4cm]
    \begin{scope}[very thin,xshift=-9cm]
        \draw (0,2) node (AB) {$AB$};
        \draw (-1,1) node (A) {$A$};
        \draw (1,1) node (B) {$B$};
        \draw (0,0) node (bot) {$\bot$};

        \draw (AB) -- (A);
        \draw (AB) -- (B);
        \draw (A) -- (bot);
        \draw (B) -- (bot);

      \begin{scope}[xshift=2cm]
        \draw (0,2) node (CD) {$CD$};
        \draw (0,1) node (C) {$C$};

        \draw (CD) -- (C);
      \end{scope}
      \node [object,fit=(AB) (CD) (bot) (A)] (l11) {};
    \end{scope}

    \begin{scope}[very thin,xshift=-3cm]
        \draw (0,2) node (AB) {$AB$};
        \draw (-1,1) node (A) {$A$};
        \draw (1,1) node (B) {$B$};
        \draw (0,0) node (bot) {$\bot$};

        \draw (AB) -- (A);
        \draw (AB) -- (B);
        \draw (A) -- (bot);
        \draw (B) -- (bot);

      \begin{scope}[xshift=2cm]
        \draw (0,2) node (CD) {$CD$};
        \draw (0,1) node (D) {$D$};

        \draw (CD) -- (D);
      \end{scope}
      \node [object,fit=(AB) (CD) (bot) (A)] (l12) {};
    \end{scope}

    \begin{scope}[very thin,xshift=3cm]
        \draw (0,2) node (AB) {$AB$};
        \draw (0,1) node (A) {$A$};

        \draw (AB) -- (A);

      \begin{scope}[xshift=2cm]
        \draw (0,2) node (CD) {$CD$};
        \draw (-1,1) node (C) {$C$};
        \draw (1,1) node (D) {$D$};
        \draw (0,0) node (bot) {$\bot$};

        \draw (CD) -- (C);
        \draw (CD) -- (D);
        \draw (C) -- (bot);
        \draw (D) -- (bot);
      \end{scope}
      \node [object,fit=(AB) (CD) (bot) (D)] (l13) {};
    \end{scope}

    \begin{scope}[very thin,xshift=9cm]
        \draw (0,2) node (AB) {$AB$};
        \draw (0,1) node (B) {$B$};

        \draw (AB) -- (B);

      \begin{scope}[xshift=2cm]
        \draw (0,2) node (CD) {$CD$};
        \draw (-1,1) node (C) {$C$};
        \draw (1,1) node (D) {$D$};
        \draw (0,0) node (bot) {$\bot$};

        \draw (CD) -- (C);
        \draw (CD) -- (D);
        \draw (C) -- (bot);
        \draw (D) -- (bot);
      \end{scope}
      \node [object,fit=(AB) (CD) (bot) (D)] (l14) {};
    \end{scope}
    \end{scope}

    \begin{scope}[yshift=-8cm]
    \begin{scope}[very thin,xshift=-5.5cm]
        \draw (0,2) node (AB) {$AB$};
        \draw (-1,1) node (A) {$A$};
        \draw (1,1) node (B) {$B$};
        \draw (0,0) node (bot) {$\bot$};

        \draw (AB) -- (A);
        \draw (AB) -- (B);
        \draw (A) -- (bot);
        \draw (B) -- (bot);

      \begin{scope}[xshift=1.5cm]
        \draw (0,2) node (CD) {$CD$};
      \end{scope}
      \node [object,fit=(AB) (CD) (A) (bot)] (l21) {};
    \end{scope}

    \begin{scope}[very thin,xshift=5.5cm]
        \draw (0,2) node (AB) {$AB$};

      \begin{scope}[xshift=1.5cm]
        \draw (0,2) node (CD) {$CD$};
        \draw (-1,1) node (C) {$C$};
        \draw (1,1) node (D) {$D$};
        \draw (0,0) node (bot) {$\bot$};

        \draw (CD) -- (C);
        \draw (CD) -- (D);
        \draw (C) -- (bot);
        \draw (D) -- (bot);
      \end{scope}
      \node [object,fit=(AB) (CD) (D) (bot)] (l22) {};
    \end{scope}

    \end{scope}

    \begin{scope}[yshift=-8.5cm]
    \begin{scope}[very thin,xshift=-10cm,yshift=.25cm]
        \draw (0,2) node (AB) {$AB$};
        \draw (0,1) node (A) {$A$};

        \draw (AB) -- (A);

      \begin{scope}[xshift=1.5cm]
        \draw (0,2) node (CD) {$CD$};
        \draw (0,1) node (C) {$C$};

        \draw (CD) -- (C);
      \end{scope}
      \draw (.75,.5) node (O0) {\footnotesize $L_3', L_4'$};
      \node [admobject,fit=(AB) (CD) (O0) (A) (C)] (fp1) {};
    \end{scope}

    \begin{scope}[very thin,xshift=-2cm,yshift=.25cm]
        \draw (0,2) node (AB) {$AB$};
        \draw (0,1) node (A) {$A$};

        \draw (AB) -- (A);

      \begin{scope}[xshift=1.5cm]
        \draw (0,2) node (CD) {$CD$};
        \draw (0,1) node (D) {$D$};

        \draw (CD) -- (D);
      \end{scope}
      \draw (.75,.5) node (O0) {\footnotesize $L_3^\#, L_4^\#$};
      \node [object,fit=(AB) (CD) (O0)] (l23) {};
    \end{scope}

    \begin{scope}[very thin,xshift=2cm]
        \draw (0,2) node (AB) {$AB$};
        \draw (0,1) node (B) {$B$};

        \draw (AB) -- (B);

      \begin{scope}[xshift=1.5cm]
        \draw (0,2) node (CD) {$CD$};
        \draw (0,1) node (C) {$C$};

        \draw (CD) -- (C);
      \end{scope}
      \node [object,fit=(AB) (CD) (C) (B)] (l24) {};
    \end{scope}

    \begin{scope}[very thin,xshift=10cm,yshift=.25cm]
        \draw (0,2) node (AB) {$AB$};
        \draw (0,1) node (B) {$B$};

        \draw (AB) -- (B);

      \begin{scope}[xshift=1.5cm]
        \draw (0,2) node (CD) {$CD$};
        \draw (0,1) node (D) {$D$};

        \draw (CD) -- (D);
      \end{scope}
      \draw (.75,.5) node (O0) {\footnotesize $L_3'', L_4''$};
      \node [admobject,fit=(AB) (CD) (O0) (B) (D)] (fp2) {};
    \end{scope}
    \end{scope}

    \begin{scope}[yshift=-12cm]
    \begin{scope}[very thin,xshift=-9cm]
        \draw (0,2) node (AB) {$AB$};
        \draw (0,1) node (A) {$A$};

        \draw (AB) -- (A);

      \begin{scope}[xshift=1.5cm]
        \draw (0,2) node (CD) {$CD$};
      \end{scope}
      \node [object,fit=(AB) (CD) (A)] (l41) {};
    \end{scope}

    \begin{scope}[very thin,xshift=-3cm]
        \draw (0,2) node (AB) {$AB$};
        \draw (0,1) node (B) {$B$};

        \draw (AB) -- (B);

      \begin{scope}[xshift=1.5cm]
        \draw (0,2) node (CD) {$CD$};
      \end{scope}
      \node [object,fit=(AB) (CD) (B)] (l42) {};
    \end{scope}

    \begin{scope}[very thin,xshift=3cm]
        \draw (0,2) node (AB) {$AB$};

      \begin{scope}[xshift=1.5cm]
        \draw (0,2) node (CD) {$CD$};
        \draw (0,1) node (C) {$C$};

        \draw (CD) -- (C);
      \end{scope}
      \node [object,fit=(AB) (CD) (C)] (l43) {};
    \end{scope}

    \begin{scope}[very thin,xshift=9cm]
        \draw (0,2) node (AB) {$AB$};

      \begin{scope}[xshift=1.5cm]
        \draw (0,2) node (CD) {$CD$};
        \draw (0,1) node (D) {$D$};
        \draw (CD) -- (D);
      \end{scope}
    \end{scope}
      \node [object,fit=(AB) (CD) (D)] (l44) {};
    \end{scope}

    \begin{scope}[yshift=-13cm]
    \begin{scope}[very thin]
      \draw (0,0) node (AB) {$AB$};

      \begin{scope}[xshift=1.5cm]
        \draw (0,0) node (CD) {$CD$};
      \end{scope}
      \draw (.75,-.5) node (O0) {\footnotesize $L_3^1, L_4^1$};

      \node [admobject,fit=(AB) (CD) (O0)] (lfp) {};
    \end{scope}
    \end{scope}

    \begin{scope}[dotted]
    \draw[solid,<-] (gfp.south) -- (l11.north);
    \draw[solid,<-] (gfp.south) -- (l12.north);
    \draw[solid,<-] (gfp.south) -- (l13.north);
    \draw[solid,<-] (gfp.south) -- (l14.north);

    \draw[dashed,<-] (fp1.north) -- (l11.south);
    \draw[dashed,<-] (fp1.north) -- (l13.south);
    \draw (l21.north) -- (l11.south);
    \draw (l21.north) -- (l12.south);
    \draw (l22.north) -- (l11.south);
    \draw (l23.north) -- (l12.south);
    \draw (l23.north) -- (l13.south);
    \draw (l22.north) -- (l14.south);
    \draw (l24.north) -- (l14.south);
    \draw (l24.north) -- (l13.south);
    \draw[dashed,<-] (fp2.north) -- (l14.south);
    \draw[dashed,<-] (fp2.north) -- (l12.south);

    \draw[solid,<-] (fp1.south) -- (l41.north);
    \draw[solid,<-] (fp1.south) -- (l43.north);
    \draw (l21.south) -- (l41.north);
    \draw (l21.south) -- (l44.north);
    \draw (l22.south) -- (l41.north);
    \draw (l23.south) -- (l44.north);
    \draw (l23.south) -- (l43.north);
    \draw (l22.south) -- (l42.north);
    \draw (l24.south) -- (l42.north);
    \draw (l24.south) -- (l43.north);
    \draw[solid,<-] (fp2.south) -- (l42.north);
    \draw[solid,<-] (fp2.south) -- (l44.north);

    \draw[dashed,<-] (lfp.north) -- (l41.south);
    \draw[dashed,<-] (lfp.north) -- (l42.south);
    \draw[dashed,<-] (lfp.north) -- (l43.south);
    \draw[dashed,<-] (lfp.north) -- (l44.south);
  \end{scope}
  
\end{tikzpicture}
\caption{All the families of lattices belonging to $[\mathrm{lfp}(EF^*)\ \mathrm{gfp}(PQ^*)]$ in the example of Section~\ref{sec:exrca}. 
Those in $\mathrm{fp}(EF^*)\cap\mathrm{fp}(PQ^*)$ are within solid boxes.
As usual, only direct edges are displayed.
Solid arrows show direct application of $EF^*$ and dashed arrows show direct application of $PQ^*$.
Dotted arrows are order relations not corresponding to $EF^*$ or $PQ^*$ applications.}\label{fig:exrcalattice}
\end{figure}

For any family of context-lattice pairs within the fixed points, i.e. $\mathrm{fp}(EF^*)$ \textit{or} $\mathrm{fp}(PQ^*)$ (the vertically or horizontally stripped area of Figure~\ref{fig:conj}), the two functions are equal.

\begin{property}\label{prop:PQsEFseqEFsPQsonUnion}
$\forall O\in\mathrm{fp}(EF^*)\cup\mathrm{fp}(PQ^*), PQ^{*\infty}\circ EF^{*\infty}(O) = EF^{*\infty}\circ PQ^{*\infty}(O)$.
\end{property}
\begin{proof}
For any lattice $O$ belonging to $\mathrm{fp}(EF^*)\cap\mathrm{fp}(PQ^*)$,
$PQ^*(O)=EF^*(O)=O$, hence $PQ^{*\infty}\circ EF^{*\infty}(O) = EF^{*\infty}\circ PQ^{*\infty}(O)=O$.
Similarly, for any lattice $O$ belonging to $\mathrm{fp}(EF^*)$, then $EF^{*\infty}(O)=O$, so $PQ^{*\infty}\circ EF^{*\infty}(O)=PQ^{*\infty}(O)$.
However, by Property~\ref{prop:EFsintfpPQs}, since $O\in\mathrm{fp}(EF^*)$, $PQ^{*\infty}(O)\in\mathrm{fp}(EF^*)$.
This means that $EF^{*\infty}\circ PQ^{*\infty}(O)=PQ^{*\infty}(O)$ as well.
The same can be proved for $O\in\mathrm{fp}(PQ^*)$ with Property~\ref{prop:PQsintfpEFs}.
\end{proof}

What is actually shown by the proof of Property~\ref{prop:PQsEFseqEFsPQsonUnion} is that:
\begin{align*}
\text{if } O\in\mathrm{fp}(EF^*) &\text{ then } PQ^{*\infty}\circ EF^{*\infty}(O) = EF^{*\infty}\circ PQ^{*\infty}(O)=PQ^{*\infty}(O),\\
\text{if } O\in\mathrm{fp}(PQ^*) &\text{ then } PQ^{*\infty}\circ EF^{*\infty}(O) = EF^{*\infty}\circ PQ^{*\infty}(O)=EF^{*\infty}(O).
\end{align*}

In particular, this applies to the bounds of $\mathrm{fp}(EF^*)\cup\mathrm{fp}(PQ^*)$:
\begin{property}\label{prop:PQsEFseqEFsPQsbt}
\begin{align*}
  EF^{*\infty}\circ PQ^{*\infty}(\mathrm{gfp}(EF^*)) &= PQ^{*\infty}\circ EF^{*\infty}(\mathrm{gfp}(EF^*)) = \mathrm{gfp}(PQ^*),\\
\intertext{and}
  PQ^{*\infty}\circ EF^{*\infty}(\mathrm{lfp}(PQ^*)) &= EF^{*\infty}\circ PQ^{*\infty}(\mathrm{lfp}(PQ^*)) = \mathrm{lfp}(EF^*).
\end{align*}
\end{property}
\begin{proof}
The first part of these equations are consequences of Property~\ref{prop:PQsEFseqEFsPQsonUnion}, since $\mathrm{gfp}(EF^*)$ and $\mathrm{lfp}(PQ^*)$ belong to $\mathrm{fp}(EF^*)$ and $\mathrm{fp}(PQ^*)$, respectively. 
The second part is due to $PQ^{*\infty}\circ EF^{*\infty}(\mathrm{gfp}(EF^*)) = PQ^{*\infty}(\mathrm{gfp}(EF^*))$ and $EF^{*\infty}\circ PQ^{*\infty}(\mathrm{lfp}(PQ^*))$ $=$ $EF^{*\infty}(\mathrm{lfp}(PQ^*))$ for the same reason that $\mathrm{gfp}(EF^*)\in \mathrm{fp}(EF^*)$ and $\mathrm{lfp}(PQ^*)\in \mathrm{fp}(PQ^*)$, respectively.
Property~\ref{prop:EFsPQsbounds} shows that the second terms correspond to $\mathrm{gfp}(PQ^*)$ and $\mathrm{lfp}(EF^*)$, respectively.
\end{proof}

Examples~\ref{ex:notEFPQeqPQEFrca0} and \ref{ex:notEFPQeqPQEF} show that $EF^{*\infty}\circ PQ^{*\infty}(O^\#) \prec O^\#\prec PQ^{*\infty}\circ EF^{*\infty}(O^\#)$ hence that the equality does not hold in general.

\begin{example}[Counterexample to equality in RCA$^0$]\label{ex:notEFPQeqPQEFrca0}
Consider $O_0^\#=\langle K_0^\#,L_0^\#\rangle$ of Figure~\ref{fig:ex-rca0-noalt} (p.~\pageref{fig:ex-rca0-noalt}) in the context of Example~\ref{sec:elabexrca0} (p.~\pageref{sec:elabexrca0}).
$O_0^\#\in\mathcal{O}$ because all attributes belong to $M_0^0\cup D_{\Omega,R,K^0}$ and $L_0^\#=\mathrm{FCA}^*(K_0^\#)$.

In fact, $O_0^\#\in[\mathrm{lfp}(EF^*)\ \mathrm{gfp}(PQ^*)]=[\langle K_0^1, L_0^1\rangle\ \langle K_0^\star, L_0^\star\rangle]$,
but $O_0^\#\not\in\mathrm{fp}(EF^*)\cap\mathrm{fp}(PQ^*)$ as explained in Example~\ref{ex:neq0}.

Indeed, applying $EF^{*\infty}$ returns $EF^{*\infty}(O_0^\#)$:
\begin{center}\footnotesize
\begin{tabular}{cc}
\begin{minipage}{.35\textwidth}
\setlength{\tabcolsep}{2pt}
\begin{center}
\begin{tabular}{r|ccccccc}
$k(EF^{*\infty}(O_0^\#))$    & \rotatebox{90}{$\exists r.B$} & \rotatebox{90}{$\exists r.C$} & \rotatebox{90}{$\exists r.D$} & \rotatebox{90}{$\exists r.AB$} & \rotatebox{90}{$\exists r.ABC$} & \rotatebox{90}{$\exists r.ABD$}& \rotatebox{90}{$\exists r.ABCD$}\\\hline
$a$ & $\times$ &          &          & $\times$ & $\times$ & $\times$ & $\times$        \\
$b$ & $\times$ &          &          & $\times$ & $\times$ & $\times$ & $\times$        \\
$c$ &          &          &          &          &          & $\times$ & $\times$        \\
$d$ &          & $\times$ & $\times$ &          & $\times$ &          & $\times$ \\
\end{tabular}
\end{center}
\end{minipage} &
\begin{minipage}{.6\textwidth}
\begin{center}
\begin{tikzpicture}[xscale=.3,yscale=.3]
    \begin{dot2tex}[dot,tikz,codeonly,options=-traw]
      graph {
	graph [nodesep=1.5]
	node [style=concept]
	ABCD [label="$\exists r.ABCD$
\nodepart{two}
$\empty$"]

	ABC [label="$\exists r.ABD$
\nodepart{two}
$\empty$"]

	ABD [label="$\exists r.ABC$
\nodepart{two}
$\empty$"]

	D [label="$\exists r.C$
\nodepart{two}
$d$"]

	AB [label="$\exists r.B, \exists r.AB$
\nodepart{two}
$a, b$"]

	C [label="$\exists r.D$
\nodepart{two}
$c$"]

        C0 [label="$\empty$
\nodepart{two}
$\empty$"]
        ABCD -- ABC
        ABCD -- ABD
        ABC -- C
        ABC -- AB
        ABD -- AB
        ABD -- D
        C -- C0
        AB -- C0
        D -- C0
      }
    \end{dot2tex}
    \node[anchor=west] at (ABCD.east) {$ABCD$};
    \node[anchor=west] at (ABC.east) {$ABC$};
    \node[anchor=west] at (ABD.east) {$ABD$};
    \node[anchor=west] at (AB.east) {$AB$};
    \node[anchor=east] at (C.west) {$C$};
    \node[anchor=east] at (D.west) {$D$};
    \node[anchor=west] at (C0.east) {$\bot$};
    \draw (1,3) node {$l(EF^{*\infty}(O_0^\#))$:};
\end{tikzpicture}
\end{center}
\end{minipage}
\end{tabular}
\end{center}

Applying $PQ^{*\infty}$ returns $PQ^{*\infty}(O_0^\#)$:
\begin{center}\footnotesize
\begin{tabular}{cc}
\begin{minipage}{.4\textwidth}
\setlength{\tabcolsep}{2pt}
\begin{center}
\begin{tabular}{r|c}
$k(PQ^{*\infty}(O_0^\#))$    & \rotatebox{90}{$\exists r.AB$} \\\hline
$a$ & $\times$ \\
$b$ & $\times$ \\
$c$ &          \\
$d$ &          \\
\end{tabular}
\end{center}
\end{minipage} &
\begin{minipage}{.6\textwidth}
\begin{center}
\begin{tikzpicture}[xscale=.3,yscale=.3]
    \begin{dot2tex}[dot,tikz,codeonly,options=-traw]
      graph {
	graph [nodesep=1.5]
	node [style=concept]
	ABCD [label="$\empty$
\nodepart{two}
$c, d$"]

	AB [label="$\exists r.AB$
\nodepart{two}
$a, b$"]
        ABCD -- AB
      }
    \end{dot2tex}
    \node[anchor=west] at (ABCD.east) {$ABCD$};
    \node[anchor=west] at (AB.east) {$AB$};
    \draw (-4,3) node {$l(PQ^{*\infty}(O_0^\#)):$};
\end{tikzpicture}
\end{center}
\end{minipage}
\end{tabular}
\end{center}

None of $EF^{*\infty}(O_0^\#)$ nor $PQ^{*\infty}(O_0^\#)$ is a fixed point for $PQ^*$ and $EF^*$, respectively.
Indeed, $PQ^{*\infty}\circ EF^{*\infty}(O_0^\#)$ is:
\begin{center}\footnotesize
\begin{tabular}{cc}
\begin{minipage}{.35\textwidth}
\setlength{\tabcolsep}{2pt}
\begin{center}
\begin{tabular}{r|cccccc}
$k(PQ^{*\infty}\circ EF^{*\infty}(O_0^\#))$    & \rotatebox{90}{$\exists r.C$} & \rotatebox{90}{$\exists r.D$} & \rotatebox{90}{$\exists r.AB$} & \rotatebox{90}{$\exists r.ABC$} & \rotatebox{90}{$\exists r.ABD$}& \rotatebox{90}{$\exists r.ABCD$}\\\hline
$a$ &          &          & $\times$ & $\times$ & $\times$ & $\times$        \\
$b$ &          &          & $\times$ & $\times$ & $\times$ & $\times$        \\
$c$ &          &          &          &          & $\times$ & $\times$        \\
$d$ & $\times$ & $\times$ &          & $\times$ &          & $\times$ \\
\end{tabular}
\end{center}
\end{minipage} &
\begin{minipage}{.65\textwidth}
\begin{center}
\begin{tikzpicture}[xscale=.3,yscale=.3]
    \begin{dot2tex}[dot,tikz,codeonly,options=-traw]
      graph {
	graph [nodesep=1.5]
	node [style=concept]
	ABCD [label="$\exists r.ABCD$
\nodepart{two}
$\empty$"]

	ABC [label="$\exists r.ABD$
\nodepart{two}
$\empty$"]

	ABD [label="$\exists r.ABC$
\nodepart{two}
$\empty$"]

	D [label="$\exists r.C$
\nodepart{two}
$d$"]

	AB [label="$\exists r.AB$
\nodepart{two}
$a, b$"]

	C [label="$\exists r.D$
\nodepart{two}
$c$"]

        C0 [label="$\empty$
\nodepart{two}
$\empty$"]
        ABCD -- ABC
        ABCD -- ABD
        ABC -- C
        ABC -- AB
        ABD -- AB
        ABD -- D
        C -- C0
        AB -- C0
        D -- C0
      }
    \end{dot2tex}
    \node[anchor=west] at (ABCD.east) {$ABCD$};
    \node[anchor=west] at (ABC.east) {$ABC$};
    \node[anchor=west] at (ABD.east) {$ABD$};
    \node[anchor=west] at (AB.east) {$AB$};
    \node[anchor=east] at (C.west) {$C$};
    \node[anchor=east] at (D.west) {$D$};
    \node[anchor=west] at (C0.east) {$\bot$};
    \draw (0,2.5) node {$l(PQ^{*\infty}\circ EF^{*\infty}(O_0^\#))$:};
\end{tikzpicture}
\end{center}
\end{minipage}
\end{tabular}
\end{center}

\noindent and $EF^{*\infty}\circ PQ^{*\infty}(O_0^\#)$:
\begin{center}\footnotesize
\begin{tabular}{cc}
\begin{minipage}{.4\textwidth}
\setlength{\tabcolsep}{2pt}
\begin{center}
\begin{tabular}{r|cc}
$k(EF^{*\infty}\circ PQ^{*\infty}(O_0^\#))$    & \rotatebox{90}{$\exists r.AB$} & \rotatebox{90}{$\exists r.ABCD$} \\\hline
$a$ & $\times$ & $\times$ \\
$b$ & $\times$ & $\times$ \\
$c$ &          & $\times$ \\
$d$ &          & $\times$ \\
\end{tabular}
\end{center}
\end{minipage} &
\begin{minipage}{.6\textwidth}
\begin{center}
\begin{tikzpicture}[xscale=.3,yscale=.3]
    \begin{dot2tex}[dot,tikz,codeonly,options=-traw]
      graph {
	graph [nodesep=1.5]
	node [style=concept]
	ABCD [label="$\exists r.ABCD$
\nodepart{two}
$c, d$"]

	AB [label="$\exists r.AB$
\nodepart{two}
$a, b$"]
        ABCD -- AB
      }
    \end{dot2tex}
    \node[anchor=west] at (ABCD.east) {$ABCD$};
    \node[anchor=west] at (AB.east) {$AB$};
    \draw (-5,3) node {$l(EF^{*\infty}\circ PQ^{*\infty}(O_0^\#))$:};
\end{tikzpicture}
\end{center}
\end{minipage}
\end{tabular}
\end{center}

Now, $PQ^{*\infty}\circ EF^{*\infty}(O_0^\#)$ and $EF^{*\infty}\circ PQ^{*\infty}(O_0^\#)$ belong to $\mathrm{fp}(EF^*)\cap\mathrm{fp}(PQ^*)$.
Yet they are not isomorphic.
In fact, $EF^{*\infty}\circ PQ^{*\infty}(O_0^\#)\prec PQ^{*\infty}\circ EF^{*\infty}(O_0^\#)$
This is the result of $\sigma$ which may add needed support ($ABD$ from $ABC$) and $\pi$ which may suppress unsupported concepts ($ABC$ missing $ABD$).
\end{example}

\begin{example}[Counterexample to equality in RCA]\label{ex:notEFPQeqPQEF}
Consider Example~\ref{sec:exrca} (p.~\pageref{sec:exrca}),
$\mathrm{lfp}(EF^*)$ $=$ $O_{12}^1=\{\langle K_1^1, L_1^1\rangle, \langle K_2^1, L_2^1\rangle\}$ and $\mathrm{gfp}(PQ^*)=O_{12}^\star=\{\langle K_1^\star, L_1^\star\rangle, \langle K_2^\star, L_2^\star\rangle\}$.
$O_{12}^\#$ $=$ $\{\langle K_1^\#,$ $L_1^\#\rangle, \langle K_2^\#, L_2^\#\}$ belongs to $[\mathrm{lfp}(EF^*)\ \mathrm{gfp}(PQ^*)]$ but not to $\mathrm{fp}(EF^*)\cap\mathrm{fp}(PQ^*)$.
It happens that $EF^*(O_{12}^\#)=O_{12}^\star$ and $PQ^*(O_{12}^\#)=O_{12}^1$, hence $PQ^{*\infty}\circ EF^{*\infty}(O_{12}^\#)$ $=$ $PQ^*\circ EF^*(O_{12}^\#)=O_{12}^\star$ and $EF^{*\infty}\circ PQ^{*\infty}(O_{12}^\#)=EF^*\circ PQ^*(O_{12}^\#)=O_{12}^1$.
These two objects are not isomorphic.
What can be said, in this case, is that $EF^{*\infty}\circ PQ^{*\infty}(O_{12}^\#)\prec PQ^{*\infty}\circ EF^{*\infty}(O_{12}^\#)$.
This is the result of $\sigma$ which may add needed support (for $C$ and $B$ from $A$ and $C$) and $\pi$ which may suppress unsupported concepts ($A$ missing $C$ and $D$ missing $B$).

It is not necessary that the results of the closure be the bounds of the interval as is shown for any object of the second and fourth lines of the lattice of Figure~\ref{fig:exrcalattice}.
Example~\ref{ex:notEFPQeqPQEFrca0} illustrates this even better.
\end{example}
  
It may be that, as illustrated by Example~\ref{ex:notEFPQeqPQEF}, when $PQ^{*\infty}$ is first applied, it suppresses non-supported attributes which cannot be recovered by scaling.
Conversely, $EF^{*\infty}$ applied first may scale attributes ($\exists p.D$ in Example~\ref{ex:notEFPQeqPQEF}) which generate new concepts ($B$) supporting previously non-supported concept ($D$). 
These will not be suppressed any more.

Property~\ref{prop:EFsPQsinfPQsEFs} shows that, in addition, there is still a homomorphism between the two resulting objects.

\begin{property}\label{prop:EFsPQsinfPQsEFs}
  $\forall O\in\mathcal{O}, EF^{*\infty}\circ PQ^{*\infty}(O)\preceq PQ^{*\infty}\circ EF^{*\infty}(O)$.
\end{property}
\begin{proof}
$PQ^{*\infty}(O)\preceq O$ by Property~\ref{prop:order}.
But $EF^{*\infty}$ is monotone (Property~\ref{prop:EFsPQsclos}), hence $EF^{*\infty}\circ PQ^{*\infty}(O)\preceq EF^{*\infty}(O)$.
$PQ^{*\infty}$ is also monotone (Property~\ref{prop:EFsPQsclos}), thus $PQ^{*\infty}\circ EF^{*\infty}\circ PQ^{*\infty}(O)\preceq PQ^{*\infty}\circ EF^{*\infty}(O)$.
However, $PQ^{*\infty}\circ EF^{*\infty}(O)\in\mathrm{fp}(EF^*)\cup\mathrm{fp}(PQ^*)$ so $PQ^{*\infty}\circ EF^{*\infty}(O)=EF^{*\infty}\circ PQ^{*\infty}(O)$ (Property~\ref{prop:PQsEFseqEFsPQsonUnion}).
Thus,  $PQ^{*\infty}\circ EF^{*\infty}\circ PQ^{*\infty}(O)=EF^{*\infty}\circ PQ^{*\infty}\circ PQ^{*\infty}(O)=EF^{*\infty}\circ PQ^{*\infty}(O)$.
This means that $EF^{*\infty}\circ PQ^{*\infty}(O)\preceq PQ^{*\infty}\circ EF^{*\infty}(O)$.
\end{proof}
\begin{proof}[Alternative proof]
The same reasoning can be held from $O\preceq EF^{*\infty}(O)$ (Property~\ref{prop:order}) and $EF^{*\infty}$ and $PQ^{*\infty}$ being monotone (Property~\ref{prop:EFsPQsclos}).
Hence, $PQ^{*\infty}(O)\preceq PQ^{*\infty}\circ EF^{*\infty}(O)$ and $EF^{*\infty}\circ PQ^{*\infty}(O)\preceq EF^{*\infty}\circ PQ^{*\infty}\circ EF^{*\infty}(O)$.
But, $PQ^{*\infty}\circ EF^{*\infty}(O)\in\mathrm{fp}(EF^*)\cup\mathrm{fp}(PQ^*)$, so $PQ^{*\infty}\circ EF^{*\infty}(O)=EF^{*\infty}\circ PQ^{*\infty}(O)$ (Property~\ref{prop:PQsEFseqEFsPQsonUnion}).
Thus $EF^{*\infty}\circ PQ^{*\infty}\circ EF^{*\infty}(O)=PQ^{*\infty}\circ EF^{*\infty}\circ EF^{*\infty}(O)=PQ^{*\infty}\circ EF^{*\infty}(O)$.
Hence, $EF^{*\infty}\circ PQ^{*\infty}(O)\preceq PQ^{*\infty}\circ EF^{*\infty}(O)$.
\end{proof}
The alternative proof is given here to show that starting from the $EF^*$ or $PQ^*$ give the same result.

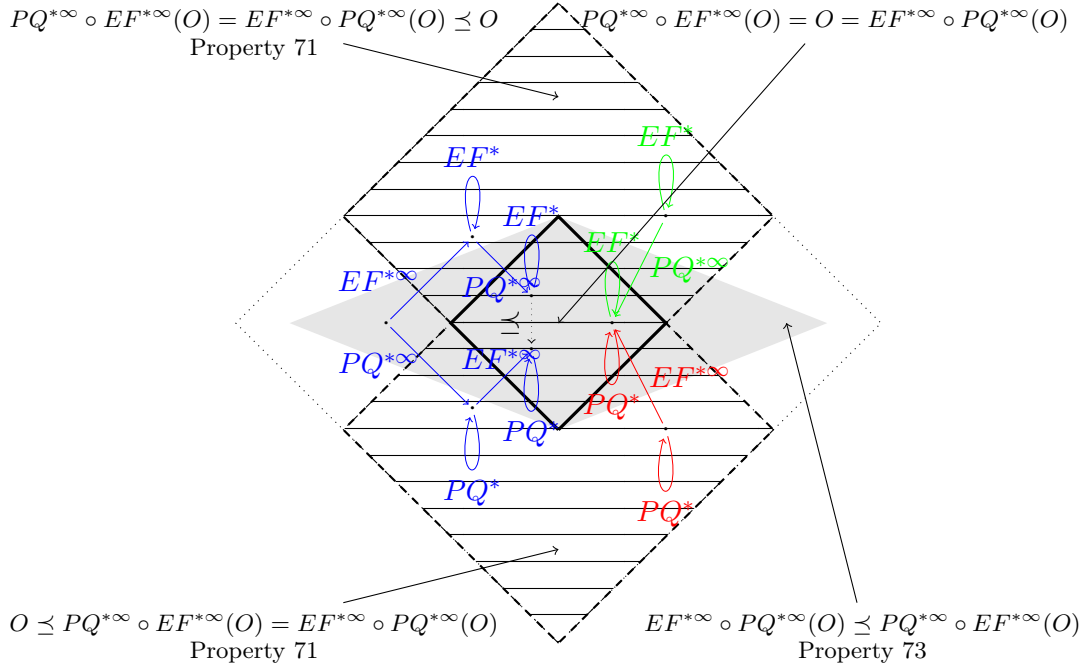
\begin{figure}
\centering\begin{tikzpicture}[yscale=2,xscale=2]

  \tikzstyle{point}=[inner sep=0pt,outer sep=0pt]
  
    \begin{scope}[rotate=45]
     \coordinate (lfpEb) at (-.5,-.5);
     \coordinate (fcak0b) at (-1.5,-1.5);
     \coordinate (gfpEb) at (1.5,1.5);
     \coordinate (topEb) at (.5,.5);
     \coordinate (QEx) at (.1,.1);
     \coordinate (EQx) at (-.1,-.1);
     \draw[dotted] (fcak0b) rectangle (gfpEb);
     \fill[gray!20] (topEb) -- (1.25,-1.25) -- (lfpEb) -- (-1.25,1.25) -- cycle;
     \draw[dashed] (lfpEb) rectangle (gfpEb);
     \draw[dashdotted] (fcak0b) rectangle (topEb);
     \fill[pattern={Lines[distance=10pt]}] (fcak0b) rectangle (topEb);
     \fill[pattern={Lines[angle=90,distance=10pt]}] (lfpEb) rectangle (gfpEb);

     \draw[thick,dashed] (gfpEb) -- (-.5,1.5) -- (-.5,.5) -- (-1.5,.5) -- (fcak0b) -- (.5,-1.5) -- (.5,-.5) -- (1.5,-.5) -- (gfpEb);
     \draw[very thick] (lfpEb) -- (-.5,.5) -- (topEb) -- (.5,-.5) -- (lfpEb);
     \draw (.25,-.25) node[style=point] (fp) {$\cdot$};

      \draw (0,.25) node[style=point] (fp2) {$\cdot$};
      \draw (-.25,0) node[style=point] (fp3) {$\cdot$};

      \draw (1,0) node[style=point] (fpEr) {$\cdot$};
      \draw (0,-1) node[style=point] (fpQr) {$\cdot$};

     \draw (-.8,0) node[style=point] (fpQ) {$\cdot$};
     \draw (0,.8) node[style=point] (fpE) {$\cdot$};
     \draw (-.8,.8) node[style=point] (nofp) {$\cdot$};

   \end{scope}

   \begin{scope}[very thin]
     \path[->,green] (fp) edge[loop above] node[anchor=south] {$EF^*$} (fp);
     \path[->,red] (fp) edge[loop below] node[anchor=north] {$PQ^*$} (fp);

     \path[->,green] (fpEr) edge[loop above] node[anchor=south] {$EF^*$} (fpEr);
     \draw[->,green] (fpEr) -- node[right] {$PQ^{*\infty}$} (fp);

     \path[->,red] (fpQr) edge[loop below] node[anchor=north] {$PQ^*$} (fpQr);
     \draw[->,red] (fpQr) -- node[right] {$EF^{*\infty}$} (fp);
        
     \draw[->,blue] (nofp) -- node[left] {$PQ^{*\infty}$} (fpQ);
     \draw[->,blue] (nofp) -- node[left] {$EF^{*\infty}$} (fpE);
     \path[->,blue] (fpE) edge[loop above] node[anchor=south] {$EF^*$} (fpE);
     \path[->,blue] (fpQ) edge[loop below] node[anchor=north] {$PQ^*$} (fpQ);
     \draw[->,blue] (fpE) -- node[below] {$PQ^{*\infty}$} (fp2);
     \draw[->,blue] (fpQ) -- node[above] {$EF^{*\infty}$} (fp3);
     \path[->,blue] (fp2) edge[loop above] node[anchor=south] {$EF^*$} (fp2);
     \path[->,blue] (fp3) edge[loop below] node[anchor=north] {$PQ^*$} (fp3);

     \draw[->,dotted] (fp2) -- node[left] {$\preceq$} (fp3);
   \end{scope}

   \begin{scope}[font=\footnotesize]
     \draw (-2,2) node (fpEF) {$PQ^{*\infty}\circ EF^{*\infty}(O)=EF^{*\infty}\circ PQ^{*\infty}(O)\preceq O$};
     \draw ($(fpEF.south)+(0,-1pt)$) node {Property~\ref{prop:PQsEFseqEFsPQsonUnion}};
     \draw (-2,-2) node (fpPQ) {$O\preceq PQ^{*\infty}\circ EF^{*\infty}(O)=EF^{*\infty}\circ PQ^{*\infty}(O)$};
     \draw ($(fpPQ.south)+(0,-1pt)$) node {Property~\ref{prop:PQsEFseqEFsPQsonUnion}};
     \draw (1.75,2) node (fpEFPQ) {$PQ^{*\infty}\circ EF^{*\infty}(O)=O=EF^{*\infty}\circ PQ^{*\infty}(O)$};
     \draw (2,-2) node (rest) {$EF^{*\infty}\circ PQ^{*\infty}(O)\preceq PQ^{*\infty}\circ EF^{*\infty}(O)$};
     \draw ($(rest.south)+(0,-1pt)$) node {Property~\ref{prop:EFsPQsinfPQsEFs}};
     
     \draw[->] (fpEFPQ) -- (0,0);
     \draw[->] (fpEF) -- (0,1.5);
     \draw[->] (fpPQ) -- (0,-1.5);
     \draw[->] (rest) -- (1.5,0);
   \end{scope}
 \end{tikzpicture}
\caption{Illustration of the position of $PQ^{*\infty}\circ EF^{*\infty}(O)$ and $EF^{*\infty}\circ PQ^{*\infty}(O)$ depending on $O$'s origin (in dotted, $\mathcal{O}$, in dashed $\mathrm{fp}(EF^*)\cup\mathrm{fp}(PQ^*)$, in plain $\mathrm{fp}(EF^*)\cap\mathrm{fp}(PQ^*)$). 
Colours correspond to that of Figure~\ref{fig:fpstruct}: green starting from $\mathrm{fp}(EF^*)$, red starting from $\mathrm{fp}(PQ^*)$, blue starting outside of them.
}\label{fig:conj}
\end{figure}

It is thus unclear what to do with $EF^{*\infty}\circ PQ^{*\infty}$ and $PQ^{*\infty}\circ EF^{*\infty}$ in general.
For instance, if one needs an operation to map elements of $\mathcal{O}$ to $\mathrm{fp}(EF^*)\cap\mathrm{fp}(PQ^*)$, which one is preferable?
There may be an interest in studying the interval $[EF^{*\infty}\circ PQ^{*\infty}(O)\ PQ^{*\infty}\circ EF^{*\infty}(O)]$. Does it contain only fixed points or no fixed points? Are these the image of other lattices?
This question can be answered if $O$ can be compared to these bounds (Proposition~\ref{prop:intervnotfp}): the intermediate families are \emph{not} fixed points.

\begin{proposition}\label{prop:intervnotfp}
$\forall O\in \mathcal{O}\setminus (\mathrm{fp}(EF^*)\cap\mathrm{fp}(PQ^*))$:
\begin{itemize}
\item if $O\preceq PQ^{*\infty}\circ EF^{*\infty}(O)$, then $\forall O'\in [O\ PQ^{*\infty}\circ EF^{*\infty}(O)[$, $O'\not\in\mathrm{fp}(EF^*)\cap\mathrm{fp}(PQ^*)$,
\item if $EF^{*\infty}\circ PQ^{*\infty}(O)\preceq O$, then $\forall O'\in ]EF^{*\infty}\circ PQ^{*\infty}(O)\ O]$, $O'\not\in\mathrm{fp}(EF^*)\cap\mathrm{fp}(PQ^*)$.
\end{itemize}
\end{proposition}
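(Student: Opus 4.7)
The plan is to proceed by contradiction in each of the two cases, leveraging monotony of the closure operators $EF^{*\infty}$ and $PQ^{*\infty}$ (\Prop{prop:EFsPQsclos}) together with the defining property of fixed points. The key observation is that any acceptable $O'$ between $O$ and $PQ^{*\infty}\circ EF^{*\infty}(O)$ would have to be pushed back up to $PQ^{*\infty}\circ EF^{*\infty}(O)$ by these operators, forcing equality with the upper bound and contradicting the half-open interval.

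More precisely, for the first bullet, I would assume for contradiction that there exists $O'\in\mathrm{fp}(EF^*)\cap\mathrm{fp}(PQ^*)$ with $O\preceq O'\prec PQ^{*\infty}\circ EF^{*\infty}(O)$. Applying $EF^{*\infty}$ to $O\preceq O'$ and using its monotony gives $EF^{*\infty}(O)\preceq EF^{*\infty}(O')$; since $O'\in\mathrm{fp}(EF^*)$, we have $EF^{*\infty}(O')=O'$. Applying $PQ^{*\infty}$ and using monotony again yields $PQ^{*\infty}\circ EF^{*\infty}(O)\preceq PQ^{*\infty}(O')=O'$ (because $O'\in\mathrm{fp}(PQ^*)$). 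Combined with $O'\preceq PQ^{*\infty}\circ EF^{*\infty}(O)$, antisymmetry gives $O'=PQ^{*\infty}\circ EF^{*\infty}(O)$, contradicting strict inequality.

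The second bullet is strictly dual. Assuming $O'\in\mathrm{fp}(EF^*)\cap\mathrm{fp}(PQ^*)$ with $EF^{*\infty}\circ PQ^{*\infty}(O)\prec O'\preceq O$, monotony of $PQ^{*\infty}$ on $O'\preceq O$ and $O'\in\mathrm{fp}(PQ^*)$ give $O'=PQ^{*\infty}(O')\preceq PQ^{*\infty}(O)$; then monotony of $EF^{*\infty}$ and $O'\in\mathrm{fp}(EF^*)$ give $O'=EF^{*\infty}(O')\preceq EF^{*\infty}\circ PQ^{*\infty}(O)$, again contradicting the strict inequality.

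There is essentially no obstacle: the proof rests entirely on monotony of the two closures and on unfolding the fixed-point property on $O'$. The only subtle point is reading the half-open interval notation correctly (endpoint $PQ^{*\infty}\circ EF^{*\infty}(O)$ excluded, endpoint $O$ included, and dually for the second case), and noting that the assumption $O\notin\mathrm{fp}(EF^*)\cap\mathrm{fp}(PQ^*)$ is not actually used in the argument itself — it only ensures that the interval in question is non-degenerate (otherwise $O=PQ^{*\infty}\circ EF^{*\infty}(O)=EF^{*\infty}\circ PQ^{*\infty}(O)$ and both intervals would be empty by \Prop{prop:imPQsEFsfp}).
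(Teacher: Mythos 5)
Your proof is correct, and it takes a genuinely different route from the paper's. You argue purely order-theoretically: from $O\preceq O'$, monotony of $EF^{*\infty}$ and $PQ^{*\infty}$ (\Prop{prop:EFsPQsclos}) together with $EF^{*\infty}(O')=O'$ and $PQ^{*\infty}(O')=O'$ (which follow from \Def{def:Ofp} by iteration) force $PQ^{*\infty}\circ EF^{*\infty}(O)\preceq O'$, so no common fixed point above $O$ can lie strictly below $PQ^{*\infty}\circ EF^{*\infty}(O)$; the second bullet is dual. The paper instead reasons structurally about attributes: it first shows that the hypothesis $O\preceq PQ^{*\infty}\circ EF^{*\infty}(O)$ forces $PQ^{*\infty}\circ EF^{*\infty}(O)=EF^{*\infty}(O)$ (because $PQ^{*\infty}$ could only suppress attributes added by $EF^{*\infty}$, which are all supported), and then argues that any $O'\in[O\ EF^{*\infty}(O)[$ fails to be a fixed point of $EF^*$ specifically, since it contains all attributes of $O$ and these would scale to all attributes of $EF^{*\infty}(O)$. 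Your argument is shorter, needs no claims about which attributes are scaled or purged, and in fact establishes the stronger statement that $PQ^{*\infty}\circ EF^{*\infty}(O)$ is a lower bound of all common fixed points above $O$ (independently of whether $O\preceq PQ^{*\infty}\circ EF^{*\infty}(O)$, which as you note only serves to make the interval non-degenerate). What the paper's version buys in exchange is finer diagnostic information: it identifies the collapse $PQ^{*\infty}\circ EF^{*\infty}(O)=EF^{*\infty}(O)$ under the hypothesis and pinpoints that it is saturation (membership in $\mathrm{fp}(EF^*)$) that fails for the intermediate $O'$ in the first bullet, and self-support in the second. Your reading of the half-open interval notation and your closing remark about the unused hypothesis are both accurate.
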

\begin{proof}
Considering the first item of the proposition, $O\preceq PQ^{*\infty}\circ EF^{*\infty}(O)$ can only occur if $EF^{*\infty}(O)\in\mathrm{fp}(PQ^*)$, i.e. $PQ^{*\infty}\circ EF^{*\infty}(O)=EF^{*\infty}(O)$. 
Indeed, if this were not the case, then $PQ^{*\infty}$ would suppress attributes from $EF^{*\infty}(O)$. 
However, since  $O\preceq PQ^{*\infty}\circ EF^{*\infty}(O)$, these could not be attributes from $O$, but only attributes  added by $EF^{*\infty}$. 
But since $EF^{*\infty}$ only adds attributes if they are supported and it starts with attributes from $O$, this is not possible.
Thus, if $O'\in [O\ PQ^{*\infty}\circ EF^{*\infty}(O)[$ then $O'\in [O\ EF^{*\infty}(O)[$.
However, $O'$ cannot be a fixed point for $EF^*$ because it contains all attributes of $O$ which would scale to generate all those of $EF^{*\infty}(O)$.
Hence $O'\not\in\mathrm{fp}(EF^*)\cap\mathrm{fp}(PQ^*)$.

The second item has a similar proof: $EF^{*\infty}\circ PQ^{*\infty}(O)\preceq O$ can only occur if $PQ^{*\infty}(O)\in\mathrm{fp}(EF^*)$, i.e. $EF^{*\infty}\circ PQ^{*\infty}(O)=PQ^{*\infty}(O)$.
Indeed, if this were not the case, then $EF^{*\infty}$ would generate attributes from $PQ^{*\infty}(O)$. 
However, since  $EF^{*\infty}\circ PQ^{*\infty}(O)\preceq O$, these could only be attributes of $O$ which were suppressed by $PQ^{*\infty}$ due to lack of support.
But this is not possible because if they lacked support in $O$, there is not more support for them in $PQ^{*\infty}(O)$, which only reduces $O$.
Thus, if $O'\in ]EF^{*\infty}\circ PQ^{*\infty}(O)\ O]$, then $O'\in ]PQ^{*\infty}(O)\ O]$.
However, $O'$ cannot be a fixed point for $PQ^*$ because it contains less attributes than $O$: if these attributes lacked supports in $O$, they would still lack it in $O'$.
Hence, $O'\not\in\mathrm{fp}(EF^*)\cap\mathrm{fp}(PQ^*)$.
\end{proof}

This is illustrated by Example~\ref{ex:intervnotfp}:
\begin{example}[]\label{ex:intervnotfp}
In Example~\ref{ex:notEFPQeqPQEF} (p.~\pageref{ex:notEFPQeqPQEF}),
$O_{34}^\#=\{\langle K^\#_3,L^\#_3\rangle, \langle K^\#_4,L^\#_4\rangle\}$ is a fixed point for neither $EF^*$ nor $PQ^*$.
$PQ^{*\infty}\circ EF^{*\infty}(O_{34}^\#)$ $=$ $EF^{*\infty}(O_{34}^\#)$ $=$ $O_{34}^\star$ and
$EF^{*\infty}\circ PQ^{*\infty}(O_{34}^\#)$ $=$ $PQ^{*\infty}(O_{34}^\#)$ $=$ $O_{34}^1$.
None of the objects in the interval between $O_{34}^\#$ and either $O_{34}^\star$ or $O_{34}^1$ belongs to $\mathrm{fp}(EF^*)\cap\mathrm{fp}(PQ^*)$ as can be seen in Figure~\ref{fig:exrcalattice}.
\end{example}

This result cannot be generalised to the interval $]EF^{*\infty}\circ PQ^{*\infty}(O)\ PQ^{*\infty}\circ EF^{*\infty}(O)[$ as shown by Example~\ref{ex:fpinint} and Example~\ref{ex:fpinint0}.

\begin{example}[The subinterval may contain fixed points in RCA]\label{ex:fpinint}
Following Example~\ref{ex:intervnotfp},
$O_{34}'=\{\langle K'_3, L'_3\rangle, \langle K'_4, L'_4\rangle\}$ belongs to
$]EF^{*\infty}\circ PQ^{*\infty}(O_{34}^\#)\ PQ^{*\infty}\circ EF^{*\infty}(O_{34}^\#)[~=~]O_{34}^1,$ $\ O_{34}^\star[$ as can be observed in Figure~\ref{fig:exrcalattice}.
However, $O'_{34}\in\mathrm{fp}(EF^*)\cap\mathrm{fp}(PQ^*)$.
\end{example}

\begin{example}[The subinterval may contain fixed points in RCA$^0$]\label{ex:fpinint0}
The family of context-lattice pairs $O_0^\#$ of Example~\ref{ex:notEFPQeqPQEFrca0} (p.~\pageref{ex:notEFPQeqPQEFrca0}), is such that:
\begin{enumaa}
\item $EF^{*\infty}(O_0^\#)\neq PQ^{*\infty}\circ EF^{*\infty}(O_0^\#)$,
\item $PQ^{*\infty}(O_0^\#)\neq EF^{*\infty}\circ PQ^{*\infty}(O_0^\#)$, and
\item $EF^{*\infty}\circ PQ^{*\infty}(O_0^\#)\prec PQ^{*\infty}\circ EF^{*\infty}(O_0^\#)$.
\end{enumaa}

Note that none of $EF^{*\infty}\circ PQ^{*\infty}(O_0^\#)$ nor  $PQ^{*\infty}\circ EF^{*\infty}(O_0^\#)$ are the bounds of $\mathrm{fp}(EF^*)\cap \mathrm{fp}(PQ^*)$ ($O_0^1$ and $O_0^\star$) contrary to Example~\ref{ex:fpinint}.

Moreover, consider $O_0''$ defined as follows:
\begin{center}\footnotesize
\begin{tabular}{cc}
\begin{minipage}{.4\textwidth}
\setlength{\tabcolsep}{2pt}
\begin{center}
\begin{tabular}{r|cccc}
$k(O_0'')$    & \rotatebox{90}{$\exists r.C$} & \rotatebox{90}{$\exists r.D$} & \rotatebox{90}{$\exists r.AB$} & \rotatebox{90}{$\exists r.ABCD$} \\\hline
$a$ &          &          & $\times$ & $\times$ \\
$b$ &          &          & $\times$ & $\times$ \\
$c$ &          & $\times$ &          & $\times$ \\
$d$ & $\times$ &          &          & $\times$ \\
\end{tabular}
\end{center}
\end{minipage} &
\begin{minipage}{.6\textwidth}
\begin{center}
\begin{tikzpicture}[xscale=.3,yscale=.3]
    \begin{dot2tex}[dot,tikz,codeonly,options=-traw]
      graph {
	graph [nodesep=1.5]
	node [style=concept]
	ABCD [label="$\exists r.ABCD$
\nodepart{two}
$\empty$"]

	C [label="$\exists r.D$
\nodepart{two}
$c$"]

	AB [label="$\exists r.AB$
\nodepart{two}
$a, b$"]

	D [label="$\exists r.C$
\nodepart{two}
$d$"]

	C0 [label="$\empty$
\nodepart{two}
$\empty$"]
        ABCD -- C
        ABCD -- AB
        ABCD -- D
        C -- C0
        AB -- C0
        D -- C0
      }
    \end{dot2tex}
    \node[anchor=west] at (ABCD.east) {$ABCD$};
    \node[anchor=west] at (AB.east) {$AB$};
    \node[anchor=west] at (C.east) {$C$};
    \node[anchor=west] at (D.east) {$D$};
    \node[anchor=west] at (C0.east) {$\bot$};
    \draw (0,0) node {$l(O_0'')$:};
\end{tikzpicture}
\end{center}
\end{minipage}
\end{tabular}
\end{center}
$O_0''\in ]EF^{*\infty}\circ PQ^{*\infty}(O_0^\#), PQ^{*\infty}\circ EF^{*\infty}(O_0^\#)[$
and $O_0''\in\mathrm{fp}(EF^*)\cap \mathrm{fp}(PQ^*)$.

\bigskip
This counter-example is not sensible to conditions (a) and (b) above:
(a) can be relaxed, by simply adding $\exists r.ABCD$ to $K_0^\#$,
(b) can be relaxed by suppressing $\exists r.B$ from $K_0^\#$, and
both can be relaxed together.
In each of these cases, $EF^{*\infty}\circ PQ^{*\infty}(O_0^\#)$ and $PQ^{*\infty}\circ EF^{*\infty}(O_0^\#)$ will not be changed, preventing to discard the presence of acceptable solutions ($O_0''$) in the interval.
\end{example}

Proposition~\ref{prop:intervnotfp} can however be useful algorithmically. 
Indeed, if one considers the non-acceptable objects of Figure~\ref{fig:exrcalattice} on the same line as $O_{34}^\#$, then this result identifies as non acceptable two objects on the second and fourth line without testing them.

\section{Conclusion}

We addressed the questions of which family of concept lattices was returned by relational concept analysis and, more generally, which such families could be considered acceptable.

This report provides an
answer to these questions by characterising the acceptable families of context-lattice pairs that describe a particular initial family of contexts as those families which are well-formed, saturated and self-supported.
It identifies the results returned by relational concept analysis as the smallest element of this set.
It also defines an alternative operation providing its greatest elements.
The structure of the set of acceptable solutions has been further characterised.

To that extent the report defines the set of well-formed objects $\mathcal{O}$, a function $EF^*$, generalising RCA, expanding a family, and a function $PQ^*$ contracting a family.
The fixed points of these functions characterise the saturated families and the self-supported families respectively.
Hence, the acceptable solutions are those elements of the intersection of the fixed points of both functions ($\mathrm{fp}(EF^*)\cap\mathrm{fp}(PQ^*)$).

These results rely fundamentally on the finiteness of the structure and monotony of the operations.
Dealing with infinite structures would jeopardise the construction of the sets of scalable relational attributes (Section~\ref{sec:rcaname}), however as soon as the termination of the application of the operations is preserved, this should not be a problem.
Non-monotonic operations could be induced by non-monotonic scaling operations. 
Such operations would prevent relational concept analysis to work properly and would require fully different mechanisms.

In FCA, conceptual scaling is considered as a human-driven analysis tool: a knowledgeable person could provide attributes to be scaled for describing better the data to be analysed.
In RCA, scaling is used as an extraction tool, with the drawback to potentially generate many attributes.
By only extracting the least fixed point, RCA avoids generating too many of them.
This is useful when generating a description logic TBox because all concepts are well-defined and necessary, but other contexts may benefit from exploiting other solutions.

Beyond the minimal common acceptable lattices returned by $\underline{\mathrm{RCA}}$ and the most detailed ones that $\overline{\mathrm{RCA}}$ returns, algorithms may be developed for returning all acceptable solutions \cite{atencia2021b}.
The characterisation of the structure of the space of acceptable solutions aims at contributing to this goal.
However, our work does not provide an `efficient' way to obtain all elements of this set.
The characterisation of the structure of the space of acceptable solutions aims at contributing to this goal.

This work also opens perspectives for helping users to identify the acceptable solution that they prefer.
Beyond generating all solutions, another option is to offer users the opportunity to guide the navigation among them.
The structure of admissible solutions and the associated functions may be fruitfully exploited in order to help users finding an acceptable solution featuring the concepts and attributes that they want and not unnecessary ones.

An anonymous reviewer remarked that variations of RCA, such as those based on AOC-posets \cite{dolques2013a}, may receive the same treatment.
This is a perspective worth pursuing, that may lead to generalise the results presented here.

Finally, the position of relational concept analysis with respect to formal concept analysis and Galois connections would be worth investigating.
On the one hand, this work shows that, contrary to other extensions that use scaling to encode a problem within FCA, RCA cannot be encoded in FCA.
Indeed, RCA admits various fixed points contrary to FCA.
RCA is not just the application of a product or sequence of FCA, but relations between contexts introduce constraints between them leading to the possibility of alternative fixed points.
Hence, an encoding would not be direct, so that it provides RCA solutions directly.
On the other hand, other generalisations of FCA get closer to general Galois connections by extending the structure of attributes.
The open question is whether RCA is another instance of a Galois connection extending FCA or if these two need a common generalisation.

\subsection*{Acknowledgements}

This work has been partially funded by the ANR Elker project (ANR-17-CE23-0007-01).
The author thanks 
Petko Valtchev for comments and suggestions on an earlier version of this work,
Marianne Huchard for taking the pain to describe the trick used for forcing discrimination, 
Amedeo Napoli for explaining me RCA (and so many other things),
Jérôme David for reviewing the text,
and Philippe Besnard for pointing to the Knaster-Tarski theorem.
Anonymous JAIR reviewers helped clarifying the report.

\printbibliography[title={References},heading=bibnumbered] %

\newpage
\appendix

\section{Correspondences between propositions}\label{sec:propindex}

\enlargethispage{7\baselineskip}
{\footnotesize

\begin{center}
\begin{tabular}{ccccc}
\cite{euzenat2021a} & & RR-9518 (v1) & & \cite{euzenat2025a}\\
 &  & Property~\ref{prop:IdependM} & $=$ & Property~1 \\
 &  & Property~\ref{prop:Kclosedmj} & $=$ & Property~2 \\
Property~2 & $=$ & Property~\ref{prop:ordereqopK} (3) & & \\
Property~1 & $=$ & Property~\ref{prop:Kcompl} (2) & & \\
Property~3 & $=$ & Property~\ref{prop:FintK} (4) & & \\
Property~4 & $=$ & Property~\ref{prop:Fmonext} (5) & & \\
Theorem~1 & $=$ & Theorem~\ref{prop:knastertarski} (6) (Knaster-Tarski) & $=$ & Theorem~1\\
 &  & Property~\ref{prop:kappaFCA} (7) & $=$ & Property~3 \\
 &  & Property~\ref{prop:FCAmon} (8) & $=$ & Property~4 \\
 &  & Property~\ref{prop:Lclosedmj} & $=$ & Property~5 \\
Proposition~3 & $=$ & Property~\ref{prop:EintL} (11) & & \\
Proposition~1 & $=$ & Proposition~\ref{prop:rca0lfp} (13) & & \\
Proposition~2 & $=$ & Proposition~\ref{prop:gfpF} (14) & & \\
Proposition~4 & $=$ & Property~\ref{prop:Qantextmon} (16) & & \\
Proposition~5 & $>$ & Property~\ref{prop:QintL} (19) & & \\
Proposition~5 & $>$ & Property~\ref{prop:Qstable} (19) & & \\
Proposition~6 & $=$ & Proposition~\ref{prop:infsupEQ} (25) & & \\
 &  & Property~\ref{prop:andorT} (26) & $=$ & Property~6 \\
 &  & Property~\ref{prop:Tclosedmj} & $=$ & Property~7 \\
 &  & Property~\ref{prop:Torder} (27) & $=$ & Property~8 \\
 &  & Property~\ref{prop:Oclosedmj} & $=$ & Property~9 \\
 &  & Property~\ref{prop:Oorder} (36) & $=$ & Property~10 \\
 & & Proposition~\ref{prop:Ocompl} (38) & $=$ & Proposition~11\\
 &  & Property~\ref{prop:EFsintO} (40) & $=$ & Property~12 \\
 &  & Property~\ref{prop:EFsmonext} (41) & $=$ & Property~13 \\
 &  & Property~\ref{prop:pissup} & $=$ & Property~14 \\
 &  & Property~\ref{prop:PQsintO} (43) & $=$ & Property~15 \\
 &  & Property~\ref{prop:PQsantextmon} (44) & $=$ & Property~16 \\
 &  & Property~\ref{prop:EFsasscaling} (45) & $=$ & Property~17 \\
 &  & Property~\ref{prop:PQsaspurging} (46) & $=$ & Property~18 \\
 &  & Property~\ref{prop:lfpPQs} (47) & $=$ & Property~19 \\
 &  & Property~\ref{prop:gfpEFs} (48) & $=$ & Property~20 \\
 &  & Property~\ref{prop:EFsstable} (49) & $=$ & Property~21 \\
 &  & Property~\ref{prop:PQsstable} (49) & $=$ & Property~22 \\
 &  & Property~\ref{prop:EFsPQsclos} (50) & $=$ & Property~23 \\
 &  & Property~\ref{prop:EFsPQslsgsfp} (51) & $=$ & Property~24 \\
 &  & Property~\ref{prop:order} (52) & $=$ & Property~25 \\
 &  & Property~\ref{prop:satisfpEFs} & $=$ & Property~26 \\
 &  & Property~\ref{prop:ssisfpPQs} & $=$ & Property~27 \\
 & & Proposition~\ref{prop:accfp} (53) & $=$ & Proposition~28\\
 & & Proposition~\ref{prop:rcalfp} (54) & $=$ & Proposition~29\\
 & & Proposition~\ref{prop:acrgfp} (55) & $=$ & Proposition~30\\
 & & Proposition~\ref{prop:lfgeqgfp} (56) & $=$ & Proposition~31\\
 &  & Property~\ref{prop:EFsintfpPQs} (58) & $=$ & Property~32 \\
 &  & Property~\ref{prop:PQsintfpEFs} (57) & $=$ & Property~33 \\
 &  & Property~\ref{prop:EFsPQsbounds} (59) & $=$ & Property~34 \\
 & & Proposition~\ref{prop:infsupEFsPQs} (60) & $=$ & Proposition~35\\
 &  & Property~\ref{prop:imPQsEFsfp} (61) & $=$ & Property~36 \\
 &  & Property~\ref{prop:PQsEFsmonid} (62) & $=$ & Property~37 \\
 & & Proposition~\ref{prop:imPQsEFscompl} (63) & $=$ & Proposition~38\\
 &  & Property~\ref{prop:PQsEFseqEFsPQsonUnion} (64) & $=$ & Property~39 \\
 &  & Property~\ref{prop:PQsEFseqEFsPQsbt} (65) & $=$ & Property~40 \\
 &  & Property~\ref{prop:EFsPQsinfPQsEFs} (66) & $=$ & Property~41 \\
 & & Proposition~\ref{prop:intervnotfp} (67) & $=$ & Proposition~42\\
\end{tabular}
\end{center}
}

\end{document}

\endinput %